\renewcommand{\bar}{\overline}
\newcommand{\mbf}{\bm}
\titlespacing*{\section}{0pt}{6pt}{0pt}
\titlespacing*{\subsection}{0pt}{6pt}{0pt}
\DeclareMathOperator{\E}{\mathbb{E}}
\newcommand{\R}{\mathbb{R}}
\newcommand{\1}{\mathbf{1}}
\DeclareMathOperator{\diag}{diag}
\DeclareMathOperator{\unif}{Unif}
\newcommand{\0}{\mathbf{0}}
\newtheorem{lemma}{Lemma}
\newtheorem{corollary}{Corollary}
\newtheorem{theorem}{Theorem}
\newtheorem{definition}{Definition}
\theoremstyle{remark}
\DeclareMathOperator{\sts}{STS}
\DeclareMathOperator{\sm}{softmax}
\DeclareMathOperator{\supp}{supp}
\newcommand{\qsa}{\sts_q}
\newcommand{\qsat}{\sts_{q_2}}
\newcommand{\X}{\mbf{X}}
\renewcommand{\v}{\mbf{v}}
\newcommand{\x}{\mbf{x}}
\newcommand{\y}{\mbf{y}}
\newcommand{\z}{{\mbf{z}}}
\newcommand{\bXi}{{\mbf{\xi}}}
\newcommand{\W}{\mbf{W}}
\newcommand{\V}{\mbf{V}}
\newcommand{\Z}{\mbf{Z}}
\newcommand{\Y}{\mbf{Y}}
\newcommand{\beps}{\mbf{\epsilon}}
\newcommand{\Q}{\mbf{Q}}
\newcommand{\K}{\mbf{K}}
\newcommand{\bI}{\mbf{I}}
\renewcommand{\S}{\mbf{S}}
\newcommand{\bE}{\mbf{E}}
\newcommand{\e}{\mbf{e}}
\renewcommand{\u}{\mbf{u}}
\newcommand{\zq}{\z_{\text{query}}}
\newcommand{\onehotWdirection}{\begin{bmatrix}
    \0_{d\times d} &\0_{d\times T}\\
    \0_{T\times d}&\qty(\bI_T - \frac{1}{T}\mathbf{1}_T\mathbf{1}_T^\top)
    \end{bmatrix}}
\newcommand{\randomWdirection}{\begin{bmatrix}
    \0_{d\times d} &\0_{d\times d_e}\\
    \0_{d_e\times d}&\bI_{d_e} 
    \end{bmatrix}}
\newcommand{\onehotVdirection}{\begin{bmatrix}
        \bI_d &\0_{d\times T}
        \end{bmatrix}}
\newcommand{\randomVdirection}{\begin{bmatrix}
        \bI_d &\0_{d\times d_e}
        \end{bmatrix}}
\renewcommand{\S}{\mathcal{S}}
\newcommand\smalldots{\hbox to 1em{.\hss.\hss.}}
\title{Transformers Provably Learn Sparse Token Selection While Fully-Connected Nets Cannot}
\author[1]{Zixuan Wang}
\author[1]{Stanley Wei}
\author[2]{Daniel Hsu}
\author[1]{Jason D. Lee}
\affil[1]{Department of Electrical and Computer Engineering, Princeton University}
\affil[2]{Department of Computer Science, Columbia University}
\date{}
\begin{document}
\maketitle
\allowdisplaybreaks

% It is OKAY to include author information, even for blind
% submissions: the style file will automatically remove it for you
% unless you've provided the [accepted] option to the icml2024
% package.

% List of affiliations: The first argument should be a (short)
% identifier you will use later to specify author affiliations
% Academic affiliations should list Department, University, City, Region, Country
% Industry affiliations should list Company, City, Region, Country

% You can specify symbols, otherwise they are numbered in order.
% Ideally, you should not use this facility. Affiliations will be numbered
% in order of appearance and this is the preferred way.

% You may provide any keywords that you
% find helpful for describing your paper; these are used to populate
% the "keywords" metadata in the PDF but will not be shown in the document

% this must go after the closing bracket ] following \twocolumn[ ...

% This command actually creates the footnote in the first column
% listing the affiliations and the copyright notice.
% The command takes one argument, which is text to display at the start of the footnote.
% The \icmlEqualContribution command is standard text for equal contribution.
% Remove it (just {}) if you do not need this facility.
% otherwise use the standard text.
% \allowdisplaybreaks

\begin{abstract}
The transformer architecture has prevailed in various deep learning settings due to its exceptional capabilities to select and compose structural information. Motivated by these capabilities, \citet{sanford2023representational} proposed the \textit{sparse token selection} task, in which transformers excel while fully-connected networks (FCNs) fail in the worst case. Building upon that, we strengthen the FCN lower bound to an average-case setting and establish an algorithmic separation of transformers over FCNs. Specifically, a one-layer transformer trained with gradient descent provably learns the sparse token selection task and, surprisingly, exhibits strong out-of-distribution length generalization.  We provide empirical simulations to justify our theoretical findings.
\end{abstract}

\section{Introduction}
In modern deep learning, transformer networks have established themselves as a fundamental building block, showcasing their versatility across diverse tasks such as language modeling \cite{openai2023gpt4}, computer vision \cite{dosovitskiy2020image}, and reinforcement learning \cite{Jumper2021HighlyAP}. At the core of transformers is the self-attention layer \cite{vaswani2017attention}, a critical component assigning varying attention weights to different segments of the input sequence by discerning relevance between tokens.

The success of transformers is closely tied to their representational capabilities in extracting structural information encoded in token embeddings. Empirical observations reveal that transformers trained with GD-based algorithms exhibit biases towards certain algorithmic solutions in some arithmetic tasks \cite{edelman2022inductive, liu2022transformers, yao2021self, nanda2023progress}. However, few works have presented rigorous mathematical evidence that substantiates their superiority over alternative architectures.

In recent work, \citet{sanford2023representational} introduced a simple task known as the \textit{$q$-sparse averaging} for an input context sequence, where the target output is an average of a $q$-subset of the input tokens, specified in the input.
% \begin{definition}[Original $q$-sparse average]
% For sparsity $q$, problem dimension $d_z$, and input dimension $d=Td+qT$, consider the input $\X=(\x_1,\x_2,...,\x_T;y_1,y_2,...,y_T)\in\mathbb{R}^{dT+qT}$ where $\x_i\in \R^d$ and each $y_i\in\binom{[T]}{q}$ is a 
% $q$-element subset of $[T]$. Define the $q$-sparse average ($\qsa(\cdot)$) as
% \begin{align*}
%     \qsa(\X)_j=\frac{1}{q}\sum_{i=1}^q\x_{y_{j}(i)}
% \end{align*}
% \end{definition}
When the input context length is $T$, the $q$-sparse average was demonstrated to be effectively approximated using a $O(\log T)$-dimensional self-attention layer, whereas any fully-connected network seeking to approximate this task requires a first-layer width of at least $\Omega(T)$ in the worst case. This crucial finding implies that, in theory, the sparse-average task potentially induces an \textit{exponential} separation between transformer models from the fully-connected neural networks (FCNs) with respect to the context length $T$, further reinforcing existing empirical findings that the inductive bias of transformers merits strength in approximating certain arithmetic tasks.

However, these results are limited to the \textit{expressive power} of transformers and do not inherently guarantee that transformers can be \textit{trained} with standard gradient-based methods to achieve such approximation capability as its expressive power would suggest. Thus, a pertinent and natural question beyond theoretical expressiveness arises:
\begin{center}
    \emph{Q: Does the expressivity separation between transformers and FCNs translate to learnability?}
\end{center}
Our results provide an affirmative answer to this question. In our work, we focus on the \textit{sparse token selection task} ($\qsa$). In this task, every token in the input sequence follows the standard Gaussian distribution, and a token subset of size $q$ is drawn uniformly at random. Importantly, we initiate an exploration into the training dynamics of a one-layer transformer with $\sm$ attention using GD for the $\qsa$. We first study the transformer's training dynamics under GD, where the goal is to minimize the expected loss over the input distribution. Notably, we characterize the global convergence of the transformer with the stochastic positional encoding introduced in \citet{shen2023positional}. Additionally, we delve into the provable length generalization capacity of the transformer learned with GD. 
We also establish new lower bounds for the capacity of FCNs to approximate the sparse token selection task within the data distribution. Empirical simulations in practical settings of our architecture verify our theoretical findings and moreover demonstrate the advantages of stochastic positional encoding over a fixed absolute positional encoding in length generalization ability.

\subsection{Our contributions}

\textbf{$\qsa$ is efficiently learnable.} We establish a gradient descent convergence guarantee for a one-layer transformer employing a stochastic positional encoding where only $O(d+q\log T)$ width is necessary (\Cref{sec: stochastic positional encoding}). Under mild conditions on the data distribution, initialization, and hyperparameters, we prove that running GD on a one-layer transformer globally converges when both layers are jointly trained with the same learning rate.  

\textbf{FCN cannot express $\qsa$.}
Complementing our efficient learnability results, we show a separation between FCNs and one-layer transformer networks: all FCNs (regardless of depth or activation function) that can approximate the task must have $\Omega(Td)$ neurons in the first layer, which is exponentially larger than the $O(d+q\log T)$ width used in the transformer.

% We first consider the sparse averaging task in \citet{sanford2023representational} and simplify the task by using only one $q$-element subset as the query token instead of $T$ subsets, where $T$ is the sequence length (see \Cref{def: simplified qsa}). We call this simplified task the $q$-\textit{sparse token selection task} $\qsa$, and it is designed to highlight more clearly the inductive bias of self-attention units in capturing meaningful dependencies between input tokens.

\textbf{Length generalization on $\qsa.$} We investigate the \textit{length generalization} performance of the trained model with stochastic positional encoding, using out-of-distribution data on longer sequences. Based on the global convergence result, we prove that the length generalization loss also converges to zero. Our experiments demonstrate that when the in-distribution training loss converges to zero, the OOD loss also tends to zero for the stochastic positional encoding, but strictly nonzero using fixed positional encoding.

% Our experiments verify these theoretical results and show the significant benefits of training transformers with stochastic positional encodings over ones with fixed absolute positional encodings on length generalization.

\subsection{Related works}
\textbf{Expressiveness of transformers.}
The transformer architecture \citep{vaswani2017attention} has been long adopted as the fundamental building block in many recent large language models such as GPT \citep{brown2020language, openai2023gpt4} and Llama \citep{touvron2023llama, touvron2023llama2}. Recent works have begun to study the limitations and strengths of the transformer architecture from a theoretical perspective \citep{yun2019transformers, perez2019turing,yao2021self, bhattamishra2020computational, zhang2206unveiling, liu2022transformers, hahn2020theoretical, bhattamishra2020ability}. One such direction considers its universal approximation power \citep{yun2019transformers, bhattamishra2020ability,  bhattamishra2020computational, dehghani2018universal},  similar to the universal approximation results for fully-connected neural networks. More recent works have focused on analyzing its expressive power on certain statistical tasks \citep{edelman2022inductive, elhage2021mathematical, likhosherstov2021expressive, akyurek2022learning, zhao2023transformers, yao2021self,anil2022exploring, barak2022hidden} as well as in-context learning settings \citep{dong2022survey, huang2023context}. In another line of literature, transformer layers are used to represent gradient descent steps for certain learning tasks \citep{bai2023transformers, garg2022can, von2023transformers, olsson2022context, akyurek2022learning, panigrahi2023trainable, sanford2023representational}. In a recent work, \citet{sanford2023representational}
introduced a sparse average task where recurrent neural networks and fully connected neural networks (FCNs) all have memory complexity scaling polynomially in the input sequence length, while a one-layer transformer has a construction for only $\log T$-width on the task. In all of these works, however, the training process of transformers is not considered; rather, the focus was on the representation power of transformers. Building upon the setting of \citet{sanford2023representational}, our work provides not only a generalized result for the representational power by extending from worst-case to average-case but also investigates the algorithmic aspect of training transformers: we show that the same exponential separation is attainable when using gradient descent on an architecture with stochastic positional encoding. 
% \snote{exponential separation is attainable: gradient descent can converge globally on a one-layer transformer with stochastic positional encoding. }

\textbf{Training dynamics of transformers.} Several works in the literature have studied the learnability of certain transformer models. \citet{jelassi2022vision} showed a Vision Transformer (ViT) \citep{dosovitskiy2020image} trained by GD with positional-embedding attention matrix can learn spatial structure. \citet{li2023theoretical} analyzed the sample complexity required for achieving good generalization on a similar ViT model. However, their results both hinged on a warm start of initialization near the target pattern, which is a practically infeasible assumption. \citet{tarzanagh2023transformers} established an equivalence between the optimization geometry of self-attention and a hard-margin SVM problem that separates and selects optimal tokens using linear constraints and established global convergence under strong assumptions. \citet{tian2023scan} revealed how the self-attention layer combines input tokens in an SGD-trained transformer, and \citet{tian2023joma} explored the training procedure of multilayer transformers by focusing on the dynamics of MLP layers; however, neither give a provable guarantee for convergence.

Another line of research focuses on the training dynamics of in-context learning. \citet{mahankali2023one} first introduced linear regression as an in-context learning task and showed that a one-layer transformer minimizing the pre-training loss is implementing single-step gradient descent. \citet{zhang2023trained} considered a single-layer linear self-attention layer on this linear regression task and proved global convergence for gradient flow. \citet{huang2023context} first proved GD global convergence of a one-layer transformer with \textit{softmax} attention on the linear regression in-context learning task where in-context tokens are drawn from certain distribution. \citet{chen2024training} extended the single-task linear regression task to a multi-task setup and demonstrated the optimal global convergence of a multi-head attention architecture by applying gradient flow to the population loss with a particular initialization scheme. \citet{nichani2024transformers} theoretically verifies that a simplified two-layer transformer can learn the induction head and generalize it to some in-context latent causal structures.
\citet{li2023theoretical, tian2023scan, zhang2023trained, huang2023context, tarzanagh2023transformers} and our work share a similar reparameterization technique, combining the key and query matrices $\K, \Q$ into one matrix $\W$ to simplify the dynamics of the training process. 

\textbf{Length generalization and positional encoding.} Length generalization is a major challenge for transformers \citep{newman2020eos,dubois2019location,anil2022exploring, ruis2020benchmark,hupkes2020compositionality,kazemnejad2023impact,zhou2023algorithms}. Several methods have been proposed to mitigate this issue, including using linear bias \citep{press2021train}, EOS detection \citep{newman2020eos}, or scratchpads and chain-of-thought \citep{nye2021show, wei2022chain, anil2022exploring, liu2022transformers}.

Recently, \citet{kazemnejad2023impact} empirically investigated length generalization of different positional encodings. They show that many commonly used schemes such as APE \citep{vaswani2017attention}, ALiBi \citep{press2021train}, and rotary \citep{su2021roformer} are ill-suited for length generalization, while T5's relative PE \citep{raffel2020exploring} works for downstream tasks. In addition, they also showed that transformers without positional encoding \citep{tsai2019transformer, haviv2022transformer} outperform all explicit positional encoding schemes. Meanwhile, \citet{shen2023positional} and \citet{ruoss2023randomized} both introduced some randomized positional encoding; \citet{ruoss2023randomized} simulated the positions of longer sequences and randomly sampled a sorted subset to fit the sequence's length; \citet{shen2023positional} used some random positional encoding to represent indicators for tokens positions. \citet{zhou2024transformers} additionally confirmed that randomized positional encoding enhances the length generalization capabilities of transformers in specific effective configurations. All the schemes above significantly improve length generalization, inspiring us in the same spirit to consider a stochastic positional encoding for our theoretical analysis. 

\subsection{Outline of this paper}
The outline of our paper is as follows. In \Cref{sec: settings} we formalize the problem setting, including our $\qsa$ task definition, the positional encoding, and the one-layer transformer architecture. \Cref{sec: main results} contains our main results, consisting of our gradient descent global convergence results, the length generalization theoretical guarantee, and the approximation lower bound for FCNs on expected loss. \Cref{sec: experiments} provides simulations for global convergence
of GD, interprets the learned parameter pattern, and empirically verifies the length generalization advantage from \Cref{sec: main results}.

\section{Settings}\label{sec: settings}
In this section, we present our notations and problem formulations, including the one-layer transformer architecture and positional encoding definitions of our paper.
% Here the softmax function is applied to each column of the matrix $A\in\R^{n\times m}$:
% $$\sm(A)_{i,j}=\frac{e^{A_{i,j}}}{\sum_{i'=1}^n e^{A_{i',j}}}$$

\textbf{Notations:}
 We use [$T$] to denote the set $\{1,2,...,T\}$. Matrices are represented in upper-case bold letters ($\X, \W$, etc.), and vectors are in lower-case bold letters ($\x,\e$, etc.). For norm, $\|\cdot\|$ denotes $\ell_2$ norm and $\|\cdot\|_F$ denotes the Frobenius norm. For vector $\v$, we use $\v_k$ to denote the $k$-th entry of vector $\v$. For a matrix $\W$, we use $\W[:,i]$ to denote its $i$-th column vector. We use $\mathds{1}\{\cdot\}$ as the indicator function. We use $\tilde{O}(\cdot)$ to hide logarithmic factors.

\subsection{$q$-sparse token selection task $\qsa$}
We simplify the $q$SA framework in \citet{sanford2023representational} to $\qsa$: only one query subset $y$ is fed into the model to compute the sparse average instead of $T$ query subsets.\footnote{Under the population loss we are training on, the sparse token selection problem is equivalent to the original problem. For details, see \Cref{appen sec: equivalence between qsa}.} The objective is to train the parameterized model using a gradient-based algorithm to approximate the $\qsa$ task given a certain data distribution. Similar to $q$SA in \citet{sanford2023representational}, this task is designed to showcase the ability of self-attention units to capture and aggregate dependencies between input tokens, especially the positional information. 
$\qsa$ highlights two salient features of attention matrices observed in practice: they are sparse, and the sparsity pattern depends on the input \citep{likhosherstov2021expressive}.

\begin{definition}\label{def: simplified qsa}
For sparsity $q$, token dimension $d$, and input dimension $dT+q$, consider the input $(\X,y)=(\x_1,\x_2,...,\x_T;y_1, y_2,..., y_q)\in\mathbb{R}^{dT+q}$ where the input tokens $\x_i\in \R^d$ and the query $y\in\binom{[T]}{q}$ is a 
$q$-element subset of $[T]$. Define the $q$-sparse token selection $\qsa(\cdot)$ as
\begin{align*}
    \qsa(\X, y)=\frac{1}{q}\sum_{j=1}^q\x_{y_{j}}
\end{align*}
\end{definition}
\textbf{Data distribution:} We consider samples $(\X,y)\in \R^{dT+q}$ from the following  distribution $\mathcal{D}_{T,q}$ ($T$ is the sequence length, $q$ is the subset size):
The input tokens $\x_i,i=1,2,..., T$ are sampled from standard Gaussian distribution, and the $q$-sparse subset $y$ containing all the averaging indices is uniformly sampled from all $q$-subsets of $[T]$.
$$\X=(\x_1,\x_2,...,\x_T),\x_i\sim \mathcal{N}(0,\bI_d),$$
$$ y\sim \unif\qty(\binom{[T]}{q}),i\in[T]$$

\subsection{Positional encodings}
\label{sec: positional encoding intro}
One of the key features of transformers is the positional encoding (PE). In our $\qsa$ task, the positional encoding turns out to be necessary for the transformer to maintain the positional information in the input sequence. In this paper, 
we consider two different PEs: the one-hot and the near-orthogonal positional encoding. 
\begin{definition}[One-hot PE] The one-hot positional encoding for position $i \in [T]$ is: $$\e_i=\qty(\mathds{1}\{i=1\},\mathds{1}\{i=2\},\cdots,\mathds{1}\{i=T\}) $$The positional encoding matrix is $\bE = [\e_1, \e_2, \cdots,\e_T] = \bI_{T}$, with each column vector $\e_i$ as the PE of the $i$-th token.
\end{definition}

\textbf{Remark.} For clarity, we stress that $\e_i$ is equal to the $i$-th elementary basis vector only when using the one-hot PE. 
\begin{definition}[Near-orthogonal PE]
    A near-orthogonal positional encoding is such that for all positions $i\in[T]$, we have $\e_i\in \{\pm {1}/{\sqrt{d_e}}\}^{d_e}$, and moreover, when we denote the positional encoding matrix $\bE = [\e_1, \e_2, \cdots,\e_T] \in \R^{d_e\times T}$, we have, for some $\delta\in(0,1/2)$, that $d_e=\Theta(q\log T/\delta^2)$ and $\left|\langle \e_i,\e_j\rangle\right|\leq \delta$ for any $i\not =j$, with the $i$-th column vector as the PE of the $i$-th token. %\wnote{need to add some property for this Rademacher random matrix}
\end{definition}
The existence of such a set of near-orthogonal positional encoding is guaranteed in \Cref{Lemma: whp random matrices are RIP} (Lemma 12 in \citet{sanford2023representational}) by showing the existence of Rademacher random matrices $\bE$ satisfying the $(q,\delta)$\textit{-restricted isometry property (RIP).} Related background of the near orthogonality and restricted isometry property refers to \Cref{appen sec: backgrounds and prelim}.

\textbf{Encoding for subset $y$:} To utilize the positional information, the query subset $y$ needs some encoding $\e_y$ based on the positional encoding $\bE$. For one-hot encoding, we consider $\e_y = \sum_{i\in y}\e_i$ as the summation of one-hot encoding for all indices in $y$. The encoding scheme for $y$ can separate the positional encoding vector $\e_i$ with index $i\in y$ and those that are not: $\langle \e_y, \e_i\rangle=1, \forall i\in y;\langle \e_y, \e_i\rangle= 0,\forall i\not\in y$.

For the near-orthogonal positional encoding when $y=\{y_i\}_{i=1}^q$, we consider the following encoding based on the RIP of $\bE$: 
\begin{align*}
\e_y &= \bE_y (\bE_y^\top \bE_y)^{-1}\mathbf{1}_q \in \mathbb{R}^{d_e} \\
\bE_y &= \qty[\e_{y_1},\e_{y_2},\cdots,\e_{y_q}] \in \mathbb{R}^{d_e \times T}
\end{align*} 
Here, $\bE_y$ is the concatenation for near-orthogonal PE for indices in $y$. By \Cref{Lemma: RIP matrices has dual certificate}, we know the encoding $\e_y$ can separate the column vectors with index in $y$ and all other columns: $\langle \e_y, \e_i\rangle=1, 
\forall i\in y;\left|\langle \e_y, \e_i\rangle\right|\leq \frac{\delta}{1-2\delta},\forall i\not\in y$. 

For the input format, we consider the positional encodings $\bE$ to be concatenated with the input token matrix $\X$. We then add the query token $\z_{\text{query}} = \qty[\x_{\text{query}}^\top \ \ \e_y^\top]^\top$ as an additional query column in the input matrix. The final input matrix for a transformer will be in the following form, 
\begin{equation}
    [\Z,\z_{\text{query}}] := \begin{bmatrix}
        \x_1 &\x_2 &\cdots &\x_{T-1}&\x_T&\x_{\text{query}}\\
        \e_{1}& \e_{2}& \cdots&\e_{T-1}&\e_{T}&\e_{y}
    \end{bmatrix}.
\end{equation}
which is a  $(d+d_e)$ by $(T+1)$ input matrix.
\subsection{One-layer transformer architecture}\label{subsec: reparam}
A simple one-head, one-layer self-attention transformer has the following architecture:
\begin{align*}
    f (\Z)= \V \Z \sm(\Z^\top \K^\top \Q \Z )
\end{align*}
where $\Q,\K\in \R^{m\times d}, \V\in\R^{d\times (d+d_e)}$ are the query, key, value matrices, respectively, and $m$ is the width\footnote{This is called the \textit{embedding dimension} in \citet{sanford2023representational}.} of the self-attention layer parameters $\Q,\K$. The column-wise softmax operator applied to matrix $\W\in\R^{a\times b}$ outputs $\sm(\W)\in\R^{a\times b}$ with:
$$\sm(\W)_{i,j} = \frac{\exp(\W_{i,j})}{\sum_{i=1}^a \exp(\W_{i,j})}$$
\textbf{Reparameterization:}
Instead of studying dynamics based on the parameters of key, query, and value matrices $\K,\Q,\V$, we introduce the following reparameterization on the architecture:
\begin{enumerate}
    \item In the attention layer, we combine $\Q,\K$ into a single trainable matrix $\W^{\K\Q} \in \mathbb{R}^{(d+d_e)\times(d+d_e)}$.
    \item In the attention layer (with parameter $\W^{\K\Q}$), the $(T+1)$-th query token $\z_\text{query}$ only attends to previous tokens $\Z$. The value matrix $\V$ also only attends to the $\Z$ part.
\end{enumerate}

The first consolidation of the query and key matrices is common in recent theoretical works \cite{tian2023scan, huang2023context, zhang2023trained, jelassi2022vision}. The second modifications for the $\W$ and $\V$ are inspired by \citet{huang2023context, zhang2023trained, chen2024training} to stress the attention between the query token and the input tokens. With those modifications to the architecture, the overall one-layer transformer can be written as follows:
\begin{definition}[Reparameterization]\label{def: reparameterization} Define a reparameterized 1-layer self-attention layer with trainable parameter matrix $\V, \W$ where $\W\in\R^{(d+d_e)\times (d+d_e)},\V\in\R^{d\times (d+d_e)}$:
\begin{equation}
\label{main eqn: practical tf}
\begin{aligned}
    f_{\bm{\theta}}&(\X,y)=\V\Z\sm(\Z^\top \W \z_{\text{query}}),\\ \Z &= [\X^\top \ \bE^\top]^\top, \z_{\text{query}} = \qty[\x_{\text{query}}^\top \ \ \e_y^\top]^\top.
\end{aligned}
\end{equation}

\end{definition}

\subsection{Stochastic positional encoding
}\label{subsec: stochastic positional encoding}
Though a set of fixed near-orthogonal PE may be sufficient for the reparameterized transformer to learn $\qsa$ with a specific sequence length and query set size, in practice we would like the trained model to extrapolate beyond the sequence lengths encountered during training. However, experiments in \Cref{sec: experiments} show that fixing the positional encoding hinders length generalization. This motivates us to use a stochastic positional encoding module proposed in \citet{shen2023positional}, in which a stochastic encoding is used for each token, newly generated for each epoch and during testing. They showed that stochastic positional encoding significantly improves length generalization capability on certain arithmetic tasks. 
% related works \citet{ruoss2023randomized} also introduces a similar randomized positional encoding technique to improve length generalization across different algorithmic tasks. 

For a one-layer transformer $f$ with freshly sampled random encoding $\bE$ at each inference, we can combine the stochasticity into the population loss. In expectation, we consider the stochastic architecture $\E[f]$ when training over the population loss. 
In our setting, we further condition on the event that the positional encoding matrix $\bE$ has the $(q,\delta)$-restricted isometry and orthogonality property (RIP) for some constant $\delta$, take the conditional expectation over the encoding distribution of the transformer model output, and let this be the stochastic architecture output. For simplicity, we denote the conditional expectation of a random variable $\bm{\zeta}$ as:
$$\E_{\bE}^{(R)}\qty[\bm{\zeta}] = \E_{\bE}\qty[\bm{\zeta}\Big|\bE\text{ satisfies $(q,\delta)$-RIP}]
$$
We define a one-layer transformer with stochastic positional encoding as follows.
\begin{definition}[Transformer with stochastic positional encoding] Define a reparameterized 1-layer self-attention layer with stochastic positional encoding as the following model with trainable parameter matrices $\V, \W$ where $\W\in\R^{(d+d_e)\times (d+d_e)},\V\in\R^{d\times (d+d_e)}$:
\begin{align*}
    f^{(s)}_{\bm{\theta}} &(\X,y)=\E_{\bE}^{(R)}\qty[\V\Z\sm(\Z^\top \W \z_{\text{query}})],\\ &\Z = [\X^\top \ \bE^\top]^\top, \z_{\text{query}} = \qty[\x_{\text{query}}^\top \ \ \e_y^\top]^\top.
\end{align*}
\end{definition}

\textbf{Training algorithm:} To train the neural network on the $\qsa$ task, we minimize the population squared loss for parameterized model $f_{\bm{\theta}}(\X)$, similar to \citet{huang2023context} and \citet{zhang2023trained}:
\begin{equation}
    \mathcal{L}_{T,q}(\bm{\theta}) = \frac{1}{2}\E_{\X,y\sim\mathcal{D}_{T,q}}\qty[\|\qsa(\X,y)-f_{\bm{\theta}}(\X,y)\|_2^2]
    \label{eqn: training objective for qsa}
\end{equation}
The expectation is taken over input tokens $\x_i$ and the query subset $y$. For clarity, we write the expectation as $\E_{\X,y}[\cdot]$.

For the transformer with stochastic positional encoding, the training objective becomes:
\begin{equation}
\label{eqn: training objective for stochastic pe}
\mathcal{L}_{T,q}^{(s)}(\bm{\theta}) = \frac{1}{2}\E_{\X, y}\qty[\|\qsa(\X,y)-f^{(s)}_{\bm{\theta}} (\X,y)\|_2^2]. 
\end{equation}
When it is clear from context, we denote $\mathcal{L}_{T,q}$ and $\mathcal{L}_{T,q}^{(s)}$ as $\mathcal{L}$ and $\mathcal{L}^{(s)}$, respectively. For the learning objective in \Cref{eqn: training objective for qsa,eqn: training objective for stochastic pe}, we use gradient descent (GD) to train the neural network.
\begin{equation}
    \bm{\theta}(t+1) = \bm{\theta}(t)-\eta \nabla_{\bm{\theta}} \mathcal{L}(\bm{\theta}(t))
\end{equation}

\section{Main results}\label{sec: main results}
In this section, we present our main results. First, we characterize the convergence of GD on the one-layer architecture introduced in \Cref{subsec: reparam}, using one-hot positional encoding as a warm-up example to provide proof intuition (\Cref{sec: onehot position encoding}). Then, we consider the architecture with stochastic positional encoding in \Cref{subsec: stochastic positional encoding} and present our main theorem with \textit{exponential} separation (\Cref{sec: stochastic positional encoding}). As a corollary, the transformer with stochastic positional encoding trained with GD provably exhibits \textit{length generalization} capability (\Cref{subsec: length generalization}). We conclude this section with the expected loss lower bound on FCNs and rigorously establish the claimed exponential separation in expressive power. 

\subsection{Warm-up: one-hot positional encoding}
\label{sec: onehot position encoding}
In this subsection, we study gradient descent convergence on the $\qsa$ with one-hot positional encoding, building up proof intuition for the stochastic positional encoding case. We consider the joint training regime, where GD updates $\V$ and $\W$ simultaneously with the same learning rate $\eta$.
With one-hot positional encoding, a one-layer transformer with width $O(T)$ is sufficient, which is a factor of $d$ narrower than the lower bound of $\Omega(Td)$ for FCNs. 

We first prove global convergence when $\V$ and $\W$ are jointly trained with gradient descent. The following theorem characterizes the convergence time of GD. For simplicity, in this subsection, we assume $\x_{\text{query}}=\0_d$. Due to space limits, the proof of \Cref{main thm: joint training one hot} is deferred to \Cref {appen sec: proof detail for joint onehot}.

\begin{theorem}[Joint training with one-hot positional encoding]    \label{main thm: joint training one hot}
    For any $2\leq q<T/4,$ $\epsilon\in (0,\frac{dT}{100(T-q)q}),$
    $ \eta\leq \frac{1}{20d^2}, \x_{\text{query}}=\0_d$, 
    if we run gradient descent on the population loss in \Cref{eqn: training objective for qsa} with zero initialization $\W(0)=\textbf{0}_{(d+T)\times (d+T)},\V(0)=\textbf{0}_{d\times (d+T)}$, then after time $t \geq \tilde{O}(\frac{T^2 d}{\eta\epsilon})$, we have
    $\mathcal{L}(\bm{\theta}(t)) \leq \epsilon.$
\end{theorem}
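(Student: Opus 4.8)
The plan is to track the evolution of the two trainable matrices $\W(t)$ and $\V(t)$ under joint gradient descent, exploiting the fact that zero initialization plus the special structure of $\qsa$ confines the iterates to a low-dimensional invariant subspace of the parameter space. First I would compute $\nabla_\W \mathcal{L}$ and $\nabla_\V \mathcal{L}$ at a general point, using that $\x_\text{query}=\0_d$ so that the value output is $\V\Z\,\sm(\Z^\top\W\z_\text{query})$ with $\z_\text{query}$ having zero in its token block and $\e_y = \sum_{i\in y}\e_i$ in its positional block. The key observation is that, because of the one-hot PE, the softmax logits $\Z^\top\W\z_\text{query}$ only see the lower-right $T\times T$ block of $\W$ acting on $\e_y$, and by symmetry of the data distribution (Gaussian tokens, uniform $q$-subset) the gradient always preserves the ansatz $\W(t) = a(t)\,\onehotWdirection$ (i.e.\ $\W$ is, up to the centering projector, a scalar times the identity on the positional block) and $\V(t) = b(t)\,\onehotVdirection$ (i.e.\ $\V$ reads off the token block with a scalar gain). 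So the whole dynamics collapses to a two-dimensional ODE/recursion in the scalars $(a(t), b(t))$, which I would derive explicitly.

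Next I would identify the effective scalar loss $\ell(a,b)$ obtained by substituting this ansatz. With the one-hot PE, the softmax over the $T$ input columns assigns weight proportional to $e^{a}$ to each of the $q$ selected positions and proportional to $1$ to each of the $T-q$ unselected ones (after the centering projector is absorbed), so the model output is $b\cdot\big(\frac{e^{a}}{q e^{a} + (T-q)}\big)\sum_{i\in y}\x_i + b\cdot\big(\frac{1}{q e^{a}+(T-q)}\big)\sum_{i\notin y}\x_i$, and matching this to $\frac1q\sum_{i\in y}\x_i$ in squared loss over the Gaussian tokens gives a clean closed form for $\ell(a,b)$: the cross terms vanish in expectation, leaving a sum of $q$ terms of the form $(b\,p - 1/q)^2$ and $(T-q)$ terms of the form $(b\,r)^2$ where $p,r$ are the two softmax weights. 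Then I would analyze the gradient descent trajectory on $\ell$: show that $a(t)$ increases monotonically (the "attention sharpening" phase), driving $p\to 1/q$ and $r\to 0$, while $b(t)$ tracks the near-optimal value; quantify the rate by lower-bounding $\partial_a \ell$ along the trajectory away from the optimum and using the learning-rate bound $\eta\le \frac{1}{20d^2}$ to control second-order terms (this is where the Hessian-type smoothness estimate enters, and the factor $d^2$ comes from the fourth moments of Gaussian tokens). Finally, I would convert the rate estimate into the stated iteration complexity $t\ge \tilde O(T^2 d/(\eta\epsilon))$: the $T^2$ arises because near initialization $a\approx 0$, the softmax weight on a selected token is $\Theta(1/T)$, so the signal in $\partial_a\ell$ is initially of order $1/T$ and the "escape" from the flat region takes $\Theta(T^2)$-scale time, after which convergence is fast; the $\epsilon$-dependence is the usual $1/\epsilon$ from reaching a quadratic minimum, and the hidden logarithms absorb the time to grow $a$ from $O(1)$ to $\Theta(\log(T/(\epsilon q)))$.

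The main obstacle I anticipate is controlling the joint (rather than alternating) dynamics of $(a,b)$: the loss $\ell(a,b)$ is not convex — it is a product-type interaction between the softmax temperature $a$ and the linear gain $b$ — so I cannot simply invoke a generic convex-optimization rate. The fix is to show the trajectory stays in a region where $b$ is pinned close to its conditionally optimal value $b^\star(a)$ (because $\partial_b\ell$ is strongly convex in $b$ with curvature $\Theta(1)$ in $T$-scaled units, so $b$ equilibrates on a fast timescale relative to $a$), reducing the analysis to the one-dimensional dynamics of $a$ along the "slow manifold" $b = b^\star(a)$; establishing this time-scale separation rigorously, with explicit constants compatible with $\eta\le 1/(20d^2)$ and $\epsilon < dT/(100(T-q)q)$, and verifying the invariant-subspace ansatz is genuinely preserved by the discrete GD update (not just gradient flow), are the delicate points. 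The constraint $\epsilon < dT/(100(T-q)q)$ should enter precisely to guarantee that the target accuracy is reachable within the region where these estimates hold, i.e.\ before $a$ needs to grow so large that the smoothness constant degrades.
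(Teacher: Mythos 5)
Your proposal reconstructs the paper's argument almost step for step: the invariant-subspace ansatz $\W(t)=C(t)(\bI_T-\frac1T\mathbf 1\mathbf 1^\top)$ on the positional block and $\V(t)=\alpha(t)[\bI_d,\0]$ is exactly the paper's key induction lemma, the reduction to a scalar loss in $(a,b)=(C,\alpha)$ is the paper's two-variable recursion, your "slow-manifold" picture (with $b$ pinned near $b^\star(a)$) is the paper's Phase II induction showing $\alpha(t)\in[1-0.1\sqrt{q(T-q)\epsilon/(dT)},\,(1+\text{small})\,\alpha^*(t)]$, and the $T^2$ from the $\Theta(1/T)$ initial softmax weight plus $1/\epsilon$ for the quadratic minimum matches the paper's two-stage bound on $\Delta C(t)$. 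Same approach; the only difference is phrasing the tracking of $\alpha$ by $\alpha^*$ as an adiabatic/time-scale-separation heuristic rather than the paper's explicit interval induction, which is cosmetic.
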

We briefly sketch our proof techniques. We start from deriving the key lemma of the proof, \Cref{main Lemma: symmetry in different q-sparse subsets joint}. It shows the evolution trajectories of $\V(t)$ and $\W(t)$ are always along some direction for all time $t\geq 0$. 

\begin{lemma}[\Cref{Lemma: symmetry in different q-sparse subsets joint}, informal]    \label{main Lemma: symmetry in different q-sparse subsets joint}Along the gradient descent trajectory, for all $t\geq 0$, there exist some time-dependent scalars $C(t),\alpha(t)$ s.t.:
$$\W(t)= C(t) \begin{bmatrix}
    \0_{d\times d} &\0_{d\times T}\\
    \0_{T\times d}&\qty(\bI_T - \frac{1}{T}\mathbf{1}_T\mathbf{1}_T^\top)
\end{bmatrix}, $$
$$\V(t) = \alpha(t) \begin{bmatrix}
    \bI_d &\0_{d\times T}
\end{bmatrix}.$$
\end{lemma}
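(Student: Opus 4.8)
The plan is to prove the claimed structural invariant by induction on the gradient descent step $t$, exploiting a symmetry group that acts on the problem and under which the population loss and the zero initialization are invariant. The key observation is that both $\mathcal D_{T,q}$ and the one-hot positional encoding setup are symmetric under (i) arbitrary permutations $\pi \in S_T$ of the token positions (acting simultaneously on the $\x_i$, on the one-hot vectors $\e_i$, and on the index set $y$), and (ii) arbitrary orthogonal transformations $\mathbf R \in O(d)$ applied to every token $\x_i$ (with $\x_{\text{query}}=\0_d$ this is harmless). Since the initialization $\W(0)=\0,\V(0)=\0$ is fixed by every such symmetry, and gradient descent commutes with these symmetries (the loss is invariant, so the gradient is equivariant), the iterates $\W(t),\V(t)$ must lie in the subspace of parameter matrices fixed by the whole group for all $t$. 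The bulk of the argument is then the linear-algebra computation identifying that fixed subspace.

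The steps, in order, are as follows. First I would set up the two group actions precisely on the parameter space: a permutation $\pi$ acts on $\Z$ by permuting the columns and, crucially, since the positional block of $\Z$ is $\bI_T$ in the one-hot case, permuting columns of the positional block is the same as right-multiplying that block by the permutation matrix $\bP_\pi$; correspondingly $\e_y \mapsto \bP_\pi \e_y$. Writing $\W$ and $\V$ in the block form induced by the $(d+T)$ splitting, I would work out how $(\W,\V)\mapsto(g\cdot\W, g\cdot\V)$ for $g=(\pi,\mathbf R)$, namely a conjugation-type action $\W \mapsto \mathrm{blockdiag}(\mathbf R,\bP_\pi)\,\W\,\mathrm{blockdiag}(\mathbf R,\bP_\pi)^\top$ and $\V \mapsto \mathbf R\,\V\,\mathrm{blockdiag}(\mathbf R,\bP_\pi)^\top$. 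Second, I would verify equivariance of one GD step: $\mathcal L$ is invariant under $g$, hence $\nabla_{\bm\theta}\mathcal L(g\cdot\bm\theta) = g\cdot\nabla_{\bm\theta}\mathcal L(\bm\theta)$, so if $\bm\theta(t)$ is $g$-fixed then so is $\bm\theta(t+1)$; combined with $\bm\theta(0)=\0$ being $g$-fixed, induction gives that $\bm\theta(t)$ is fixed by the full group for all $t$. Third, I would compute the fixed-point subspace. For the $O(d)$-action: the $d\times d$ block of $\W$ must commute with all of $O(d)$, so it is a multiple of $\bI_d$; the $d\times T$ and $T\times d$ off-diagonal blocks of $\W$ must be invariant under left/right multiplication by arbitrary $\mathbf R\in O(d)$ acting on the $d$-side alone, which forces them to be zero; similarly the $d\times d$ part of $\V$ is a scalar multiple of $\bI_d$. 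For the $S_T$-action on the $T\times T$ block of $\W$: a matrix fixed by $\bP_\pi(\cdot)\bP_\pi^\top$ for all $\pi$ is of the form $a\bI_T + b\mathbf 1_T\mathbf 1_T^\top$; and the $d\times T$ block of $\V$ fixed by right multiplication by all $\bP_\pi$ is of the form $c\,\mathbf 1_T^\top$ tensored on the $d$-side — but combined with the already-established $O(d)$ constraint (that block must vanish) it is forced to be $\0_{d\times T}$. Fourth, I would pin down the scalar $b$ in $a\bI_T+b\mathbf 1_T\mathbf 1_T^\top$: the softmax is invariant under adding a constant to every entry of a column of its argument, and $\mathbf 1_T\mathbf 1_T^\top \z_{\text{query}}$ is a constant vector, so without loss of generality we may gauge-fix $b=-a/T$, giving the centered projector $\bI_T-\tfrac1T\mathbf 1_T\mathbf 1_T^\top$; this is exactly the representation chosen in the statement, with $C(t)$ absorbing $a(t)$. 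Setting $\alpha(t)$ equal to the scalar multiplying $\bI_d$ in $\V(t)$ completes the identification.

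The main obstacle, I expect, is the gauge-fixing subtlety around the $\mathbf 1_T\mathbf 1_T^\top$ direction and the off-diagonal $\V$ block: the symmetry argument alone only shows $\W(t)$ lies in the two-dimensional span $\{\bI_T, \mathbf 1_T\mathbf 1_T^\top\}$ (plus the $\bI_d$ corner) and $\V(t)$ in the span of $\{\bI_d\text{-corner}, \mathbf 1_T^\top\text{-block}\}$, not the exact one-parameter forms claimed. One must argue separately that (a) the $\mathbf 1_T$ component of $\W$ acts trivially on the model output — because it only shifts softmax logits by a position-independent constant — so it can be chosen to keep $\W$ in the stated centered-projector form, and (b) the $\mathbf 1_T^\top$ block of $\V$ actually stays zero, which follows because the $O(d)$-equivariance already kills any $d\times T$ block of $\V$ that is not identically zero (a nonzero $\mathbf 1_T^\top$-block would have to be $O(d)$-equivariant on its $d$-indexed rows, impossible for a nonzero map $\R^{d}\to\R$ that is equivariant under $O(d)$ acting by rotation on the domain and trivially on the codomain). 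I would therefore present the symmetry reduction first, then handle these two degrees of freedom by the explicit softmax-shift-invariance and $O(d)$-representation-theory observations, and finally read off $C(t)$ and $\alpha(t)$ as the surviving scalars. The remaining bookkeeping — checking that $\mathcal L$ is genuinely invariant under the stated action, which reduces to $\qsa(g\cdot(\X,y)) = \mathbf R\,\qsa(\X,y)$ together with the Gaussian/uniform distributions being $g$-invariant — is routine and I would relegate it to the appendix.
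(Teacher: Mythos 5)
Your route is genuinely different from the paper's. The paper proves \Cref{Lemma: symmetry in different q-sparse subsets joint} by a direct block-by-block computation of $\nabla_{\W}\mathcal L$ and $\nabla_{\V}\mathcal L$, showing each sub-block aligns (or vanishes) by explicitly evaluating the Gaussian and uniform-$y$ expectations. You instead invoke equivariance of GD under the product group $O(d)\times S_T$ and compute the fixed-point subspace of the parameter space; this is conceptually cleaner, and it is essentially the same mechanism the paper later re-implements by hand for the stochastic-PE case (the Rademacher flips and column permutations in \Cref{Lemma: Flipping rows doesn't change the softmax}--\Cref{Lemma: Diagonal entries of the gradient are the same} are precisely the analogous symmetry group there). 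Your identification of the $S_T$ and $O(d)$ actions and the computation that $\W_{12}=\W_{21}=\0$, $\W_{22}\in\spn\{\bI_T,\1_T\1_T^\top\}$, $\V_{:,\le d}\in\spn\{\bI_d\}$, $\V_{:,>d}=\0$ are all correct, and your Schur-type argument for the last one is fine.

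There is, however, a concrete gap: the $O(d)$-fixed-point condition on the token--token block of $\W$ is $\mathbf R\,\W_{\le d,\le d}\,\mathbf R^\top=\W_{\le d,\le d}$ for all $\mathbf R\in O(d)$, which gives $\W_{\le d,\le d}=c\bI_d$, \emph{not} $\W_{\le d,\le d}=\0$ as the lemma asserts. Your ``surviving scalars'' list ($C(t),\alpha(t)$) silently drops the scalar $c$, but nothing in your symmetry argument forces $c=0$. The needed observation is of the same kind you already deploy for the $\1_T\1_T^\top$ direction: because $\x_{\text{query}}=\0_d$, the first $d$ coordinates of $\z_{\text{query}}$ vanish, so $\Z^\top\W\z_{\text{query}}$ is independent of $\W_{\le d,\le d}$ (and of $\W_{>d,\le d}$); the loss is therefore flat along these blocks, the gradient there is identically $\0$, and from zero initialization they stay $\0$. (This is exactly how the paper handles it: the first $d$ columns of $\nabla_{\W}\mathcal L$ carry a factor $\x_{\text{query}}^\top=\0$.) Relatedly, your ``gauge-fix $b=-a/T$ WLOG'' is imprecise as stated: GD runs on the raw parameters, not on a quotient, so one cannot choose a representative at will. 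The correct statement is that, since $\mathcal L$ is invariant under $\W_{22}\mapsto\W_{22}+\1_T\mathbf u^\top$, the gradient lies in the Frobenius-orthogonal complement of that flat subspace (zero column sums); intersected with $\spn\{\bI_T,\1_T\1_T^\top\}$ this is exactly $\spn\{\bI_T-\tfrac1T\1_T\1_T^\top\}$, and from $\W(0)=\0$ the trajectory stays there. With both of these ``flat-direction plus zero initialization'' observations added, your symmetry argument closes and recovers the lemma.
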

The nice property along the gradient descent trajectory is attributed to the symmetry and orthogonality of the one-hot positional encoding. Since $y$ follows uniform distribution over all possible $q$-subsets of $[T]$ and $\X$ follows the Gaussian distribution, we can directly calculate the gradient in each of the entry $\W_{ij}(t),\V_{ij}(t)$. It can be shown that along the training trajectory on the population objective, except for the position-position block, all other attention blocks are always $\0$. Moreover, the gradient of the position-position block of $\W$ always aligns with $\qty(\bI_T - \frac{1}{T}\mathbf{1}_T\mathbf{1}_T^\top)$, while the gradient of the token block of $\V$ always aligns with $\bI_d$. Using induction beginning from zero initialization of $\W(0)=\0,\V(t)=\0$, the lemma holds for all $t\geq 0$.

With \Cref{main Lemma: symmetry in different q-sparse subsets joint}, the GD dynamics can be approximately reduced to the following ODE on $\alpha(t)$ and $C(t)$:
\begin{equation}\label{eq: two-variable dynamics in joint case}
    \begin{split}
    \dot{\alpha} &=\eta s_+\qty(1-\frac{\alpha(Tqs_+^2-2qs_++1)}{(T-q)s_+})\\
    \dot{C} &=\frac{\eta \alpha d}{T-1}s_+ (1-qs_+)\qty(1 + \frac{q\alpha}{T-q}(1-Ts_+))
\end{split}
\end{equation}  
where $s_+:=\sm(\Z^\top\W(t)\z_{\text{query}})_i$. By \Cref{main Lemma: symmetry in different q-sparse subsets joint} it is equal to $ \sm(C\bE^\top \e_y)=\frac{1}{q+(T-q)e^{-C}}$ is the attention score of correct positions $i\in y$ for any $y$. It thus remains to analyze this non-linear ODE. 

We first characterize the dynamics for $\alpha(t)$. 
When $s_+$ is fixed with a stationary $\alpha^* = \frac{(T-q)s_+}{Tqs_+^2-2qs_++1}$, note that $\alpha(t)$ can be seen as a linear ODE. However, the evolution of $C(t)$ leads to a monotonic increment of $s_+$, making the trajectory of $\alpha(t)$ follow a non-monotonic pattern. Fortunately, we can inductively prove that first, $\alpha$ rapidly grows to a constant $c_1$ near 1, and subsequently, $\alpha$ can stay above $c_1$ and below a threshold $\qty(1+c_2\frac{(1-qs_+)}{qs_+(Ts_+-1)})\alpha^*$ with $c_2<1$. By the dynamics of $C(t)$, this threshold prevents $C(t)$ from decreasing, and the induction hypothesis enables us to prove that $C(t)$ is monotonically increasing. Note that when $s_+$ converges to $\frac{1}{q}$ as desired, the threshold naturally converges to 1 and the $\V(t)$ converges to the ground-truth $\qty[\bI_d\ \ \0_{d\times T}]$.

With $\alpha(t)$ bounded, we can analyze the movement of $C(t)$ by lower bounding its increment each iteration. Like in \citet{huang2023context}, the increment pattern exhibits a two-stage pattern. In the first stage, $C(t)$ rapidly increases at a linear rate until $s_+$ reaches $\Theta(\frac{1}{q})$; in the second stage, the growth rate of $C(t)$ gradually decreases while being lower bounded by $\text{poly}(T,d)\cdot \epsilon$ when $\mathcal{L}(t)\geq \epsilon$. Therefore, $s_+$ converges to $\frac{1}{q}$ at a rate of $O(1/t)$, leading to the final convergence of the model. 

\subsection{Transformer with stochastic positional encoding}
\label{sec: stochastic positional encoding}
In this subsection, we study the gradient descent convergence on the $\qsa$ problem with stochastic positional encoding. We also consider the joint training case, updating $\V(t)$ and $\W(t)$ simultaneously with learning rate $\eta$. 

Though stochastic PE was introduced as an alternative solution due to the failure of fixed absolute PE in length generalization \cite{shen2023positional}, it also turns out to help the analysis of the GD training trajectory. The symmetry of the randomized architecture makes it possible to use our intuition from the one-hot encoding case while reducing the transformer width to $\Theta(d+q\log T)$.
We now present our main theorem with stochastic PE.
\begin{theorem}[Joint training with stochastic positional encoding]    \label{main thm: joint training stochastic pe}
    Suppose $q=\Theta(1)$, $\delta < 1/10 $, $d_e=\Theta(q\log T/\delta^2)$, $2\leq q<T/4$, $\epsilon\in (0,\frac{dT}{100(T-q)q}), \eta\leq \frac{d_e^2}{40d^2T}$.
    If we run GD on the population loss in \Cref{eqn: training objective for stochastic pe} with zero initialization $\W(0)=\textbf{0}_{(d+d_e)\times (d+d_e)},\V(0)=\textbf{0}_{d\times (d+d_e)}$, then after time $t \geq \tilde{O}(\frac{T^{\frac{2-2\delta}{1-3\delta}}}{\eta}+\frac{T^2 d}{\eta\epsilon})$, we have
    $\mathcal{L}^{(s)}(\bm{\theta}(t)) \leq \epsilon.$
\end{theorem}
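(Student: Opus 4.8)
The plan is to mirror the three-step strategy of the one-hot warm-up (\Cref{main thm: joint training one hot}), now carrying the conditional expectation $\E_\bE^{(R)}[\cdot]$ over Rademacher encodings satisfying the $(q,\delta)$-RIP and absorbing the near-orthogonality slack as controlled $\delta$-order perturbations of the idealized dynamics \Cref{eq: two-variable dynamics in joint case}. First I would establish the analog of \Cref{main Lemma: symmetry in different q-sparse subsets joint}: along joint GD from zero, $\W(t)$ stays proportional to $\randomWdirection$ and $\V(t)$ proportional to $\randomVdirection$, so the trajectory is again governed by two scalars $\alpha(t),C(t)$. The point is that the law of $\bE$ conditioned on the RIP event is invariant under sign flips of rows, sign flips of columns, and permutations of columns — because near-orthogonality and RIP are themselves invariant under these operations — and the subset encoding $\e_y$ transforms covariantly; combined with the rotational symmetry of the Gaussian $\X$ and the permutation symmetry of $y\sim\unif(\binom{[T]}{q})$, this forces the positional block of $\W$ to be a scalar multiple of $\bI_{d_e}$, the positional block of $\V$ and the cross blocks to vanish, and the token block of $\V$ to be a scalar multiple of $\bI_d$. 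Induction from $\W(0)=\0,\V(0)=\0$ closes it for all $t$.

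Second I would derive the reduced two-variable dynamics. Writing $s_+=\E_\bE^{(R)}[\sm(\Z^\top\W\zq)_i]$ for $i\in y$ and $s_-$ analogously for $i\notin y$, \Cref{Lemma: RIP matrices has dual certificate} gives softmax logits equal to $C$ on correct positions and lying in $[-C\delta/(1-2\delta),\,C\delta/(1-2\delta)]$ on incorrect ones, so the effective ``wrong attention mass'' $(T-q)$ appearing in \Cref{eq: two-variable dynamics in joint case} gets replaced by a factor between $(T-q)e^{-C\delta/(1-2\delta)}$ and $(T-q)e^{C\delta/(1-2\delta)}$, plus lower-order error terms. I would show these $\delta$-perturbations never flip the sign of $\dot\alpha$ or $\dot C$ in the regimes that matter and only rescale the effective time axis by a $T^{O(\delta)}$ factor. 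The loss decomposes as $\|\qsa-f^{(s)}\|^2=$ (bias from $\alpha\neq1$ and from $qs_+\neq1$) $+$ (a Gaussian-variance term $\sim d\sum_{i\notin y}s_-^2$ from the wrong tokens), and both vanish as $\alpha\to1,\ qs_+\to1$.

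Third I would run the warm-up ODE analysis on this perturbed system: $\alpha$ rises quickly to a constant near $1$ and then stays pinned in a $(1\pm o(1))$-band around the moving stationary value $\alpha^*(C)$, while $C$ grows in two stages — a first stage in which, because incorrect logits decay only like $e^{-C(1-\delta/(1-2\delta))}$ rather than $e^{-C}$, $C$ must climb to $\Theta\!\big(\tfrac{1}{1-3\delta}\log T\big)$ before $qs_+$ reaches $\Theta(1)$, costing $\tilde O\big(T^{(2-2\delta)/(1-3\delta)}/\eta\big)$ steps; and a second stage in which, once $qs_+=\Theta(1)$, each step increases $C$ by at least $\poly(T,d)\cdot\epsilon$ whenever $\mathcal L^{(s)}\ge\epsilon$, so $qs_+\to1$ at rate $O(1/t)$ and the loss falls below $\epsilon$ within an additional $\tilde O(T^2d/(\eta\epsilon))$ steps. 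Summing the two stages gives the stated bound.

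The main obstacle is the second step: reducing the conditional-expectation dynamics to a scalar ODE and controlling the near-orthogonality perturbations uniformly across all regimes of $(\alpha,C)$. Unlike the one-hot case, $\E_\bE^{(R)}$ does not factorize over coordinates — the RIP conditioning correlates the entries of $\bE$ — so the gradient of $\mathcal L^{(s)}$ must be bounded through the RIP and dual-certificate estimates rather than computed exactly; and it is the tension between how fast $C$ must grow (to suppress the $e^{C\delta/(1-2\delta)}$-inflated wrong mass) and how fast it can grow (limited by $\alpha$ and $s_+$) that produces the exponent $\tfrac{2-2\delta}{1-3\delta}$, so obtaining that exponent rather than a crude polynomial bound is the technical crux.
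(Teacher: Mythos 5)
Your proposal is correct and mirrors the paper's proof almost step for step: an induction-hypothesis lemma (symmetry of the conditional PE distribution forcing $\W(t)\propto\randomWdirection$ and $\V(t)\propto\randomVdirection$) reduces the dynamics to two scalars $\alpha(t),C(t)$, the $(q,\delta)$-RIP gives exactly the bracketing of the wrong-token mass by $(T-q)e^{-C(1-3\delta)/(1-2\delta)}$ and $(T-q)e^{-C(1-\delta)/(1-2\delta)}$ that you describe, and the two-stage ODE analysis yields $\tilde O(T^{(2-2\delta)/(1-3\delta)}/\eta)$ for the first stage plus $\tilde O(T^2d/(\eta\epsilon))$ for the second. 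One small imprecision: the symmetry lemmas actually used operate on row sign flips and row permutations of $\bE$, which preserve the column Gram matrix $\bE^\top\bE$ and hence the attention scores exactly; column sign flips, although they preserve the RIP event, do not leave $\sm(C\bE^\top\e_y)$ invariant, so they are not among the usable symmetries — but since you also list row sign flips, your argument still goes through.
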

The full proof appears in \Cref{appen sec: proof detail for joint stochastic pe}. The proof idea for this theorem is similar to that of \Cref{main thm: joint training one hot}. We first use induction to prove a key lemma similar to \Cref{main Lemma: symmetry in different q-sparse subsets joint}, though the convergence direction is different.
\begin{lemma}[Consequence of \Cref{Lemma: induction hypothesis for joint training stochastic PE}, informal] 
\label{main Lemma: induction hypothesis for joint training stochastic PE}
Along the gradient descent trajectory, for all $t\geq 0$, there exist some time-dependent scalars $C(t),\alpha(t)$ s.t. 
$$\W(t)= C(t) \begin{bmatrix}
    \0_{d\times d} &\0_{d\times d_e}\\
    \0_{d_e\times d}&\bI_{d_e}
\end{bmatrix},$$$$ \V(t) = \alpha(t) \begin{bmatrix}
    \bI_d &\0_{d\times d_e}
\end{bmatrix}.$$
\end{lemma}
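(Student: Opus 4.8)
The plan is to prove \Cref{main Lemma: induction hypothesis for joint training stochastic PE} by induction on the gradient-descent step $t$, mirroring the one-hot warm-up but replacing the permutation/$\1$-direction symmetry of the one-hot encoding by the sign-flip and row-exchange symmetries of the Rademacher positional-encoding matrix $\bE$ conditioned on RIP. The base case $t=0$ is immediate with $C(0)=\alpha(0)=0$. For the inductive step, assume $\W(t)=C(t)\bigl[\begin{smallmatrix}\0&\0\\\0&\bI_{d_e}\end{smallmatrix}\bigr]$ and $\V(t)=\alpha(t)[\bI_d\ \0_{d\times d_e}]$; since $\W(t+1)=\W(t)-\eta\nabla_\W\mathcal{L}^{(s)}(\bm\theta(t))$ and likewise for $\V$, it suffices to show $\nabla_\W\mathcal{L}^{(s)}(\bm\theta(t))=c\,\bigl[\begin{smallmatrix}\0&\0\\\0&\bI_{d_e}\end{smallmatrix}\bigr]$ and $\nabla_\V\mathcal{L}^{(s)}(\bm\theta(t))=c'\,[\bI_d\ \0_{d\times d_e}]$ for scalars $c,c'$, and then take $C(t+1)=C(t)-\eta c$, $\alpha(t+1)=\alpha(t)-\eta c'$.

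To expose the gradient structure I first simplify the forward pass at $\bm\theta(t)$. With $\W(t)$ of the assumed form, $\W(t)\zq=[\,\0^\top\ \ C(t)\e_y^\top\,]^\top$, so the attention logit of token $i$ is $\z_i^\top\W(t)\zq=C(t)\langle\e_i,\e_y\rangle$, which by the dual-certificate property of RIP matrices equals $C(t)$ for $i\in y$ and lies in $\pm\,C(t)\delta/(1-2\delta)$ otherwise; in particular $p:=\sm(\Z^\top\W(t)\zq)$ depends on $\bE$ only through the inner products $\{\langle\e_i,\e_y\rangle\}$. Since also $\V(t)\Z=\alpha(t)\X$, we get $f^{(s)}=\alpha(t)\,\E_\bE^{(R)}\!\sum_i p_i\x_i$, so the residual $r:=\qsa(\X,y)-f^{(s)}$ is $\bE$-independent. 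Differentiating $\mathcal{L}^{(s)}=\tfrac12\E_{\X,y}\|r\|^2$ gives $\nabla_\V\mathcal{L}^{(s)}=-\E_{\X,y}\bigl[r\,(\E_\bE^{(R)}[\Z p])^\top\bigr]$ and, via the softmax Jacobian $\partial p_i/\partial\ell_j=p_i(\delta_{ij}-p_j)$, $\nabla_\W\mathcal{L}^{(s)}=\E_{\X,y}\E_\bE^{(R)}\bigl[\sum_j h_j\,\z_j\zq^\top\bigr]$ with scalars $h_j=-\sum_i\langle r,\alpha(t)\x_i\rangle\,p_i(\delta_{ij}-p_j)$, each $h_j$ depending on $\bE$ only through $p$.

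The rest is symmetry, exploited block by block after writing $\z_j\zq^\top=\bigl[\begin{smallmatrix}\x_j\x_{\text{query}}^\top & \x_j\e_y^\top\\ \e_j\x_{\text{query}}^\top & \e_j\e_y^\top\end{smallmatrix}\bigr]$ and $\E_\bE^{(R)}[\Z p]=[(\E_\bE^{(R)}\sum_ip_i\x_i)^\top\ \ (\E_\bE^{(R)}\sum_ip_i\e_i)^\top]^\top$. \emph{(i) Sign symmetry of $\bE$:} negating row $k$ of $\bE$ sends $\e_i\mapsto\diag(s)\e_i$, $\e_y\mapsto\diag(s)\e_y$, preserves the Rademacher law and the $(q,\delta)$-RIP event (which depends only on the Gram matrix of the columns of $\bE$), and fixes every $p_i,h_j,\x_i$; applied to one row this kills the $\x\e$- and $\e\x$-blocks of $\nabla_\W$, the $\e$-block of $\nabla_\V$ (since $\E_\bE^{(R)}\sum_ip_i\e_i=\0$), and all off-diagonal entries of the $\e\e$-block of $\nabla_\W$. \emph{(ii) Row-exchange symmetry of $\bE$:} permuting two rows of $\bE$ again preserves the conditioned law and fixes all $p_i,h_j$, so the diagonal entries of the $\e\e$-block of $\nabla_\W$ are all equal, i.e.\ that block is $c\,\bI_{d_e}$. \emph{(iii) Rotation symmetry in token space:} replacing $\x_i\mapsto R\x_i$ for $R\in O(d)$ preserves $\mathcal{D}_{T,q}$, fixes all $p_i$, and sends $r\mapsto Rr$, conjugating the $\x$-block of $\nabla_\V$ by $R$; holding for all $R\in O(d)$ this forces it to be $c'\,\bI_d$. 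Finally, taking $\x_{\text{query}}=\0_d$ as in the warm-up (or noting that $h_j$ is $\x_{\text{query}}$-independent, so a mean-zero $\x_{\text{query}}$ works too) makes the $\x\x$- and $\e\x$-blocks of $\nabla_\W$ vanish. Combining, $\nabla_\W\mathcal{L}^{(s)}$ and $\nabla_\V\mathcal{L}^{(s)}$ have exactly the asserted forms, closing the induction.

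The main obstacle — beyond routine softmax and matrix-derivative bookkeeping — is checking that the conditional expectation $\E_\bE^{(R)}[\,\cdot\mid\bE\text{ is }(q,\delta)\text{-RIP}]$ genuinely inherits all three symmetries: one must verify that the RIP/orthogonality event and the induced encoding $\e_y=\bE_y(\bE_y^\top\bE_y)^{-1}\1_q$ transform covariantly under row sign flips and row permutations of $\bE$ (they do, being functions of the columns of $\bE$), and that the conditioning keeps $\bE$ independent of $\X$ and $\x_{\text{query}}$. I expect this bookkeeping, together with the quantitative control of $C(t),\alpha(t)$ that the formal \Cref{Lemma: induction hypothesis for joint training stochastic PE} additionally carries (and which is supplied by the subsequent ODE analysis), to be where the real work lies; the structural claim stated here follows purely from the induction above.
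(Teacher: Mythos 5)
Your proposal is correct and takes essentially the same route as the paper's proof of \Cref{Lemma: induction hypothesis for joint training stochastic PE}: induction on the GD step, block decomposition of $\nabla_{\W}$ and $\nabla_{\V}$, and exploiting the symmetries of the RIP-conditioned Rademacher ensemble. Your handling of the position-position block of $\nabla_{\W}$ via single-row sign flips and row exchanges is exactly \Cref{Lemma: Off-diagonal entries of the gradient have 0 expectation} and \Cref{Lemma: Diagonal entries of the gradient are the same}; the only cosmetic deviations are that the paper kills the remaining $\W$-blocks via the global flips $\X\mapsto-\X$ (first $d$ rows, valid for any fixed $\x_{\text{query}}$) and $\bE\mapsto-\bE$ (bottom-left block) rather than per-row flips, and it establishes the token block of $\nabla_{\V}$ by a direct entrywise Gaussian computation rather than your cleaner $O(d)$-rotation-invariance argument.
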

% Since $\X$ follows Gaussian distribution, we can still easily derive $\V(t)$'s dynamics and prove it along $\bI_d$. In contrast, the non-orthogonality makes it difficult to calculate the exact dynamics for the self-attention layer $\W$. However, the stochasticity in the positional encoding enable us to prove that the off-diagonal entries are all zero. For the diagonal entries of the gradient, we can also take advantage of the randomness to guarantee they are sharing the same value, thus completing the proof of the lemma. 

With \Cref{main Lemma: induction hypothesis for joint training stochastic PE}, we can use the $(q,\delta)$-RIP of $\bE$ to estimate the expectation in the dynamics, and moreover derive an approximate ODE similar to \Cref{eq: two-variable dynamics in joint case} that roughly tracks the trajectory. Although we cannot calculate the exact ODE, the upper and lower bounds enable us to formulate an inductive argument, similar to the proof of \Cref{main thm: joint training one hot}. Eventually, the analysis of our controlled dynamics leads to global convergence.

Note that the model trained by GD with stochastic PE converges exactly to a solution equivalent to the constructed one-layer transformer in \citet{sanford2023representational} with the same width $m\sim \Theta(d+q\log T)$. This implies GD learns $\qsa$ using a transformer of near-optimal width, where the theoretical lower bound $\Theta(d+q)$ is only smaller by a logarithmic factor.\footnote{Remarkably, this resembles the intuition from \citet{zhou2023algorithms}: since we learn the smallest transformer in theory, it can length-generalize. This corresponds to our length generalization guarantee in \Cref{subsec: length generalization}.}

Moreover, the learned transformer can perfectly solve $\qsa$ tasks given any fixed positional $\bE$ satisfying $(q,\delta)$-RIP, and we can replace the stochastic architecture with any valid $\bE$ as a fixed architecture. In practice, this means that once the transformer is successfully trained, we no longer need to sample new positional encodings for evaluation. For more discussion, see \Cref{appen sec: from stochastic to fix}.

\begin{figure}
    \centering
    \hspace{-0.4cm}\includegraphics[width=0.8\textwidth]{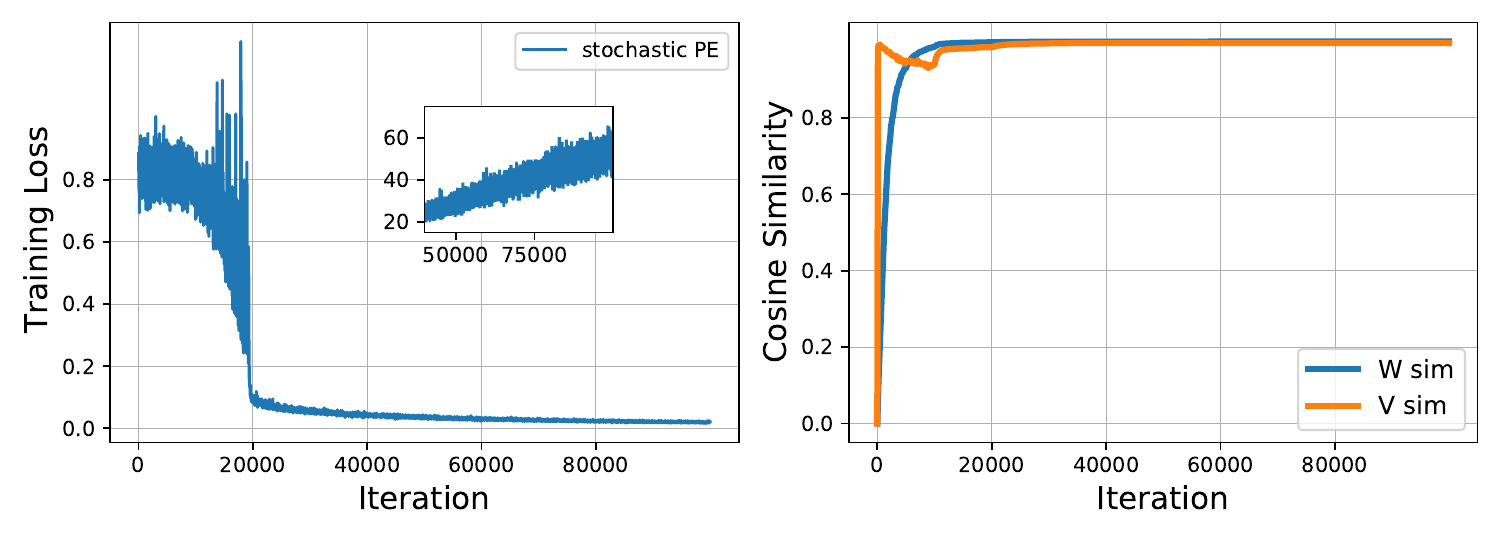}    
    \caption{The above figures describe the training trajectory of the one-layer transformer model with attention layer $\W$ attending to the full matrix $[\Z,\z_{\text{query}}]$. \textbf{Left (Global convergence of the transformer)}: we plot the training loss for the one-layer transformer with stochastic PE, complementing it with the inverse loss plot. The training loss converges to the global minimum of 0, and the inverse loss increases linearly, indicating the $O(1/t)$ convergence rate. \textbf{Right (Cosine similarity with the ground-truth)}: we plot the evolution of the cosine similarity between $\W$ and $\W^\star$ and between $\V$ and $\V^\star$ throughout training, where $\W^\star$ and $\V^\star$ are the ground-truth matrices. The two cosine similarity curves gradually converge to 1, indicating the one-layer transformer eventually converges to the desired direction.}
    \label{fig:similarity plot}
\end{figure}

\subsection{Expressive power separation}
We now complement our positive results with a width lower bound for FCNs on the population loss: any FCN without the first layer width $\Omega(Td)$ cannot approximate the $\qsa$ with respect to the expected square loss. In comparison to the results of \citet{sanford2023representational}, our results lower bound the approximation error on average over the data distribution instead of considering a single worst-case data point\footnote{The lower bound of average-case loss with respect to some data distribution implies the existence of a lower bound for worst-case loss in \cite{sanford2023representational}.}.

In this section, we consider FCN of the following form:
$$f(\x)=\W_L\sigma(\W_{L-1}\cdots\sigma(\W_1\x))$$
In fact, the results can be generalized to any network of the form $f(\x)=g(\W_1 \x)$ for some first layer weight matrix $\W_1\in \R^{m\times (Td+q)}$ and arbitrary function $g:\R^m\to \R^{Td}$. For the FCN, the input format should be flattened as a vector  $[\x_{\text{vec}},y] = \qty(\x_1^\top,\x_2^\top, ..., \x^\top_T, y_1,y_2,...,y_q)^\top\in \R^{dT+q}$. Here, $\x_i \in \R^d$ for $i\in[T]$.

We now consider the expected loss for the $\qsa$ with input tokens $\x_i\sim \mathcal{N}(0,\bI_d)$ and the query subset $y$ sampled uniformly from the set of $q$-subsets of $[T]$. Our result shows any FCN that can approximate $\qsa$ requires its first hidden layer to have width at least $\Omega(Td)$. 
\begin{theorem}\label{main thm: fcn expected case lower bound}
    Let $\mathcal{M}:\R^{dT+q}\to\R^d$ be any FCN employing any activation having first layer width at most $Td-1$. Then: 
    \begin{equation*}
        \E_{\X,y}\qty[\|\mathcal{M}([\x_{\text{vec}},y])-\qsa(\X,y)\|_2^2]\geq \frac{T-q}{Tq(T-1)}
    \end{equation*}
\end{theorem}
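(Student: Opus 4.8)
The plan is a dimension-counting argument in function space: the first layer maps the $Td$-dimensional Gaussian block $\x_{\text{vec}}$ through a matrix of rank at most $Td-1$, so it must annihilate at least one direction $\u$, and along that direction $\qsa$ carries variance that no downstream computation can recover. Concretely, I would first pass to the general form $\mathcal{M}([\x_{\text{vec}},y])=g(\W_1[\x_{\text{vec}},y])$ and split $\W_1=[\mathbf{A}\ \mathbf{B}]$, where $\mathbf{A}\in\R^{m\times Td}$ acts on the token block $\x_{\text{vec}}$ and $\mathbf{B}\in\R^{m\times q}$ on the subset encoding $y$, with $m\le Td-1$. Since $\mathrm{rank}(\mathbf{A})\le m<Td$, there is a unit vector $\u\in\R^{Td}$ with $\mathbf{A}\u=\0$; fix one, and note it does not depend on $y$.

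Next, conditioning on $y$ so that $\x_{\text{vec}}\sim\N(\0,\bI_{Td})$, I would split $\x_{\text{vec}}=Z\u+\x^\perp$ with $Z:=\u^\top\x_{\text{vec}}\sim\N(0,1)$ and $\x^\perp:=(\bI_{Td}-\u\u^\top)\x_{\text{vec}}$, which is independent of $Z$ by joint Gaussianity and uncorrelatedness. Because $\mathbf{A}\u=\0$, the preactivation equals $\W_1[\x_{\text{vec}},y]=\mathbf{A}\x^\perp+\mathbf{B}y$, so $\mathcal{M}([\x_{\text{vec}},y])$ is a deterministic function of $(\x^\perp,y)$ only — the network is blind to $Z$, regardless of the depth or nonlinearity of $g$.

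Then I would exploit that $\qsa$ is \emph{linear} in $\x_{\text{vec}}$: $\qsa(\X,y)=\mathbf{M}_y\x_{\text{vec}}$, where $\mathbf{M}_y\in\R^{d\times Td}$ has $i$-th $d\times d$ block $\tfrac1q\bI_d$ when $i\in y$ and $\0$ otherwise. Hence $\qsa=Z\,\mathbf{M}_y\u+\mathbf{M}_y\x^\perp$, and, conditioning on $(\x^\perp,y)$, the bias--variance identity (using $\E[Z]=0$, $\E[Z^2]=1$, and that $\mathcal{M}$ is constant given $(\x^\perp,y)$) yields
\begin{equation*}
\E\!\left[\|\mathcal{M}-\qsa\|_2^2\ \middle|\ \x^\perp,y\right]=\|\mathbf{M}_y\x^\perp-\mathcal{M}\|_2^2+\|\mathbf{M}_y\u\|_2^2\ \ge\ \|\mathbf{M}_y\u\|_2^2 ,
\end{equation*}
so $\E_{\X,y}[\|\mathcal{M}-\qsa\|_2^2]\ge\E_y[\|\mathbf{M}_y\u\|_2^2]$. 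Finally I would compute this last quantity combinatorially: writing $\u=(\u_1,\dots,\u_T)$ with $\sum_i\|\u_i\|_2^2=1$, one has $\mathbf{M}_y\u=\tfrac1q\sum_{i\in y}\u_i$, and using $\P[i\in y]=q/T$ and $\P[i,j\in y]=\tfrac{q(q-1)}{T(T-1)}$ for $i\neq j$,
\begin{equation*}
\E_y\!\left[\|\mathbf{M}_y\u\|_2^2\right]=\frac1q\left(\frac1T+\frac{(q-1)\big(\|\textstyle\sum_i\u_i\|_2^2-1\big)}{T(T-1)}\right)\ \ge\ \frac1q\left(\frac1T-\frac{q-1}{T(T-1)}\right)=\frac{T-q}{Tq(T-1)},
\end{equation*}
the inequality holding because the coefficient of $\|\sum_i\u_i\|_2^2$ is nonnegative.

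This argument is short and essentially elementary, so I do not expect a genuine obstacle; the two points needing care are (i) the factorization in the second paragraph — that an arbitrarily complicated $g$ still sees only $\mathbf{A}\x^\perp+\mathbf{B}y$ once $\mathbf{A}\u=\0$ — and (ii) nailing down the \emph{exact} constant via the combinatorics rather than an order estimate. I would also remark that the bound is tight for this method: $\E_y[\|\mathbf{M}_y\u\|_2^2]$ is minimized precisely when $\sum_i\u_i=\0$, so an adversarial FCN can force the forgotten direction $\u$ into exactly that subspace, and pure dimension counting cannot give anything larger than the stated $\tfrac{T-q}{Tq(T-1)}$.
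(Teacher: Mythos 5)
Your proof is correct and follows essentially the same approach as the paper's: pick a unit kernel direction $\u$ of the first-layer block acting on $\x_{\text{vec}}$, decompose the Gaussian input along $\u$ and its orthogonal complement, observe the network output is blind to the component along $\u$, apply the bias--variance identity to drop to $\E_y\|\mathbf{M}_y\u\|^2$, and evaluate that combinatorially using $\P[i\in y]=q/T$ and $\P[i,j\in y]=\frac{q(q-1)}{T(T-1)}$. The paper phrases the Gaussian decomposition as a change to an orthonormal basis $\{\v_1,\dots,\v_{Td}\}$ with $\v_1$ in the kernel rather than the explicit $Z\u+\x^\perp$ split, but the computation is identical.
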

The proof (see \Cref{appen sec: the average case lower bound for fcns}) relies on the observation that when the first hidden layer has fewer than $Td$ neurons, the linear transformation $\W_1$ has a non-trivial kernel. This implies there exists some 1-dimensional subspace where the FCN outputs a constant. However, the $\qsa(\cdot)$ function depends on the total input vector $\x_{\text{vec}}\in \R^{Td}$ in all directions. This leads to a non-trivial approximation error.

Combining these results and \Cref{sec: stochastic positional encoding}, we rigorously establish the expressive power separation between transformers and FCNs on this $\qsa$ task in the intrinsic complexity of width: a simple transformer with width $\Theta(d+q\log T)$ can efficiently \textit{learn} $\qsa$, while any FCNs must have  $\Omega(Td)$ width to even \textit{approximate} this task. 

\subsection{Length generalization} \label{subsec: length generalization}
Here, we present our length generalization result. One key motivation for stochastic PE is that it can improve the length generalization of the trained model, which is beyond the range of the training distribution. This inspires us to investigate theoretical guarantees on out-of-distribution data of longer sequence length on the $\qsa$ task.

\begin{figure*}[t]
    \centering
    \includegraphics[width=\textwidth]{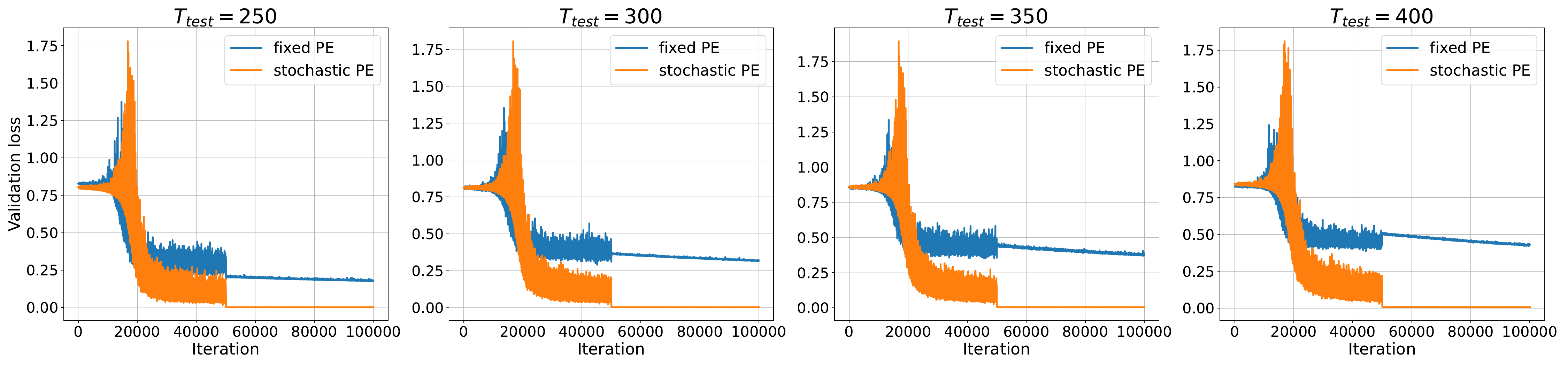}
    \caption{\textbf{Length generalization superiority of stochastic PE:} We plot the out-of-distribution error throughout training on each of the four \textit{length generalization} tasks ($T_{\text{test}}=250, 300, 350, 400$) when $T=200$ and $q=3$. Observe that the one-layer transformer with stochastic positional encoding has\textbf{ a clear advantage} over the fixed positional encoding architecture in all four tasks: stochastic architecture converges after 10k steps, while the length generalization error of the fixed architecture does not go below some constant.
    % In these figures, default Adam training hyperparameters are used.
    }
    \label{fig:length generalization}
\end{figure*}

Suppose our training objective is based on the distribution $\mathcal{D}_{T_1,q}$ with sequence length $T_1$ and subset size $q$. Then, the OOD loss with sequence length $T_2 \geq T_1$ becomes:
\begin{equation}\label{main eqn: ood objective for qsa}
    \mathcal{L}^{(s)}_{T_2,q}(\bm{\theta}) = \frac{1}{2}\E_{\X,y\sim\mathcal{D}_{T_2,q}}\qty[\|\qsa(\X,y)-f^{(s)}_{\bm{\theta}}(\X,y)\|_2^2]
\end{equation}
We can then derive the following corollary of \Cref{main thm: joint training stochastic pe}, proving the length generalization error goes to 0 as the training error tends to 0, as long as the embedding dimension is large enough for $\bE$ to satisfy RIP. 
\begin{corollary}[Informal\footnote{It is a direct corollary of \Cref{appen coro: ood guarantee}, which can also generalized to OOD data with unseen query subset size $q'\not= q$.}]\label{sec: ood theory}
Suppose $q=\Theta(1),d_e=\Theta(q\log T_{\max}/\delta^2)$. If we apply gradient descent with zero initialization with $T_1<T_{\max}$ to train the model under same condition in \Cref{main thm: joint training stochastic pe}, then when the training loss $\mathcal{L}_{T_1,q}^{(s)}(\bm{\theta})\leq \epsilon$, it holds that for any $T_2\in(T_1,T_{\max}]$:
$$\mathcal{L}^{(s)}_{T_2,q}(\bm{\theta})\leq O\qty(\frac{T_2^2\epsilon}{T_1^2})$$
\end{corollary}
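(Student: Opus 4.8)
The plan is to leverage the characterization from \Cref{main thm: joint training stochastic pe} that the trained parameters take the form $\W(t) = C(t)\,\mathrm{diag}(\0_{d\times d},\bI_{d_e})$ and $\V(t) = \alpha(t)\,[\bI_d\ \ \0_{d\times d_e}]$ with $\alpha(t)\approx 1$ and $C(t)$ large once the training loss is below $\epsilon$. The first step is to express the training loss $\mathcal{L}^{(s)}_{T_1,q}(\bm\theta)$ at this parameter configuration in closed(ish) form, isolating how it depends on $C$ and $\alpha$. Since the positional encoding is freshly sampled and we condition on the $(q,\delta)$-RIP event, the $(q,\delta)$-RIP / dual-certificate bounds (\Cref{Lemma: RIP matrices has dual certificate}) give that the attention score on each ``correct'' position $i\in y$ is $s_+ = 1/(q + (\text{error terms}))$ up to factors controlled by $\delta$ and the off-support inner products bounded by $\delta/(1-2\delta)$. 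The key quantitative takeaway is: $\mathcal{L}^{(s)}_{T_1,q}(\bm\theta)\le\epsilon$ forces $s_+$ to be within $O(\epsilon \cdot \mathrm{poly}(T_1,d))$ of $1/q$, and $\alpha$ within a comparable distance of $1$; I would extract these two scalar bounds first.

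Next I would write the OOD loss $\mathcal{L}^{(s)}_{T_2,q}(\bm\theta)$ under the \emph{same} parameters $(\W(t),\V(t))$ but with input length $T_2 > T_1$ and a freshly sampled $(q,\delta)$-RIP encoding of the appropriate dimension (here $d_e$ must be taken large enough — $\Theta(q\log T_{\max}/\delta^2)$ — so RIP holds for all lengths up to $T_{\max}$). The crucial observation is that the \emph{form} of the model output is length-agnostic: the softmax over $T_2$ tokens puts weight $\approx 1/q$ on each of the $q$ correct tokens and spreads the remaining $\approx 0$ mass over the $T_2 - q$ incorrect ones, contributing an error of order $(T_2-q)$ times the squared small-attention weight, plus the $\alpha\neq 1$ contribution. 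Concretely, I expect $\mathcal{L}^{(s)}_{T_2,q}$ to decompose, just as in the in-distribution analysis, into a term $\propto (1 - q\alpha s_+)^2$-type ground-truth mismatch plus a variance/noise term scaling like $(T_2 - q)\,s_-^2$ where $s_-$ is the incorrect-token attention weight. Since $s_+ \approx 1/q$ forces $(T_2-q)s_- \approx (T_2-q)e^{-C} \cdot(\text{const})$ and the in-distribution bound $\mathcal{L}^{(s)}_{T_1,q}\le\epsilon$ pins down $(T_1-q)e^{-C} \lesssim \sqrt{\epsilon}\cdot\mathrm{poly}$, replacing $T_1$ by $T_2$ inflates each offending term by at most a factor $(T_2/T_1)^2$. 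Collecting terms gives $\mathcal{L}^{(s)}_{T_2,q}(\bm\theta) \le O(T_2^2\epsilon / T_1^2)$.

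The main obstacle will be bookkeeping the error terms introduced by the near-orthogonal (rather than exactly orthogonal) positional encoding when the length changes: with one-hot PE the incorrect-token inner products are exactly $0$, but here they are only $\le \delta/(1-2\delta)$ in magnitude, and there are now $T_2 - q$ of them instead of $T_1 - q$. I need to verify that the $\exp(C\langle\e_y,\e_i\rangle)$ factors for $i\notin y$, summed over the larger index set, still contribute a negligible-enough correction — this is where the RIP constant $\delta$ and the condition $\delta < 1/10$ enter, and where one must check that increasing $T_2$ up to $T_{\max}$ does not break the RIP guarantee (handled by choosing $d_e = \Theta(q\log T_{\max}/\delta^2)$). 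A secondary subtlety is that the attention-score lower bound $s_+$ and the value-scale $\alpha$ are only \emph{approximately} at their ideal values; I would carry the explicit $\mathrm{poly}(T_1,d,1/\delta)$ prefactors from \Cref{main thm: joint training stochastic pe} and check that, after the $(T_2/T_1)^2$ inflation, they still collapse into the stated $O(T_2^2\epsilon/T_1^2)$ bound with a constant absorbing $q = \Theta(1)$ and $\delta = \Theta(1)$. Everything else is a direct substitution of the closed-form loss expression already derived for the in-distribution case, re-evaluated at sequence length $T_2$.
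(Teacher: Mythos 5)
Your proposal follows essentially the same route as the paper's proof of \Cref{appen coro: ood guarantee}: invoke the characterization of the learned parameters $(\W,\V) = (C\,\mathrm{diag}(\0,\bI_{d_e}),\ \alpha[\bI_d\ \0])$ with the quantitative bounds on $C$ and $\alpha$ that follow from $\mathcal{L}^{(s)}_{T_1,q}\le\epsilon$, then re-evaluate the closed-form loss expression (\Cref{Lemma: stochastic pe loss expression}) at length $T_2$ with the same parameters and freshly sampled RIP encodings of dimension $d_e=\Theta(q\log T_{\max}/\delta^2)$. Your decomposition into a $(1-q\alpha s_+)^2$ mismatch term plus an off-support variance term is exactly the paper's expansion $\frac{d}{2}\bigl(\frac{1}{q}(1-q\alpha\bar s_+')^2 + \alpha^2\frac{(1-q\bar s_+')^2}{T_2-q}\bigr)$, and your observation that the in-distribution bound controls $1-q\bar s_+'$ up to a $(T_2-q)/(T_1-q)$ inflation, yielding the $(T_2/T_1)^2$ factor, is the key step in the paper as well. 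One small imprecision: you write that $s_+$ is within $O(\epsilon\cdot\mathrm{poly}(T_1,d))$ of $1/q$, but the quadratic loss actually only pins down $1-qs_+$ to order $\sqrt{\epsilon}\cdot\mathrm{poly}$ — you correct this later when you write $(T_1-q)e^{-C}\lesssim\sqrt{\epsilon}\cdot\mathrm{poly}$, and the final $T_2^2\epsilon/T_1^2$ bound (linear in $\epsilon$) indeed arises from squaring that $\sqrt{\epsilon}$, so the logic closes correctly.
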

The proof, given in \Cref{appen sec: ood guarantee}, is based on the exact global minimizer GD finds after training. With stochastic positional encoding, the GD solution can naturally generalize on longer input sequences, leading to this length generalization corollary.

\section{Experiments}\label{sec: experiments}

In this section, we describe our experimental setup on synthetic data, which numerically justifies our theoretical guarantees for convergence. In addition, we devise several length generalization tasks for our model, in which we are able to highlight the benefits of our stochastic architecture. 

\textbf{Synthetic experimental setup.} 
In our experiment, we use the one-layer transformer architecture defined in \Cref{subsec: reparam}.
Our synthetic data follows the distribution $\mathcal{D}_{T,q}$ in our theoretical analysis: $\X$ is sampled from the standard Gaussian, and $y$ is uniformly sampled from all possible $q$-subsets of $[T]$. In particular, we choose $T=200$ for our sequence length, $q=3$, $d=5$, and $d_e = 170.$ In addition, to simulate the population loss training, we train using online stochastic gradient descent (SGD) by resampling a fresh batch of $n=256$ datapoints $(\X, y)$ at each iteration to use for our gradient estimate. When training with a fixed positional encoding, we sample and fix the encoding samples at the start of training; when we simulate the training of the stochastic architecture, we sample a single positional encoding $\bE$ at each iteration. 

\textbf{Training loss convergence.} Our experiments show the convergence of the in-distribution training/population loss (estimated using the fresh batch of data at each iteration) for our stochastic architecture (\Cref{fig:similarity plot}). The convergence process can be verified by observing the cosine similarity of the weight matrices to their ground truth directions in \Cref{fig:similarity plot}. We also experiment on a smaller transformer with $d=20, d_e=20$ and plot the heat map of the parameters $\V$ and $\W$. As we can see in \Cref{fig:heat map}, the heatmaps of $\W$ and $\V$ coincide with the ground-truth
\begin{align*}
    \W^* = \begin{bmatrix}
    \0&\0\\
    \0&\alpha\bI_{d_e}
\end{bmatrix}\quad
\V^*= \begin{bmatrix}
    \bI_d&\0
\end{bmatrix}.
\end{align*}
This justifies our reparameterization, confirming that the simplified transformer still captures the essential positional information of the $\qsa$ task.

\begin{figure}[t]
    \centering
    % \hspace{-0.5cm}
    \includegraphics[width=0.36\textwidth]{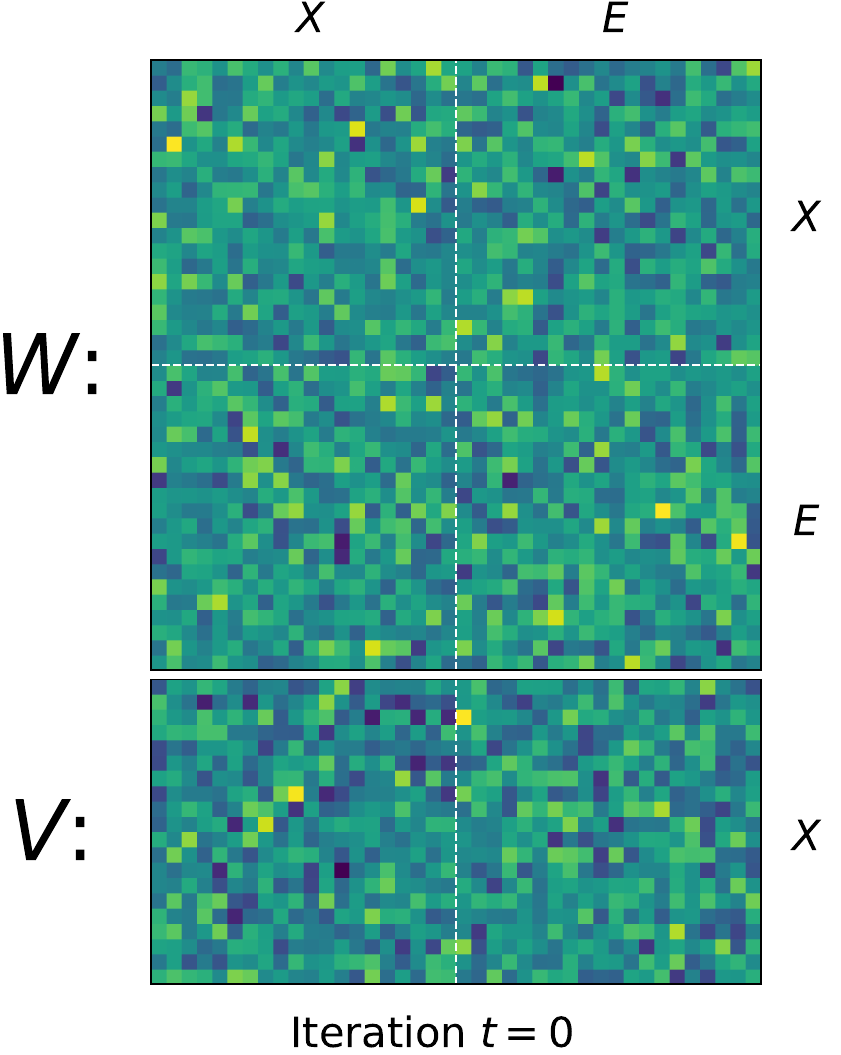}
    \includegraphics[width=0.35625\textwidth]{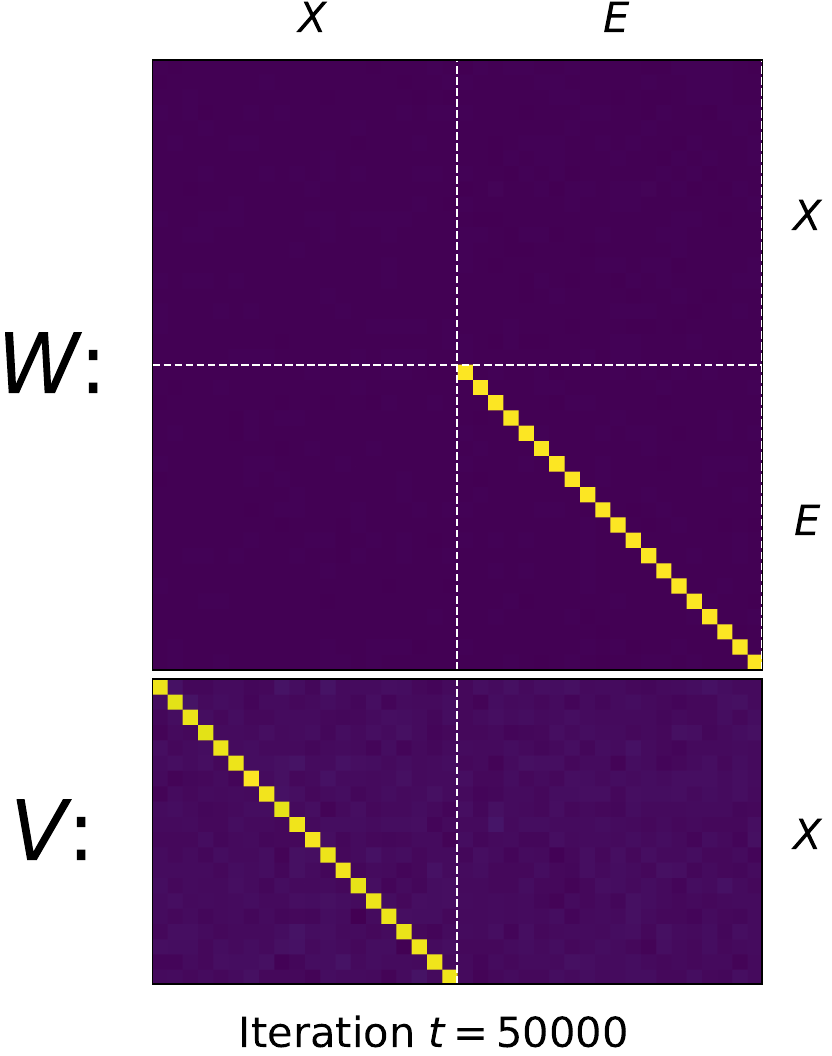}
    \caption{\textbf{Interpretable training:} For the full model \cref{main eqn: practical tf}, we present a heat map of the self-attention layer $\W$ and the value matrix $\V$ at initialization and after convergence. We initialize $\W,\V$ randomly at $t=0$. After training, observe that only the sub-block of $\W$ that attends to the positional encodings $\bE$ converges to the identity direction, while all other entries converge to 0; in $\V$, only the sub-block that attends to the input tokens $\X$ converges to identity direction with all other entries converging to 0.}
    \label{fig:heat map}
\end{figure}

\textbf{Length generalization.} We propose the following out-of-distribution length generalization tasks for our models. For each task, we fix before training a validation set of $n_{\text{test}}=128$ out-of-distribution datapoints $(\X, y)$ from the corresponding task distribution.\footnote{For detailed settings, see \Cref{sec: experiment details appn}.} \Cref{fig:length generalization} confirms that the fixed architecture has drastically worse length generalization compared to the stochastic architecture: transformers with fixed PE cannot even converge to zero OOD validation loss while the stochastic model extrapolates to data of unseen length. Our empirical findings reveal the benefits of using a stochastic positional encoding architecture over a fixed positional encoding architecture, thereby justifying our theoretical setup and the results in \Cref{sec: ood theory}.

% For the fixed architecture, we first fix $T_{\text{max}}$ samples of positional encodings; the first $T$ of these will be part of the fixed encoding architecture, and the first $T_{\text{test}}$ of these will be used for the corresponding length generalization task. For the stochastic architecture, the positional encodings Amazingly, even under such tasks conditions, the fixed architecture still has drastically worse OOD performance compared to the stochastic architecture, since the former empirically cannot even converge to 0 OOD loss.

\section{Conclusion}
In this paper, we put forward a comprehensive theoretical analysis of gradient descent on the \textit{sparse token selection task} $\qsa$. We characterize the joint training dynamics of a one-layer transformer with stochastic positional encoding and demonstrate the width separation between transformers and fully-connected networks on $\qsa$ task. The stochasticity of these positional encodings also provably leads to length generalization capabilities beyond what is seen in the input data, highlighting a benefit of our architecture.

There are still many open questions. For instance, can we move beyond population loss and show a sample complexity guarantee? Can we extend the benefits of randomized positional encodings to other tasks? Can we analyze the length generalization of transformers in other practical settings?

\bibliography{main}
% \bibliographystyle{icml2024}

%%%%%%%%%%%%%%%%%%%%%%%%%%%%%%%%%%%%%%%%%%%%%%%%%%%%%%%%%%%%%%%%%%%%%%%%%%%%%%%
%%%%%%%%%%%%%%%%%%%%%%%%%%%%%%%%%%%%%%%%%%%%%%%%%%%%%%%%%%%%%%%%%%%%%%%%%%%%%%%
% APPENDIX
%%%%%%%%%%%%%%%%%%%%%%%%%%%%%%%%%%%%%%%%%%%%%%%%%%%%%%%%%%%%%%%%%%%%%%%%%%%%%%%
%%%%%%%%%%%%%%%%%%%%%%%%%%%%%%%%%%%%%%%%%%%%%%%%%%%%%%%%%%%%%%%%%%%%%%%%%%%%%%%
\newpage
\appendix
\onecolumn
\section{Backgrounds and Preliminaries}
\label{appen sec: backgrounds and prelim}
\subsection{Restricted isometry and orthogonality property}
We replicate the definition of \textit{restricted isometry and orthogonality} from \citet{sanford2023representational} in this section. For $\v\in \mathbb{R}^T$, denote $\text{supp}(\v) = \{i \in [T] : \v_i\neq 0\}$.
\begin{definition}
    We say a matrix $\bE\in \mathbb{R}^{d_e\times T}$ satisfies the $(q,\delta)$-restricted isometry and orthogonality property if \begin{align*}
        \|\bE \v\|_2^2 \in [(1-\delta)\|\v\|_2^2, (1+\delta)\|\v\|_2^2] \quad \text{and} \quad |\langle \bE \v, \bE \v^\prime \rangle| \leq \delta \|\v\|_2 \|\v^\prime\|
    \end{align*}
    for all vectors $\v,\v^\prime\in \mathbb{R}^T$ with $|\text{supp}(\v)| \leq q$, $|\text{supp}(\v^\prime)|\leq 2q$, and $|\text{supp}(\v) \cap \text{supp}(\v^\prime)| = 0$.
\end{definition}
Here we restate the two lemmas in \citet{sanford2023representational} for the construction of the transformer approximating $\qsa$. The first lemma is that the existence of a Rademacher random matrix can satisfy the restricted isometry and orthogonality properties. For simplicity, we just call it RIP. 
\begin{lemma}[Lemma 12 from \citet{sanford2023representational}]
    \label{Lemma: whp random matrices are RIP}
    There is an absolute constant $C>0$ such that the following holds. Fix $\delta \in (0,1/2)$ and $q\in\mathbb{N}$. Let $\bE$ denote an $d_e\times T$ matrix of independent Rademacher random variables scaled by $\frac{1}{\sqrt{d_e}}$. If $d_e\geq C(q\log T)/\delta^2$, then with positive probability, $\bE$ satisfies the $(q,\delta)$-restricted isometry and orthogonality property.
\end{lemma}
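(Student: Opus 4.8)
The plan is to run the classical "random matrices satisfy RIP" argument (in the spirit of Baraniuk–Davenport–DeVore–Wakin), upgrading it to also produce the orthogonality bound. The first step is to reduce everything to a single isometry statement: I claim it suffices to show that, with positive probability, $\|\bE\v\|_2^2\in[(1-\delta')\|\v\|_2^2,(1+\delta')\|\v\|_2^2]$ holds simultaneously for \emph{all} $\v$ with $|\supp(\v)|\le 3q$, where $\delta'=c\delta$ for a small absolute constant $c$. Given this, the isometry half of the definition is immediate (a $q$-sparse vector is $3q$-sparse). For the orthogonality half, take disjointly supported $\v,\v'$ with $|\supp(\v)|\le q$ and $|\supp(\v')|\le 2q$, and normalize so that $\|\v\|_2=\|\v'\|_2=1$; then $\v\pm\v'$ are $3q$-sparse with $\|\v\pm\v'\|_2^2=2$, and the polarization identity
\begin{equation*}
\langle\bE\v,\bE\v'\rangle=\tfrac14\big(\|\bE(\v+\v')\|_2^2-\|\bE(\v-\v')\|_2^2\big)
\end{equation*}
combined with the two-sided isometry bounds gives $|\langle\bE\v,\bE\v'\rangle|\le 2\delta'\le\delta$ once $c\le 1/2$.

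Next I would establish the isometry statement by a union bound over coordinate supports. For a fixed subset $S\subseteq[T]$ with $|S|=3q$, take a $\gamma$-net $\mathcal N_S$ of the unit sphere of $\R^S$ with $|\mathcal N_S|\le(1+2/\gamma)^{3q}$ for an absolute constant $\gamma$ (say $\gamma=1/4$). The probabilistic input is a concentration bound for a single unit vector $\v$ supported in $S$: writing $(\bE\v)_k=\tfrac{1}{\sqrt{d_e}}\sum_j\varepsilon_{kj}\v_j$ with i.i.d.\ Rademacher $\varepsilon_{kj}$, each $(\bE\v)_k$ is mean-zero and sub-Gaussian with variance proxy $1/d_e$ by Hoeffding, so $\|\bE\v\|_2^2$ is an average of $d_e$ i.i.d.\ sub-exponential terms of mean $1$, and a Bernstein inequality yields
\begin{equation*}
\P\big(\,|\,\|\bE\v\|_2^2-1\,|>t\,\big)\le 2\exp(-c_0 d_e t^2),\qquad t\in(0,1),
\end{equation*}
for an absolute $c_0>0$. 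Applying this with $t=\delta'/2$ at every point of $\mathcal N_S$ and then extending from the net to the whole sphere of $\R^S$ — bounding $\|\bE_S^\top\bE_S-\bI_{3q}\|_{\mathrm{op}}$ by $(1-2\gamma)^{-1}$ times its maximum over $\mathcal N_S$ via the standard successive-approximation argument — gives the desired isometry on all of $\R^S$ outside an event of probability at most $2(1+2/\gamma)^{3q}\exp(-c_0 d_e(\delta')^2/4)$.

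Finally I would union-bound over the $\binom{T}{3q}\le T^{3q}$ choices of $S$: the total failure probability is at most $2\exp\!\big(3q\log(9T)-c_0 d_e(\delta')^2/4\big)$, which is strictly less than $1$ (indeed exponentially small) as soon as $d_e\ge C(q\log T)/\delta^2$ for an absolute constant $C$; hence a matrix $\bE$ with the $(q,\delta)$-RIP exists with positive probability. The main point requiring care is the net-to-sphere extension together with the constant bookkeeping: the net scale $\gamma$ must be chosen as an absolute constant independent of $\delta$ (legitimate, since we only need the per-point deviation controlled at the fixed level $\delta'/2$), so that the net contributes only a constant-base factor $(1+2/\gamma)^{3q}$ and no spurious $\log(1/\delta)$ term appears, yielding exactly the stated $d_e=\Theta(q\log T/\delta^2)$ threshold.
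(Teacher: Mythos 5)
Your proposal is correct, and it is essentially the canonical argument (polarization to reduce orthogonality to $3q$-sparse isometry, then net plus Bernstein plus a union bound over supports) that underlies the cited Lemma 12 of \citet{sanford2023representational}; the present paper imports that lemma without reproving it, so your write-up supplies the standard proof it rests on. The constant bookkeeping is sound — in fact polarization with disjoint unit-norm supports gives $|\langle\bE\v,\bE\v'\rangle|\le\delta'$ rather than $2\delta'$, so your choice $c\le 1/2$ has slack — and fixing the net scale $\gamma$ as an absolute constant correctly avoids any spurious $\log(1/\delta)$ in the width threshold.
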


\begin{lemma}[Lemma 13 from \citet{sanford2023representational} and consequence of Lemma 2.1 from \citet{candes2005decoding}]
    \label{Lemma: RIP matrices has dual certificate}
    Fix $\delta\in(0,1/2)$ and $q\in\mathbb{N}$. Let $\bE  =[\e_1,\dots, \e_T] \in \mathbb{R}^{d_e\times T}$ satisfy the $(q,\delta)$-restricted isometry orthogonality property. For every vector $\v\in\{0,1\}^T$ with $\supp(\v)\leq q$, suppose the indices $i_1,i_2,\dots, i_{\supp(\v)} \in [T]$ in $\v$ have entry 1. Then, the vector $\e_{\v} = \bE_{\v}(\bE_{\v}^\top \bE_{\v})^{-1} \1_{\supp(\v)} \in \mathbb{R}^{d_e}$ satisfies the following: \begin{align*}
        \|\e_{\v}\|_2 &\leq \sqrt{q} / (1-2\delta) \\
        \langle \e_i, \e_{\v}\rangle &= 1 \quad \text{if } v_i=1 \\
        |\langle \e_i, \e_{\v} \rangle| &\leq \delta/(1-2\delta) \quad \text{if } v_i=0
    \end{align*}
    where we define $\bE_{\v} = [\e_{i_1},\dots, \e_{i_{\supp(\v)}}] \in \mathbb{R}^{d_e \times \supp(v)}$.
\end{lemma}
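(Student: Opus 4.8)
The plan is to recognize $\e_{\v}$ as a \emph{dual certificate} for the support set $S:=\supp(\v)$ and to extract all three claims from the restricted isometry and orthogonality of $\bE$. Write $S=\{i_1,\dots,i_k\}$ with $k:=|S|\le q$, set $\bE_{\v}=[\e_{i_1},\dots,\e_{i_k}]\in\R^{d_e\times k}$, and let $\mbf{G}:=\bE_{\v}^\top\bE_{\v}\in\R^{k\times k}$. Every vector supported on $S$ is $q$-sparse, so the isometry clause of the $(q,\delta)$-property applied to such vectors gives $(1-\delta)\|z\|_2^2\le z^\top\mbf{G}z=\|\bE_{\v}z\|_2^2\le(1+\delta)\|z\|_2^2$ for all $z\in\R^k$. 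Hence $\mbf{G}$ is positive definite with $\|\mbf{G}^{-1}\|_{\mathrm{op}}\le(1-\delta)^{-1}$, so $\e_{\v}=\bE_{\v}\mbf{G}^{-1}\1_k$ is well-defined.

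For the on-support identity, fix $i=i_j\in S$. Since $\e_i$ is the $j$-th column of $\bE_{\v}$, i.e.\ $\e_i=\bE_{\v}\mbf{c}_j$ with $\mbf{c}_j\in\R^k$ the $j$-th standard basis vector, we get $\langle\e_i,\e_{\v}\rangle=\mbf{c}_j^\top\bE_{\v}^\top\bE_{\v}\mbf{G}^{-1}\1_k=\mbf{c}_j^\top\mbf{G}\mbf{G}^{-1}\1_k=\mbf{c}_j^\top\1_k=1$, exactly --- no approximation enters. For the norm bound, $\|\e_{\v}\|_2^2=(\mbf{G}^{-1}\1_k)^\top\mbf{G}(\mbf{G}^{-1}\1_k)=\1_k^\top\mbf{G}^{-1}\1_k\le\|\mbf{G}^{-1}\|_{\mathrm{op}}\,k\le q/(1-\delta)$; and since $\delta\in(0,1/2)$ one checks $(1-2\delta)^2=1-4\delta+4\delta^2\le1-\delta$ (equivalently $4\delta^2\le3\delta$), so $q/(1-\delta)\le q/(1-2\delta)^2$ and $\|\e_{\v}\|_2\le\sqrt{q}/(1-2\delta)$.

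The substantive step is the off-support bound. Fix $i$ with $v_i=0$, so $i\notin S$, and put $w:=\mbf{G}^{-1}\1_k\in\R^k$. Let $\tilde w\in\R^T$ be the zero-padding of $w$ to the coordinates in $S$; then $\e_{\v}=\bE\tilde w$, so $\langle\e_i,\e_{\v}\rangle=\langle\bE\u_i,\bE\tilde w\rangle$ where $\u_i\in\R^T$ is the $i$-th standard basis vector. The two supports are disjoint, $|\supp(\u_i)|=1\le q$ and $|\supp(\tilde w)|=k\le q\le 2q$, so the restricted-orthogonality clause of the $(q,\delta)$-property applies and yields $|\langle\e_i,\e_{\v}\rangle|\le\delta\|\u_i\|_2\|\tilde w\|_2=\delta\|w\|_2$. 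Finally $\|w\|_2=\|\mbf{G}^{-1}\1_k\|_2\le\|\mbf{G}^{-1}\|_{\mathrm{op}}\sqrt{k}\le\sqrt{q}/(1-\delta)\le\sqrt{q}/(1-2\delta)$, which combined with the previous display gives the claimed off-support bound (up to the factor $\sqrt q$, which is $\Theta(1)$ and hence inessential in all downstream uses, where $q=\Theta(1)$).

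I expect the off-support estimate to be the only place with real content: the on-support identity is pure linear algebra and the norm bound is a single line, whereas the off-support bound is exactly the classical dual-certificate argument behind RIP-based sparse recovery (cf.\ \citet{candes2005decoding}). The care needed there is purely bookkeeping --- one must check that the padded coefficient vector $\tilde w$ stays within the $2q$-sparse regime where restricted orthogonality is valid, and that $\mbf{G}$ is well-conditioned \emph{uniformly over every} $q$-subset $S$; both are precisely what the $(q,\delta)$-restricted isometry and orthogonality property delivers, which is also why the dimension $d_e=\Theta(q\log T/\delta^2)$ is taken. Beyond this, no subtlety arises, since the argument never uses the isometry/orthogonality constants of $\bE$ except through the single value $\delta$.
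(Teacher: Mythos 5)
The on-support identity and the norm bound in your proposal are correct and are the standard computations: $\langle \e_{i_j},\e_{\v}\rangle = \mbf{c}_j^\top \bE_{\v}^\top\bE_{\v}(\bE_{\v}^\top\bE_{\v})^{-1}\1 = 1$ exactly, and $\|\e_{\v}\|_2^2 = \1^\top(\bE_{\v}^\top\bE_{\v})^{-1}\1 \le q/(1-\delta)\le q/(1-2\delta)^2$. (The paper itself does not prove this lemma; it imports it from \citet{sanford2023representational} and \citet{candes2005decoding}, so there is no in-paper argument to compare against.)

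The off-support bound, however, has a genuine gap that you have noticed but should not wave away. Your one-shot application of restricted orthogonality plus Cauchy--Schwarz gives $|\langle\e_i,\e_{\v}\rangle|\le\delta\|(\bE_{\v}^\top\bE_{\v})^{-1}\1\|_2\le\delta\sqrt{q}/(1-\delta)$, which is weaker than the stated $\delta/(1-2\delta)$ by a factor of $\sqrt{q}$, and this loss is not an artifact of sloppy bookkeeping: $\|(\bE_{\v}^\top\bE_{\v})^{-1}\1\|_2$ is genuinely of order $\sqrt{q}$, and one can construct configurations consistent with the restricted isometry/orthogonality constraints in which $\langle\e_i,\e_{\v}\rangle$ for the least-squares certificate is of order $\delta\sqrt{q}$. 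The $\sqrt{q}$-free constant is precisely the nontrivial content of the dual-certificate lemma of \citet{candes2005decoding}; its proof does not stop at a single Cauchy--Schwarz but uses an exceptional-set (equivalently, iterative correction) argument, exploiting that the coefficient vector $\1_q$ satisfies $\|\1_q\|_2/\sqrt{q}=1$, to convert the collective bound $\|\e_{\v}^\top\bE_{T'}\|_2\le\frac{\delta}{1-\delta}\sqrt{q}$ over all $q$-subsets $T'$ disjoint from $\supp(\v)$ into a pointwise bound of order $\delta/(1-2\delta)$. So to prove the lemma as stated you must either invoke that lemma of \citet{candes2005decoding} as a black box or reproduce its exceptional-set argument; your direct route cannot reach the stated constant. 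Finally, the dismissal ``$\sqrt{q}=\Theta(1)$ so it is inessential'' is not free even in this paper: the constant $\delta/(1-2\delta)$ is used quantitatively downstream (e.g., in the bound $s_+\le(q+(T-q)e^{-\frac{1-\delta}{1-2\delta}C})^{-1}$ and the requirement $\delta<1/10$), so replacing $\delta$ by $\delta\sqrt{q}$ would force a rescaling of $\delta$ and $d_e$ throughout; this is repairable when $q=\Theta(1)$ but it is a change to the statement, not a proof of it.
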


\textbf{Remark.} Compared with \citet{sanford2023representational}, we add the expression of $\e_{\v}=\bE_{\v}(\bE_{\v}^\top \bE_{\v})^{-1}\1_{\supp(\v)}$, which we use as the encoding for the query subset $y$ in our one-layer transformer. It is used in Lemma 2.1 of \citet{candes2005decoding} as the dual certificate for the specific dual problem of a primal $\ell_1$-minimization problem. We borrow the exact form of the dual certificate here as the encoding, replacing the blackbox MLP layer $\phi(\X)$ used in \citet{sanford2023representational} (which can be poly$(T,d)$-wide, far larger than the width of the transformer). We posit that this positional encoding for $y$ can also be expressed by a multi-layer transformer with width $\Theta(d+q\log T)$, depth $\Theta(\log\log\log T + \log\log \frac{q}{\epsilon})$. It can be constructed using the transformer construction on matrix inverse in \citet{giannou2023looped}.

\subsection{The equivalence between simplified $\qsa$ and $q$SA in \citet{sanford2023representational}}\label{appen sec: equivalence between qsa}

We restate our $\qsa$ task and compare it with the original $q$SA task used in \citet{sanford2023representational}.
\begin{definition}\label{appen def: sparse selection}
For sparsity $q$, token dimension $d$, and input dimension $dT+q$, consider the input $(\X,y)=(\x_1,\x_2,...,\x_T;y_1, y_2,..., y_q)\in\mathbb{R}^{dT+q}$ where $\x_i\in \R^d$ and $y\in\binom{[T]}{q}$ is a 
$q$-element subset of $[T]$. Define the $q$-sparse token selection $\qsa(\cdot)$ as
\begin{align*}
    \qsa(\X, y)=\frac{1}{q}\sum_{j=1}^q\x_{y_{j}}
\end{align*}
\end{definition}
This arithmetic task is based on the $q$SA task, while reduce the number of query subsets $y_i$ (which is $T$ in \citet{sanford2023representational}) to 1. As a regression task, we believe it is a natural simplification only to consider one query in one data point. 
% Moreover, it can overcome the issue that $q$SA can have some output sequence against causality: if a query $y_i$ has some elements $j>i$, then the output of $q$SA must be related to $\x_j$, which should show up in the ``future''. 

Moreover, when considering the population $\ell_2$ loss \Cref{eqn: training objective for qsa} we are using (instead of the $\ell_\infty$ loss used in \citet{sanford2023representational}), the two tasks are equivalent. For $\qsa$, the population loss is:
$$\mathcal{L}_{T,q}(\bm{\theta}) = \frac{1}{2}\E_{\X,y\sim\mathcal{D}_{T,q}}\qty[\|\qsa(\X,y)-f_{\bm{\theta}}(\X,y)\|_2^2]=\frac{1}{2}\E_{\X,y\sim\mathcal{D}_{T,q}}\qty[\left\|\frac{1}{q}\sum_{j=1}^q\x_{y_j}-f_{\bm{\theta}}(\X,y)\right\|_2^2]$$
For the original $q\textbf{SA}$, suppose all $y_i$ follow the uniform distribution, then we have the population loss:
\begin{align*}
\mathcal{L}_{T,q}(\bm{\theta}) &= \frac{1}{2}\E_{\X,y_i\sim\mathcal{D}_{T,q}}\qty[\sum_{i=1}^T\|q\text{SA}(\X,y_i)-f_{\bm{\theta}}(\X,y_i)\|_2^2]\\&=\frac{T}{2}\E_{\X,y\sim\mathcal{D}_{T,q}}\qty[\left\|\frac{1}{q}\sum_{j=1}^q\x_{y_{i,j}}-f_{\bm{\theta}}(\X,y_i)\right\|_2^2]    
\end{align*}

They only have a $T$-factor difference. It also indicates that the training trajectory is also different in scale with an appropriate learning rate. Therefore, we believe it's more reasonable to consider the optimization problem on our simplified $\qsa$ problem.

\section{Limitation and discussion}
Beyond the scope of this work, there still exist some limitations and future open problems.

First of all, our analysis is based on population loss rather than empirical risk minimization (ERM) on a finite dataset using (stochastic) gradient descent. The population loss is equivalent to the empirical loss induced by the limit of infinite training samples, which largely simplifies the dynamics. In our case, the stochasticity in the positional encodings also takes advantage of this population loss. It enables us to focus on analyzing the stochastic architecture $\E_{E}[f]$, where $f$ is the transformer with $E$ as its positional encoding. This population objective is used in almost all recent works analyzing the full-training dynamics on linear/softmax transformer architectures \citep{huang2023context, zhang2023trained, nichani2024transformers, chen2024training, tian2023scan, kim2024transformers}. It would be an interesting open problem to analyze the SGD dynamics and sample complexity on any of the existing tasks in the literature.

Another limitation of our submitted version of the paper relates to the simplified architecture of our setting. Specifically, we consolidate the query and key matrices $\Q, \K$ into $\W$. Though it may lead to different landscape properties on the training trajectory, it does not inherently change the expressive power of the transformer. In particular, most of the recent works have also adopted (1) to simplify the dynamics \citep{huang2023context, zhang2023trained, nichani2024transformers, tian2023scan, kim2024transformers}. 

Finally, we would like to remark that we use stochastic positional encoding in our one-layer transformer instead of a fixed set of near-orthogonal positional encoding; the additional stochasticity enables us to analyze the GD dynamics theoretically. Moreover, it also helps a lot in practice with regard to out-of-distribution length generalization. Empirically, both choices work for in-distribution convergence, but only stochastic positional encoding can also achieve out-of-distribution length generalization. The length generalization superiority of randomized PE is also justified in several recent works with extensive experiments \citep{shen2023positional, ruoss2023randomized, zhou2024transformers}. Nevertheless, analyzing the dynamics with a set of fixed positional encodings on the in-distribution loss can be an interesting open problem. 

% There are still many open questions. For instance, can we move beyond population loss and show a sample complexity guarantee? Can we extend the benefits of randomized positional encodings to other tasks? Can we analyze length generalization of transformers in other practical settings?

\section{Approximation results on $\qsa$}
\label{appen sec: B expressivity}
As \citet{sanford2023representational} proved the worst-case width lower bound for FCNs to approximate the original $q$SA problem, we can also prove similar results for the new formulation of $\qsa$. In this paper, we go beyond the worst-case analysis and prove the width lower bound in expectation under certain data distribution. Also for completeness, we present the approximation results with a one-layer transformer, showing the capability of the transformer to represent $\qsa$.
\subsection{The average-case lower bounds for FCNs on $\qsa$}\label{appen sec: the average case lower bound for fcns}
We first present our negative results FCNs for $\qsa$ in the expected loss. For an FCN, we vectorize the input to $\qty[\x_{\text{vec}},y] = \qty(\x_1^\top,\x_2^\top, ..., \x^\top_T, y_1,y_2,...,y_q)^\top\in \R^{dT+q}$. Here $\x_i \in \R^d$ for $i=1,2,...,T$. With this theorem, we rigorously establish the \textit{exponential} expressivity separation between FCNs and one-layer transformers in the complexity metric of width/embedding dimension.

\begin{theorem}
    Let $\mathcal{M}:\R^{dT+q}\to\R^d$ be any FCN employing any activation having first layer width at most $Td-1$, then 
    \begin{equation*}
        \E_{\x_i\sim \mathcal{N}(0,\bI_{d}),y\sim\unif\qty(\binom{[T]}{q})}\qty[\|\mathcal{M}(\x_{\text{vec}},y)-\qsa(\X,y)\|_2^2]\geq \frac{T-q}{Tq(T-1)}
    \end{equation*}
\end{theorem}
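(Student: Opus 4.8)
The plan is to exploit the fact that a first-layer weight matrix $\W_1 \in \R^{m \times (Td+q)}$ with $m \le Td-1$ cannot have its restriction to the $\x_{\text{vec}}$-coordinates (a $\R^{m \times Td}$ block) be injective, so there is a nonzero vector $\u \in \R^{Td}$ lying in the kernel of that block. Concretely, writing $\W_1 = [\W_1^{(x)} \ \ \W_1^{(y)}]$ according to the split $(\x_{\text{vec}}, y)$, the matrix $\W_1^{(x)} \in \R^{m \times Td}$ has rank at most $m \le Td-1$, hence a nontrivial kernel; pick a unit vector $\u$ in it. Then for any fixed $y$, the output $\mathcal{M}([\x_{\text{vec}},y])$ depends on $\x_{\text{vec}}$ only through $\W_1^{(x)}\x_{\text{vec}}$, so it is invariant under $\x_{\text{vec}} \mapsto \x_{\text{vec}} + s\u$ for all $s \in \R$; i.e.\ $\mathcal{M}$ is constant along every line in the direction $\u$.

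Next I would reduce the expected squared error to a one-dimensional variance lower bound along $\u$. Fix $y$ and condition on the projection of $\x_{\text{vec}}$ onto the orthogonal complement of $\u$; along the remaining line, $\mathcal{M}$ is constant, so for any target function $g(\x_{\text{vec}})$ and any constant $c$ we have $\E_s[\|g(\x_{\text{vec}} + s\u) - c\|_2^2] \ge \E_s[\|g(\x_{\text{vec}}+s\u) - \E_{s'}[g(\x_{\text{vec}}+s'\u)]\|_2^2]$, the conditional variance of $g$ along the $\u$-fiber. Taking $g(\x_{\text{vec}}) = \qsa(\X,y) = \frac{1}{q}\sum_{j=1}^q \x_{y_j}$, which is a \emph{linear} function of $\x_{\text{vec}}$, the directional derivative of $g$ along $\u$ is the constant vector $\tfrac1q \sum_{j=1}^q \u^{(y_j)}$, where $\u^{(i)} \in \R^d$ is the $i$-th token block of $\u$. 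Because $\x_{\text{vec}}$ is standard Gaussian, the conditional distribution of $s$ (the component along $\u$) is standard normal with variance $1$, so the conditional variance of $g$ along that fiber is exactly $\bigl\|\tfrac1q\sum_{j=1}^q \u^{(y_j)}\bigr\|_2^2$. Hence, integrating over the Gaussian and taking expectation over $y \sim \unif(\binom{[T]}{q})$,
\begin{equation*}
    \E_{\X,y}\qty[\|\mathcal{M}([\x_{\text{vec}},y]) - \qsa(\X,y)\|_2^2] \ge \E_{y}\qty[\Bigl\|\tfrac1q\textstyle\sum_{j=1}^q \u^{(y_j)}\Bigr\|_2^2].
\end{equation*}

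Finally I would lower-bound the right-hand side uniformly over unit vectors $\u$, i.e.\ show $\min_{\|\u\|=1} \E_y[\|\tfrac1q\sum_{j\in y}\u^{(j)}\|_2^2] \ge \tfrac{T-q}{Tq(T-1)}$. Expanding, $\E_y[\|\tfrac1q\sum_{j\in y}\u^{(j)}\|^2] = \tfrac{1}{q^2}\sum_{i,i'}\P[i,i'\in y]\langle \u^{(i)},\u^{(i')}\rangle$, where $\P[i \in y] = q/T$ and $\P[i,i'\in y] = \tfrac{q(q-1)}{T(T-1)}$ for $i \ne i'$. Writing $a_i = \|\u^{(i)}\|^2$ with $\sum_i a_i = 1$ and $\bs{v} = \sum_i \u^{(i)}$, this becomes $\tfrac{1}{q^2}\bigl(\tfrac{q}{T} - \tfrac{q(q-1)}{T(T-1)}\bigr)\sum_i a_i + \tfrac{1}{q^2}\cdot\tfrac{q(q-1)}{T(T-1)}\|\bs v\|^2 \ge \tfrac{1}{q^2}\bigl(\tfrac{q}{T} - \tfrac{q(q-1)}{T(T-1)}\bigr) = \tfrac{T-q}{Tq(T-1)}$, since the $\|\bs v\|^2$ term is nonnegative. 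The main obstacle is getting the directional-variance step fully rigorous: one must be careful that $\mathcal{M}$ being constant along $\u$-lines holds for \emph{every} value of the orthogonal coordinates and every $y$ (this is exactly the kernel argument, applied fiberwise), and that the Gaussian decomposes cleanly into an independent standard-normal coordinate along $\u$ plus an orthogonal part — both routine once set up, but they are where the proof has to be precise. Everything else is the elementary second-moment computation above.
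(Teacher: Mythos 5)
Your proof is correct and follows essentially the same route as the paper's: the kernel argument on the $Td$-column block of $\W_1$ to find a direction $\u$ along which $\mathcal{M}$ is constant, the Gaussian decomposition to isolate the coordinate along $\u$, the variance lower bound along that fiber (the paper expands $\|A - \epsilon_1 B\|^2$ and drops the nonnegative $\|A\|^2$ term, which is the same thing phrased without the word ``variance''), and the identical second-moment calculation over $y$. The only difference is cosmetic packaging; the substance matches.
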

\begin{proof}
    Consider the first layer weight matrix $\W\in \R^{k\times (dT+q)}$. Since $k\leq Td-1$, $\text{rank}(\W)\leq Td-1$. Therefore, we have the submatrix of the first $Td$ rows $W_s = \W[:,Td]\in\R^{k\times Td}$ has its nullspace $\ker(\W_s)$. Denote the vector $\v\in\ker(\W_s)$ s.t. $\|\v\|=1$ and w.l.o.g. $\|\v\|_\infty=v_1$, which is the first entry of the vector. We then find the set of orthonormal basis \{$\v_1,...,\v_{Td}$\} in $\R^{Td}$ with $\v_1:=\v$ as the first basis vector.

    Now consider the decomposition of $\x_{\text{vec}}$ in the basis \{$\v_1,...,\v_{Td}$\}. Since it's sampled from normal distribution, $\x_{\text{vec}}$ can be rewritten to the reparametrized vector:
    $$\x_{\text{vec}} = \epsilon_1\v_1+ \epsilon_2\v_2+\cdots +\epsilon_{Td}\v_{Td}, \ \epsilon_i\sim \mathcal{N}(0,1),\ i=1,2,...,Td$$
    Denote $\beps:=(\epsilon_1,...,\epsilon_{Td})$. Then we can calculate the expected MSE loss given some sparse set $y$:
    \begin{align*}
        &\E_{\x_i\sim \mathcal{N}(0,\bI_{d})}\qty[\|\mathcal{M}(\x_{\text{vec}},y)-\qsa(\X,y)\|_2^2]\\ =& \int_{\x_{\text{vec}}}\|\mathcal{M}(\x_{\text{vec}},y)-\qsa(\x_{\text{vec}},y)\|^2 \mathrm{d}p(\x)\\
        =&\int_{\beps}\left\|\mathcal{M}\qty(\sum_{i=1}^{Td}\epsilon_i \v_i, y_1,...,y_q)-\qsa(\x_{\text{vec}},y)\right\|^2 \mathrm{d}p(\beps)
    \end{align*}
    Note that $\qsa(\x_{\text{vec}},y)=\qsa\qty(\sum_{i=1}^{Td}\epsilon_i \v_i, y_1,...,y_q) = \sum_{i=1}^{Td}\epsilon_i\qsa\qty( \v_i, y_1,...,y_q)$. And since $\v_1=\v$ is in the kernel of the first layer, 
    $$\mathcal{M}\qty(\sum_{i=1}^{Td}\epsilon_i \v_i, y_1,...,y_q) = \mathcal{M}\qty(\sum_{i=2}^{Td}\epsilon_i \v_i, y_1,...,y_q)$$
    which is constant with respect to $\epsilon_1$.
    Therefore we have
    \begin{align*}
        &\int_{\R}\left\|\mathcal{M}\qty(\sum_{i=1}^{Td}\epsilon_i \v_i, y_1,...,y_q)-\qsa(\x_{\text{vec}},y)\right\|^2 p(\epsilon_1)\mathrm{d}\epsilon_1\\
        = &\int_{\R}\left\|\mathcal{M}\qty(\sum_{i=2}^{Td}\epsilon_i \v_i, y_1,...,y_q)-\sum_{i=1}^{Td}\epsilon_i\qsa\qty( \v_i, y_1,...,y_q))\right\|^2 p(\epsilon_1)\mathrm{d}\epsilon_1\\
        = & \int_{\R}\left\|\mathcal{M}\qty(\sum_{i=2}^{Td}\epsilon_i \v_i, y_1,...,y_q)-\epsilon_1\qsa\qty( \v_1, y_1,...,y_q))\right.\\
        &\left.-\sum_{i=2}^{Td}\epsilon_i\qsa\qty( \v_i, y_1,...,y_q))\right\|^2 p(\epsilon_1)\mathrm{d}\epsilon_1\\
        \geq &\int_{\R}\left\|\qsa\qty( \v_1, y_1,...,y_q))\right\|^2 \epsilon_1^2p(\epsilon_1)\mathrm{d}\epsilon_1 = \left\|\qsa\qty( \v_1, y_1,...,y_q))\right\|^2
    \end{align*}
    The last inequality is because after expanding the squared norm we have (1) $\epsilon_1^0$-terms are always non-negative and (2) the integral for $\epsilon_1$-terms is 0.

    Now we have the lower bound for the expected loss:
    \begin{align*}
        &\E_{\x_i\sim \mathcal{N}(0,\bI_{d}),y\sim\unif\qty(\binom{[T]}{q})}\qty[\|\mathcal{M}(\x_{\text{vec}},y)-\qsa(\X,y)\|_2^2]\\ =&\E_{y\sim\unif\qty(\binom{[T]}{q})}\int_{\beps}\left\|\mathcal{M}\qty(\sum_{i=1}^{Td}\epsilon_i \v_i, y_1,...,y_q)-\qsa(\x_{\text{vec}},y)\right\|^2 \mathrm{d}p(\beps)\\
        \geq &\E_{y\sim\unif\qty(\binom{[T]}{q})}\int_{\beps_{2:Td}} \left\|\qsa\qty( \v_1, y_1,...,y_q))\right\|^2\mathrm{d}p(\beps_{2:Td})\\
        = &\E_{y\sim\unif\qty(\binom{[T]}{q})}\left\|\qsa\qty( \v, y_1,...,y_q))\right\|^2\\
        =&\E_{y\sim\unif\qty(\binom{[T]}{q})}\left\|\sum_{i\in y}\v^{(i)}\right\|^2/q^2\\
        =&\E_{y\sim\unif\qty(\binom{[T]}{q})}\qty[\sum_{i\in y} \|\v^{(i)}\|^2/q^2] +\E_{y\sim\unif\qty(\binom{[T]}{q})}\qty[\frac{1}{q^2}\sum_{i,j\in y,i\neq j} {\v^{(i)}}^\top\v^{(j)}]\\
        =&\frac{q}{T}\sum_{i=1}^T\frac{1}{q^2}\|\v^{(i)}\|^2 +\frac{q(q-1)}{T(T-1)} \sum_{i\neq j}\frac{1}{q^2}{\v^{(i)}}^\top \v^{(j)}\\
        \geq{}&\frac{T-q}{Tq(T-1)}\|\v\|^2 +\frac{(q-1)}{Tq(T-1)}\left\|\sum_{i=1}^T \v^{(i)}\right\|^2\geq \frac{T-q}{Tq(T-1)}.
    \end{align*}
\end{proof}

\subsection{The worst-case lower bounds for FCNs on $\qsa$}
For completeness, we present the negative results FCNs to approximate the newly formulated $\qsa$ problem. It is using exactly the same technique in Theorem 10 of \citet{sanford2023representational}.
\begin{theorem}
    Let $\mathcal{M}:\R^{dT+q}\to\R^d$ be any FCN employing any activation having first layer width at most $(T-q+1)d-1$, then there exists some input $(\x_{\text{vec}},y)\in \R^{dT+q}$ s.t.
    \begin{equation*}
        \|\mathcal{M}(\x_{\text{vec}},y)-\qsa(\X,y)\|_2^2 \geq \frac{1}{2q}
    \end{equation*}
\end{theorem}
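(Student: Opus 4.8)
The plan is to mimic the kernel-dimension argument of \citet{sanford2023representational} (Theorem 10): a first layer of width below $(T-q+1)d$ must annihilate an entire direction of the token space, and $\qsa$ — which, once its active query is fixed, is a linear map depending genuinely on that direction — detects it. Write the FCN as $\mathcal{M}(\z)=g(\W_1\z)$ with $\W_1\in\R^{m\times(dT+q)}$ and $m\le (T-q+1)d-1$, and fix the query $y=\{1,\dots,q\}$ (any $q$-subset will do by symmetry). Consider the subspace $\mathcal F\subseteq\R^{dT+q}$ of inputs whose query coordinates equal this fixed $y$, whose first $q$ tokens are tied to a common value $\u\in\R^d$, and whose remaining $T-q$ tokens are free; then $\dim\mathcal F = d+(T-q)d=(T-q+1)d$. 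Restricted to $\mathcal F$ the map $\z\mapsto\W_1\z$ is linear of rank at most $\operatorname{rank}(\W_1)\le (T-q+1)d-1<\dim\mathcal F$, so it has a nontrivial kernel direction $\bm\delta\in\mathcal F\setminus\{\0\}$; by construction $\bm\delta$ has zero query coordinates, its first $q$ token-blocks all equal a single block $\bm\delta_0\in\R^d$, and $\W_1\bm\delta=\0$.

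Since $\W_1\bm\delta=\0$, the whole network is constant along $\bm\delta$: $\mathcal{M}(\z+c\bm\delta)=g(\W_1\z+c\W_1\bm\delta)=g(\W_1\z)=\mathcal{M}(\z)$ for every $\z$ and $c\in\R$, irrespective of depth or activation. If $\bm\delta_0\ne\0$, then $\qsa$ moves along $\bm\delta$: letting $\z_c$ be the input with token part $c\bm\delta$ and query $y$, we get $\qsa(\z_c,y)=\frac1q\sum_{j=1}^q c\bm\delta_0=c\bm\delta_0$, so $\qsa(\z_c,y)-\qsa(\z_{-c},y)=2c\bm\delta_0\ne\0$ while $\mathcal{M}(\z_c)=\mathcal{M}(\z_{-c})$. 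The midpoint inequality $\|\bm p-\bm a\|^2+\|\bm p-\bm b\|^2\ge\frac12\|\bm a-\bm b\|^2$ then yields $\max\big(\|\mathcal{M}(\z_c)-\qsa(\z_c,y)\|^2,\ \|\mathcal{M}(\z_{-c})-\qsa(\z_{-c},y)\|^2\big)\ge\frac14\,\|2c\bm\delta_0\|^2=c^2\|\bm\delta_0\|^2$, so one of $\z_c,\z_{-c}$ already witnesses error $c^2\|\bm\delta_0\|^2$; it remains to pick the scale $c$ (and normalize $\bm\delta$) so that this is at least $\tfrac1{2q}$. Because $q$ tokens move together while contributing only $\tfrac1q\sum$ to the target, the natural choice makes $c\|\bm\delta_0\|$ of order $1/\sqrt q$, which is exactly where the $\tfrac1{2q}$ comes from.

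The main obstacle is this last quantitative step, together with the degenerate case $\bm\delta_0=\0$. When $\bm\delta_0=\0$ the kernel direction is supported only on the free tokens $q+1,\dots,T$, which $\qsa(\cdot,y)$ ignores, so it produces no error; but then $\W_1$ restricted to those tokens has already dropped a rank, and one reruns the construction with a smaller free-token block (or a shifted query) to again extract a kernel direction whose relevant block $\bm\delta_0$ is nonzero — this is precisely where the full width budget $(T-q+1)d-1$, rather than the naive $qd-1$, gets consumed, exactly as in \citet{sanford2023representational}. The remaining bookkeeping is to choose $\bm\delta$ in the kernel so that $\z_c$ stays in the natural range of the task while $\|\bm\delta_0\|$ is not too small relative to its free-token blocks, and then to carry the $\tfrac1q$ from the averaging through to the stated constant $\tfrac1{2q}$. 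Everything else — the factorization $\mathcal{M}=g\circ\W_1$, invariance along $\bm\delta$, the midpoint inequality, and the rank count — is routine.
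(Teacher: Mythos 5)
Your core mechanism is the same one the paper uses: a first layer of width below $(T-q+1)d$ must have a kernel direction inside a $(T-q+1)d$-dimensional slice of token space, the whole network $g\circ\W_1$ is constant along that direction while $\qsa$ is not, and so two inputs with identical network output have different targets. Where you diverge is the order of quantifiers, and that is where the real gap sits. You fix the query $y=\{1,\dots,q\}$ first and tie the first $q$ tokens together; the kernel vector $\bm{\delta}$ you then extract may have $\bm{\delta}_0=\0$, i.e., be supported entirely on the free tokens that $\qsa(\cdot,y)$ ignores, and your proposed repair (``rerun with a smaller free-token block or a shifted query'') is only a sketch. The paper sidesteps this case by reversing the order: it zeroes the first $q-1$ tokens, takes a nonzero kernel vector $\u=(\u_q,\dots,\u_T)$ of the submatrix $[\V_q;\cdots;\V_T]$ (which exists by the same rank count), and only \emph{then} chooses the query $y=\{1,\dots,q-1,j^*\}$ where $j^*$ is an index with $\u_{j^*}\neq\0$, so the queried set is guaranteed to contain a coordinate along which the kernel direction moves. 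If you keep your ordering you must actually do this case analysis; note that when every kernel vector of your restricted map has $\bm{\delta}_0=\0$, the block $[\V_{q+1},\dots,\V_T]$ is itself rank-deficient and the ``shifted query'' you would construct is exactly the paper's adaptive choice in disguise.

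The second gap is the constant $1/(2q)$, which you assert rather than derive, and your heuristic for it contradicts your own construction: if all $q$ queried tokens equal $c\bm{\delta}_0$, then $\qsa(\z_c,y)=c\bm{\delta}_0$ with \emph{no} $1/q$ dilution. The $1/q$ in the paper arises precisely because only one queried token ($j^*$) carries the perturbation, so the target moves by $\frac{1}{q}\u_{j^*}$ up to normalization; the paper then fixes the scale by normalizing the kernel vector so that $\max_j\|\u_j\|=1/2$ (respecting the implicit unit-norm token domain inherited from \citet{sanford2023representational}) and applies the triangle inequality. As the theorem is stated here, with inputs ranging over all of $\R^{dT+q}$, your scale $c$ is a free parameter, so without making the intended normalization explicit the constant is simply not pinned down. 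Your midpoint inequality is a fine (and arguably cleaner, since it works directly with squared errors) substitute for the paper's triangle-inequality step, but the normalization and the single-perturbed-token accounting still have to be supplied to reach $1/(2q)$.
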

\begin{proof} 
Let $\mathcal{M}(\x) = f(\W \x)$, where $\W$ is the first layer matrix satisfying $\W \in \R^{m \times (dT + q)}, m \leq (T-q+1)d - 1$, and $f: \R^m \rightarrow \R^d$ is an arbitrary function representing the subsequent layers of the FCN. $\W$ can be partitioned as 
\begin{equation*}
    \left[\V_1 ; ...; \V_T; \Tilde{\W}\right],
\end{equation*}
where $\V_1, ..., \V_T \in \R^{m \times d}, \tilde{\W} \in \R^{m \times q}$. Due to our restriction on the width of the first layer, we have
\begin{equation*}
    \rank\left(\left[\V_{q} ; \cdots; \V_T\right]\right) \leq m \leq (T-q+1)d - 1  < (T-q+1)d.
\end{equation*}
This implies that $\left[\V_{q} ; \cdots; \V_T\right]$ has a nontrivial null space containing a nonzero vector $\u = (\u_{q}, ..., \u_T) \in \R^{(T-q+1)d}$. Let 
\begin{equation*}
    \bXi = \frac{1}{2\max_{j \in \{q, ..., T\}}\left\Vert \u_j\right\Vert_2}\u,
\end{equation*}
and (for simplicity, we concatenate $\x_{\text{vec}}$ and y and call it $\x_{\text{vec}}$ in the following proof.)
\begin{align*}
    \x_{\text{vec}} &= (\underbrace{\mbf{0}, ..., \mbf{0}}_{q-1}, \bXi_{q}, ..., \bXi_{T}, y_1, y_2, ..., y_q) \\
    \x'_{\text{vec}} &= (\underbrace{\mbf{0}, ..., \mbf{0}}_{q-1}, -\bXi_{q}, ..., -\bXi_{T}, y_1, y_2, ..., y_q).
\end{align*}
Then 
\begin{equation*}
    \W \x_{\text{vec}} = \V_1\mbf{0}+ ... + \V_{q-1} \mbf{0} + \left[\V_{q} ; \cdots; \V_T\right] \bXi + \tilde{\W} \y = \W \x'_{\text{vec}}, 
\end{equation*}
and for some $j^* \in \{q, ..., T\}$, $\left\Vert\bXi_{j^*}\right\Vert = 1/2$. Consider 
\begin{align*}
    \y &= (1, ..., q-1, j) \\
    \x_{\text{vec}, j} &= (\mbf{0}, ..., \mbf{0}, \bXi_{q}, ..., \bXi_{T}, \y) \\
    \x'_{\text{vec}, j} &= (\mbf{0}, ..., \mbf{0}, -\bXi_{q}, ..., -\bXi_{T}, \y) 
\end{align*}
for each $j \in \{q, ..., T\}$. We observe that 
\begin{equation*}
    \qsa (\x_{\text{vec}, j}) = \frac{1}{q}\bXi_{j} \quad\text{and}\quad \qsa (\x'_{vec, j}) = -\frac{1}{q}\bXi_{j}.
\end{equation*}
It follows that 
\begin{equation*}
    \left\Vert\qsa (\x_{\text{vec}, j^*}) - \qsa (\x'_{\text{vec}, j^*}) \right\Vert = \frac{1}{q}.
\end{equation*}
Because we have shown that $\mathcal{M}(\x_{\text{vec}, j}) = f(\W \x_{\text{vec}, j})= f(\W \x'_{\text{vec}, j}) = \mathcal{M}(\x'_{\text{vec}, j})$, 
\begin{equation*}
    \max \left(\left\Vert\mathcal{M}(\x_{\text{vec}, j}) - \qsa(\x_{\text{vec}, j})\right\Vert, \left\Vert\mathcal{M}(\x'_{\text{vec}, j}) - \qsa(\x'_{\text{vec}, j})\right\Vert\right) \geq \frac{1}{2q}.
\end{equation*}
So in the worst case, $\mathcal{M}$ can approximate $\qsa$ with a loss no better than $1 / 2q$.
\end{proof}

\subsection{Self-attention can approximate $\qsa$}
\label{appen sec: A.3 tf expressivity}
We exhibit the expressivity result for the $q$-sparse token selection task for completeness. Due to the equivalence between our $\qsa$ and the original $q$SA, the proof uses the same method in the proof in \citet{sanford2023representational}. In this method, it is required to reform the input matrix \Cref{eqn: reformed input in thm 6}. In expressivity, it is equivalent to our model.
\begin{theorem}[Consequence of Theorem 2 in \citet{sanford2023representational}]
    For any $T$, $\epsilon$, any $m:= d +2d_e\geq \Omega(d+q\log T)$, there exists some near-orthogonal positional encoding $\bE  \in \R^{d_e\times T}$ and the corresponding $q$-sparse subset encoding $\e_y\in \mathbb{R}^{d_e}$ for $y\in \binom{[T]}{q}$, s.t. there exists some 1-layer self-attention unit with width $m$ that $\epsilon$-approximates $\qsa$.
    \label{thm: 1-layer self-attention}
\end{theorem}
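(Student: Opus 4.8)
The plan is to exhibit explicit parameters for the one-layer self-attention unit and verify directly that its output is close to $\qsa(\X,y)$; this is the construction underlying Theorem 2 of \citet{sanford2023representational}, which I would adapt to the reparameterized architecture and the single-query format. First I would invoke \Cref{Lemma: whp random matrices are RIP} to fix a Rademacher matrix $\bE\in\R^{d_e\times T}$ (scaled by $1/\sqrt{d_e}$) satisfying the $(q,\delta)$-RIP for $d_e=\Theta(q\log T/\delta^2)$ and a small constant $\delta$; this is the near-orthogonal positional encoding. For the query subset $y\in\binom{[T]}{q}$ I would use the dual-certificate encoding $\e_y=\bE_y(\bE_y^\top\bE_y)^{-1}\1_q$ from \Cref{Lemma: RIP matrices has dual certificate}, which guarantees $\langle\e_i,\e_y\rangle=1$ for $i\in y$ and $|\langle\e_i,\e_y\rangle|\le\delta/(1-2\delta)$ for $i\notin y$. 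After reformatting the input into the matrix used in the referenced construction, the width budget $m=d+2d_e$ suffices to hold the token coordinates together with the position coordinates of both the keys and the query.

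Next I would choose the key--query interaction so that, up to a scalar temperature $\lambda>0$, the pre-softmax logit of token $i$ against the query column equals $\lambda\langle\e_i,\e_y\rangle$ (concretely, the effective $\W$ acts as $\lambda$ times the projection onto the positional block, mirroring the $\W$-direction in \Cref{main Lemma: induction hypothesis for joint training stochastic PE}), and set the value map to $\V=\big[\,\bI_d\ \ \0_{d\times d_e}\,\big]$ so that it reads off the token part. By the separation above, the logits equal $\lambda$ on the $q$ indices in $y$ and are at most $\lambda\delta/(1-2\delta)$ on the remaining $T-q$ indices, so the softmax weight on each $i\in y$ is $\big(q+\sum_{j\notin y}e^{-\lambda(1-\langle\e_j,\e_y\rangle)}\big)^{-1}$, which is within $(T-q)e^{-\lambda(1-3\delta)/(1-2\delta)}$ of $1/q$, and the total weight placed on indices outside $y$ is at most $(T-q)e^{-\lambda(1-3\delta)/(1-2\delta)}$.

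It then remains to turn this attention concentration into an $\epsilon$-approximation of $\qsa(\X,y)=\tfrac1q\sum_{i\in y}\x_i$. Writing $w_i$ for the attention weights, the output error is $\big\|\sum_i w_i\x_i-\tfrac1q\sum_{i\in y}\x_i\big\|\le \sum_{i\in y}|w_i-\tfrac1q|\,\|\x_i\|+\sum_{i\notin y}w_i\|\x_i\|$, which is bounded by $2(T-q)e^{-\lambda(1-3\delta)/(1-2\delta)}\cdot\max_i\|\x_i\|$; choosing the temperature $\lambda=\Theta\!\big(\tfrac{1-2\delta}{1-3\delta}\log(T\max_i\|\x_i\|/\epsilon)\big)$ — or, when the tokens are Gaussian, $\lambda=\Theta(\log(T/\epsilon))$ together with a standard tail bound $\max_i\|\x_i\|\lesssim\sqrt{d\log T}$ on an overwhelming-probability event — makes this at most $\epsilon$. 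Since $d_e=\Theta(q\log T/\delta^2)$, the width is $m=d+2d_e=\Theta(d+q\log T)$, as claimed.

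I expect the main obstacle to be exactly the last step's dependence on token magnitudes: unlike the bounded-input regime of \citet{sanford2023representational}, here the $\x_i$ are unbounded Gaussians, so the clean $\ell_\infty$ guarantee must be replaced either by a high-probability statement (truncating on the event $\max_i\|\x_i\|\lesssim\sqrt{d\log T}$) or by an in-expectation bound, with the temperature scaled logarithmically in $T$ accordingly. The remaining work — the bookkeeping that realizes the logit structure $\lambda\langle\e_i,\e_y\rangle$ inside a width-$(d+2d_e)$ key/query map, and the softmax estimates — is routine given \Cref{Lemma: whp random matrices are RIP,Lemma: RIP matrices has dual certificate}.
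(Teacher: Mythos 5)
Your proposal is correct and takes essentially the same route as the paper: fix a Rademacher $\bE$ satisfying $(q,\delta)$-RIP via \Cref{Lemma: whp random matrices are RIP}, use the dual-certificate encoding $\e_y=\bE_y(\bE_y^\top\bE_y)^{-1}\1_q$ from \Cref{Lemma: RIP matrices has dual certificate}, make the pre-softmax logit of position $i$ equal a temperature $\alpha$ times $\langle\e_i,\e_y\rangle$, read off the token block with $\V$, and choose $\alpha=\Theta(\log(T/\epsilon))$ so the softmax concentrates the weight on $y$. The only point where you diverge is the unbounded-Gaussian worry: this theorem is a worst-case expressivity statement inherited verbatim from Theorem~2 of \citet{sanford2023representational}, where tokens satisfy $\|\x_i\|\le1$, so the paper's final estimate $q\cdot\tfrac{\epsilon}{2q}+(T-q)\cdot\tfrac{\epsilon}{2T}\le\epsilon$ legitimately drops the $\|\x_i\|$ factors and no truncation is needed --- your high-probability patch would only be required if one wanted an in-distribution $\ell_2$ guarantee (which the paper instead obtains via the GD convergence analysis, not via this construction).
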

\begin{proof}
First, we choose the near-orthogonal positional encodings to enable the efficient representation of set $y$ and position $i$. We consider some $d_e = C\frac{q\log T}{\delta^2}$ for some constant $C$ (We can pick $\delta=1/4$), and generate Rademacher random vectors. According to Lemma 12 and 13 in \citet{sanford2023representational}, there exist $T$ positional encoding vectors $\e_1, \e_2, ..., \e_T\in\mathbb{R}^{d_e}$ satisfying
\begin{align*}
\langle \e_i, \e_i\rangle &=1\\
|\langle \e_i, \e_j\rangle| &\leq \delta, \qquad i\not=j
\end{align*}

Denote $\e_{y} = h(y)= \bE_y (\bE_y^\top \bE_y)^{-1}\mathbf{1}_q$, then we will have (this is the explicit form of \citet{sanford2023representational} Lemma 13) 
\begin{align*}
\langle \e_i, \e_y\rangle &=1 \text{  for all }i\in y\\
|\langle \e_i, \e_y\rangle| &\leq  \frac{\delta}{1-2\delta}\text{  for all }i\not\in y
\end{align*}

Then we describe the transformer weights. Now we have the input in the following form:
\begin{equation}\label{eqn: reformed input in thm 6}
    \X_{\text{input}} = \begin{bmatrix}
        \x_1 &\x_2 &\cdots&\x_{T-1}&\x_T\\
        \textbf{0}&\textbf{0}& \cdots&\textbf{0}&\e_y\\
        \e_{1}& \e_{2}& \cdots&\e_{T-1}&\e_{T}
    \end{bmatrix}
\end{equation}
With positional encodings, we can construct sparse linear operators $\Q=[\textbf{0}_{d\times d},\bI_{d_e},\textbf{0}_{d_e\times d_e}],\K=[\textbf{0}_{d\times d},\textbf{0}_{d_e\times d_e},\bI_{d_e}],\V$ to represent the $v_{y_i}, e_i$, and $\x_i$ with $\Theta(d + qd_e)$ width:
\begin{align*}
\Q\X &= \qty(0, 0, ...0,,\alpha \e_{y})\in \mathbb{R}^{ d_e\times T}\\
\K\X &= \qty(\e_1, \e_{2}, ..., \e_{T})\in \mathbb{R}^{ d_e\times T}\\
\V\X &= \qty(\x_1, \x_2, ...,\x_T)\in \mathbb{R}^{d \times T}
\end{align*}
We pick $\alpha :=\lceil 2\log(2T)/\epsilon \rceil$ to help the softmax layer to do the average.

Pass the input through the transformer $f$, and we only take the output for the last token with $\e_y$, we have
\begin{align*}
    f(\X)_T &= \V\X\text{softmax}(\X^\top \K^\top \Q \X)[:,T]\\
    & = \sum_{i=1}^T \text{softmax}(\X^\top\K^\top \Q\X)[i,T]\x_{i}
\end{align*}

Analyze the output of the softmax. If $i\in y$, we first consider the upper bound:
\begin{align*}
 \text{softmax}(\X^\top \K^\top \Q \X)[i,T]\leq \frac{e^{\alpha }}{qe^{\alpha}} = \frac{1}{q}
\end{align*}
And then the lower bound
\begin{align*}
 \text{softmax}(\X^\top \K^\top \Q \X)[i,T]\geq \frac{e^{\alpha }}{qe^{\alpha }+Ne^{\frac{\alpha}{2}}} \geq \frac{1}{q}- \frac{T}{q^2}e^{-\alpha}\geq \frac{1}{q} - \frac{\epsilon}{2q}.
\end{align*}
That means 
$$\text{softmax}(\X^\top \K^\top \Q \X)[i,T]\in\qty[\frac{1}{q},\frac{1}{q}+\frac{\epsilon}{2q}]\text{ for all }i\in y_i.$$
If $i\not \in y_i$, then the upper bound of the softmax is
\begin{align*}
\text{softmax}(\X^\top \K^\top \Q \X)[i,T]\leq\frac{e^{\alpha /2}}{qe^{\alpha}}\leq \frac{\epsilon}{2T}.
\end{align*}
 Finally, we conclude the above bounds:
 \begin{align*}
     &\left\|f(\X,y)_T-\qsa(\X,y)_T\right\|_2\\
     \leq{}&\left\|\sum_{i\in y}\qty(\frac{1}{q}-\text{softmax}(\X^\top \K^\top \Q \X)[i,T])\x_{i}-\sum_{i\not\in y}\qty(\text{softmax}(\X^\top \K^\top \Q \X)[i,T])\x_{i}\right\|_2\\
     \leq{}& q\cdot \frac{\epsilon}{2q} + (T-q)\cdot\frac{\epsilon}{2T} \leq \epsilon
 \end{align*}
So we construct a 1-layer transformer that can $\epsilon$-approximate $\qsa$.
\end{proof}

\section{Proof details in \Cref{sec: onehot position encoding}}
\label{appen sec: proof detail for onehot}
In this subsection, we study the gradient descent convergence on the $q$-sparse token selection problem with one-hot positional encoding. We consider updating $\V$ and $\W$ simultaneously with the same learning rate $\eta$. For gradient descent, the update dynamics for $\W(t)$ and $\V(t)$ should be
\begin{align*}
        {\W}(t+1) &= \W(t)-\eta\nabla_{\W}\mathcal{L}(t)\\
    {\V}(t+1) &= \V(t)-\eta\nabla_{\V}\mathcal{L}(t)
\end{align*}

\subsection{GD dynamics and preliminaries}

Based on the reparameterization and the objective in \Cref{eqn: training objective for qsa}, the following lemma shows the gradients of $\W$ and $\V$. Recall that the input matrix is in the following form \begin{equation}
    [\Z,\z_{\text{query}}] := \begin{bmatrix}
        \x_1 &\x_2 &\cdots &\x_{T-1}&\x_T&\x_{\text{query}}\\
        \e_{1}& \e_{2}& \cdots&\e_{T-1}&\e_{T}&\e_{y}
    \end{bmatrix} \in \mathbb{R}^{(d+T) \times (T+1)}.
\end{equation}
where we separate the input tokens $\Z$ and the query token $\z_{\text{query}}$:
\begin{equation}
    \Z := \begin{bmatrix}
        \X\\
        \bE
    \end{bmatrix}  =\begin{bmatrix}
        \x_1 &\x_2 &\cdots &\x_{T-1}&\x_T\\
        \e_{1}& \e_{2}& \cdots&\e_{T-1}&\e_{T}
    \end{bmatrix} \in \mathbb{R}^{(d+T) \times T}, \z_{\text{query}} = \begin{bmatrix}
        \x_{\text{query}}\\\e_{y}
    \end{bmatrix}
\end{equation}

\begin{lemma}\label{Lemma: basic dynamics for 1-layer tf, qsa}
    Denote $\S_y:=\sm(\Z^\top \W \z_{\text{query}})\in \R^{T}$ at time $t$ for certain $q$-sparse set $y$. Also, we define the $q$-hot vector $\Y=(\mathds{1}\{1\in y\}, \mathds{1}\{2\in y\}, ..., \mathds{1}\{T\in y\})$ for the subset $y\in\binom{[T]}{q}$. The gradient dynamics of $\W$ with input token matrix $\X$ is:
\begin{align*}
    \nabla_{\W}\mathcal{L} &=  -\E_{\X,y}\qty(\Z(\diag(\S_y)-\S_y\S_y^\top)\Z^\top \V^\top (\frac{1}{q}\X\Y-\V \Z\S_y)\zq^\top)\\
     \nabla_{\V}\mathcal{L} &= -\E_{\X,y}\qty( \qty(\frac{1}{q}\X\Y-\V \Z\S_y)(\Z\S_y)^\top)
\end{align*}
\end{lemma}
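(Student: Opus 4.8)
The plan is to compute the gradients of the population loss $\mathcal{L}(\bm\theta) = \tfrac12\E_{\X,y}\|\tfrac1q\X\Y - f_{\bm\theta}(\X,y)\|_2^2$ directly by the chain rule, treating the softmax attention map $\S_y = \sm(\Z^\top\W\zq)$ as the intermediate quantity. First I would write $f_{\bm\theta}(\X,y) = \V\Z\S_y$ and observe that the residual is $\calR := \tfrac1q\X\Y - \V\Z\S_y$, so the per-sample loss is $\tfrac12\|\calR\|_2^2$ and its differential is $-\langle \calR, \mathrm{d}(\V\Z\S_y)\rangle$. Since the expectation and the gradient commute (the loss is smooth and the Gaussian tails make everything integrable), it suffices to differentiate the per-sample quantity and then take $\E_{\X,y}$.

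The computation splits into the $\V$-gradient and the $\W$-gradient. For $\V$, holding $\S_y$ fixed (it does not depend on $\V$), we have $\mathrm{d}(\V\Z\S_y) = (\mathrm{d}\V)(\Z\S_y)$, so $\mathrm{d}\bigl(\tfrac12\|\calR\|^2\bigr) = -\langle\calR, (\mathrm{d}\V)\Z\S_y\rangle = -\langle \calR(\Z\S_y)^\top, \mathrm{d}\V\rangle$, which gives $\nabla_\V(\tfrac12\|\calR\|^2) = -\calR(\Z\S_y)^\top$; taking expectations yields the stated formula. For $\W$, the dependence is entirely through $\S_y$, so I would use the standard Jacobian of the softmax: if $\u := \Z^\top\W\zq \in\R^T$ then $\mathrm{d}\S_y = (\diag(\S_y) - \S_y\S_y^\top)\,\mathrm{d}\u$, and $\mathrm{d}\u = \Z^\top(\mathrm{d}\W)\zq$. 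Chaining, $\mathrm{d}\bigl(\tfrac12\|\calR\|^2\bigr) = -\langle\calR, \V\Z\,\mathrm{d}\S_y\rangle = -\langle \Z^\top\V^\top\calR,\ (\diag(\S_y)-\S_y\S_y^\top)\Z^\top(\mathrm{d}\W)\zq\rangle$. Using the symmetry of $\diag(\S_y)-\S_y\S_y^\top$ and the identity $\langle \a, \B\mathrm{d}\W\,\bb\rangle = \langle \B^\top\a\bb^\top, \mathrm{d}\W\rangle$, this rearranges to $\nabla_\W(\tfrac12\|\calR\|^2) = -\Z(\diag(\S_y)-\S_y\S_y^\top)\Z^\top\V^\top\calR\,\zq^\top$, and substituting $\calR = \tfrac1q\X\Y - \V\Z\S_y$ and taking $\E_{\X,y}$ gives the claimed expression.

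The main obstacle is really just bookkeeping: getting the transpositions and the placement of $\zq^\top$ right in the $\W$-gradient, since $\W$ is a non-symmetric $(d+T)\times(d+T)$ matrix and the two ``slots'' of $\W$ play asymmetric roles ($\Z^\top\W\zq$ contracts the left index against the $T$ columns of $\Z$ and the right index against the single vector $\zq$). A useful sanity check is dimensional: $\Z(\cdots)\Z^\top \in \R^{(d+T)\times(d+T)}$, then $\V^\top\calR \in \R^{(d+T)}$ (with $\V^\top\in\R^{(d+T)\times d}$, $\calR\in\R^d$), and finally the outer product with $\zq^\top\in\R^{1\times(d+T)}$ restores the $(d+T)\times(d+T)$ shape. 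A secondary, purely technical point is justifying the interchange of $\nabla_{\bm\theta}$ and $\E_{\X,y}$: one bounds the integrand and its gradient by polynomials in $\|\X\|$ and in $\|\V\|,\|\W\|$ (the softmax and its Jacobian are bounded by $1$), which are integrable against the Gaussian density, so dominated convergence applies locally uniformly in $\bm\theta$. I would state this in one line rather than belabor it.
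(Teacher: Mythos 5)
Your proposal is correct and follows essentially the same route as the paper: write the per-sample loss as $\tfrac12\|\calR\|^2$ with $\calR = \tfrac1q\X\Y - \V\Z\S_y$, take matrix differentials, use the softmax Jacobian $\mathrm{d}\sm(\v) = (\diag(\v)-\v\v^\top)\,\mathrm{d}\v$ together with $\mathrm{d}(\Z^\top\W\zq) = \Z^\top(\mathrm{d}\W)\zq$, and rearrange the resulting bilinear forms into the stated outer-product expressions before taking $\E_{\X,y}$. The only addition you make beyond the paper's argument is the one-line justification for interchanging $\nabla_{\bm\theta}$ and $\E_{\X,y}$, which the paper leaves implicit; this is a harmless and correct strengthening.
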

\begin{proof}
    The loss function is as follows according to \Cref{eqn: training objective for qsa} 
$$\mathcal{L}(\bm{\theta}(t))= \frac{1}{2}\E_{\X,y}\qty[\|\qsa(\X,y)-f_{\bm{\theta}}(\X,y)\|_2^2].$$
    Take matrix differentials and we have
    \begin{align*}
        \mathrm{d}\mathcal{L} = &\E_{\X,y} \qty[(f(\X)-\qsa(\X))^\top \V\X\mathrm{d}(\sm(\Z^\top \W \zq))]\\ + &\E_{\X,y}\qty[ (f(\X)-\qsa(\X))^\top\mathrm{d}\V (\Z\S_y)]
    \end{align*}
    To the softmax function, we have $\mathrm{d}\sm(\v)=(\diag(\v)-\v\v^\top)\mathrm{d}\v$. Therefore we have
    \begin{align*}
        \mathrm{d}\mathcal{L}& = \E_{\X,y}\qty(f(\X)-\qsa(\X))^\top \V\Z\mathrm{d}(\sm(\Z^\top \W \zq))\\
        &\quad + \E_{\X,y}\qty[ (f(\X)-\qsa(\X))^\top\mathrm{d}\V (\Z\S_y)]\\
        &=-\E_{\X,y}\qty(\frac{1}{q}\X\Y-\V\Z\S_y)^\top \V\Z\mathrm{d}(\sm(\Z^\top \W \zq))\\
        &\quad - \E_{\X,y}\qty[ \qty(\frac{1}{q}\X\Y-\V\Z\S_y)^\top\mathrm{d}\V (\Z\S_y)]\\
        &=-\E_{\X,y}\qty(\frac{1}{q}\X\Y-\V\Z\S_y)^\top \V\Z(\diag(\S_y)-\S_y\S_y^\top)\Z^\top \mathrm d \W \zq \\
        &\quad - \E_{\X,y}\qty[ \qty(\frac{1}{q}\X\Y-\V\Z\S_y)^\top\mathrm{d}\V (\Z\S_y)]
    \end{align*}
    We have the gradients for the two parameters:
    \begin{align*}
        \nabla_{\W}\mathcal{L} &=  -\E_{\X,y}\qty(\Z(\diag(\S_y)-\S_y\S_y^\top)\Z^\top \V^\top (\frac{1}{q}\X\Y-\V \Z\S_y)\zq^\top)\\
     \nabla_{\V}\mathcal{L} &= -\E_{\X,y}\qty( \qty(\frac{1}{q}\X\Y-\V \Z\S_y)(\Z\S_y)^\top)
    \end{align*}
    Thus we complete the proof.
\end{proof}

Along the gradient trajectory, if the value matrix can be aligned with the ground-truth $\begin{bmatrix}
        \bI_d&\0_{d\times T}
    \end{bmatrix}$, we can have the following nice form for the loss function.
\begin{lemma}    \label{Lemma: population loss expression onehot}
    Denote $\S_y:=\sm(\Z^\top \W \zq)\in \R^{T}$ for certain $q$-sparse set $y$. Also, we define the $q$-hot vector $\Y=(\mathds{1}\{1\in y\}, \mathds{1}\{2\in y\}, ..., \mathds{1}\{T \in y\})^\top$ for the subset $y\in\binom{[T]}{q}$. If $\V(t)=\alpha(t) \begin{bmatrix}
        \bI_d&\0_{d\times T}
    \end{bmatrix}$, the loss function can be represented as the following form:
    $$\mathcal{L}(\bm{\theta}(t))  = \frac{d}{2}\E_y\qty[\left\|\frac{1}{q}\Y-\alpha(t)\S_y\right\|^2]$$
\end{lemma}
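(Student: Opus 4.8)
The claim is a direct computation once we substitute the special form of $\V(t)$ into the population loss. First I would write out $f_{\bm\theta}(\X,y) = \V\Z\S_y$ and observe that, under the hypothesis $\V(t)=\alpha(t)\begin{bmatrix}\bI_d & \0_{d\times T}\end{bmatrix}$, the value matrix simply projects $\Z = \begin{bmatrix}\X\\\bE\end{bmatrix}$ onto its token block, so $\V\Z = \alpha(t)\X$ and hence $f_{\bm\theta}(\X,y) = \alpha(t)\,\X\S_y$. Likewise $\qsa(\X,y) = \frac1q\sum_{j=1}^q \x_{y_j} = \frac1q \X\Y$ by definition of the $q$-hot vector $\Y$. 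Therefore the residual inside the loss is exactly $\frac1q\X\Y - \alpha(t)\X\S_y = \X\qty(\frac1q\Y - \alpha(t)\S_y)$, a single matrix $\X$ multiplied against the vector $\bs{v}_y := \frac1q\Y - \alpha(t)\S_y \in \R^T$.

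\textbf{Key steps.} The only remaining work is to take the expectation over $\X$. Conditioning on $y$ (which fixes $\S_y$ and $\Y$, since both depend on $\X$ only through $\x_{\text{query}}=\0_d$, which is constant here — note $\S_y = \sm(\Z^\top\W\zq) = \sm(\bE^\top\W'\,\e_y)$ depends only on positional data under the structural form of $\W$, so it is deterministic given $y$), we have
\begin{align*}
\E_{\X}\qty[\|\X\bs v_y\|_2^2 \,\big|\, y] = \E_{\X}\qty[\bs v_y^\top \X^\top \X \bs v_y \,\big|\, y] = \bs v_y^\top \,\E_{\X}[\X^\top\X]\, \bs v_y.
\end{align*}
Since the columns $\x_i$ of $\X$ are i.i.d.\ $\mathcal N(0,\bI_d)$, we have $\E[\X^\top\X] = \E\qty[(\x_i^\top\x_j)_{i,j}] = d\,\bI_T$ because $\E[\x_i^\top\x_j] = d\,\mathds 1\{i=j\}$. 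Hence $\E_{\X}[\|\X\bs v_y\|_2^2 \mid y] = d\,\|\bs v_y\|_2^2 = d\,\|\frac1q\Y - \alpha(t)\S_y\|_2^2$. Taking the outer expectation over $y$ and inserting the factor $\frac12$ from the loss gives $\mathcal L(\bm\theta(t)) = \frac d2\,\E_y\qty[\|\frac1q\Y - \alpha(t)\S_y\|^2]$, as claimed.

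\textbf{Main obstacle.} There is no real obstacle — this is a bookkeeping lemma. The one point requiring a moment's care is verifying that $\S_y$ is indeed measurable with respect to $y$ alone (i.e.\ independent of the Gaussian tokens $\X$), so that the tower property cleanly separates the $\X$-expectation from the $y$-expectation; this holds because $\x_{\text{query}}=\0_d$ in this subsection and, more to the point, this lemma is applied in tandem with \Cref{main Lemma: symmetry in different q-sparse subsets joint}, under whose structural form of $\W$ the pre-softmax logits $\Z^\top\W\zq$ reduce to $C(t)\,\bE^\top\e_y$, which involves no Gaussian randomness. The other minor check is the elementary identity $\E[\X^\top\X]=d\bI_T$ for a $d\times T$ standard Gaussian matrix, which follows from independence of columns and $\E[\|\x_i\|^2]=d$.
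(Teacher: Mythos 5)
Your proof is correct and follows essentially the same route as the paper: substitute $\V(t)=\alpha(t)[\bI_d\ \ \0]$ so the residual becomes $\X\bigl(\tfrac1q\Y-\alpha(t)\S_y\bigr)$, then use $\E[\X^\top\X]=d\bI_T$. The one place you go slightly beyond the paper — flagging explicitly that $\S_y$ must be independent of $\X$ (via the structural form of $\W$ from the companion induction lemma, not merely via $\x_{\text{query}}=\0$) for the $\X$-expectation to factor through — is a genuine gap in the lemma's stated hypotheses that the paper glosses over, and you resolve it correctly.
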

\begin{proof}
    We have the architecture $f_{\bm{\theta}}=\V\Z\sm(\Z^\top \W \zq)$ and the loss function in \Cref{eqn: training objective for qsa}:
$$\mathcal{L}(\bm{\theta}(t))= \frac{1}{2}\E_{\X,y}\qty[\|\qsa(\X,y)-f_{\bm{\theta}}(\X,y)\|_2^2].$$ 
    Then we can have the following expression
    \begin{align*}
        \mathcal{L}(\bm{\theta}(t)) &=\frac{1}{2}\E_{\X,y}\qty[\left\|\frac{1}{q}\X\Y-\V(t)\Z\S_y\right\|^2]\\
        &=\frac{1}{2}\E_{\X,y}\qty[\left(\frac{1}{q}\X\Y-\alpha(t)\X\S_y\right)^\top \left(\frac{1}{q}\X\Y-\alpha(t)\X\S_y\right)]\\
        &=\frac{d}{2}\E_{y}\qty[\left\|\frac{1}{q}\Y-\alpha(t)\S_y\right\|^2]
    \end{align*}
    The last identity is due because the expectation of the covariance matrix $\X^\top\X$ is $d\bI_T$.
\end{proof}

In the rest of this section, we characterize the convergence result training with GD for the settings above. Specifically, we consider the one-hot positional encoding. Here we define the encoding for all positions $i\in[T]$ and the encoding for any $q$-subset $y$.
\begin{definition}[One-hot Positional encoding] The one-hot positional encoding is an orthogonal matrix $\bE = \bI_T$, with each column unit one-hot vector $\e_i$ as the positional encoding of the $i$-th token. Thus, for any set $y\in\binom{[T]}{q}$, it holds that $\e_y = \sum_{j\in y}\e_j$. 
\end{definition}

\textbf{Remark.} With one-hot positional encoding, the one-layer transformer can achieve $O(T)$ width cost, which is polynomially smaller than the memory lower bound $\Omega(Td)$ for FCNs. However, it cannot achieve \textit{exponential} separation due to the inefficiency of one-hot encodings.

\subsection{Joint training of value and attention matrix}\label{appen sec: proof detail for joint onehot}
Now, we analyze the dynamics of training the value matrix $\V$ and attention matrix $\W$ simultaneously with the same learning rate $\eta$. Instead of continuous gradient descent training, we directly train the transformer with gradient descent on the population loss.
The following theorem characterizes the convergence of GD when training both layers simultaneously.

\begin{theorem}[Joint training with one-hot positional encoding]    \label{thm: joint training one hot}

    Suppose $2\leq q<T/4$. For any $\epsilon\in (0,\frac{dT}{100(T-q)q}), \eta\leq \frac{1}{20d^2},\x_{\text{query}}=\0_d,$ 
    if we apply gradient descent on the population loss in \Cref{eqn: training objective for qsa} with zero initialization $\W(0)=\textbf{0}_{(d+T)\times (d+T)},\V(0)=\textbf{0}_{d\times (d+T)}$, then after time $t \geq \tilde{O}(\frac{T^2 d}{\eta\epsilon})$, we have
    $$\mathcal{L}(\bm{\theta}(t))= \frac{1}{2}\E_{\X,y}\qty[\|\qsa(\X,y)-f_{\bm{\theta}}(\X,y)\|_2^2] \leq \epsilon.$$ 
\end{theorem}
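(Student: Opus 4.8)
The plan is to exploit the symmetry of the one-hot encoding and the Gaussian data to collapse the $(d+T)$-dimensional matrix dynamics to a two-dimensional ODE system, and then analyze that system directly. First I would establish the structural invariant (the informal \Cref{main Lemma: symmetry in different q-sparse subsets joint}): under zero initialization, the gradients computed in \Cref{Lemma: basic dynamics for 1-layer tf, qsa} preserve the ansatz $\W(t)=C(t)\,\onehotWdirection$ and $\V(t)=\alpha(t)\,\onehotVdirection$. This is proven by induction on $t$: assuming the ansatz at time $t$, one computes $\nabla_\W\mathcal{L}$ and $\nabla_\V\mathcal{L}$ using that $\x_i\sim\N(0,\bI_d)$ makes all token-token and token-position cross-blocks vanish in expectation, that $\Z^\top\W\zq = C\,\bE^\top\e_y$ depends only on $\Y$, and that uniformity of $y$ over $\binom{[T]}{q}$ symmetrizes the position-position block onto $\spn\{\bI_T - \frac1T\1\1^\top\}$ and the token block of $\V$ onto $\spn\{\bI_d\}$. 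The attention score at the correct positions is then $s_+ = \sm(C\bE^\top\e_y)_i = 1/(q + (T-q)e^{-C})$ for every $i\in y$, and $s_- = e^{-C}s_+$ off the support.

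Next, using \Cref{Lemma: population loss expression onehot}, $\mathcal{L}(t) = \frac{d}{2}\E_y\|\frac1q\Y - \alpha(t)\S_y\|^2 = \frac{d}{2}\big(q(\frac1q-\alpha s_+)^2 + (T-q)\alpha^2 s_-^2\big)$, which is an explicit function of $(\alpha, C)$. I would then substitute the ansatz into the gradient formulas to derive the exact scalar recursions for $\alpha(t+1)-\alpha(t)$ and $C(t+1)-C(t)$, which to leading order match the ODE system in \Cref{eq: two-variable dynamics in joint case}. The analysis of this system proceeds in the order sketched in the paper: (i) show $\alpha$ rapidly enters and stays in a band around its instantaneous fixed point $\alpha^\star = \frac{(T-q)s_+}{Tqs_+^2 - 2qs_+ + 1}$ — specifically $\alpha \in [c_1, (1 + c_2\frac{1-qs_+}{qs_+(Ts_+-1)})\alpha^\star]$ for suitable constants $c_1$ near $1$ and $c_2<1$ — via an inductive argument that uses the contraction of the (nearly linear in $\alpha$) dynamics together with the slow drift of $s_+$; (ii) conditional on that band, show $C(t)$ is monotonically nondecreasing and increases at a linear rate until $s_+ = \Theta(1/q)$ (Stage 1), after which $\dot C \gtrsim \poly(T,d)\cdot\epsilon$ whenever $\mathcal{L}(t)\geq\epsilon$ (Stage 2); (iii) convert the Stage-2 lower bound on $\dot C$ into $s_+ \to 1/q$ at rate $O(1/t)$, hence $\alpha^\star\to 1$, hence $\mathcal{L}(t) = O(1/t)$, giving the stated time bound $t\geq\tilde O(T^2 d/(\eta\epsilon))$. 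Throughout, the condition $\epsilon < \frac{dT}{100(T-q)q}$ keeps us in the regime where the band is nondegenerate, and $\eta \leq \frac{1}{20d^2}$ controls the discretization error between the GD recursion and the ODE.

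The main obstacle is step (i)–(ii): the coupling between $\alpha$ and $C$ is genuinely two-way and non-monotone — $C$ growing pushes $s_+$ up, which moves the target $\alpha^\star$ that $\alpha$ chases, and $\alpha$ being slightly too large is exactly what could make $\dot C$ negative and stall (or reverse) convergence. The delicate part is choosing the threshold $(1 + c_2\frac{1-qs_+}{qs_+(Ts_+-1)})\alpha^\star$ precisely so that it simultaneously (a) is an invariant upper barrier for $\alpha$ under the GD update, and (b) is tight enough that it still guarantees $\dot C > 0$ from the second factor in the $\dot C$ equation. Making the two induction hypotheses (the $\alpha$-band and the monotonicity/growth-rate of $C$) mutually consistent for all $t$, and handling the transition between the two stages of $C$'s growth, is where essentially all the technical work lies; the rest (the structural lemma, the loss rewriting, and the final $O(1/t)\Rightarrow$ time bound) is comparatively routine.
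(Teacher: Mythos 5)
Your proposal follows essentially the same approach as the paper's proof: establish the structural invariant ($\W$ along $\bI_T - \frac1T\1\1^\top$, $\V$ along $\bI_d$) by induction, reduce to scalar recursions in $(\alpha, C)$, and analyze them in two phases with $\alpha$ confined to a band near $\alpha^*$ and $C$ nondecreasing, exactly as the paper does. One small clarification: the scalar recursions are exact, not leading-order approximations of an ODE, since GD is run on the exact population loss and the structural invariant renders the population gradients in closed form; otherwise you have correctly identified both the decomposition and the principal technical obstacle (the mutual consistency of the $\alpha$-band invariant and $C$-monotonicity).
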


We have the following lemma that will hold throughout the gradient descent training trajectory. We inductively prove that $\V$ and $\W$ are always along the ground-truth direction, respectively. For convenience, we consider all functions of $\W,\V$ including $\mathcal{L},\S_y$ as a function of $t$. 

\begin{lemma}[Induction Hypothesis]    \label{Lemma: symmetry in different q-sparse subsets joint}
    Given the initialization of $\W(0)=\0$ and $\V(t) = \0$, $\x_{\text{query}} = \0$, then along the gradient descent trajectory, for all $t\geq 0$, we have $\forall y,y'\in\binom{[T]}{q}$:
    \begin{enumerate}
        \item For all $t\geq 0$, there exists some time-dependent scalar $C(t)$ s.t.
       $$\W(t)= C(t) \begin{bmatrix}
    \0_{d\times d} &\0_{d\times T}\\
    \0_{T\times d}&\qty(\bI_T - \frac{1}{T}\mathbf{1}_T\mathbf{1}_T^\top)
\end{bmatrix}.$$
        \item For all $t\geq 0$, there exists some time-dependent scalar $\alpha(t)$ s.t. $$\V(t) = \alpha(t) \begin{bmatrix}
    \bI_d &\0_{d\times T}
\end{bmatrix}.$$
    \end{enumerate}
\end{lemma}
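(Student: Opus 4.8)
The plan is to prove Lemma~\ref{Lemma: symmetry in different q-sparse subsets joint} by induction on the GD time step $t$, using the exact gradient formulas from Lemma~\ref{Lemma: basic dynamics for 1-layer tf, qsa} together with the symmetries of the population objective. The base case $t=0$ is immediate: $\W(0)=\0$ and $\V(0)=\0$ both have the claimed form with $C(0)=\alpha(0)=0$. For the inductive step, I assume $\W(t)=C(t)\,\bigl[\begin{smallmatrix}\0&\0\\\0&\bI_T-\frac1T\1\1^\top\end{smallmatrix}\bigr]$ and $\V(t)=\alpha(t)\,[\bI_d\ \0]$, and show the gradients $\nabla_{\W}\mathcal{L}(t)$ and $\nabla_{\V}\mathcal{L}(t)$ have the same block structure, so that one GD step preserves the form (with updated scalars). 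The key reduction is that under the induction hypothesis, for every $q$-subset $y$ the attention logits are $\Z^\top\W\zq = C(t)\,\bE^\top(\bI_T-\tfrac1T\1\1^\top)\e_y$; since $\e_y=\sum_{j\in y}\e_j$ and $\bE=\bI_T$, this equals $C(t)(\Y - \tfrac{q}{T}\1)$, so $\S_y=\sm(\Z^\top\W\zq)$ puts a common score $s_+$ on the $q$ coordinates in $y$ and a common score $s_-$ on the $T-q$ coordinates outside $y$, with $qs_+ + (T-q)s_- = 1$. Thus $\S_y$ is a deterministic (in $\X$) function depending on $y$ only through which coordinates are ``in.''

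Next I would plug this into the two gradient expressions and take the expectation over $\X\sim\N(0,\bI_d)^{\otimes T}$ and $y\sim\unif\binom{[T]}{q}$. For $\nabla_{\V}\mathcal{L} = -\E_{\X,y}[(\tfrac1q\X\Y - \V\Z\S_y)(\Z\S_y)^\top]$, note $\V\Z = \alpha(t)\X$ and $\Z\S_y = \X\S_y$ (the positional rows of $\Z$ multiplied by $\S_y$ contribute to the lower block, which I track separately), so the token-block of the gradient is $-\E_{\X,y}[(\tfrac1q\X\Y - \alpha\X\S_y)(\X\S_y)^\top]$; using $\E[\X^\top\X]=d\,\bI_T$ and independence of coordinates, this is $-d\,\E_y[(\tfrac1q\Y-\alpha\S_y)\S_y^\top]$ times $\bI_d$ in the appropriate sense — more precisely $\E_\X[\X M \X^\top] = \mathrm{tr}(M)\bI_d$ for fixed $M$, so the token block is proportional to $\bI_d$. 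The positional block of $\nabla_\V\mathcal L$ involves $\E_\X[\X\,\Y\,\S_y^\top \bE^\top]$ and $\E_\X[\X\,\S_y\,\S_y^\top\bE^\top]$, both of which vanish because $\E_\X[\X]=\0$ and $\S_y$ is a fixed vector given $y$ (and $\x_{\text{query}}=\0$ kills any cross term). Hence $\nabla_\V\mathcal L$ has the form $(\text{scalar})\cdot[\bI_d\ \0]$, preserving item~2. For $\nabla_{\W}\mathcal{L}$, the rightmost factor $\zq^\top = [\x_{\text{query}}^\top\ \e_y^\top] = [\0^\top\ \e_y^\top]$ immediately forces the first $d$ columns of the gradient to be zero. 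The first $d$ rows similarly vanish because $\Z(\diag(\S_y)-\S_y\S_y^\top)\Z^\top$ sandwiched with $\V^\top(\ldots)$ — the left factor $\Z$ has top block $\X$, and $\E_\X$ of the relevant $\X$-linear term is $\0$ (again using $\E_\X[\X]=\0$ and that the residual $\tfrac1q\X\Y-\alpha\X\S_y$ is $\X$-linear while $\V^\top=\alpha[\bI_d;\0]$, so odd-order-in-$\X$ terms integrate to zero). What remains is the $T\times T$ positional block, and I must show it is a scalar multiple of $\bI_T-\tfrac1T\1\1^\top$.

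The main obstacle is precisely the last point: showing the $T\times T$ block of $\nabla_\W\mathcal L$ aligns with $\bI_T-\tfrac1T\1\1^\top$ rather than some more general symmetric matrix. The argument is via the permutation symmetry of the whole setup: the joint law of $(\X,y)$ is invariant under any permutation $\pi\in S_T$ acting simultaneously on token indices and on positional encodings (since $\bE=\bI_T$, permuting columns of $\bE$ is the same as permuting coordinates), so the expected gradient matrix $G := [\nabla_\W\mathcal L]_{\text{pos block}}$ commutes with every permutation matrix $\bP_\pi$: $\bP_\pi G \bP_\pi^\top = G$. By Schur's lemma / the classification of $S_T$-equivariant matrices, any such $G$ lies in the span of $\bI_T$ and $\1_T\1_T^\top$, i.e. $G = a\bI_T + b\1\1^\top$. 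It then remains to pin down that the combination is actually proportional to $\bI_T-\tfrac1T\1\1^\top$, equivalently that $G\1 = \0$. This follows because $\Z^\top\W\zq$ only ever enters through $\sm(\cdot)$, which is invariant under adding a constant to all logits; concretely $\bigl(\bI_T-\tfrac1T\1\1^\top\bigr)$ is the projection killing $\1$, and the reparameterized logit direction $\e_y - \tfrac qT\1$ already lies in $\1^\perp$, so the gradient — being the derivative of a function that only sees logits modulo $\1$ — also lies in the $\1^\perp$ block, forcing $b = -a/T$. I would verify $G\1=\0$ directly from the formula: right-multiplying the $\Z$-sandwich by $\1$ and using $\zq$ structure, the $\diag(\S_y)-\S_y\S_y^\top$ factor annihilates $\1$ is not quite it — rather it is that the column-centering appears because $\S_y$ is a softmax output and the logit map factors through $\1^\perp$; I would make this precise by writing $\W(t)$ updated as $\W(t) - \eta\nabla_\W\mathcal L(t)$ and checking both summands lie in the $\bI_T-\tfrac1T\1\1^\top$ direction, closing the induction with $C(t+1) = C(t) - \eta\cdot(\text{scalar from }G)$. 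Once both items are re-established at time $t+1$, the induction is complete, and the accompanying scalar ODE system~\eqref{eq: two-variable dynamics in joint case} for $(\alpha,C)$ drops out by reading off the scalar coefficients.
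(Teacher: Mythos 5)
Your proposal is correct and takes a genuinely different route from the paper for the hardest step.

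Where the paper proves that the $T\times T$ positional block $G$ of $\nabla_\W\mathcal{L}$ is proportional to $\bI_T-\tfrac1T\1\1^\top$ by brute-force entry-by-entry computation (splitting into diagonal vs. off-diagonal and, within the off-diagonal case, $i\in y$ vs. $i\notin y$, then collapsing the sums over $q$-subsets), you argue conceptually: the loss is invariant under the joint action $(\W,\V)\mapsto(R_\pi^\top\W R_\pi,\V R_\pi)$ with $R_\pi=\mathrm{blkdiag}(\bI_d,\bP_\pi)$, the induction hypothesis is a fixed point of this action, hence $G$ commutes with every $\bP_\pi$, hence by the commutant of the permutation representation $G=a\bI_T+b\1\1^\top$. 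You then only need one scalar constraint to land in the line through $\bI_T-\tfrac1T\1\1^\top$. What buys you: the structural claim drops out without any combinatorics, and the argument makes transparent exactly which symmetry makes the one-hot case tractable (and why the stochastic-PE case needs a different device). What it costs: you never produce the explicit coefficient $a(t)$, so the reduction to the two-variable ODE \eqref{eq: two-variable dynamics in joint case} — which the paper needs downstream in the convergence-time analysis — still requires the paper's explicit computation; your argument and theirs are complements, not substitutes, in the full proof of Theorem~\ref{main thm: joint training one hot}. Your treatment of the other blocks (zero via $\x_{\text{query}}=\0$ for the first $d$ columns, zero by parity/odd order in $\X$ for the first $d$ rows, $\V$'s token block proportional to $\bI_d$ by $\E[\X M\X^\top]=\tr(M)\bI_d$, $\V$'s positional block zero since $\E[\X]=0$) matches the paper's reasoning.

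One thing you should firm up: you hesitate at the end about how to pin down $b=-a/T$, and you actually dismiss the correct mechanism. The softmax Jacobian $\diag(\S_y)-\S_y\S_y^\top$ sits on the \emph{left} of the gradient expression, and it satisfies $\1^\top(\diag(\S_y)-\S_y\S_y^\top)=\S_y^\top-(\1^\top\S_y)\S_y^\top=\0^\top$. This directly gives $\1^\top G=\0^\top$ in the one-hot case (where $\bE=\bI_T$ so the left factor is precisely the Jacobian). Together with $G=a\bI_T+b\1\1^\top$, the constraint $\1^\top G=(a+bT)\1^\top=\0$ forces $b=-a/T$, hence $G=a(\bI_T-\tfrac1T\1\1^\top)$. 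Your alternative observation — that $\mathcal{L}$ depends on $\W$ only through logits modulo $\1$, hence is invariant under $\W\mapsto\W+c\,\mathrm{blkdiag}(\0,\1\1^\top)$, giving $\1^\top G\1=0$ — is equally valid and gives the same constraint, but the Jacobian-kernel argument is shorter and is "quite it" after all.
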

\begin{proof}
    First, observe that the base case when $t=0$ is clearly true. Thus, given that these statements hold up to time $t$, it suffices to prove these for the next iteration $t+1$, and the result will follow by induction.
    
    Note that since the position-position block (right-bottom block) of $\W$ is always in the direction of $\bI_T - \frac{1}{T}\mathbf{1}_T\mathbf{1}_T^\top$ and other entries are 0, so we know
    \begin{align*}
        \S_{y}^{(t)}(i)=s_+(t)>s_-(t)=\S_y^{(t)}(j), \forall i\in y, j\not\in y.
    \end{align*}
    Once again, since we are only considering the dynamics at an arbitrary time $t$ throughout the proof, we will abuse the notation as $s_+:=s_+(t)$, $s_-:=s_-(t)$, $\alpha:= \alpha(t)$, and $\S_y(i):=\S_y^{(t)}(i)$ which is the $i$-th entry of the softmax output. 

    First, suppose that the properties 1 and 2 hold. Recall that \begin{align*}
        \nabla_{\V}\mathcal{L}&= \E_{\X,y}\qty( \qty(\frac{1}{q}\X\Y-\V \Z\S_y)(\Z\S_y)^\top)
    \end{align*}
    Now we consider the token-token block in $\V$. We have that the $(i,j)$-entry ($i\in [d], j\in [d]$) of $\nabla_{\V}\mathcal{L}(t)$ is 
    \begin{align*}
        \e_i^\top \nabla_{\V}\mathcal{L}(t) \e_j &= \E_{\X, y} \qty[\e_i^\top \qty(\frac{1}{q}\X\Y-\V \Z\S_y)(\Z\S_y)^\top \e_j ]\tag{$\V(t) = \alpha(t) \begin{bmatrix}
    \bI_d &\0_{d\times T}
\end{bmatrix}$} \\
        &= \E_{\X,y} \qty[\e_i^\top \X \qty(\frac{1}{q}\Y - \alpha \S_y) (\X \S_y)^\top \e_j] \\
        &= \E_{\X, y}\qty[ \qty(\X \qty(\frac{1}{q}\Y - \alpha \S_y))_i \qty(\X \S_y)_j ] \\ 
        &= \E_{\X, y} \qty[ \qty(\sum_{k=1}^T \X_{i,k} \qty( \frac{\mathds{1}\{k\in y \}}{q} - \alpha \S_y(k) )) \qty(\sum_{k=1}^T \X_{j,k} \S_y(k)) ]
    \end{align*}

    We will case these entries based on whether they are on or off-diagonal.
    
    \textbf{Case I. Diagonal entries} ($i=j$)
    \begin{align*}
        \e_i^\top \nabla_{\V}\mathcal{L}(t) \e_j &= \E_{\X, y} \qty[ \qty(\sum_{k=1}^T \X_{i,k} \qty( \frac{\mathds{1}\{k\in y \}}{q} - \alpha \S_y(k) )) \qty(\sum_{k=1}^T \X_{j,k} \S_y(k)) ] \\
        &= \E_{\X, y} \qty[ \sum_{k=1}^T \X_{i,k}^2 \qty( \frac{\mathds{1}\{k\in y \}}{q} - \alpha \S_y(k) ) \S_y(k)  ] \\
        &= \E_y \qty[\sum_{k=1}^T \qty( \frac{\mathds{1}\{k\in y \}}{q} - \alpha \S_y(k) ) \S_y(k)] \\
        &= \E_y \qty [q\qty(\frac{1}{q}- \alpha s_+)s_+ - (T-q)\alpha s_-^2] \\
        &= (1-\alpha qs_+)s_+ - (T-q) \alpha s_-^2
    \end{align*}

    \textbf{Case II. Off-diagonal entries} ($i\neq j$)
    \begin{align*}
        \e_i^\top \nabla_{\V}\mathcal{L}(t) \e_j &= \E_{\X, y} \qty[ \qty(\sum_{k=1}^T \X_{i,k} \qty( \frac{\mathds{1}\{k\in y \}}{q} - \alpha \S_y(k) )) \qty(\sum_{k=1}^T \X_{j,k} \S_y(k)) ] = 0
    \end{align*}
    This follows from the fact that each cross term of this sum is a product of two mean zero independent Gaussians, and thus the entire expression is 0.
    That means the gradient of $\V$ has its token-token block aligned with identity.

    Then, we consider the position part. We have that the $(i,d+j)$-entry ($i\in [d], j\in [T]$) of $\nabla_{\V}\mathcal{L}(t)$ is 
    \begin{align*}
        \e_i^\top \nabla_{\V}\mathcal{L}(t) \e_j &= \E_{\X, y} \qty[\e_i^\top \qty(\frac{1}{q}\X\Y-\V \Z\S_y)(\Z\S_y)^\top \e_{j+d} ]\tag{$\V(t) = \alpha(t) \begin{bmatrix}
        \bI_d &\0_{d\times T}
        \end{bmatrix}$} \\
        &= \E_{\X,y} \qty[\e_i^\top \X \qty(\frac{1}{q}\Y - \alpha \S_y) (\Z \S_y)^\top \e_{j+d}] \\
        &= \E_{\X, y}\qty[ \qty(\X \qty(\frac{1}{q}\Y - \alpha \S_y))_i \qty( \S_y)_j ] \\ 
        &= \E_{\X, y} \qty[ \qty(\sum_{k=1}^T \X_{i,k} \qty( \frac{\mathds{1}\{k\in y \}}{q} - \alpha \S_y(k) ))  \S_y(j)]
    \end{align*}
    Using induction hypothesis, since 
    $$\W(t)= C(t) \begin{bmatrix}
    \0_{d\times d} &\0_{d\times T}\\
    \0_{T\times d}&\qty(\bI_T - \frac{1}{T}\mathbf{1}_T\mathbf{1}_T^\top)
    \end{bmatrix}.$$
    $\S_y(j)$ is independent of $\X$, so this expectation of the token-position block is all zero:
    \begin{align*}
        \e_i^\top \nabla_{\V}\mathcal{L}(t) \e_j 
        &= \E_{\X, y} \qty[ \qty(\sum_{k=1}^T \X_{i,k} \qty( \frac{\mathds{1}\{k\in y \}}{q} - \alpha \S_y(k) ))  \S_y(j)]\\
        &= \E_{ y} \qty[ \qty(\sum_{k=1}^T \E_{\X}\qty[\X_{i,k}] \qty( \frac{\mathds{1}\{k\in y \}}{q} - \alpha \S_y(k) ))  \S_y(j)]=0
    \end{align*}
    
    Combining the two blocks, we have that $\nabla_{\V}\mathcal{L}(t) = \qty((1-\alpha qs_+)s_+ - (T-q) \alpha s_-^2) \begin{bmatrix}
    \bI_d &\0_{d\times T}
\end{bmatrix}$, which implies that for time $t+1$, $\V(t)$ will also be in the direction of $\begin{bmatrix}
    \bI_d &\0_{d\times T}
\end{bmatrix}$, hence proving the fourth property.

    We now proceed to prove the third property with a similar analysis. We also just need to prove that the gradient can be expressed as:
    $$\nabla_{\W}\mathcal{L}(t)= C_1(t) \onehotWdirection$$
    for some scalar $C_1(t)$, and then it indicates the next iteration $\W(t+1)$ will keep in the same direction.
    Recall that \begin{align*}
        \nabla_{\W}\mathcal{L}(t) &= \E_{\X,y} \qty[\Z\qty(\diag(\S_y) - \S_y\S_y^\top) \Z^\top \V^\top \qty(\frac{1}{q}\X\Y - \V\Z\S_y) \zq^\top]
    \end{align*}
    Since $\V(t)=\alpha(t)\onehotVdirection$ by the induction hypothesis, we have that this is equal to \begin{align*}
        \nabla_{\W}\mathcal{L}(t) &= \alpha  \E_{\X,y} \qty [ \Z\qty(\diag(\S_y) - \S_y\S_y^\top) \X^\top \X\qty(\frac{1}{q}\Y - \alpha \S_y) \zq^\top ]
    \end{align*}
    
    We first consider the first $d$ columns of the gradient, which is 
    \begin{align*}
        \nabla_{\W}\mathcal{L}(t)_{:,\leq d} &= \alpha \E_{\X, y} \qty [ \Z\qty(\diag(\S_y) - \S_y\S_y^\top) \X^\top \X \qty(\frac{1}{q}\Y - \alpha \S_y) {\x_{\text{query}}}^\top ]\\
        & = \0_{(d+T)\times d}.
    \end{align*}
    Because $\x_{\text{query}}=\0_d$.
    
    Next, we consider the top-right block of the gradient $\nabla_{\W}\mathcal{L}(t)_{\leq d,d:T+d}$. 
    By induction hypothesis, $\S_y=\sm(\Z^\top \W(t)\zq) = \sm(C(t)(\bI-\frac{1}{T}\mathbf{1}\mathbf{1}^\top)\e_y)$ at time $t$, which is independent of $\X$. Therefore, by symmetry the gradient of this block is
    \begin{align*}
        \nabla_{\W}\mathcal{L}(t)_{\le d, d:T+d} &= \alpha \E_{\X, y} \qty [ \X\qty(\diag(\S_y) - \S_y\S_y^\top) \X^\top \X \qty(\frac{1}{q}\Y - \alpha \S_y) {\e_y}^\top ]\\
        & = \frac{1}{2}\alpha \E_{\X, y} \qty [ \X\qty(\diag(\S_y) - \S_y\S_y^\top) \X^\top \X \qty(\frac{1}{q}\Y - \alpha \S_y) {\e_y}^\top ]\\
        &-\frac{1}{2}\alpha \E_{\X, y} \qty [ \X\qty(\diag(\S_y) - \S_y\S_y^\top) \X^\top \X \qty(\frac{1}{q}\Y - \alpha \S_y) {\e_y}^\top ]=\0
    \end{align*}
    
    Finally, we consider the position-position block. In particular, since $\E\X^\top \X=d\bI_d$ we have that for $i,j\in[T]$, \begin{align*}
        \nabla_{\W}\mathcal{L}(t)_{(i+d),(j+d)} &= \alpha  \E_{\X,y} \qty[\e_i^\top \qty(\diag(\S_y) - \S_y\S_y^\top) \X^\top \X\qty(\frac{1}{q}\Y - \alpha \S_y) \e_y^\top \e_j] \tag{Here $\e_i$ are one-hot vectors in $\R^T$.}\\
        &=\alpha d \E_y \qty[ \qty(\S_y(i) \e_i^\top - \S_y(i)\S_y^\top) \qty(\frac{1}{q}\Y - \alpha \S_y) \mathds{1}\{j\in y\}] \\
        &= \frac{\alpha d}{\binom{T}{q}} \sum_{y: j\in y} \qty(\S_y(i) \e_i^\top - \S_y(i)\S_y^\top) \qty(\frac{1}{q}\Y - \alpha \S_y) \\ 
        &= \frac{\alpha d}{\binom{T}{q}} \sum_{y:j\in y} \qty( \S_y(i)\mathds{1}\{i \in y\}/q  - \alpha \S_y(i)^2 - \S_y(i) s_+ + \alpha \S_y(i) \|\S_y\|^2)
    \end{align*}
    We now case this between on and off diagonals again.
    
    \textbf{Case I. Diagonal entries} $(i=j)$
    \begin{align*}
        \e_i^\top \nabla_{\W}\mathcal{L}(t) \e_j &= \frac{\alpha d}{\binom{T}{q}} \sum_{y: j\in y} \qty( \S_y(i)\mathds{1}\{i \in y\}/q  - \alpha \S_y(i)^2 - \S_y(i) s_+ + \alpha \S_y(i) \|\S_y\|^2) \\
        &= \frac{\alpha d \binom{T-1}{q-1}}{\binom{T}{q}} s_+ \qty(\frac{1}{q} - \alpha s_+ - s_+ + \alpha \|\S_y\|^2) \\
        &= \frac{\alpha d q}{T} s_+ \qty(\frac{1}{q} - \alpha s_+ - s_+ + \alpha qs_+^2 + \frac{\alpha (1-qs_+)^2}{T-q}) \\
        &= \frac{\alpha d}{T} s_+ \qty (1 - \alpha qs_+ - qs_+ + \alpha q^2 s_+^2 + \frac{\alpha q (1-qs_+)^2}{T-q}) \\
        &= \frac{\alpha d}{T}s_+ \qty((1-\alpha)(1-qs_+) + \frac{T\alpha}{T-q}(1-qs_+)^2)
    \end{align*}

    \textbf{Case II. Off-diagonal entries} $(i\neq j)$
    \begin{align*}
        \e_i^\top \nabla_{\W}\mathcal{L}(t) \e_j &= \frac{\alpha d}{\binom{T}{q}} \sum_{y: j\in y} \qty( \S_y(i) \mathds{1}\{i \in y\}/q  - \alpha \S_y(i)^2 - \S_y(i) s_+ + \alpha \S_y(i) \|\S_y\|^2) \\
        &= \frac{\alpha d}{\binom{T}{q}} \sum_{y: j\in y, i\in y} \qty( \S_y(i) \mathds{1}\{i \in y\}/q  - \alpha \S_y(i)^2 - \S_y(i) s_+ + \alpha \S_y(i) \|\S_y\|^2) \\
        &+ \frac{\alpha d}{\binom{T}{q}}\sum_{y: j\in y, i\notin y} \qty( \S_y(i) \mathds{1}\{i \in y\}/q  - \alpha \S_y(i)^2 - \S_y(i) s_+ + \alpha \S_y(i) \|\S_y\|^2)
    \end{align*}

    The first of these terms is similar in structure to the diagonal entry case:
    \begin{align*}
        & \frac{\alpha d}{\binom{T}{q}} \sum_{y: j\in y, i\in y} \qty( \S_y(i) \mathds{1}\{i \in y\}/q  - \alpha \S_y(i)^2 - \S_y(i) s_+ + \alpha \S_y(i) \|\S_y\|^2)  \\
        ={}& \frac{\alpha d \binom{T-2}{q-2}}{\binom{T}{q}} s_+ \qty (\frac{1}{q} - \alpha s_+ - s_+ + \alpha \|\S_y\|^2) \\
        ={}&\frac{\alpha d (q-1) q}{T(T-1)} s_+ \qty(\frac{1}{q} - \alpha s_+ - s_+ + \alpha \|\S_y\|^2) \\ 
        ={}&\frac{\alpha d (q-1)}{T(T-1)} s_+ \qty((1-\alpha)(1-qs_+) + \frac{T\alpha}{T-q}(1-qs_+)^2)
    \end{align*}
    where the last line follows from the calculation in the diagonal entry case.

    Looking at the second term, we have:
    \begin{align*}
        &\frac{\alpha d}{\binom{T}{q}}\sum_{y: j\in y, i\notin y} \qty( \S_y(i) \mathds{1}\{i \in y\}/q  - \alpha \S_y(i)^2 - \S_y(i) s_+ + \alpha \S_y(i) \|\S_y\|^2) \\
        ={}& \frac{\alpha d \binom{T-2}{q-1}}{\binom{T}{q}} s_- \qty(-\alpha s_- - s_+ + \alpha \|\S_y\|^2) \\
        ={}& \frac{\alpha d q (T-q)}{T(T-1)} \frac{1-qs_+}{T-q} \qty (-\frac{\alpha(1-qs_+)}{T-q} - s_+ + \alpha qs_+^2 + \frac{\alpha(1-qs_+)^2}{T-q}) \\
        ={}& \frac{\alpha dq}{T(T-1)} (1-qs_+) \qty (-s_+ + \alpha qs_+^2 -\frac{qs_+ \alpha (1-qs_+)}{T-q})
    \end{align*}

    From this, we get that $\e_i^\top \nabla_{\W}\mathcal{L}(t) \e_j$ is the following expression multiplied by $\frac{\alpha d}{T(T-1)}$\begin{align*}
         &(q-1)s_+ \qty((1-\alpha)(1-qs_+) + \frac{T\alpha}{T-q}(1-qs_+)^2) + q(1-qs_+) \qty (-s_+ + \alpha qs_+^2 -\frac{qs_+ \alpha (1-qs_+)}{T-q}) \\
         ={}& (q-1) \qty(s_+ - s_+\alpha - qs_+^2 + \alpha qs_+^2 + \frac{T\alpha s_+}{T-q}(1-qs_+)^2) \\
         &+ q \qty(-s_+ + qs_+^2 + \alpha qs_+^2 - \alpha q^2 s_+^3 - \frac{qs_+ \alpha (1-qs_+)^2}{T-q}) \\
         ={}& -\qty(s_+ - s_+\alpha - qs_+^2 + \alpha qs_+^2 + \frac{T\alpha s_+}{T-q}(1-qs_+)^2) + q\qty(-s_+\alpha + 2\alpha qs_+^2 - \alpha q^2s_+^3 + \alpha s_+(1-qs_+)^2) \\
         ={}& -s_+ + s_+\alpha + qs_+^2 - \alpha qs_+^2 - \frac{T\alpha s_+}{T-q}(1-qs_+)^2 - qs_+\alpha + 2\alpha q^2s_+^2 -\alpha q^3 s_+^3 + \alpha qs_+ (1-qs_+)^2 \\
         ={}& -s_+ \qty((1-\alpha)(1-qs_+) + \frac{T\alpha}{T-q}(1-qs_+)^2)
    \end{align*}

    Thus, we have that \begin{align*}
        \e_i^\top \nabla_{\W}\mathcal{L}(t) \e_j = \frac{\alpha d}{T(T-1)} \qty((1-\alpha)(1-qs_+) + \frac{T\alpha}{T-q}(1-qs_+)^2) (-s_+)
    \end{align*}
    for off-diagonal entries.

    Combining our on and off-diagonal entry calculations, we obtain that \begin{align*}
        \nabla_{\W}\mathcal{L}(t) = \qty(\frac{\alpha d}{T-1}s_+ \qty((1-\alpha)(1-qs_+) + \frac{T\alpha}{T-q}(1-qs_+)^2)) \qty( \bI_T  -\frac{1}{T}\1_T \1_T^\top)
    \end{align*}

    Combine all the four blocks, we know the gradient of $\W$ is along the $\onehotWdirection$. This proves property 1 for iteration $t+1$ as desired.
\end{proof}

\paragraph{Remark.} After proving the induction lemma about the evolving direction of $\W(t)$ and $\V(t)$, the optimization problem can be reduced to analyzing the two variable dynamics of $C(t)$ and $\alpha(t)$. 

We can now proceed to prove the main theorem for the joint training algorithm by analyzing the $C(t)$ and $\alpha(t)$ dynamics. One can refer to the main paper for proof ideas.
\begin{proof}[Proof of \Cref{thm: joint training one hot}]
    After \Cref{Lemma: symmetry in different q-sparse subsets joint} shows that $\V$ and $\W$ are always along the ground-truth direction: $\V(t) = \alpha(t)\bI_d,\W(t)=C(t)(\bI_T-\frac{1}{T}\mathbf{1}_T\mathbf{1}_T^\top)$, the dynamics of the parameter matrices then can be characterized by two scalar variable $\alpha(t)$ and $C(t)$. Our update rules become 
\begin{align*}
        \alpha(t+1) &= \alpha(t) + \eta\qty((1-\alpha qs_+)s_+ - \frac{\alpha(1-qs_+)^2}{T-q}) \\
        &=\alpha(t) + \eta s_+(1-\frac{\alpha(t)(Tqs_+^2-2qs_++1)}{(T-q)s_+})\\
        C(t+1) &= C(t) + \eta \frac{\alpha d}{T-1}s_+ \qty((1-\alpha)(1-qs_+) + \frac{T\alpha}{T-q}(1-qs_+)^2)\\
        &=C(t) + \eta \frac{\alpha d}{T-1}s_+ (1-qs_+)\qty(1 + \frac{q\alpha}{T-q}(1-Ts_+))
    \end{align*}
    Here, $s_+(t):=\S_{y}^{(t)}(i), i\in y$ is the correct position softmax probability value at time $t$. We omit the $t$ here for clarity since at each iteration only $s_+(t)$ is related. Along the trajectory, $s_+\leq \frac{1}{q}$ by its definition: since $\W$ is along $(\bI_T-\frac{1}{T}\mathbf{1}_T\mathbf{1}_T^\top)$, all $i\in y$ has the same softmax probability $\S_y^{(t)}(i)$, and thus they cannot exceed the upper bound $1/q$. 
    
    Now we prove that these update rules take $\alpha\to 1, C(t)\to +\infty$ when $t\to+\infty$. Note that when $s_+$ is fixed, $\alpha(t)$ has a stationary point $\alpha^*(t) = \frac{(T-q)s_+}{Tqs_+^2-2qs_++1}$ . That means we can write the $\alpha$ dynamics into:
    $$\alpha(t+1) = \alpha(t) +\eta s_+\qty(1-\frac{\alpha(t)}{\alpha^*(t)}).$$
    One can easily check that when $s_+\in(\frac{1}{T},\frac{1}{q}),\alpha^*(t)\geq 1,$ and it achieves maximum at $s_+=\frac{1}{\sqrt{Tq}}$.
    
    To characterize this limit above when $t\to \infty$, we need to prove the following two arguments:
    \begin{enumerate}
        \item $C(t)$ is non-decreasing for all $t\geq0$.
        \item $\alpha(t)$ goes through 2 phases: 
        
        \textbf{Phase I.} $\alpha$ monotonically grows to $1-0.1\sqrt{\frac{q(T-q)\epsilon}{dT}}$ at some time $t_1$.
        
        \textbf{Phase II.} $\alpha$ stays within an interval whose upper bound is close to $\alpha^*$ after time $t_1$: 
        \begin{equation*}
            \alpha(t)\in\qty[1-0.1\sqrt{\frac{q(T-q)\epsilon}{dT}}, \qty(1+\frac{(8d-1)(1-qs_+)}{8dqs_+(Ts_+-1)})\alpha^*(t)]\tag{IH1}
            \label{IH: alpha stays near alpha star}
        \end{equation*}
    \end{enumerate}

    \textbf{Phase I.}
     We inductively prove that both $\alpha(t), C(t)$ are non-decreasing. For $t=0$, by zero initialization, $C(0)=C(1)=0,\alpha(1)=\frac{\eta}{T}>0$ so it holds for $t=0$. Suppose it holds for some $t<t_1$ before $\alpha$ hit $1-0.1\sqrt{\frac{q(T-q)\epsilon}{dT}}$. Then we know for $\alpha(t+1)$, we have the update rule:
     \begin{align*}
         \alpha(t+1)&=\alpha(t) +\eta s_+\qty(1-\frac{\alpha(t)}{\alpha^*(t)})\geq \alpha(t) + \eta \frac{1}{T}(1-\frac{\alpha(t)}{\alpha^*(t)})\tag{Induction Hyp. $s_+\geq \frac{1}{T}$}\\
        &\geq \alpha(t) + \eta \frac{1}{T}(1-{\alpha(t)})\tag{$\alpha^*\geq 1$}\\
        &\geq \alpha(t)+0.1\eta\sqrt{\frac{q(T-q)\epsilon}{dT^3}}.\tag{$\alpha\leq 1-0.1\sqrt{\frac{q(T-q)\epsilon}{dT}}$}
     \end{align*}
     So $\alpha$ is non-decreasing. Meanwhile, for $C(t+1)$:
     \begin{align*}
         C(t+1) &=C(t) + \eta \frac{\alpha d}{T-1}s_+ (1-qs_+)\qty(1 + \frac{q\alpha}{T-q}(1-Ts_+))\\
         &\geq C(t) + \eta \frac{\alpha d}{T-1}s_+ (1-qs_+)\qty(1 + \frac{q}{T-q}(1-Ts_+))\tag{$\alpha<1,(1-Ts_+)<0$}\\
         &=C(t) + \eta \frac{\alpha d}{T-1}s_+ (1-qs_+)\frac{T-q+q-Tqs_+}{T-q}\\
         &=C(t)+ \eta \frac{\alpha dT}{(T-1)(T-q)}s_+ (1-qs_+)^2\geq C(t)
     \end{align*}
     so they are both non-decreasing. Then we need to upper bound the time $t_1$ for $\alpha(t)$ to reach $1-0.1\sqrt{\frac{q(T-q)\epsilon}{dT}}$: by the update above we have
     \begin{align*}
         \alpha(t+1)&=\alpha(t) +\eta s_+\qty(1-\frac{\alpha(t)}{\alpha^*(t)})\geq \alpha(t) + \eta \frac{1}{T}(1-\frac{\alpha(t)}{\alpha^*(t)})\tag{Induction Hyp. $s_+\geq \frac{1}{T}$}\\
        &\geq \alpha(t) + \eta \frac{1}{T}(1-{\alpha(t)})\tag{$\alpha^*\geq 1$}\\
        \Rightarrow 1-\alpha(t+1)&\leq (1-\eta/T)(1-\alpha(t))\leq ...\leq (1-\eta/T)^{t}(1-\alpha(0)).
     \end{align*}
     Thus for $\alpha(t+1)\geq 1-0.1\sqrt{\frac{q(T-q)\epsilon}{dT}}$, it takes at most $O(\frac{T\log \frac{d}{\epsilon}}{\eta})$ iterations.

     \textbf{Phase II.} In this phase, we first consider $s_+(t)< \frac{1}{\sqrt{Tq}}$. In this case, $\alpha^*(t+1)>\alpha^*(t)$, and inductively
     $$\alpha^*(t)-\alpha(t+1)=\alpha^*(t)-\alpha(t)-\frac{\eta s_+(t)}{\alpha^*(t)}(\alpha^*(t)-\alpha(t)) = (1-\frac{\eta s_+(t)}{\alpha^*(t)})(\alpha^*(t)-\alpha(t)) > 0,$$
     $\alpha(t+1)<\alpha^*(t+1)$ always holds, and the induction hypothesis holds for $t+1$.
     
     All arguments below is based on $s_+(t)\geq \frac{1}{\sqrt{Tq}}$. We first verify that within the induction hypothesis range, $C(t+1)\geq C(t)$.
        \begin{align*}
         C(t+1) &=C(t) + \eta \frac{\alpha d}{T-1}s_+ (1-qs_+)\qty(1 + \frac{q\alpha}{T-q}(1-Ts_+))\\
         &\geq C(t) + \eta \frac{\alpha d}{T-1}s_+ (1-qs_+)\qty(1 + \qty(1+\frac{(d-1)(1-qs_+)}{dqs_+(Ts_+-1)})\alpha^*(t)\frac{q}{T-q}(1-Ts_+))\tag{$\alpha\leq \qty(1+\frac{(d-1)(1-qs_+)}{dqs_+(Ts_+-1)})\alpha^*(t),(1-Ts_+)<0$}\\
         &=C(t) + \eta \frac{\alpha d}{T-1}s_+ (1-qs_+)\cdot\frac{1}{d}\frac{1-qs_+}{Tqs_+^2-2qs_++1}\\
         &=C(t)+ \eta \frac{\alpha }{(T-1)(Tqs_+^2-2qs_++1)}s_+ (1-qs_+)^2\geq C(t)
     \end{align*}
     Next, we first divide $\alpha(t)$'s possible range into two parts: $\alpha(t)\leq \qty(1+\frac{(4d-1)(1-qs_+(t))}{4dqs_+(t)(Ts_+(t)-1)})\alpha^*(t)$ and $\alpha(t)\in \qty[\qty(1+\frac{(4d-1)(1-qs_+(t))}{4dqs_+(t)(Ts_+(t)-1)})\alpha^*(t), \qty(1+\frac{(8d-1)(1-qs_+(t))}{8dqs_+(t)(Ts_+(t)-1)})\alpha^*(t)].$ 
     
     For the first part, we prove the following statement \textbf{(S1)}:
     \begin{center}
         If $\alpha(t)\leq \qty(1+\frac{(4d-1)(1-qs_+(t))}{4dqs_+(t)(Ts_+(t)-1)})\alpha^*(t)$, the next step $$\alpha(t+1)\leq \qty(1+\frac{(8d-1)(1-qs_+(t+1))}{8dqs_+(t+1)(Ts_+(t+1)-1)})\alpha^*(t+1).$$
     \end{center}
    If \textbf{(S1)} is true, then we know once $\alpha(t)\leq \qty(1+\frac{(4d-1)(1-qs_+(t))}{4dqs_+(t)(Ts_+(t)-1)})\alpha^*(t)$, $\alpha(t+1)$ satisfy the induction hypothesis. After proving \textbf{(S1)}, the only part left is when $$\alpha(t)\in \left( \qty(1+\frac{(4d-1)(1-qs_+(t))}{4dqs_+(t)(Ts_+(t)-1)})\alpha^*(t), \left( 1+\frac{(8d-1)(1-qs_+(t))}{8dqs_+(t)(Ts_+(t)-1)} \right) \alpha^*(t)\right].$$
     
     We prove statement \textbf{(S1)} by proving 
     \begin{equation}
         \qty(1+\frac{(8d-1)(1-qs_+(t+1))}{8dqs_+(t+1)(Ts_+(t+1)-1)})\alpha^*(t+1)\geq \qty(1+\frac{(4d-1)(1-qs_+(t))}{4dqs_+(t)(Ts_+(t)-1)})\alpha^*(t)\tag{\textbf{S2}}
     \end{equation}
     When the inequality \textbf{(S2)} above is proved, then \textbf{(S1)} is proved. This is because: if $\alpha(t)<\alpha^*(t)$, then by update rule we have $$\alpha^*(t)-\alpha(t+1)=\alpha^*(t)-\alpha(t)-\frac{\eta s_+(t)}{\alpha^*(t)}(\alpha^*(t)-\alpha(t)) = (1-\frac{\eta s_+(t)}{\alpha^*(t)})(\alpha^*(t)-\alpha(t)) > 0,$$
     \begin{align*}
         \alpha(t+1) < \alpha^*(t) &\leq \qty(1+\frac{(4d-1)(1-qs_+(t))}{4dqs_+(t)(Ts_+(t)-1)})\alpha^*(t) \\&\leq \qty(1+\frac{(8d-1)(1-qs_+(t+1))}{8dqs_+(t+1)(Ts_+(t+1)-1)})\alpha^*(t+1)
     \end{align*}
     If $\alpha(t)\geq \alpha^*(t),$ then $\alpha(t+1)\leq \alpha(t)\leq \qty(1+\frac{(4d-1)(1-qs_+(t))}{4dqs_+(t)(Ts_+(t)-1)})\alpha^*(t)$, and therefore smaller than $\qty(1+\frac{(8d-1)(1-qs_+(t+1))}{8dqs_+(t+1)(Ts_+(t+1)-1)})\alpha^*(t+1)$. 
     
     Now we prove \textbf{(S2)} by expanding the $s_+(t+1)$ using the update rule of $C(t)$. Denote $\Delta C(t):=C(t+1)-C(t).$ Since $\eta\leq \frac{1}{20d^2}$, $\Delta C(t)<\frac{1}{5}$. Then we have
     \begin{align*}
         s_+(t+1)&=\frac{1}{q+(T-q)e^{-C(t)-\Delta C(t)}}\leq \frac{1}{q+(T-q)e^{-C(t)}(1-\Delta C(t))}\\
         &=\frac{1}{q+(T-q)e^{-C(t)}-(T-q)e^{-C(t)}\Delta C(t)}\\
         &\leq \frac{1}{q+(T-q)e^{-C(t)}} + \frac{5}{4}\qty(\frac{1}{q+(T-q)e^{-C(t)}})^2(T-q)e^{-C(t)}\Delta C(t)
     \end{align*}
     The last inequality is due to $\Delta C(t)<\frac{1}{5}$. Then we have
     \begin{align*}
         s_+(t+1) &\leq \frac{1}{q+(T-q)e^{-C(t)}} + \frac{5}{4}\qty(\frac{1}{q+(T-q)e^{-C(t)}})^2(T-q)e^{-C(t)}\Delta C(t)\\
         &=s_+(t) + \frac{5}{4}s_+^2(t)\qty(\frac{1}{s_+}-q)\Delta C(t)\\
         &=s_+(t) +\frac{5}{4}s_+(1-qs_+)\Delta C(t)
     \end{align*}
     Then we consider the decrement of $\alpha^*(t+1)$ and $\frac{(1-qs_+(t+1))}{qs_+(t+1)(Ts_+(t+1)-1)}$.
     \begin{align*}
         \alpha^*(t+1)&=\frac{(T-q)s_+(t+1)}{Tqs_+(t+1)^2-2qs_+(t+1)+1}=\frac{(T-q)}{Tqs_+(t+1)-2q+1/s_+(t+1)}\\
         &\geq \frac{(T-q)}{Tqs_+(t)-2q+1/s_+(t) + \frac{5}{4}Tqs_+(t)(1-qs_+(t))\Delta C(t)}\\
         &\geq \frac{(T-q)}{Tqs_+(t)-2q+1/s_+(t)} - \frac{\frac{5}{4}(T-q)Tqs_+(t)(1-qs_+(t))\Delta C(t)}{(Tqs_+(t)-2q+1/s_+(t))^2}\\
         &=\alpha^*(t)-\frac{5(T-q)Tqs_+^3(t)(1-qs_+(t))\Delta C(t)}{4(Tqs_+^2(t)-2qs_+(t)+1)^2}\\
         &=\alpha^*(t)-\frac{5Tqs_+^2(t)(1-qs_+(t))\Delta C(t)}{4(Tqs_+^2(t)-2qs_+(t)+1)}\alpha^*(t)
     \end{align*}
     \begin{align*}
         &\frac{(1-qs_+(t+1))}{qs_+(t+1)(Ts_+(t+1)-1)}\\={}&\frac{1}{qs_+(t+1)(Ts_+(t+1)-1)} - \frac{1}{(Ts_+(t+1)-1)}\\
         \geq{}& \frac{1}{qs_+(t+1)(Ts_+(t+1)-1)} - \frac{1}{(Ts_+(t)-1)}\tag{$s_+(t+1)\geq s_+(t)$}\\
         ={}&\frac{1}{qT}\cdot \frac{1}{s_+(t+1)}\cdot\frac{1}{s_+(t+1)-\frac{1}{T}}- \frac{1}{(Ts_+(t)-1)}\\
         \geq{}& \frac{1}{qT}\qty(\frac{1}{s_+(t)}-\frac{\frac{5}{4}s_+(1-qs_+)\Delta C(t)}{s_+^2(t)})\qty(\frac{1}{s_+(t)-\frac{1}{T}}-\frac{\frac{5}{4}s_+(1-qs_+)\Delta C(t)}{(s_+-\frac{1}{T})^2})- \frac{1}{(Ts_+(t)-1)}\\
         \geq{}&\frac{(1-qs_+(t))}{qs_+(t)(Ts_+(t)-1)}-\frac{\frac{5}{4}s_+(1-qs_+)\Delta C(t)}{qs_+(t)^2(Ts_+-1)}-\frac{\frac{5}{4}s_+(1-qs_+)T\Delta C(t)}{qs_+(t)(Ts_+-1)^2}\\
         ={}&\frac{(1-qs_+(t))}{qs_+(t)(Ts_+(t)-1)}-\frac{\frac{5}{4}s_+(1-qs_+)\Delta C(t)(2Ts_+-1)}{qs_+(t)^2(Ts_+-1)^2}
     \end{align*}
     Then plug in the original term, we have the lower bound for 
     \begin{align*}
         &\qty(1+\frac{(8d-1)(1-qs_+(t+1))}{8dqs_+(t+1)(Ts_+(t+1)-1)})\alpha^*(t+1)\\
         \geq{}&\qty(1+\frac{(4d-1)(1-qs_+(t))}{4dqs_+(t)(Ts_+(t)-1)})\alpha^*(t)+\frac{(1-qs_+(t))}{8dqs_+(t)(Ts_+(t)-1)}\alpha^*(t)\\
         &-\frac{5s_+(1-qs_+)\Delta C(t)(2Ts_+-1)}{4qs_+(t)^2(Ts_+-1)^2}\alpha^*(t)\\
         &-\qty(1+\frac{(8d-1)(1-qs_+(t+1))}{8dqs_+(t+1)(Ts_+(t+1)-1)})\frac{5Tqs_+^2(t)(1-qs_+(t))\Delta C(t)}{4(Tqs_+^2(t)-2qs_+(t)+1)}\alpha^*(t)
     \end{align*}
     Since $\frac{(8d-1)(1-qs_+(t+1))}{8dqs_+(t+1)(Ts_+(t+1)-1)}\leq \frac{8d-1}{8d}\leq 1$ when $s_+\geq \frac{1}{\sqrt{Tq}}$, we have
    \begin{align*}
         &\qty(1+\frac{(8d-1)(1-qs_+(t+1))}{8dqs_+(t+1)(Ts_+(t+1)-1)})\alpha^*(t+1)\\
         \geq{}&\qty(1+\frac{(4d-1)(1-qs_+(t))}{4dqs_+(t)(Ts_+(t)-1)})\alpha^*(t)+\frac{(1-qs_+(t))}{8dqs_+(t)(Ts_+(t)-1)}\alpha^*(t)\\
         &-2\cdot \frac{5Tqs_+^2(t)(1-qs_+(t))\Delta C(t)}{4(Tqs_+^2(t)-2qs_+(t)+1)}\alpha^*(t)-\frac{\frac{5}{4}s_+(1-qs_+)\Delta C(t)(2Ts_+-1)}{qs_+(t)^2(Ts_+-1)^2}\alpha^*(t)\tag{\textbf{*}}
         \label{ineq: upper bound dynamics}
     \end{align*}
     Then we need to prove that (here $s_+:= s_+(t)$) to show \textbf{(S2)}.
     \begin{align*}
         \frac{(1-qs_+)}{8dqs_+(Ts_+-1)}&\geq \frac{5Tqs_+^2(1-qs_+)\Delta C(t)}{2(Tqs_+^2-2qs_++1)}+\frac{\frac{5}{4}s_+(1-qs_+)\Delta C(t)(2Ts_+-1)}{qs_+^2(Ts_+-1)^2}
     \end{align*}
     We have that the right hand side has the following upper bound ($T\geq 4q$):
     \begin{align*}
         &\frac{5Tqs_+^2(1-qs_+)\Delta C(t)}{2(Tqs_+^2-2qs_++1)}+\frac{\frac{5}{4}s_+(1-qs_+)\Delta C(t)(2Ts_+-1)}{qs_+^2(Ts_+-1)^2}\\
         \leq{}&\frac{5Tqs_+^2(1-qs_+)\Delta C(t)}{2qs_+(Ts_+-1)}+\frac{\frac{5}{4}s_+(1-qs_+)\Delta C(t)(2Ts_+-1)}{qs_+^2(Ts_+-1)^2}\tag{$s_+\leq 1/q.$}\\
         \leq{}&\frac{5Tqs_+^2(1-qs_+)\Delta C(t)}{2qs_+(Ts_+-1)}+\frac{\frac{15}{4}s_+(1-qs_+)\Delta C(t)}{qs_+^2(Ts_+-1)}\tag{\textbf{**}}
     \end{align*}
     Let $\alpha=(1+\gamma \frac{1-qs_+}{qs_+(Ts_+-1)})\alpha^*,$ then we have the upper bound for the update
     \begin{align*}
         \Delta C(t) &= \eta (1-\gamma)(1+\gamma \frac{1-qs_+}{qs_+(Ts_+-1)})\alpha^* \frac{d}{(T-1)(Tqs_+^2-2qs_++1)}s_+(1-qs_+)^2\\
         &\leq \eta(1-\gamma^2)\frac{d(T-q)s_+^2(1-qs_+)^2}{(T-1)(Tqs_+^2-2qs_++1)^2}\\
         &\leq \eta(1-\gamma^2)\frac{d(T-q)(1-qs_+)^2}{(T-1)(Tqs_+-2q+1/s_+)^2}\\
         &\leq \eta(1-\gamma^2)\frac{d(T-q)(1-\frac{\sqrt q}{\sqrt T})^2}{(T-1)(2\sqrt{Tq}-2q)^2}\leq \frac{\eta d}{4qT}.\tag{$s_+\geq 1/\sqrt{Tq}$.}
     \end{align*}
     Since $\eta\leq \frac{1}{20d^2}$, plug the upper bound for $\Delta C(t)$ back to the two terms in \textbf{(**)} respectively and we proved the inequality. And therefore, \textbf{(S2)} is proved, which also leads to \textbf{(S1)}. 

     Finally, we consider $\alpha(t)\in \qty[\qty(1+\frac{(4d-1)(1-qs_+(t))}{4dqs_+(t)(Ts_+(t)-1)})\alpha^*(t), \qty(1+\frac{(8d-1)(1-qs_+(t))}{8dqs_+(t)(Ts_+(t)-1)})\alpha^*(t)]$.
     Now, since by update rule,$$\alpha(t+1) = \alpha(t) + \eta s_+\qty(1-\frac{\alpha(t)}{\alpha^*(t)})\leq \qty(1+\frac{(8d-1)(1-qs_+(t))}{8dqs_+(t)(Ts_+(t)-1)})\alpha^*(t) - \frac{(4d-1)\eta(1-qs_+(t))}{4dq(Ts_+(t)-1)}$$
     We just need to prove that 
     \begin{align*}
         &\qty(1+\frac{(8d-1)(1-qs_+(t+1))}{8dqs_+(t+1)(Ts_+(t+1)-1)})\alpha^*(t+1)\\\geq{}& \qty(1+\frac{(8d-1)(1-qs_+(t))}{8dqs_+(t)(Ts_+(t)-1)})\alpha^*(t) - \frac{(4d-1)\eta(1-qs_+(t))}{4dq(Ts_+(t)-1)}
     \end{align*}
     Note \Cref{ineq: upper bound dynamics} gives the lower bound for the left hand side:
         \begin{align*}
         &\qty(1+\frac{(8d-1)(1-qs_+(t+1))}{8dqs_+(t+1)(Ts_+(t+1)-1)})\alpha^*(t+1)\\
         \geq{}&\qty(1+\frac{(8d-1)(1-qs_+(t))}{8dqs_+(t)(Ts_+(t)-1)})\alpha^*(t)- \frac{5Tqs_+^2(t)(1-qs_+(t))\Delta C(t)}{2(Tqs_+^2(t)-2qs_+(t)+1)}\alpha^*(t)\\
         &-\frac{\frac{5}{4}s_+(1-qs_+)\Delta C(t)(2Ts_+-1)}{qs_+(t)^2(Ts_+-1)^2}\alpha^*(t)\tag{\textbf{\#}}
         \label{ineq: upper bound @}
     \end{align*}
     Yet when $\alpha(t)\geq \qty(1+\frac{(4d-1)(1-qs_+(t))}{4dqs_+(t)(Ts_+(t)-1)})\alpha^*(t)$, we have a better upper bound for $\Delta C(t)$:
     \begin{align*}
         \Delta C(t) &= \eta (1-\gamma)(1+\gamma \frac{1-qs_+}{qs_+(Ts_+-1)})\alpha^* \frac{d}{(T-1)(Tqs_+^2-2qs_++1)}s_+(1-qs_+)^2\\
         &\leq \eta(1-\gamma^2)\frac{d(T-q)s_+^2(1-qs_+)^2}{(T-1)(Tqs_+^2-2qs_++1)^2}\\
         &\leq \eta\qty(1-\qty(\frac{4d-1}{4d})^2)\frac{d(T-q)s_+^2(1-qs_+)^2}{(T-1)(Tqs_+^2-2qs_++1)^2}\\
         &\leq \frac{\eta}{2}\frac{(T-q)s_+^2(1-qs_+)^2}{(T-1)(Tqs_+^2-2qs_++1)^2}
     \end{align*}
     Then we need to bound both terms in \Cref{ineq: upper bound @} (for simplicity denote $s_+$ as $s_+(t)$):
     \begin{align*}
         &\frac{5Tqs_+^2(t)(1-qs_+(t))\Delta C(t)}{2(Tqs_+^2(t)-2qs_+(t)+1)}\alpha^*(t)
         \leq\frac{5\eta Tqs_+^5(1-qs_+)^3(T-q)^2}{4(T-1)(Tqs_+^2-2qs_++1)^4}\\
         ={}&\frac{5\eta}{4}\cdot \frac{T(T-q)^2}{T-1}\cdot\frac{(1-qs_+)}{Tqs_+^2-2qs_++1}\cdot\frac{qs_+^5(1-qs_+)^2}{(Tqs_+^2-2qs_++1)^3}\\
         \leq{}&\frac{5\eta}{4}\cdot \frac{T(T-q)^2}{T-1}\cdot\frac{(1-qs_+)}{qs_+(Ts_+-1)}\cdot\frac{qs_+^5(1-qs_+)^2}{(Tqs_+^2-2qs_++1)^3}\tag{$s_+\leq \frac{1}{q}$}\\
         ={}&\frac{5\eta}{4}\cdot \frac{T(T-q)^2}{T-1}\cdot\frac{(1-qs_+)}{q(Ts_+-1)}\cdot\frac{q(1-qs_+)^2}{(Tqs_+^2-2qs_++1)(Tq-2q/s_++1/s_+^2)^2}\\
         \leq{}&\frac{5\eta}{4}\cdot \frac{T(T-q)^2}{T-1}\cdot\frac{(1-qs_+)}{q(Ts_+-1)}\cdot\frac{q}{2(Tq-q^2)^2}\tag{$s_+\geq \frac{1}{\sqrt{Tq}}.$}\\
         ={}&\frac{5\eta}{8q}\cdot \frac{T}{T-1}\cdot\frac{(1-qs_+)}{q(Ts_+-1)}\leq \frac{5\eta}{14}\frac{(1-qs_+)}{q(Ts_+-1)}\tag{$q\geq 2,T\geq 4q$}
     \end{align*}
     \begin{align*}
         &\frac{5s_+(1-qs_+)\Delta C(t)(2Ts_+-1)}{4qs_+(t)^2(Ts_+-1)^2}\alpha^*(t)\\
         \leq{}&\frac{5(1-qs_+)(2Ts_+-1)}{4qs_+(Ts_+-1)^2}\cdot\frac{\eta(T-q)^2s_+^3(1-qs_+)^2}{2(T-1)(Tqs_+^2-2qs_++1)^3}\tag{Plug in $\Delta C(t)$}\\
         ={}&\frac{5\eta(1-qs_+)(2Ts_+-1)(T-q)^2s_+^2(1-qs_+)^2}{8q(Ts_+-1)^2(T-1)(Tqs_+^2-2qs_++1)^3}\\
         \leq{}& \frac{5\eta(1-qs_+)}{8q(Ts_+-1)}\cdot \frac{2Ts_+-1}{Ts_+-1}\cdot \frac{(T-q)^2}{T-1}\cdot\frac{1}{Tq-2q/s_++1/s_+^2}\cdot\frac{(1-qs_+)^2}{(Tqs_+^2-2qs_++1)^2}\\
         \leq{}&\frac{5\eta(1-qs_+)}{8q(Ts_+-1)}\cdot 3\cdot \frac{(T-q)^2}{T-1}\frac{1}{Tq-q^2}\frac{1}{4}
         \leq{}\frac{15\eta(1-qs_+)}{32q^2(Ts_+-1)}\leq \frac{15\eta(1-qs_+)}{64q(Ts_+-1)}.
     \end{align*}     
     Therefore, by \Cref{ineq: upper bound @} we have
     \begin{align*}
         &\qty(1+\frac{(8d-1)(1-qs_+(t+1))}{8dqs_+(t+1)(Ts_+(t+1)-1)})\alpha^*(t+1)\\
         \geq{}&\qty(1+\frac{(8d-1)(1-qs_+(t))}{8dqs_+(t)(Ts_+(t)-1)})\alpha^*(t)
         - \frac{5Tqs_+^2(t)(1-qs_+(t))\Delta C(t)}{2(Tqs_+^2(t)-2qs_+(t)+1)}\alpha^*(t)\\&-\frac{\frac{5}{4}s_+(1-qs_+)\Delta C(t)(2Ts_+-1)}{qs_+(t)^2(Ts_+-1)^2}\alpha^*(t)\\
         \geq{}&\qty(1+\frac{(8d-1)(1-qs_+(t))}{8dqs_+(t)(Ts_+(t)-1)})\alpha^*(t)-\qty(\frac{5}{14}+\frac{15}{64})\frac{\eta(1-qs_+)}{q(Ts_+-1)}\\
         \geq{}&\qty(1+\frac{(8d-1)(1-qs_+(t))}{8dqs_+(t)(Ts_+(t)-1)})\alpha^*(t)-\frac{4d-1}{4d}\frac{\eta(1-qs_+)}{q(Ts_+-1)}\tag{$d\geq 1$.}\\
         \geq{}&\alpha(t+1).
     \end{align*}
     Therefore, we finish the induction for (\ref{IH: alpha stays near alpha star}). 

     With the induction hypothesis, we can analyze the upper bound of convergence time. For \textbf{Phase I}, we have the lower bound for $\Delta C(t)$ (note that $\alpha < 1$, $\gamma < 0$):
     \begin{align*}
         \Delta C(t) &\geq \eta (1-\gamma)(1-0.1\sqrt{\frac{q(T-q)\epsilon}{dT}}) \frac{d}{(T-1)(Tqs_+^2-2qs_++1)}s_+(1-qs_+)^2\\
         &\geq \frac{\eta ds_+(1-qs_+)^2}{2(T-1)(Tqs_+^2-2qs_++1)}
     \end{align*}
     And for \textbf{Phase II}, we have the lower bound since $\gamma\leq \frac{8d-1}{8d}$:
     \begin{align*}
         \Delta C(t) &= \eta (1-\gamma)(1+\gamma \frac{1-qs_+}{qs_+(Ts_+-1)})\alpha^* \frac{d}{(T-1)(Tqs_+^2-2qs_++1)}s_+(1-qs_+)^2\\
         &\geq \frac{\eta}{8}\frac{(T-q)s_+^2(1-qs_+)^2}{(T-1)(Tqs_+^2-2qs_++1)^2}\geq \frac{\eta}{8}\frac{(1-qs_+)^2}{q(T-1)(Tqs_+^2-2qs_++1)}
     \end{align*}
    Then $\Delta C(t)\geq \frac{\eta}{8}\frac{(1-qs_+)^2}{q(T-1)(Tqs_+^2-2qs_++1)}$ for all $t$. Then we divide the training trajectory into two stages as in \citet{huang2023context}: in the first stage, $s_+$ grows to $1/2q$. In the second one, $C(t)$ grows large enough s.t. $s_+\geq \frac{1}{q}-\frac{1}{q}\sqrt{\frac{q(T-q)\epsilon}{dT}}$.

    For the first stage, it's necessary that $C(t)\geq \log\frac{T-q}{q}$. While since $s_+<\frac{1}{2q}$, we have $$\Delta C(t)\geq \frac{\eta \cdot \frac{1}{4}}{8q(T-1)(Tqs_+^2-2qs_++1)}\geq \frac{\eta }{8T^2}$$ It takes at most $O(\frac{T^2\log T}{\eta})$ iteration for $C(t)$ to reach this value.

    For the second stage, we need $C(t)\geq \log\qty(\frac{1}{q}\sqrt{\frac{dT}{q(T-q)\epsilon}})$. Since $s_+\leq \frac{1}{q}-\frac{1}{q}\sqrt{\frac{q(T-q)\epsilon}{dT}}$ during this period, we can lower bound the increment:$$\Delta C(t)\geq\frac{\eta}{8}\frac{(1-qs_+)^2}{q(T-1)(Tqs_+^2-2qs_++1)}\geq \frac{\eta q\epsilon}{8d(T-1)T}.$$
    So it takes at most $O(\frac{T^2d\log(\frac{dT}{\epsilon})}{\epsilon\eta})$ iterations.

    Finally, we check that if $s_+\geq\frac{1}{q}-\frac{1}{q}\sqrt{\frac{q(T-q)\epsilon}{dT}}$, then the loss is smaller than $\epsilon$. 
    \begin{align*}
        \mathcal{L}(\bm{\theta}(t)) &=\frac{1}{2}\E\qty[\left\|\frac{1}{q}\X\Y-\V(t)\Z\S_y\right\|^2]\\
        &=\frac{1}{2}\E\qty[\left(\frac{1}{q}\X\Y-\alpha(t)\X\S_y\right)^\top \left(\frac{1}{q}\X\Y-\alpha(t)\X\S_y\right)]\\
        &=\frac{1}{2}\E\qty[d\left\|\frac{1}{q}\Y-\alpha(t)\S_y\right\|^2]\\
        &=\frac{d}{2(T-q)}\qty((T-q)q\qty(\alpha(t)s_+ - \frac{1}{q})^2 + {\alpha(t)^2(1-qs_+)^2})\\
    \end{align*}
    While $\alpha(t)\in\qty[1-0.1\sqrt{\frac{q(T-q)\epsilon}{dT}}, \qty(1+\frac{(8d-1)(1-qs_+)}{8dqs_+(Ts_+-1)})\alpha^*(t)]:=[\alpha_1,\alpha_2]$, the loss value is upper bounded by $$\max_{j\in\{1,2\}}\frac{d}{2(T-q)}\qty((T-q)q\qty(\alpha_j s_+ - \frac{1}{q})^2 + {\alpha_j^2(1-qs_+)^2})$$

    For $\alpha_1 = 1-0.1\sqrt{\frac{q(T-q)\epsilon}{dT}}$, we have 
    \begin{align*}
        \mathcal{L}(\bm{\theta}(t))={}&\frac{d}{2(T-q)}\qty((T-q)q\qty(\alpha(t)s_+ - \frac{1}{q})^2 + {\alpha(t)^2(1-qs_+)^2})\\
        \leq{}&\frac{d}{2(T-q)}\qty(\frac{T-q}{q}\qty(q\alpha(t)s_+ - 1)^2 + {(1-qs_+)^2})\\
        \leq{}&\frac{d}{2(T-q)}\qty(\frac{T-q}{q}\qty(qs_+ - 1+0.1qs_+\sqrt{\frac{q(T-q)\epsilon}{dT}})^2 + {(1-qs_+)^2})\\
        \leq{}&\frac{d}{2(T-q)}\qty(\frac{T-q}{q}\cdot \qty((1-qs_+)^2 +0.2\sqrt{\frac{q(T-q)\epsilon}{dT}}(1-qs_+)+0.01\cdot\frac{q(T-q)\epsilon}{dT})+ (1-qs_+)^2)\\
        \leq{}&\frac{d}{2(T-q)}\qty(\frac{T-q}{q}\cdot \qty((1-qs_+)^2 +0.2\sqrt{\frac{q(T-q)\epsilon}{dT}}\cdot\sqrt{\frac{q(T-q)\epsilon}{dT}}+0.01\cdot\frac{q(T-q)\epsilon}{dT}))\\&+ \frac{d}{2(T-q)}(1-qs_+)^2\\
        \leq{}&\frac{d}{2(T-q)}\frac{T}{q} (1-qs_+)^2 + 0.21\epsilon\leq \epsilon.
    \end{align*}
    For $\alpha_2 = \qty(1+\frac{(8d-1)(1-qs_+)}{8dqs_+(Ts_+-1)})\alpha^*(t)$, denote $\Delta \alpha=\alpha_2-\alpha^*$. Then $\Delta \alpha s_+\leq \frac{2\sqrt{\frac{q(T-q)\epsilon}{dT}}}{T}\alpha^*$. Also, for $\alpha^*$ we have a upper bound (using $\epsilon\leq \frac{dT}{100(T-q)q})$):
    \begin{align*}
        \alpha^*=\frac{T-q}{Tqs_+-2q+\frac{1}{s_+}}\leq \frac{1}{qs_+}\leq \frac{1}{1-\sqrt{\frac{q(T-q)\epsilon}{dT}}}\leq 1+\frac{6}{5}\sqrt{\frac{q(T-q)\epsilon}{dT}}\leq \frac{28}{25}.
    \end{align*}
    Then the loss can be upper bounded:
    \begin{align*}
        \mathcal{L}(\bm{\theta}(t))={}&\frac{d}{2(T-q)}\qty((T-q)q\qty(\alpha_2s_+ - \frac{1}{q})^2 + {\alpha_2^2(1-qs_+)^2})\\
        ={}&\frac{d}{2(T-q)}\qty((T-q)q\qty(\alpha^* s_+ - \frac{1}{q})^2 + {\alpha^*}^2(1-qs_+)^2)\\
        +&\frac{d}{2(T-q)}\qty(-(T-q)\frac{4\sqrt{\frac{q(T-q)\epsilon}{dT}}}{T}\alpha^*(1-qs_+\alpha^*) + \frac{4q^2(T-q)^2\epsilon}{dT^3}{\alpha^*}^2)\\
        +&\frac{d}{2(T-q)} \qty(\frac{4\sqrt{\frac{q(T-q)\epsilon}{dT}}}{T}{\alpha^*}^2(1-qs_+)^2+\frac{4q(T-q)\epsilon}{dT^3}{\alpha^*}^2(1-qs_+)^2)\\
        ={}&\frac{d}{2(T-q)}\qty((T-q)q\qty(\alpha^* s_+ - \frac{1}{q})^2 + {\alpha^*}^2(1-qs_+)^2)\\
        +&\frac{d}{2(T-q)}\qty(-(T-q)\frac{4\sqrt{\frac{q(T-q)\epsilon}{dT}}}{T(Tqs_+^2-2qs_++1)}\alpha^*(1-qs_+)^2 + \frac{4q^2(T-q)^2\epsilon}{dT^3}{\alpha^*}^2)\\
        +&\frac{d}{2(T-q)} \qty(\frac{4\sqrt{\frac{q(T-q)\epsilon}{dT}}}{T}{\alpha^*}^2(1-qs_+)^2+\frac{4q(T-q)\epsilon}{dT^3}{\alpha^*}^2(1-qs_+)^2)\\
        ={}&\frac{dT}{2q(T-q)}(qs_+(t)-1)^2
        +\frac{d}{2(T-q)}\qty(-\frac{4\sqrt{\frac{q(T-q)\epsilon}{dT}}}{Ts_+}{\alpha^*}^2(1-qs_+)^2 + \frac{4q^2(T-q)^2\epsilon}{dT^3}{\alpha^*}^2)\\
        +&\frac{d}{2(T-q)} \qty(\frac{4\sqrt{\frac{q(T-q)\epsilon}{dT}}}{T}{\alpha^*}^2(1-qs_+)^2+\frac{4q(T-q)\epsilon}{dT^3}{\alpha^*}^2(1-qs_+)^2)\\
        \leq{} &\frac{\epsilon}{2} + \frac{d}{2(T-q)}\qty(\frac{4q^2(T-q)^2\epsilon}{dT^3}{\alpha^*}^2+\frac{4q(T-q)\epsilon}{dT^3}{\alpha^*}^2(1-qs_+)^2)\leq \epsilon.
    \end{align*}
    when $T\geq 4q$. 

    In conclusion, after \textbf{Phase I} (at most $O(\frac{T\log \frac{d}{\epsilon}}{\eta})$ iterations) and \textbf{Phase II} (takes at most $O(\frac{T^2\log T}{\eta}+\frac{T^2d\log(\frac{dT}{\epsilon})}{\eta \epsilon})$ iterations), the population loss $\mathcal{L}(\bm{\theta}(t))\leq \epsilon$
    after time $\tilde{O}(\frac{T^2 d}{\eta\epsilon})$.
\end{proof}

\section{Proof details in \Cref{sec: stochastic positional encoding}}
\label{appen sec: proof detail for stochastic pe}
Though \citet{sanford2023representational} demonstrated the success of the representational power of one-layer transformers on the original $q$SA task, it does not guarantee gradient descent can converge to the constructed solution for $\W$. For our simplified task $\qsa$, the problem remains. Experiments (see \Cref{sec: experiments}) show that if the positional encoding is fixed during training, even though the trained model can express $\qsa$ when it has the original positional encoding, the performance can be drastically bad after switching to another set of valid positional encoding. This motivates us to consider the population loss with the resampling of positional encoding. 

To be specific, we further consider resampling from all possible, yet valid positional encodings to get a transformer with a stochastic positional encoding module. In the construction of \ref{appen sec: A.3 tf expressivity}, as long as the positional encoding matrix $\bE$ satisfy the $(q,\delta)$-restricted isometry property (RIP) and subset encoding $\e_y=\bE_y(\bE_y^\top \bE_y)^{-1}\mathbf{1}_q$ is used, $\W=\alpha \bI_d$ with infinite large $\alpha$ can approximate $\qsa$. Therefore, we condition on that the random Rademacher matrix has the $(q,\delta)$-restricted isometry and orthogonality property for some $\delta$, and take the expectation of the model output as a transformer with a stochastic positional encoding. After adding the randomized architecture, the model becomes:
\begin{definition}[Transformer with Stochastic Positional Encoding] Define a reparameterized 1-layer self-attention layer with stochastic positional encoding as the following model with trainable parameter matrix $\V, \W$ where $\W\in\R^{d_e\times d_e},\V\in\R^{d\times d}$:
\begin{align*}
    f^{(s)}_{\bm{\theta}} (\X,y)&=\E_{\bE}\qty[\V\Z\sm(\Z^\top \W \zq)\Big|\bE\text{ satisfies $(q,\delta)$-RIP}]
\end{align*}
\end{definition}
Recall the definition of $\Z$ and $\zq$:
\begin{equation}
    \Z := \begin{bmatrix}
        \X\\
        \bE
    \end{bmatrix}  =\begin{bmatrix}
        \x_1 &\x_2 &\cdots &\x_{T-1}&\x_T\\
        \e_{1}& \e_{2}& \cdots&\e_{T-1}&\e_{T}
    \end{bmatrix} \in \mathbb{R}^{(d+T) \times T}, \z_{\text{query}} = \begin{bmatrix}
        \x_{\text{query}}\\\e_{y}
    \end{bmatrix}
\end{equation}

For simplicity, we denote the conditional expectation of a random variable $\bm{x}$ as:
$$    \E_{\bE}^{(R)}\qty[\bm{x}] = \E_{\bE}\qty[\bm{x}\Big|\bE\text{ satisfies $(q,\delta)$-RIP}]
$$
The training objective becomes:
\begin{equation}
\label{appen eqn: training objective for stochastic pe}
\mathcal{L}(\bm{\theta}) = \frac{1}{2}\E_{\X, y}\qty[\|\qsa(\X,y)-f^{(s)}_{\bm{\theta}} (\X,y)\|_2^2]. 
\end{equation}

\subsection{Notations} 
\label{appen subsec: notations} 
In this section, we introduce the notations used in the proof of \Cref{thm: joint training stochastic pe} for simplicity. One can refer to this section once some undefined notations are found. 

We apply the same notation $\S_y$ for simplicity: denote the attention score $$\S_y^{(t)}:=\sm(\Z^\top \W(t) \zq)\in \R^{T}$$ for certain $q$-sparse set $y\in\binom{[T]}{q}$. We denote the $i$-th entry for the attention score as
$$\S_y^{(t)}(i):=\sm(\Z^\top \W(t) \zq)_i\in \R$$

For clarity, we ignore the timestamp and overload the notation as $\S_y$ for all time $t$ during gradient calculation if all variables that occur in the expression are at time $t$. For GD updates, we will specify the difference between $\S_y^{(t)}$ and $\S_y^{(t+1)}$. 

For $\W(t)=C(t)\randomWdirection$, we know before softmax layer the pre-attention score $C(t)$ is the same for any $i,j\in y$ since $\e_i^\top \e_y = \e_j^\top \e_y = 1.$ That means $\S_y(i)=\S_y(j)$. During the proof, we denote $s_+ = \S_y(i)$ with $i\in y$ when $\W(t)=C(t)\randomWdirection$ for some scalar (which is always true in the proof of the two theorems).

After introducing the stochastic architecture, we have $\E_{\bE}^{(R)}[\S_y]$ in our dynamics. For simplicity, we define $\bar{\S_y} = \E_{\bE}^{(R)}[\S_y]$. By \Cref{Lemma: Conditional expectation of the softmax vector}, we know for $\W(t)=C(t)\randomWdirection$ (which is always true in the two theorems), there exists $\bar{s_+}(t), \bar{s_-}(t)$ s.t.
    $$\bar{\S_y} = \bar{s_+}\Y + \bar{s_-}(\mathbf{1}_T-\Y).$$
And here $\bar{s_+}:= \E_{\bE}^{(R)}[\S_y(i)], i\in y, \bar{s_-}:= \E_{\bE}^{(R)}[\S_y(i)], i\not\in y.$

In the following subsections, we will separate the gradient matrices into blocks. For matrices in the same shape of $\W\in \R^{(d+d_e)\times (d+d_e)}$, we denote $\W_{:,\leq d}\in \R^{(d+d_e)\times d}$ as the submatrix formed by the first $d$ columns. Then, we define $\W_{:,>d}$ as the submatrix formed by the last $d_e$ columns. Similarly, we denote $\W_{\leq d, \leq d}, \W_{> d, \leq d}, \W_{\leq d, > d}, \W_{> d, >d}$ as the four different block submatrices, respectively. Similar notations also apply in the case of $\V$.
\subsection{GD dynamics for stochastic PE}
Before introducing the main theorem in this section, we first provide the expression of the gradient dynamics. Recall that the input matrix is in the following form \begin{equation}
    [\Z,\z_{\text{query}}] := \begin{bmatrix}
        \x_1 &\x_2 &\cdots &\x_{T-1}&\x_T&\x_{\text{query}}\\
        \e_{1}& \e_{2}& \cdots&\e_{T-1}&\e_{T}&\e_{y}
    \end{bmatrix} \in \mathbb{R}^{(d+d_e) \times (T+1)}.
\end{equation}
where we separate the input tokens $\Z$ and the query token $\z_{\text{query}}$:
\begin{equation}
    \Z := \begin{bmatrix}
        \X\\
        \bE
    \end{bmatrix}  =\begin{bmatrix}
        \x_1 &\x_2 &\cdots &\x_{T-1}&\x_T\\
        \e_{1}& \e_{2}& \cdots&\e_{T-1}&\e_{T}
    \end{bmatrix} \in \mathbb{R}^{(d+d_e) \times T}, \z_{\text{query}} = \begin{bmatrix}
        \x_{\text{query}}\\\e_{y}
    \end{bmatrix}
\end{equation}

With the stochastic positional encoding introduced, we have the following lemma that shows the dynamics of $\W,\V$: 

\begin{lemma}\label{Lemma: basic dynamics for 1-layer tf with stochastic pe, qsa}
    Denote $\S_y:=\sm(\Z^\top \W \zq)\in \R^{T}$ for certain $q$-sparse set $y$. Also, we define the $q$-hot vector $\Y=(\mathds{1}\{1\in y\}, \mathds{1}\{2\in y\}, ..., \mathds{1}\{T\in y\})$ for the subset $y\in\binom{[T]}{q}$. The gradient dynamics of $\W$ with input $\X$ is:
\begin{align*}
    \nabla_{\W}\mathcal{L}&= -\E_{\X,y}\E_{\bE}^{(R)}\qty(\Z(\diag(\S_y)-\S_y\S_y^\top)\Z^\top \V^\top (\frac{1}{q}\X\Y-\V \Z\E_{\bE}^{(R)}[\S_y])\zq^\top)\\
    \nabla_{\V}\mathcal{L}&= -\E_{\X,y}\qty( \qty(\frac{1}{q}\X\Y-\V \Z\E_{\bE}^{(R)}[\S_y])(\Z\E_{\bE}^{(R)}[\S_y])^\top)
\end{align*}
\end{lemma}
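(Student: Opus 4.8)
The approach is to mirror the matrix-differential computation in the proof of \Cref{Lemma: basic dynamics for 1-layer tf, qsa}; the only new ingredient is the bookkeeping of the conditional expectation $\E_{\bE}^{(R)}[\cdot]$ coming from the stochastic positional encoding. Throughout, for conformable matrices write $\langle A,B\rangle=\operatorname{tr}(A^\top B)$, recall the softmax differential $\mathrm d\,\sm(\v)=(\diag(\v)-\v\v^\top)\,\mathrm d\v$, and note $\qsa(\X,y)=\tfrac1q\X\Y$ where $\Y$ is the $q$-hot indicator of $y$.

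First I would observe that, by \Cref{Lemma: whp random matrices are RIP}, the conditioning event $\{\bE\text{ satisfies }(q,\delta)\text{-RIP}\}$ has positive probability, so $\E_{\bE}^{(R)}[\cdot]$ is well defined, and since $\sm$ is smooth with bounded derivatives, dominated convergence lets me interchange $\E_{\bE}^{(R)}[\cdot]$ with differentiation in $\W,\V$. Neither $\qsa(\X,y)$ nor the increments $\mathrm d\W,\mathrm d\V$ depend on $\bE$, so
\begin{align*}
\mathrm d\mathcal L
&=-\,\E_{\X,y}\Big[\Big(\tfrac1q\X\Y-f^{(s)}_{\bm\theta}(\X,y)\Big)^{\!\top}\mathrm d f^{(s)}_{\bm\theta}(\X,y)\Big]\\
&=-\,\E_{\X,y}\,\E_{\bE}^{(R)}\Big[\,r^\top\Big(\mathrm d\V\,\Z\S_y+\V\Z\big(\diag(\S_y)-\S_y\S_y^\top\big)\Z^\top\,\mathrm d\W\,\zq\Big)\Big],
\end{align*}
where $r:=\tfrac1q\X\Y-\V\Z\,\E_{\bE}^{(R)}[\S_y]$ is the residual of the conditional-mean predictor $f^{(s)}_{\bm\theta}$. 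Collecting the $\mathrm d\W$ and $\mathrm d\V$ terms with the identities $r^\top\V\Z(\diag(\S_y)-\S_y\S_y^\top)\Z^\top\,\mathrm d\W\,\zq=\langle\Z(\diag(\S_y)-\S_y\S_y^\top)\Z^\top\V^\top r\,\zq^\top,\ \mathrm d\W\rangle$ and $r^\top\mathrm d\V\,\Z\S_y=\langle r(\Z\S_y)^\top,\ \mathrm d\V\rangle$, then pushing $\E_{\bE}^{(R)}$ into each inner product, reads off the two stated gradients: the $\W$-gradient keeps the outer $\E_{\bE}^{(R)}$ (the factor $\diag(\S_y)-\S_y\S_y^\top$ is quadratic in the random softmax and cannot be pulled out), while the $\V$-gradient is linear in $\S_y$, so the outer expectation leaves only $\E_{\bE}^{(R)}[\S_y]$.

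The point requiring care is precisely this asymmetry and why it produces the stated formulas: it is the chain rule $\nabla\tfrac12\|a-\E_{\bE}b\|^2=-(a-\E_{\bE}b)^\top\E_{\bE}[\nabla b]$, with $a-\E_{\bE}b=r$ and $\E_{\bE}[\nabla b]$ the averaged Jacobian, so the single averaging inside $r$ is inherited from $f^{(s)}_{\bm\theta}$ being itself a conditional mean, whereas the Jacobian of the softmax is averaged separately and nonlinearly. Matching this to the displayed matrix expressions on the active blocks (value matrix reading the token block, attention matrix acting on the positional block, as in the reparameterization) and checking the dominated-convergence hypotheses is then mechanical; I expect no substantive obstacle beyond this bookkeeping.
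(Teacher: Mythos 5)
Your argument is essentially the same as the paper's: take the matrix differential of the squared loss, apply $\mathrm d\,\sm(\v)=(\diag(\v)-\v\v^\top)\,\mathrm d\v$, interchange $\E_{\bE}^{(R)}$ with differentiation, and read off the two gradients from the $\langle\cdot,\mathrm d\W\rangle$ and $\langle\cdot,\mathrm d\V\rangle$ coefficients; your remark about the asymmetry (only $\E_{\bE}^{(R)}[\S_y]$ survives in the $\V$-gradient because the $\V$-direction derivative is linear in the softmax, while the $\W$-direction derivative is quadratic) is a clean restatement of what the paper computes. One small caveat worth flagging, though it is inherited from the lemma statement itself rather than introduced by you: since $\Z=[\X^\top\ \bE^\top]^\top$ also depends on $\bE$, the quantity that actually factors out of the conditional expectation is $\E_{\bE}^{(R)}[\Z\S_y]$ rather than $\Z\,\E_{\bE}^{(R)}[\S_y]$; the two agree on the GD trajectory (where $\V=\alpha\,[\bI_d\ \0]$ kills the $\bE$-block of $\V\Z$), and the paper conflates them with the same informal notation, so your proof correctly reproduces the intended statement.
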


\begin{proof}
    The loss function is $$\mathcal{L}(\bm{\theta}) = \frac{1}{2}\E_{\X, y}\qty[\|\qsa(\X,y)-f^{(s)}_{\bm{\theta}} (\X,y)\|_2^2].$$

    Take matrix differentials and we have
    \begin{align*}
        \mathrm{d}\mathcal{L} = &\E_{\X,y} \qty[(f^{(s)}_{\bm{\theta}} (\X,y)-\qsa(\X,y))^\top \V\Z\mathrm{d}(\E^{(R)}_{\bE}\qty[\sm(\Z^\top \W \zq)])]\\ + &\E_{\X,y}\qty[ (f^{(s)}_{\bm{\theta}} (\X,y)-\qsa(\X,y))^\top\mathrm{d}\V \E_{\bE}^{(R)}(\Z\S_y)]
    \end{align*}

    To the softmax function, we have $\mathrm{d}\sm(\v)=(\diag(\v)-\v\v^\top)\mathrm{d}\v$. Therefore we have
    \begin{align*}
        \mathrm{d}\mathcal{L}&=\E_{\X,y} \qty[(f^{(s)}_{\bm{\theta}} (\X,y)-\qsa(\X,y))^\top \V\Z\mathrm{d}(\E^{(R)}_{\bE}\qty[\sm(\Z^\top \W \zq)])]\\  &\quad+\E_{\X,y}\qty[ (f^{(s)}_{\bm{\theta}} (\X,y)-\qsa(\X,y))^\top\mathrm{d}\V \E_{\bE}^{(R)}(\Z\S_y)]\\
        &=-\E_{\X,y}\qty(\frac{1}{q}\X\Y-\V\Z\E_{\bE}^{(R)}\qty[\S_y])^\top \V\Z\E_{\bE}^{(R)}\mathrm{d}(\sm(\Z^\top \W \zq))\\
         &\quad - \E_{\X,y}\qty[ \qty(\frac{1}{q}\X\Y-\V\Z\E_{\bE}^{(R)}[\S_y])^\top\mathrm{d}\V \E_{\bE}^{(R)}(\Z\S_y)]\\
        &=-\E_{\X,y}\qty(\frac{1}{q}\X\Y-\V\Z\E_{\bE}^{(R)}[\S_y])^\top \V\Z\E_{\bE}^{(R)}\qty[(\diag(\S_y)-\S_y\S_y^\top)\Z^\top \mathrm d \W \zq]\\
        &\quad -\E_{\X,y}\qty[ \qty(\frac{1}{q}\X\Y-\V\Z\E_{\bE}^{(R)}[\S_y])^\top\mathrm{d}\V (\Z\E_{\bE}^{(R)}[\S_y])]
    \end{align*}
    Since $\frac{\mathrm d \W}{\mathrm d t} = -\frac{\partial \mathcal{L}}{\partial \W}$, we have
    $$\nabla_{\W}\mathcal{L}= -\E_{\X,y}\E_{\bE}^{(R)}\qty(\Z(\diag(\S_y)-\S_y\S_y^\top)\Z^\top \V^\top (\frac{1}{q}\X\Y-\V \X\E_{\bE}^{(R)}[\S_y])\zq^\top)$$
    $$\nabla_{\V}\mathcal{L}= -\E_{\X,y}\qty( \qty(\frac{1}{q}\X\Y-\V \Z\E_{\bE}^{(R)}[\S_y])(\Z\E_{\bE}^{(R)}[\S_y])^\top)$$
\end{proof}

Along the gradient trajectory, if the value matrix $\V$ can be aligned with the ground-truth $\randomVdirection$ and $\W=C\cdot \randomWdirection$, we can have the following nice form for the loss function.
\begin{lemma}\label{Lemma: stochastic pe loss expression}
    Denote $\S_y:=\sm(\Z^\top \W \zq)\in \R^{T}$ for certain $q$-sparse set $y$. Also, we define the $q$-hot vector $\Y=(\mathds{1}\{1\in y\}, \mathds{1}\{2\in y\}, ..., \mathds{1}\{T \in y\})^\top$ for the subset $y\in\binom{[T]}{q}$. If $\V(t)=\alpha(t) \randomVdirection$ and $\W(t)=C(t)\randomWdirection$, the loss function can be represented as the following form:
    $$\mathcal{L}(\bm{\theta}(t))  = \frac{d}{2}\E_y\qty[\left\|\frac{1}{q}\Y-\alpha(t)\E_{\bE}^{(R)}[\S_y]\right\|^2]$$
\end{lemma}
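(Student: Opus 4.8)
The plan is to follow the proof of \Cref{Lemma: population loss expression onehot} almost verbatim, the only new wrinkle being that the stochastic architecture already carries the conditional expectation $\E_{\bE}^{(R)}$ inside $f^{(s)}_{\bm{\theta}}$. First I would substitute the assumed forms of $\V(t)$ and $\W(t)$ into the model. Since $\V(t)=\alpha(t)\randomVdirection$, we have $\V(t)\Z=\alpha(t)\X$, so the random positional encoding enters $f^{(s)}_{\bm{\theta}}$ only through $\S_y$. Moreover, because $\W(t)=C(t)\randomWdirection$ annihilates the token block and keeps only the positional block, $\Z^\top\W(t)\zq = C(t)\bE^\top\e_y$, which depends on $(\bE,y)$ but is independent of $\X$. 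Hence $\S_y=\sm(C(t)\bE^\top\e_y)$ is a function of $(\bE,y)$ alone, and since $\X$ is independent of $\bE$ we may pull $\X$ out of the conditional expectation:
$$f^{(s)}_{\bm{\theta}}(\X,y) = \E_{\bE}^{(R)}\qty[\alpha(t)\X\S_y] = \alpha(t)\,\X\,\E_{\bE}^{(R)}[\S_y].$$

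Next I would plug this, together with $\qsa(\X,y)=\tfrac{1}{q}\X\Y$, into the loss \Cref{appen eqn: training objective for stochastic pe}. Writing $\v_y := \tfrac{1}{q}\Y-\alpha(t)\E_{\bE}^{(R)}[\S_y]\in\R^T$, which is a deterministic vector once $y$ is fixed, the integrand becomes $\|\X\v_y\|_2^2 = \v_y^\top\X^\top\X\,\v_y$. Conditioning on $y$ and taking the expectation over $\X$ first, the columns $\x_1,\dots,\x_T$ are i.i.d.\ $\N(0,\bI_d)$, so $\E_{\X}[\X^\top\X]=d\bI_T$; thus $\E_{\X}[\v_y^\top\X^\top\X\,\v_y]=d\|\v_y\|_2^2$. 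Taking the outer expectation over $y$ and restoring the factor $\tfrac12$ yields the claimed identity $\mathcal{L}(\bm{\theta}(t))=\tfrac{d}{2}\E_y[\|\tfrac1q\Y-\alpha(t)\E_{\bE}^{(R)}[\S_y]\|^2]$.

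There is no genuine obstacle here; the single point that warrants care is the interchange of $\E_{\bE}^{(R)}$ with the ($\bE$-deterministic) factor $\X$ — this is legitimate because the conditioning event ``$\bE$ satisfies $(q,\delta)$-RIP'' and the draw of $\X$ are independent, so the conditional expectation acts linearly on the product $\X\S_y$ — together with the observation that, after this step, the residual $\v_y$ is $\bE$-free, so the remaining Gaussian expectation reduces exactly as in the one-hot case.
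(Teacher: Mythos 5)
Your proposal is correct and follows essentially the same route as the paper's proof: substitute the assumed forms of $\V(t)$ and $\W(t)$, observe that $\V(t)\Z=\alpha(t)\X$ and that $\S_y=\sm(C(t)\bE^\top\e_y)$ is independent of $\X$, pull $\X$ out of the conditional expectation, and then integrate out $\X$ using $\E[\X^\top\X]=d\bI_T$. You spell out the interchange of $\E_{\bE}^{(R)}$ with the $\bE$-free factor $\X$ a bit more explicitly than the paper does, but the argument is the same.
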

\begin{proof}
    We have the loss function
    $$\mathcal{L}(\bm{\theta}) = \frac{1}{2}\E_{\X}\qty[\|\qsa(\X,y)-f^{(s)}_{\bm{\theta}} (\X,y)\|_2^2]$$
    and the architecture $f_{\bm{\theta}}=\E_{\bE}^{(R)}[\V\Z\sm(\Z^\top \W \zq)]$.
    
    Then we have
    \begin{align*}
        \mathcal{L}(\bm{\theta}(t)) &=\frac{1}{2}\E_{\X,y}\qty[\left\|\frac{1}{q}\X\Y-\V(t)\Z\E_{\bE}^{(R)}[\S_y]\right\|^2]\\
        &=\frac{1}{2}\E_{\X,y}\qty[\left(\frac{1}{q}\X\Y-\alpha(t)\X\E_{\bE}^{(R)}[\S_y]\right)^\top \left(\frac{1}{q}\X\Y-\alpha(t)\X\E_{\bE}^{(R)}[\S_y]\right)]\\
        &=\frac{d}{2}\E_{y}\qty[\left\|\frac{1}{q}\Y-\alpha(t)\E_{\bE}^{(R)}[\S_y]\right\|^2]
    \end{align*}
    The last identity is due to that the expectation of the covariance matrix $\X^\top\X$ is $d\bI_T$, and $\S_y$ is independent of $\X$. The independence is because $\S_y=\sm(\Z^\top\W\zq) = \sm(C(t)\bE^\top \e_y)$ according to the condition that $\W(t)=C(t)\randomWdirection$.
\end{proof}

Then, we will prove that during joint training dynamics, $\V$ can always evolve along the direction of $\randomVdirection$. 
\begin{lemma}
    \label{Lemma: joint training V dynamics}
    If $\V(t) = \alpha(t)\bI_d$, $\W(t)=C(t)\randomWdirection$ for some scalar $\alpha(t), C(t)$, then the gradient for $\V(t)$ is 
    $$\nabla_{\V}\mathcal{L}(t) = \qty(- \E_y\qty[\sum_{i\in y} \E_{\bE}^{(R)}[\S_y(i)] /q] + \alpha\E_y\qty[\left\|\E_{\bE}^{(R)}[\S_y]\right\|^2]) \randomVdirection$$
\end{lemma}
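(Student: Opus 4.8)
The plan is to specialize the gradient identity
$$\nabla_{\V}\mathcal{L}= -\E_{\X,y}\qty( \qty(\tfrac{1}{q}\X\Y-\V \Z\E_{\bE}^{(R)}[\S_y])\qty(\Z\E_{\bE}^{(R)}[\S_y])^\top)$$
established in the preceding lemma to the case $\V(t)=\alpha(t)\randomVdirection$, $\W(t)=C(t)\randomWdirection$. The key preliminary observation is that when $\W(t)$ has this block-diagonal form, the pre-softmax logits are $\Z^\top\W(t)\zq = C(t)\bE^\top\e_y$, so $\S_y=\sm(C(t)\bE^\top\e_y)$ depends only on $(\bE,y)$ and is independent of the token matrix $\X$; hence $\bar{\S}_y:=\E_{\bE}^{(R)}[\S_y]$ is a deterministic vector depending only on $y$. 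Since $\V(t)=\alpha(t)\randomVdirection$ selects exactly the token block of its argument, $\V\Z=\alpha(t)\X$, so the residual becomes $\tfrac1q\X\Y-\alpha(t)\X\bar{\S}_y = \X\qty(\tfrac1q\Y-\alpha(t)\bar{\S}_y)$, while $\Z\bar{\S}_y$ has token part $\X\bar{\S}_y$ and position part $\bE\bar{\S}_y$, the latter independent of $\X$.

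Next I would split $\nabla_{\V}\mathcal{L}$ column-wise into its token block (the first $d$ columns) and its position block (the last $d_e$ columns). The token block equals $-\E_{\X,y}\qty[\X\qty(\tfrac1q\Y-\alpha\bar{\S}_y)\bar{\S}_y^\top\X^\top]$, and I would evaluate the inner expectation over $\X$ with the standard Gaussian second-moment identity: for any fixed $\v,\w\in\R^T$ independent of $\X$, $\E_{\X}\qty[\X\v\w^\top\X^\top]=\qty(\v^\top\w)\bI_d$, which follows entrywise from $\E[\X_{ik}\X_{jl}]=\delta_{ij}\delta_{kl}$. Taking $\v=\tfrac1q\Y-\alpha\bar{\S}_y$, $\w=\bar{\S}_y$ and then averaging over $y$, and using $\Y^\top\bar{\S}_y=\sum_{i\in y}\bar{\S}_y(i)$, shows the token block equals $\qty(-\E_y\qty[\sum_{i\in y}\bar{\S}_y(i)/q]+\alpha\,\E_y\qty[\|\bar{\S}_y\|^2])\bI_d$.

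For the position block, the relevant factor of $\qty(\Z\bar{\S}_y)^\top$ is $\bar{\S}_y^\top\bE^\top$, which is independent of $\X$; since the other factor $\X\qty(\tfrac1q\Y-\alpha\bar{\S}_y)$ is linear in $\X$ with $\E_{\X}[\X]=\0$, the product has zero conditional expectation given $(\bE,y)$, so the position block is $\0_{d\times d_e}$. Assembling the two blocks gives $\nabla_{\V}\mathcal{L}(t)=\qty(-\E_y\qty[\sum_{i\in y}\E_{\bE}^{(R)}[\S_y(i)]/q]+\alpha\,\E_y\qty[\|\E_{\bE}^{(R)}[\S_y]\|^2])\randomVdirection$, which is the claim. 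There is no genuine analytic obstacle here; the only care needed is bookkeeping — verifying that $\S_y$, and hence $\bar{\S}_y$, carries no $\X$-dependence once $\W$ is in the stated direction, and ordering the expectations so that the one over $\X$ is taken first, conditionally on $(\bE,y)$, before those over $\bE$ and $y$.
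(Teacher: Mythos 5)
Your proposal is correct and follows the same route as the paper: use the form of $\W(t)$ to conclude $\S_y$ is $\X$-independent, split the gradient into token and position blocks, evaluate the token block via the Gaussian second-moment computation, and kill the position block by zero mean of $\X$. The only cosmetic difference is that you package the token-block calculation as the matrix identity $\E_{\X}[\X\v\w^\top\X^\top]=(\v^\top\w)\bI_d$ rather than casing diagonal versus off-diagonal entries as the paper does.
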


\begin{proof}
    Consider the gradient of $\V$:
    \begin{align*}
        \nabla_{\V}\mathcal{L}= -\E_{\X,y}\qty( \qty(\frac{1}{q}\X\Y-\V \Z\E_{\bE}^{(R)}[\S_y])(\Z\E_{\bE}^{(R)}[\S_y])^\top)
    \end{align*}
    Since $\W(t)=C(t)\randomWdirection$, we know $\S_y$ is independent of the randomness of $\X$.
    
    We first consider the token block in $\V$, which is the first $d$ columns.
    \begin{align*}
        \nabla_{\V}\mathcal{L}_{:,\leq d}&= -\E_{\X,y}\qty( \qty(\frac{1}{q}\X\Y-\V \Z\E_{\bE}^{(R)}[\S_y])(\Z\E_{\bE}^{(R)}[\S_y])_{\leq d}^\top)\\
        &= - \E\qty[\frac{1}{q}\X\Y\qty(\X\E_{\bE}^{(R)}[\S_y])^\top] + \E_{\X,y}\qty( \V \Z\E_{\bE}^{(R)}[\S_y](\X\E_{\bE}^{(R)}[\S_y])^\top)
    \end{align*}            
    We consider the two matrices separately: 
    \begin{align*}
        \E\qty[\frac{1}{q}\X\Y\qty(\X\E_{\bE}^{(R)}[\S_y])^\top]\text{ and }\E_{\X,y}\qty( \V \Z\E_{\bE}^{(R)}[\S_y](\X\E_{\bE}^{(R)}[\S_y])^\top)
    \end{align*}

    For the first matrix, consider the $(n,m)$-entry of the matrix:
     \begin{align*}
         \e_n^\top\E\qty[\frac{1}{q}\X\Y\E_{\bE}^{(R)}[\S_y]^\top \X^\top ]\e_m&= \frac{1}{q}\E \qty[\X_{n:}\Y \E_{\bE}^{(R)}[\S_y]^\top \X_{m:}^\top]\\
        &= \frac{1}{q}\E \qty[\sum_{i\in y} x_{ni} \sum_{i=1}^T \E_{\bE}^{(R)}[\S_y(i)]x_{mi}]
    \end{align*}
    When $m\not=n$, the expectation should be 0 since $x_{ni}$ is sampled from standard Gaussian distribution. When $m=n$, then the expectation is $\E_y\qty[\sum_{i\in y} \E_{\bE}^{(R)}[\S_y(i)] /q]$. Therefore, we have the whole matrix
    $$\E\qty[\frac{1}{q}\X\Y\E_{\bE}^{(R)}[\S_y]^\top \X^\top ] = \E_y\qty[\sum_{i\in y} \E_{\bE}^{(R)}[\S_y(i)] /q]\bI_d$$
    \begin{align*}
        \e_n^\top \E\qty[\V\Z\E_{\bE}^{(R)}[\S_y](\X\E_{\bE}^{(R)}[\S_y])^\top]\e_m &= \alpha\E\qty[\e_n^\top\X\E_{\bE}^{(R)}[\S_y]\E_{\bE}^{(R)}[\S_y]^\top \X^\top \e_m]\\
        &= \alpha\E \qty[\X_{n:}\E_{\bE}^{(R)}[\S_y]\E_{\bE}^{(R)}[\S_y]^\top \X_{m:}^\top]\\
        &= \alpha\delta_{nm}\E \qty[\qty(\sum_{i=1}^T \E_{\bE}^{(R)}[\S_y(i)]x_{mi})^2]\\
        &= \alpha\delta_{nm} \E_y\qty[\left\|\E_{\bE}^{(R)}[\S_y]\right\|^2]
    \end{align*}
    Thus the matrix should be in the following form:
    $$ \E\qty[\V\X\E_{\bE}^{(R)}[\S_y](\X\E_{\bE}^{(R)}[\S_y])^\top] = \alpha\E_y\qty[\left\|\E_{\bE}^{(R)}[\S_y]\right\|^2] \bI_d$$
    So we get the token block of the gradient
    \begin{align*}
        \nabla_{\V}\mathcal{L}_{:,\leq d}&= - \E\qty[\frac{1}{q}\X\Y\qty(\X\E_{\bE}^{(R)}[\S_y])^\top] + \E_{\X,y}\qty( \V \X\E_{\bE}^{(R)}[\S_y](\X\E_{\bE}^{(R)}[\S_y])^\top)\\
        &= \qty(- \E_y\qty[\sum_{i\in y} \E_{\bE}^{(R)}[\S_y(i)] /q] + \alpha\E_y\qty[\left\|\E_{\bE}^{(R)}[\S_y]\right\|^2]) \bI_d
    \end{align*}

    Then, we consider the position block of the gradient:
    \begin{align*}
        \nabla_{\V}\mathcal{L}_{:,d+1:d+d_e}&= -\E_{\X,y}\qty( \qty(\frac{1}{q}\X\Y-\V \Z\E_{\bE}^{(R)}[\S_y])(\Z\E_{\bE}^{(R)}[\S_y])_{d+1:d+d_e}^\top)\\
        &= -\E_{\X,y}\qty( \qty(\frac{1}{q}\X\Y-\alpha \X\E_{\bE}^{(R)}[\S_y])(\bE\E_{\bE}^{(R)}[\S_y])^\top)=\0_{d\times d_e}
    \end{align*}
    Since $\X\sim\mathcal{N}(0,\bI_d)$ and $\S_y$ is independent of $\X$. Combine the two parts, and the proof is completed.
\end{proof}
Finally, along the gradient descent trajectory, if $\V=\alpha(t)\randomVdirection$ and $\W(t) = C(t)\randomWdirection$ for all $t$, we can further simplify the gradient expression. 
\begin{lemma}
\label{Lemma: joint training W dynamics}
    If $\V=\alpha(t)\randomVdirection$ and $\W(t) = C(t)\randomWdirection$ for all $t$ along the training trajectory and $\bE$ is stochastic positional encoding, 
    the position-position block matrix of the gradient of $\W$, i.e. $\nabla_{\W}\mathcal{L}_{>d,>d}$, with input tokens $\X$ is:
\begin{align*}
    \nabla_{\W}\mathcal{L}_{>d,>d}&= -d\alpha(t)\E_{y}\E_{\bE}^{(R)}\qty(\bE(\diag(\S_y)-\S_y\S_y^\top)\qty(\frac{1}{q}\Y-\alpha(t)\E_{\bE}^{(R)}[\S_y])\e_y^\top)\\
\end{align*}
\end{lemma}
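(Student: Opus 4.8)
The plan is to derive the claimed block formula by substituting the structural hypotheses into the general gradient expression for $\nabla_{\W}\mathcal{L}$ established above, followed by a careful block extraction. Under $\W(t)=C(t)\randomWdirection$ the pre-softmax logits become $\Z^\top\W(t)\zq = C(t)\,\bE^\top\e_y$, so the attention vector $\S_y = \sm(C(t)\bE^\top\e_y)$ is a function of $(\bE,y)$ only and is \emph{independent of the Gaussian block} $\X$; moreover $\E_{\bE}^{(R)}[\S_y]$ is deterministic given $y$. Under $\V(t)=\alpha(t)\randomVdirection$ we have $\V\Z=\alpha\X$, so the residual is $\tfrac{1}{q}\X\Y-\V\Z\,\E_{\bE}^{(R)}[\S_y]=\X\big(\tfrac{1}{q}\Y-\alpha\,\E_{\bE}^{(R)}[\S_y]\big)$, and $\V^\top$ is the $(d+d_e)\times d$ matrix whose only nonzero block is $\alpha\bI_d$ in its top $d$ rows. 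Hence the vector $\V^\top\big(\tfrac1q\X\Y-\V\Z\,\E_{\bE}^{(R)}[\S_y]\big)$ is supported on its first $d$ coordinates, where it equals $\alpha\,\X\big(\tfrac1q\Y-\alpha\,\E_{\bE}^{(R)}[\S_y]\big)$.

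Next I would read off the position--position block. Writing $\Z=[\X^\top\ \bE^\top]^\top$ and $\zq=[\x_{\text{query}}^\top\ \e_y^\top]^\top$, the last $d_e$ rows of $\Z$ are exactly $\bE$, the last $d_e$ columns of $\zq^\top$ are $\e_y^\top$, and in the product $\Z^\top\V^\top(\cdots)$ only the first $d$ columns of $\Z^\top$ (namely $\X^\top$) meet the nonzero coordinates of $\V^\top(\cdots)$. Tracking these slices, the $(>d,>d)$ block of $\Z(\diag(\S_y)-\S_y\S_y^\top)\Z^\top\V^\top\big(\tfrac1q\X\Y-\V\Z\,\E_{\bE}^{(R)}[\S_y]\big)\zq^\top$ equals $\alpha\,\bE(\diag(\S_y)-\S_y\S_y^\top)\,\X^\top\X\,\big(\tfrac1q\Y-\alpha\,\E_{\bE}^{(R)}[\S_y]\big)\e_y^\top$. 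Finally, since $\X$ is independent of $(\bE,y)$, I would push $\E_{\X}$ innermost; independence of $\S_y$ from $\X$ and $\E[\X^\top\X]=d\bI_T$ (the columns $\x_i$ being i.i.d.\ $\mathcal{N}(0,\bI_d)$) collapse the factor $\X^\top\X$ to $d\bI_T$, producing the scalar $d$, and taking the remaining $\E_y\E_{\bE}^{(R)}$ yields the asserted expression.

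I do not expect a genuine obstacle: the computation is routine linear algebra once the ansatz is substituted, and the one structural fact that makes everything work --- that $\W(t)=C(t)\randomWdirection$ renders $\S_y$ independent of $\X$, so the Gaussian Gram matrix averages to $d\bI_T$ --- is immediate from the reparameterized architecture. The only points demanding care are the block bookkeeping (keeping straight which $d$- versus $d_e$-sized slices of $\Z,\zq,\V$ appear where) and not conflating the two occurrences of $\E_{\bE}^{(R)}$: the inner one is the frozen model output $f^{(s)}_{\bm{\theta}}$ entering through the residual, while the outer one is the expectation arising from differentiating the softmax and must be applied to the $\bE$-dependent factor $\bE(\diag(\S_y)-\S_y\S_y^\top)$.
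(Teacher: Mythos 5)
Your proposal is correct and follows essentially the same route as the paper's proof: start from the general gradient formula of Lemma~\ref{Lemma: basic dynamics for 1-layer tf with stochastic pe, qsa}, substitute $\V=\alpha\randomVdirection$ to reduce $\Z^\top\V^\top(\cdots)$ to $\alpha\X^\top\X(\tfrac1q\Y-\alpha\E_{\bE}^{(R)}[\S_y])$, note that $\W=C\randomWdirection$ makes $\S_y$ independent of $\X$, and collapse $\E[\X^\top\X]=d\bI_T$. The block-extraction bookkeeping and the remark about the two distinct roles of $\E_{\bE}^{(R)}$ match what the paper does.
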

\begin{proof}
    By Lemma~\ref{Lemma: basic dynamics for 1-layer tf with stochastic pe, qsa}, we have the gradient dynamics of $\W$ with input token matrix $\X$ is:
\begin{align*}
    \nabla_{\W}\mathcal{L}_{>d,>d}&= -\E_{\X,y}\E_{\bE}^{(R)}\qty(\bE(\diag(\S_y)-\S_y\S_y^\top)\Z^\top \V^\top \qty(\frac{1}{q}\X\Y-\V \Z\E_{\bE}^{(R)}[\S_y])\e_y^\top)\\
    & = -\alpha(t)\E_{\X,y}\E_{\bE}^{(R)}\qty(\bE(\diag(\S_y)-\S_y\S_y^\top)\X^\top \X\qty(\frac{1}{q}\Y-\alpha(t)\E_{\bE}^{(R)}[\S_y])\e_y^\top)\tag{Using $\V = \randomVdirection$}\\
    & = -\alpha(t)\E_{y}\E_{\bE}^{(R)}\qty(\bE(\diag(\S_y)-\S_y\S_y^\top)\E_{x_{ij}\sim\mathcal{N}(0,1)}[\X^\top \X]\qty(\frac{1}{q}\Y-\alpha(t)\E_{\bE}^{(R)}[\S_y])\e_y^\top)
    \tag{Using $\W = \randomWdirection\Rightarrow \S_y$ is independent of $\X$}\\
    & = -d \alpha(t)\E_{y}\E_{\bE}^{(R)}\qty(\bE(\diag(\S_y)-\S_y\S_y^\top)\qty(\frac{1}{q}\Y-\alpha(t)\E_{\bE}^{(R)}[\S_y])\e_y^\top)
    \tag{$\E\qty[\X^\top \X] = d\bI$}
\end{align*}
\end{proof}
In this section, we consider the gradient descent dynamics. The update rules are in the following form
\begin{align*}
    {\W}(t+1) &= \W(t)-\eta\nabla_{\W}\mathcal{L}(t)\\
    {\V}(t+1) &= \V(t)-\eta\nabla_{\V}\mathcal{L}(t)
\end{align*}

\subsection{Joint Training}\label{appen sec: proof detail for joint stochastic pe}
Now, we analyze the dynamics of training the value matrix $\V$ and attention matrix $\W$ simultaneously with the same learning rate $\eta$ with the stochastic positional encoding. 
The following theorem characterizes the convergence of GD when training both layers simultaneously with stochastic positional encoding.

\begin{theorem}[Joint training with stochastic positional encoding]    \label{thm: joint training stochastic pe}
    Suppose $2\leq q<T/4, q,\delta=\Theta(1),d_e=\Theta(q\log T/\delta^2), \delta < 1/10$. For any $\epsilon\in (0,\frac{dT}{100(T-q)q}), \eta\leq \frac{d_e^2}{40d^2T}$, 
    if we apply gradient descent on the population loss in \Cref{eqn: training objective for stochastic pe} with zero initialization $\W(0)=\textbf{0}_{d_e\times d_e},\V(0)=\textbf{0}_{d\times d}$, then after time $t \geq \tilde{O}(\frac{T^{\frac{2-2\delta}{1-3\delta}}}{\eta}+\frac{T^2 d}{\eta\epsilon})$, we have
    $$\mathcal{L}(\bm{\theta}(t))= \frac{1}{2}\E_{\X,y}\qty[\|\qsa(\X,y)-f^{(s)}_{\bm{\theta}}(\X,y)\|_2^2] \leq \epsilon.$$
\end{theorem}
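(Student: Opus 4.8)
\emph{Proof sketch.} The plan is to follow the same two-step strategy as in the proof of \Cref{main thm: joint training one hot}, but to replace the exact moment computations available under the one-hot encoding with two-sided estimates coming from the $(q,\delta)$-RIP of $\bE$.

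\emph{Step 1: invariant subspace along the trajectory.} First I would prove the analogue of \Cref{main Lemma: symmetry in different q-sparse subsets joint}, i.e.\ \Cref{Lemma: induction hypothesis for joint training stochastic PE}: along GD, for every $t$ there are scalars $C(t),\alpha(t)$ with $\W(t)=C(t)\randomWdirection$ and $\V(t)=\alpha(t)\randomVdirection$. The base case is the zero initialization. For the inductive step, plug the hypothesis into \Cref{Lemma: joint training V dynamics} and \Cref{Lemma: joint training W dynamics}: the former already gives $\nabla_{\V}\mathcal L(t)\propto\randomVdirection$, and the latter shows that the only possibly nonzero block of $\nabla_{\W}\mathcal L(t)$ is the PE--PE block $-d\alpha(t)\,\E_y\E_{\bE}^{(R)}\!\big[\bE(\diag(\S_y)-\S_y\S_y^\top)(\tfrac1q\Y-\alpha(t)\bar{\S}_y)\e_y^\top\big]$. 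The key new observation is that this block is a scalar multiple of $\bI_{d_e}$: the conditional law of the Rademacher matrix $\bE$ given the RIP event is invariant under flipping the sign of any single row and under permuting rows, and these operations leave all inner products $\langle\e_i,\e_j\rangle,\langle\e_i,\e_y\rangle$ (hence $\S_y$ and the event itself) unchanged while flipping the corresponding coordinate of $\bE$ and $\e_y$ (resp.\ permuting them); this forces the off-diagonal entries to have zero conditional expectation and all diagonal entries to coincide. This closes the induction and reduces the problem to the scalar dynamics of $(\alpha(t),C(t))$.

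\emph{Step 2: controlling the scalar dynamics.} With $\W(t)=C(t)\randomWdirection$, the logit of a correct position $i\in y$ is exactly $C(t)$ while, by \Cref{Lemma: RIP matrices has dual certificate}, the logit of an incorrect position $i\notin y$ is $C(t)\langle\e_i,\e_y\rangle\in[-C(t)\delta',\,C(t)\delta']$ with $\delta':=\delta/(1-2\delta)$. Hence every realization of $s_+:=\S_y(i)$, $i\in y$, satisfies $s_+\in\big[\tfrac{1}{q+(T-q)e^{-C(1-\delta')}},\,\tfrac{1}{q+(T-q)e^{-C(1+\delta')}}\big]$, so $\bar s_+=\E_{\bE}^{(R)}[s_+]$ is pinned inside the same interval and $q\bar s_++(T-q)\bar s_-=1$ (\Cref{Lemma: Conditional expectation of the softmax vector}). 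Substituting these bounds into the $\V$- and $\W$-gradients yields an approximate version of \Cref{eq: two-variable dynamics in joint case}: the update for $\alpha$ is the same linear recursion toward a stationary point $\alpha^\star$ up to $(1\pm o(1))$ factors, and the update for $C$ is, up to the same factors and a $\delta$-controlled additive error, $\Delta C\asymp\eta\,\tfrac{\alpha d}{T}\,\bar s_+(1-q\bar s_+)\big(1+\tfrac{q\alpha}{T-q}(1-T\bar s_+)\big)$. I would then rerun the two-phase induction of \Cref{main thm: joint training one hot}: Phase I, $\alpha$ quickly reaches, and is then trapped in, a band $[\,1-O(\sqrt\epsilon),\,(1+o(1))\alpha^\star\,]$ while $C$ is monotone nondecreasing; Phase II, $C$ grows, pushing $\bar s_+\to 1/q$ (hence $\bar s_-\to 0$) at rate $O(1/t)$, which by \Cref{Lemma: stochastic pe loss expression} drives $\mathcal L^{(s)}(\bm{\theta}(t))=\tfrac d2\E_y\|\tfrac1q\Y-\alpha(t)\bar{\S}_y\|^2$ below $\epsilon$. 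The first summand $T^{(2-2\delta)/(1-3\delta)}/\eta$ in the time bound comes from the sub-stage in which the incorrect-position mass $(T-q)e^{-C(1-\delta')}$ still dominates the denominator of $\bar s_+$: since $1/(1-\delta')=(1-2\delta)/(1-3\delta)$, the threshold value of $C$ scales like $\tfrac{1}{1-\delta'}\log T$, and over this longer range $\Delta C$ is only $\Omega(\eta\,T^{-\Theta(1)})$, so clearing it costs $\tilde O(T^{(2-2\delta)/(1-3\delta)}/\eta)$ steps; thereafter the analysis, and the surviving $T^2 d/(\eta\epsilon)$ term, are identical to the one-hot case.

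\emph{Main obstacle.} The crux is the perturbation bookkeeping in Step 2: because $\bar s_+$ has no closed form and only satisfies two-sided, $\delta$-dependent bounds, each step of the delicate ``$\alpha$ stays near $\alpha^\star$'' invariant (the analogue of (IH1) in the one-hot proof) must be re-established with an extra error term that has to be absorbed into the slack of the invariant, and the quantitative choices $\delta<1/10$ and $\eta\le d_e^2/(40 d^2 T)$ are exactly what is needed to make all of these absorptions---and the Phase I/II time estimates---go through simultaneously. A secondary subtlety is making the sign-and-permutation invariance argument for the RIP-conditioned Rademacher law fully rigorous, since the conditioning event couples all entries of $\bE$.
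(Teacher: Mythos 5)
Your proposal matches the paper's proof route in both steps: the sign-flip/permutation invariance of the RIP-conditioned Rademacher law to pin down the invariant directions of $\W$ and $\V$, followed by two-sided RIP bounds on $s_+$ that feed a two-phase scalar induction, and the mechanism you cite for the extra $T^{(2-2\delta)/(1-3\delta)}/\eta$ term (higher $C$ threshold plus small $\Delta C$) is the same one the paper uses. One minor slip: the $C$-step scales as $\eta\,\alpha d/d_e$ (each diagonal entry of the PE--PE block is $1/d_e$ of its trace), not $\eta\,\alpha d/T$, and it is this $d_e$ dependence that makes the stated conditions $\eta\le d_e^2/(40 d^2 T)$ and $\delta<1/10$ the ones needed for the absorption estimates to close.
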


The proof idea is similar to the joint training scenario with one-hot PE. We can still simplify the dynamics of $\W$ and $\V$ using symmetry, proving convergence along the global optimal direction. Then the two variable dynamics are considered inductively so that $\W$ and $\V$ can converge to the global minimum.

We have the following lemma that characterizes the evolution speed along the converging direction of $\W$ and $\V$: they always pointing to the ground-truth direction that $\W(t)=C(t)\randomWdirection,$ $ \V(t)=\alpha(t)\randomVdirection.$
\begin{lemma}[Induction Hypothesis for Stochastic PE, Joint Training] 
\label{Lemma: induction hypothesis for joint training stochastic PE}
Suppose all conditions in \Cref{thm: joint training stochastic pe} holds, then along the gradient descent trajectory, for all $t\geq 0$, there exist some time-dependent scalars $C(t),\alpha(t)$ s.t. 
$\W(t)=C(t)\randomWdirection,$$ \V(t)=\alpha(t)\randomVdirection$, and $C(t),\alpha(t)$ satisfies:
\begin{align*}
    C(t+1)&\geq C(t) + \eta \frac{1-3\delta}{1-2\delta}\cdot\frac{d\alpha(t)}{d_e}\E_{y}\E_{\bE}^{(R)}(1-qs_+)\qty(1-\frac{q\alpha(t)(T\bar{s_+}-1)}{T-q})s_+\\
    C(t+1)&\leq C(t) + \eta \frac{1-\delta}{1-2\delta}\cdot\frac{d\alpha(t)}{d_e}\E_{y}\E_{\bE}^{(R)}(1-qs_+)\qty(1-\frac{q\alpha(t)(T\bar{s_+}-1)}{T-q})s_+\\
    \alpha(t+1)&=\alpha(t) + \eta\bar{s_+}\qty(1-q\alpha \bar{s_+}+\frac{\alpha(1-q\bar{s_+})^2}{(T-q)\bar{s_+}})
\end{align*}

\end{lemma}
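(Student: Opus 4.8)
The plan is to prove the lemma by induction on $t$, following the one-hot argument in \Cref{Lemma: symmetry in different q-sparse subsets joint}. The base case $t=0$ is immediate from the zero initialization. For the inductive step I assume $\W(t)=C(t)\randomWdirection$ and $\V(t)=\alpha(t)\randomVdirection$. The first payoff of this hypothesis is that the attention logits collapse to $\Z^\top\W(t)\zq=C(t)\bE^\top\e_y$, so $\S_y$ (hence also $\bar{\S_y}=\E_{\bE}^{(R)}[\S_y]$) is independent of $\X$; moreover, by \Cref{Lemma: RIP matrices has dual certificate} every coordinate $i\in y$ of $\bE^\top\e_y$ equals $1$, so $\S_y(i)$ takes the common value $s_+$ on $y$, while the off-support logits $(\bE^\top\e_y)_j$, $j\notin y$, all lie in $[-\tfrac{\delta}{1-2\delta},\tfrac{\delta}{1-2\delta}]$.

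First I would dispatch $\V$. Under the inductive hypothesis, \Cref{Lemma: joint training V dynamics} already gives $\nabla_{\V}\mathcal L(t)$ as a scalar multiple of $\randomVdirection$, so $\V(t+1)=\alpha(t+1)\randomVdirection$ stays in the desired direction. To read off $\alpha(t+1)$, I use that the conditional-on-RIP law of $\bE$ is invariant under permuting its rows, which forces $\bar{\S_y}=\bar{s_+}\Y+\bar{s_-}(\mathbf{1}_T-\Y)$ with $\bar{s_-}=(1-q\bar{s_+})/(T-q)$; substituting $\E_y[\sum_{i\in y}\bar{\S_y}(i)/q]=\bar{s_+}$ and $\E_y[\|\bar{\S_y}\|^2]=q\bar{s_+}^2+(1-q\bar{s_+})^2/(T-q)$ into \Cref{Lemma: joint training V dynamics} and applying $\V(t+1)=\V(t)-\eta\nabla_{\V}\mathcal L(t)$ yields the recursion for $\alpha(t+1)$ in the statement.

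For $\W$, I would start from \Cref{Lemma: basic dynamics for 1-layer tf with stochastic pe, qsa}. Since $\zq=[\x_{\text{query}}^\top\ \e_y^\top]^\top$ with $\x_{\text{query}}=\0$, the block of $\nabla_{\W}\mathcal L(t)$ touching $\x_{\text{query}}$ vanishes; the token--position block vanishes because it contains the factor $\E_{\X}[\X M\X^\top\X v]$ for $M=\diag(\S_y)-\S_y\S_y^\top$ and $v=\tfrac1q\Y-\alpha\bar{\S_y}$ both independent of $\X$, an odd moment of a centered Gaussian, hence $\0$. The only surviving block is the position--position block, equal (by \Cref{Lemma: joint training W dynamics}) to $-d\alpha(t)\,\E_y\E_{\bE}^{(R)}[\bE M v\,\e_y^\top]$. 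The crucial observation is that the law of $\bE$ conditioned on $(q,\delta)$-RIP is invariant under left multiplication by any signed permutation matrix $\Pi$ (permuting rows and flipping their signs changes neither the i.i.d.\ Rademacher law nor the RIP event). Under $\bE\mapsto\Pi\bE$ one checks $\bE^\top\e_y$, $\S_y$, $\bar{\S_y}$ are unchanged while $\e_y\mapsto\Pi\e_y$, so $\bE M v\e_y^\top\mapsto\Pi(\bE M v\e_y^\top)\Pi^\top$; taking expectations, $\E_{\bE}^{(R)}[\bE M v\e_y^\top]$ is fixed by conjugation by every signed permutation, hence equals $\tfrac1{d_e}\,\E_{\bE}^{(R)}[\e_y^\top\bE M v]\,\bI_{d_e}$. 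Thus $\nabla_{\W}\mathcal L(t)$ is a scalar multiple of $\randomWdirection$, giving $\W(t+1)=C(t+1)\randomWdirection$ with $C(t+1)=C(t)+\tfrac{\eta d\alpha(t)}{d_e}\E_y\E_{\bE}^{(R)}[(\bE^\top\e_y)^\top M v]$.

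Finally I would sandwich $\E_{\bE}^{(R)}[(\bE^\top\e_y)^\top M v]$. Writing $\bE^\top\e_y=\Y+r$ with $\supp r\subseteq[T]\setminus y$ and $\|r\|_\infty\le\delta/(1-2\delta)$ (\Cref{Lemma: RIP matrices has dual certificate}), a direct expansion using $\S_y(i)=s_+$ for $i\in y$, $\sum_{j\notin y}\S_y(j)=1-qs_+$, and $\bar{\S_y}(j)=\bar{s_-}$ for $j\notin y$ gives the exact identity $\Y^\top M v=(1-qs_+)\bigl(1-\tfrac{q\alpha(T\bar{s_+}-1)}{T-q}\bigr)s_+$, which is precisely the expression appearing inside the expectation in the claimed bounds. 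The residual $r^\top M v$ is then controlled: both of its terms factor through $\sum_{j\notin y}r_j\S_y(j)$, whose absolute value is at most $\tfrac{\delta}{1-2\delta}(1-qs_+)$, so using $s_+\le1/q$ and the smallness of $\bar{s_-}$ one checks $\Y^\top M v+r^\top M v$ lies in the multiplicative window $[\tfrac{1-3\delta}{1-2\delta},\tfrac{1-\delta}{1-2\delta}]\cdot\Y^\top M v$; averaging over $\bE$ and $y$ gives the stated two-sided recursion for $C(t+1)$, completing the induction. I expect this last step---propagating the near-orthogonality parameter $\delta$ through $M=\diag(\S_y)-\S_y\S_y^\top$ while keeping careful track of the distinction between the random $\S_y$ inside $M$ and its conditional mean $\bar{\S_y}$ inside $v$---to be the main technical obstacle; the signed-permutation symmetrization, though the conceptual heart of the argument, is short.
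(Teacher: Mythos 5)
Your proposal is correct and follows essentially the same approach as the paper's proof: induction on $t$, vanishing of the off-diagonal blocks of $\nabla_\W\mathcal L$ by Gaussian and sign-flip symmetry, proportionality of $\nabla_\V\mathcal L$ to $\randomVdirection$ via \Cref{Lemma: joint training V dynamics}, and an exact-plus-remainder split of the $(\bE^\top\e_y)^\top M v$ trace corresponding to the paper's Term 1 / Term 2 decomposition. Your packaging of the position--position block argument as invariance of $\E_\bE^{(R)}[\bE Mv\e_y^\top]$ under conjugation by the signed permutation group is a slightly cleaner way to combine the paper's two separate symmetry lemmas (\Cref{Lemma: Off-diagonal entries of the gradient have 0 expectation} and \Cref{Lemma: Diagonal entries of the gradient are the same}), and the decomposition $\bE^\top\e_y=\Y+r$ with $\|r\|_\infty\le\delta/(1-2\delta)$ reproduces the paper's Term 1 (computed exactly) and Term 2 (bounded) precisely. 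Two small slips worth fixing: (1) when you argue that $\bar{\S_y}=\bar{s_+}\Y+\bar{s_-}(\mathbf 1_T-\Y)$, the relevant symmetry is invariance of the conditional law under \emph{column} permutations of $\bE$ (permuting positions $j,k\notin y$), not row permutations---row permutations leave every $\S_y(j)$ unchanged and give no information, whereas you correctly use row (signed-permutation) symmetry only for the $d_e\times d_e$ identity structure; and (2) you invoke $\x_{\text{query}}=\0$, which is a hypothesis of the one-hot theorem but not of \Cref{thm: joint training stochastic pe}; the paper instead kills the first $d$ columns/rows of $\nabla_\W\mathcal L$ for arbitrary fixed $\x_{\text{query}}$ via the $\X\mapsto-\X$ and $\bE\mapsto-\bE$ symmetries, so you should do likewise (or explicitly add the hypothesis).
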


\begin{proof}

    We prove $\W(t)=C(t)\randomWdirection,$$ \V(t)=\alpha(t)\randomVdirection.$ by induction. First, check the initialization $\W(t)=0=0\cdot\randomWdirection,\V(0)=0=0\cdot\randomVdirection$. Here the scalar $C(0)=\alpha(0)=0$. For GD dynamics, by \Cref{Lemma: joint training V dynamics} and \Cref{Lemma: joint training W dynamics} we have $C(1)=C(0),\alpha(1)=\alpha(0)+\eta /T$. Therefore, the induction hypothesis holds for $t=0$.

    Then we prove this argument inductively: if \Cref{Lemma: induction hypothesis for joint training stochastic PE} holds for iteration $t$, it is enough to prove that for iteration $t+1$, this argument still holds. Then by induction, we can conclude that it holds for all time $t\geq 0$.

    Note that since the position-position block (right-bottom block) of $\W$ is always in direction of $\randomWdirection$ and other entries are 0, $\S_y = \sm(\Z^\top \W \zq)=\sm(C(t)\E^\top \e_y)$ when $y$ is given. Therefore, the softmax vector is always independent of $\X$ for iterations $\leq t$.
    
    Now we suppose these two properties hold for iteration $t$. We first consider \textbf{the token-token, token-position and position token} submatrices of $\W(t)$. We prove that those gradient blocks should always be $\0$.

    For the first $d$ rows, we have (here within $\zq$, $\x_{\text{query}}$ can be any fixed token vector):
    \begin{align*}
        \nabla_{\W}\mathcal{L}(t)_{\leq d, :} &= \alpha \E_{\X, y}\E_{\bE}^{(R)} \qty [ \X\qty(\diag(\S_y) - \S_y\S_y^\top) \X^\top \X \qty(\frac{1}{q}\Y - \alpha \S_y) {\zq}^\top ]\\
        &= \frac{1}{2}\alpha \E_{\X, y}\E_{\bE}^{(R)} \qty [ \X\qty(\diag(\S_y) - \S_y\S_y^\top) \X^\top \X \qty(\frac{1}{q}\Y - \alpha \S_y) {\zq}^\top ]\\
        &+\frac{1}{2}\alpha \E_{\X, y}\E_{\bE}^{(R)} \qty [ -\X\qty(\diag(\S_y) - \S_y\S_y^\top) (-\X^\top) (-\X) \qty(\frac{1}{q}\Y - \alpha \S_y) {\zq}^\top ]\tag{By symmetry and independence between $\X$ and $\S_y$}\\
        &=\0_{d\times (d+d_e)}.
    \end{align*}
    Then we consider the position-token block $\nabla_{\W}\mathcal{L}(t)_{>d, \leq d}$:
    \begin{align*}
        \nabla_{\W}\mathcal{L}(t)_{> d, \leq d} &= \alpha \E_{\X, y}\E_{\bE}^{(R)} \qty [ \bE\qty(\diag(\S_y) - \S_y\S_y^\top) \X^\top \X \qty(\frac{1}{q}\Y - \alpha \S_y) {\x_{\text{query}}}^\top ]\\
        &=d\alpha \E_{\X, y}\E_{\bE}^{(R)} \qty [ \bE\qty(\diag(\S_y) - \S_y\S_y^\top) \qty(\frac{1}{q}\Y - \alpha \S_y) {\x_{\text{query}}}^\top ]\tag{Independence between $\X$ and $\S_y$}\\
        &=d\alpha \E_{\X, y}\E_{\bE}^{(R)} \qty [ \bE\qty(\diag(\S_y) - \S_y\S_y^\top) \qty(\frac{1}{q}\Y - \alpha \S_y) {\x_{\text{query}}}^\top ]\\&+d\alpha \E_{\X, y}\E_{\bE}^{(R)} \qty [ -\bE\qty(\diag(\S_y) - \S_y\S_y^\top) \qty(\frac{1}{q}\Y - \alpha \S_y) {\x_{\text{query}}}^\top ]\tag{For $\bE$ and $-\bE$, $\S_y$ are the same.}=\0_{d_e\times d}.
    \end{align*}
    
    Finally, we consider the update for \textbf{the position-position submatrix} $\W(t)_{>d,>d}$ in its $(k,j)$-entry $\W_{kj}(t)$, $k,j\in[d_e]$. With the update rule by \Cref{Lemma: joint training W dynamics}, 
    \begin{align*}
        \Delta{\W}(t)_{>d,>d} &= \eta d\alpha(t)\E_{y}\E_{\bE}^{(R)}\qty(\bE(\diag(\S_y)-\S_y\S_y^\top)\qty(\frac{1}{q}\Y-\alpha(t)\E_{\bE}^{(R)}[\S_y])\e_y^\top)
    \end{align*}
 
    Since $\W(t)_{>d,>d}=C(t)\bI_{d_e}$, we only need to prove that the gradient term is along the direction of $\bI_{d_e}$. Recall the notation $\bar{\S_y} = \E_{\bE}^{(R)}[\S_y]$. To prove this statement, we first expand the gradient term:
     \begin{align*}
        \Delta{\W}(t)_{>d,>d} &= \eta d\alpha(t)\E_{y}\E_{\bE}^{(R)}\qty(\bE(\diag(\S_y)-\S_y\S_y^\top)\qty(\frac{1}{q}\Y-\alpha(t)\E_{\bE}^{(R)}[\S_y])\e_y^\top)\\
        & = \eta d\alpha(t)\E_{y}\E_{\bE}^{(R)}\qty(\sum_{i=1}^T\e_i\qty[(\diag(\S_y)-\S_y\S_y^\top)\qty(\frac{1}{q}\Y-\alpha(t)\bar{\S_y})]_i\e_y^\top)\\
        & = \eta d\alpha(t)\E_{y}\E_{\bE}^{(R)}\qty(\sum_{i\in y}\e_i\qty[(\diag(\S_y)-\S_y\S_y^\top)\qty(\frac{1}{q}\Y-\alpha(t)\bar{\S_y})]_i\e_y^\top)\\
        &+\eta d\alpha(t)\E_{y}\E_{\bE}^{(R)}\qty(\sum_{i\not \in y}\e_i\qty[(\diag(\S_y)-\S_y\S_y^\top)\qty(\frac{1}{q}\Y-\alpha(t)\bar{\S_y})]_i\e_y^\top)\\
        & = \eta d\alpha(t)\E_{y}\E_{\bE}^{(R)}\qty(\sum_{i\in y}\e_i\qty[\S_y(i)\qty(\frac{1}{q}-\alpha(t)\bar{\S_y}(i))+\S_y(i)\qty[\alpha(t)\S_y^\top \bar{\S_y} - \sum_{i\in y}\S_y(i)/q]]\e_y^\top)\tag{Term 1'}\\
        &\ +\eta d\alpha(t)\E_{y}\E_{\bE}^{(R)}\qty(\sum_{i\not \in y}\e_i\qty[-\alpha(t)\S_y(i)\bar{\S_y}(i)+\alpha(t)\S_y(i)(\S_y^\top \bar{\S_y})-\frac{1}{q}\S_y(i)\sum_{j\in y}\S_y(j)]\e_y^\top)\tag{Term 2'}
    \end{align*}
    
    By \Cref{Lemma: Conditional expectation of the softmax vector}, we know for $\W(t)=C(t)\bI_{d_e}$, there exists $\bar{s_+}(t), \bar{s_-}(t)$ s.t.
    $$\bar{\S_y} = \bar{s_+}\Y + \bar{s_-}(\mathbf{1}_T-\Y).$$

    By \Cref{Lemma: Off-diagonal entries of the gradient have 0 expectation}, any off-diagonal entry of both terms in the gradient should be 0 according to symmetry. Now we only need to consider the diagonal entries. By \Cref{Lemma: Diagonal entries of the gradient are the same}, all the diagonal entries of each term in the gradient have the same value. Therefore, the gradient of the submatrix is aligned with $\bI_{d_e}$. Combine the four blocks, the whole gradient of $\W$ is aligned with $\randomWdirection$.

    For $\V(t)$, by \Cref{Lemma: joint training V dynamics} we know its update is always in the $\randomVdirection$ direction. Therefore, $$\V(t+1)=\V(t) + \qty( \E_y\qty[\sum_{i\in y} \E_{\bE}^{(R)}[\S_y(i)] /q] - \alpha\E_y\qty[\left\|\E_{\bE}^{(R)}[\S_y]\right\|^2]) \randomVdirection$$
    aligns with $\randomVdirection$. By induction, we complete the proof for the direction property.
    
    After proving the direction property, we calculate $C(t)$'s dynamics by considering two terms separately. Now we only consider the position-position blocks' dynamics. Since the symmetry among all the diagonal entries, the trace of each term is considered and each diagonal entry will be $1/d_e$ of the conditional expectation of the trace.
    \paragraph{Term 1:} Terms with $i \in y$.
    \begin{align*}
        &\eta d\alpha(t)\tr\qty(\E_{y}\E_{\bE}^{(R)}\qty(\sum_{i\in y}\e_i\qty[\S_y(i)\qty(\frac{1}{q}-\alpha(t)\bar{\S_y}(i))+\S_y(i)\qty[\alpha(t)\S_y^\top \bar{\S_y} - \sum_{i\in y}\S_y(i)/q]]\e_y^\top))\\
        ={}&\eta d\alpha(t)\E_{y}\E_{\bE}^{(R)}\qty(\sum_{i\in y}\qty[\S_y(i)\qty(\frac{1}{q}-\alpha(t)\bar{\S_y}(i))+\S_y(i)\qty[\alpha(t)\S_y^\top \bar{\S_y} - \sum_{i\in y}\S_y(i)/q]]\tr\qty(\e_i\e_y^\top))\tag{Linearity of trace and expectation}\\
        ={}&\eta d\alpha(t)\E_{y}\E_{\bE}^{(R)}\qty(\sum_{i\in y}\qty[\S_y(i)\qty(\frac{1}{q}-\alpha(t)\bar{\S_y}(i))+\S_y(i)\qty[\alpha(t)\S_y^\top \bar{\S_y} - \sum_{i\in y}\S_y(i)/q]]\e_i^\top\e_y)\\
        ={}&\eta d\alpha(t)\E_{y}\E_{\bE}^{(R)}\qty(\sum_{i\in y}\qty[\S_y(i)\qty(\frac{1}{q}-\alpha(t)\bar{\S_y}(i))+\S_y(i)\qty[\alpha(t)\S_y^\top \bar{\S_y} - \sum_{i\in y}\S_y(i)/q]])\tag{$\e_i^\top\e_y=1$ for $i\in y$}\\
        ={}&\eta dq\alpha(t)\E_{y}\E_{\bE}^{(R)}\qty(\qty[\S_y(i)\qty(\frac{1}{q}-\alpha(t)\bar{\S_y}(i))+\S_y(i)\qty[\alpha(t)\S_y^\top \bar{\S_y} - \sum_{i\in y}\S_y(i)/q]])\tag{Symmetry among $i\in y$}
    \end{align*}
    Expand $\S_y^\top \bar{\S_y}$ and we have:
    \begin{align*}
        \S_y^\top \bar{\S_y} &= \sum_{i\in y}\S_y(i)\bar{s_+}+\sum_{i\not\in y}\S_y(i)\bar{s_-}
    \end{align*}
    Plug it back into Term 1:
    \begin{align*}
        &\eta dq\alpha(t)\E_{y}\E_{\bE}^{(R)}\qty(\qty[\S_y(i)\qty(\frac{1}{q}-\alpha(t)\bar{\S_y}(i))+\S_y(i)\qty[\alpha(t)\S_y^\top \bar{\S_y} - \sum_{i\in y}\S_y(i)/q]])\\
        ={}&\eta dq\alpha(t)\E_{y}\E_{\bE}^{(R)}\qty(\qty[\S_y(i)\qty(\frac{1}{q}-\alpha(t)\bar{s_+})+\S_y(i)\qty(\sum_{i\in y}\S_y(i)\qty(\alpha(t)\bar{s_+}-\frac{1}{q})+\alpha(t)\sum_{i\not\in y}\S_y(i)\bar{s_-})])\\
        ={}&\eta dq\E_{y}\E_{\bE}^{(R)}\qty(\qty[\S_y(i)(1-q\S_y(i))\qty(\frac{1}{q}-\alpha(t)\bar{s_+})+\alpha(t)\S_y(i)\qty(\sum_{i\not\in y}\S_y(i)\bar{s_-})]) \tag{$\S_y(i)=s_+$ are equal for $i\in y$}\\
        ={}&\eta dq\E_{y}\E_{\bE}^{(R)}\qty(\qty[\S_y(i)(1-q\S_y(i))\qty(\frac{1}{q}-\alpha(t)\bar{s_+})+\alpha(t)\S_y(i)\qty(\sum_{i\not\in y}\S_y(i)\frac{(1-q\bar{s_+})}{T-q})]) \tag{$q\bar{s_+}+(T-q)s_+=1$}\\
        ={}&\eta dq\E_{y}\E_{\bE}^{(R)}\qty(\qty[s_+(1-qs_+)\qty(\frac{1}{q}-\alpha(t)\bar{s_+})+\alpha(t)s_+(1-qs_+)\frac{(1-q\bar{s_+})}{T-q}]) \tag{$qs_++\sum_{j\not\in y}\S_y(j)=1$}\\
        ={}&\eta dq\E_{y}\E_{\bE}^{(R)}\qty(s_+(1-qs_+)\qty(\frac{1}{q}-\alpha(t)\frac{T\bar{s_+}-1}{T-q}))
    \end{align*}
    Therefore, 
    \begin{align*}
        \text{Term 1}=\frac{\eta d}{d_e}\E_{y}\E_{\bE}^{(R)}\qty(s_+(1-qs_+)\qty(1-q\alpha(t)\frac{T\bar{s_+}-1}{T-q}))\bI_{d_e}
    \end{align*}
   \paragraph{Term 2:} Terms with $i \not\in y$. We use a similar technique from above.
   \begin{align*}
        &\eta d\alpha(t)\E_{y}\E_{\bE}^{(R)}\qty(\sum_{i\not \in y}\e_i\qty[-\alpha(t)\S_y(i)\bar{\S_y}(i)+\alpha(t)\S_y(i)(\S_y^\top \bar{\S_y})-\frac{1}{q}\S_y(i)\sum_{j\in y}\S_y(j)]\e_y^\top)\\
        ={}&\eta d\alpha(t)\E_{y}\E_{\bE}^{(R)}\sum_{i\not\in y}\qty(-\qty[\S_y(i)\qty(\alpha(t)\bar{s_-}-\alpha(t)\sum_{i\in y}\S_y(i)\bar{s_+}-\alpha(t)\sum_{i\not\in y}\S_y(i)\bar{s_-}+\frac{1}{q}\sum_{j\in y}\S_y(j))]\e_i^\top \e_y)\\
        ={}&\eta d\alpha(t)\E_{y}\E_{\bE}^{(R)}\sum_{i\not\in y}\qty(-\qty[\S_y(i)\qty(\alpha(t)\frac{(1-q\bar{s_+})}{T-q}-\alpha(t)qs_+\bar{s_+}-\alpha(t)(1-qs_+)\frac{(1-q\bar{s_+})}{T-q}+s_+)]\e_i^\top \e_y)\\
        ={}&\eta d\alpha(t)\E_{y}\E_{\bE}^{(R)}\sum_{i\not\in y}\qty(-\qty[\S_y(i)s_+\qty(1-\frac{q\alpha(t)(T\bar{s_+}-1)}{T-q})]\e_i^\top \e_y)\\
        \geq{}& -\eta \alpha(t)d\cdot\frac{\delta}{1-2\delta}\E_{y}\E_{\bE}^{(R)}\sum_{i\not\in y}\qty[\S_y(i)s_+\qty(1-\frac{q\alpha(t)(T\bar{s_+}-1)}{T-q})]\\
        ={}& -\eta d\alpha(t)\frac{\delta}{1-2\delta}\E_{y}\E_{\bE}^{(R)}(1-qs_+)\qty(1-\frac{q\alpha(t)(T\bar{s_+}-1)}{T-q})s_+
    \end{align*}
    The last line is due to $\sum_{i\not\in y}\S_y(i)=1-\sum_{i\in y}\S_y(i)=1-qs_+.$
    Thus we have
    $$\text{Term 2}\succeq -\frac{\eta d\alpha(t)\delta}{(1-2\delta)d_e}\E_{y}\E_{\bE}^{(R)}(1-qs_+)\qty(1-\frac{q\alpha(t)(T\bar{s_+}-1)}{T-q})s_+ \bI_{d_e}$$
    Similarly, we can also have the upper bound for Term 2.
    $$\text{Term 2}\preceq \frac{\eta d\alpha(t)\delta}{(1-2\delta)d_e}\E_{y}\E_{\bE}^{(R)}(1-qs_+)\qty(1-\frac{q\alpha(t)(T\bar{s_+}-1)}{T-q})s_+ \bI_{d_e}$$
    Combine two terms, we have the lower bound for $C(t)$'s dynamics:
    $$C(t+1)\geq C(t) + \eta \frac{1-3\delta}{1-2\delta}\cdot\frac{d\alpha(t)}{d_e}\E_{y}\E_{\bE}^{(R)}(1-qs_+)\qty(1-\frac{q\alpha(t)(T\bar{s_+}-1)}{T-q})s_+$$
    $$C(t+1)\leq C(t) + \eta \frac{1-\delta}{1-2\delta}\cdot\frac{d\alpha(t)}{d_e}\E_{y}\E_{\bE}^{(R)}(1-qs_+)\qty(1-\frac{q\alpha(t)(T\bar{s_+}-1)}{T-q})s_+$$
    By induction, we complete the proof.
\end{proof}

\paragraph{Remark.} Similar to the one-hot case, after proving the induction lemma about the evolving direction of $\W(t)$ and $\V(t)$, the optimization problem can be reduced to analyzing the two variable dynamics of $C(t)$ and $\alpha(t)$:
\begin{align*}
    C(t+1)&\approx C(t) + \frac{\eta d\alpha(t)}{d_e}\E_{y}\E_{\bE}^{(R)}(1-qs_+)\qty(1-\frac{q\alpha(t)(T\bar{s_+}-1)}{T-q})s_+\\
    \alpha(t+1)&=\alpha(t) + \eta\bar{s_+}\qty(1-q\alpha \bar{s_+}+\frac{\alpha(1-q\bar{s_+})^2}{(T-q)\bar{s_+}})
\end{align*}
Then the rest of the proof is to analyze the dynamics of $C(t)$ and $\alpha(t)$ and prove that $C(t)\to +\infty$ and $\alpha(t)\to 1$ eventually, and calculate the convergence time by estimating the trajectory of the two variable dynamical systems. One can refer to the main paper for proof ideas.
\begin{proof}[Proof of \Cref{thm: joint training stochastic pe}]
    After \Cref{Lemma: induction hypothesis for joint training stochastic PE} shows that $\V$ and $\W$ are always along the ground-truth direction: $\V(t) = \alpha(t)\randomVdirection,\W(t)=C(t)\randomWdirection$, the dynamics of the parameter matrices then can be characterized by two scalar variable $\alpha(t)$ and $C(t)$. Our update rules becomes 
\begin{align*}
    C(t+1)&\geq C(t) + \eta \frac{1-3\delta}{1-2\delta}\cdot\frac{d\alpha(t)}{d_e}\E_{y}\E_{\bE}^{(R)}(1-qs_+)\qty(1-\frac{q\alpha(t)(T\bar{s_+}-1)}{T-q})s_+\\
    C(t+1)&\leq C(t) + \eta \frac{1-\delta}{1-2\delta}\cdot\frac{d\alpha(t)}{d_e}\E_{y}\E_{\bE}^{(R)}(1-qs_+)\qty(1-\frac{q\alpha(t)(T\bar{s_+}-1)}{T-q})s_+\\
    \alpha(t+1)&=\alpha(t) + \eta\bar{s_+}\qty(1-q\alpha \bar{s_+}+\frac{\alpha(1-q\bar{s_+})^2}{(T-q)\bar{s_+}})
\end{align*}
    Along the trajectory, $s_+\leq \frac{1}{q}$ by its definition: since $\W$ is along $\randomWdirection$, all $i\in y$ has the same softmax probability $\S_y^{(t)}(i)$, and thus they cannot exceed the upper bound $1/q$. 
    
    Note that when $s_+$ is fixed, $\alpha(t)$ has a stationary point $\alpha^*(t) = \frac{(T-q)\bar{s_+}}{Tq\bar{s_+}^2-2q\bar{s_+}+1}$. With some calculation, we can rewrite the $\alpha$ dynamics into:
    $$\alpha(t+1) = \alpha(t) +\eta\bar{ s_+}\qty(1-\frac{\alpha(t)}{\alpha^*(t)}).$$
    Observe that when $\bar{s_+}\in(\frac{1}{T},\frac{1}{q}),\alpha^*(t)\geq 1,$ and it achieves maximum at $\bar{s_+}=\frac{1}{\sqrt{Tq}}$.

To characterize this limit above when $t\to \infty$, we need to prove the following two arguments:
    \begin{enumerate}
        \item $C(t)$ is non-decreasing for all $t\geq0$.
        \item $\alpha(t)$ goes through 2 phases: 
        
        \textbf{Phase I.} $\alpha$ monotonically grows to $1-0.1\sqrt{\frac{q(T-q)\epsilon}{dT}}$ at some time $t_1$.
        
        \textbf{Phase II.} $\alpha$ stays within an interval whose upper bound is close to $\alpha^*$ after time $t_1$: 
        \begin{equation*}
            \alpha(t)\in\qty[1-0.1\sqrt{\frac{q(T-q)\epsilon}{dT}}, \qty(1+\frac{(8Td-d_e)(1-q\bar{s_+})}{8Tdq\bar{s_+}(T\bar{s_+}-1)})\alpha^*(t)]\tag{IH2}
            \label{IH: alpha stays near alpha star with stochastic pe}
        \end{equation*}
    \end{enumerate}

    \textbf{Phase I.}
     In this phase, we inductively prove that both $\alpha(t), C(t)$ are non-decreasing. For $t=0$, by zero initialization, $C(0)=C(1)=0,\alpha(1)=\frac{\eta}{T}>0$. 

Suppose it holds for some $t<t_1$ before $\alpha$ reaches $1-0.1\sqrt{\frac{q(T-q)\epsilon}{dT}}$. Then for $\alpha(t+1)$, we have the update rule:
     \begin{align*}
         \alpha(t+1)&=\alpha(t) +\eta \bar{s_+}\qty(1-\frac{\alpha(t)}{\alpha^*(t)})\\
         &\geq \alpha(t) + \eta \frac{1}{T}(1-\frac{\alpha(t)}{\alpha^*(t)})\tag{Induction Hyp. $\bar{s_+}\geq \frac{1}{T}$}\\
        &\geq \alpha(t) + \eta \frac{1}{T}(1-{\alpha(t)})\tag{$\alpha^*\geq 1$}\\
        &\geq \alpha(t)+0.1\eta\sqrt{\frac{q(T-q)\epsilon}{dT^3}}.\tag{$\alpha\leq 1-0.1\sqrt{\frac{q(T-q)\epsilon}{dT}}$}
     \end{align*}
        So $\alpha$ is non-decreasing. Meanwhile, for $C(t+1)$:
     \begin{align*}
         C(t+1)&\geq C(t) + \eta \frac{1-3\delta}{1-2\delta}\cdot\frac{d\alpha(t)}{d_e}\E_{y}\E_{\bE}^{(R)}(1-qs_+)\qty(1-\frac{q\alpha(t)(T\bar{s_+}-1)}{T-q})s_+\\
         &\geq C(t) + \frac{7\eta}{8}\cdot\frac{\alpha d}{d_e}\E_{y}\E_{\bE}^{(R)}(1-qs_+)\qty(1-\frac{q(T\bar{s_+}-1)}{T-q})s_+\tag{$\alpha<1,(1-Ts_+)<0$}\\
         &=C(t) + \frac{7\eta}{8}\cdot\frac{\alpha d}{d_e}\E_{y}\E_{\bE}^{(R)}(1-qs_+)\frac{T-q+q-Tq\bar{s_+}}{T-q}s_+\\
         &=C(t)+ \frac{7\eta}{8}\cdot\frac{\alpha dT}{d_e(T-q)}\E_{y}\E_{\bE}^{(R)}(1-qs_+)(1-q\bar{s_+})s_+\geq C(t)
     \end{align*}
     so they are both non-decreasing. Then we need to upper bound the time $t_1$ for $\alpha(t)$ to reach $1-0.1\sqrt{\frac{q(T-q)\epsilon}{dT}}$: by the update above we have
     \begin{align*}
         \alpha(t+1)&=\alpha(t) +\eta \bar{s_+}\qty(1-\frac{\alpha(t)}{\alpha^*(t)})\\
         &\geq \alpha(t) + \eta \frac{1}{T}(1-\frac{\alpha(t)}{\alpha^*(t)})\tag{Induction Hyp. $\bar{s_+}\geq \frac{1}{T}$}\\
        &\geq \alpha(t) + \eta \frac{1}{T}(1-{\alpha(t)})\tag{$\alpha^*\geq 1$}\\
        \Rightarrow 1-\alpha(t+1)&\leq (1-\eta/T)(1-\alpha(t))\leq ...\leq (1-\eta/T)^{t}(1-\alpha(0)).
     \end{align*}
     Thus for $\alpha(t+1)\geq 1-0.1\sqrt{\frac{q(T-q)\epsilon}{dT}}$, it takes at most $O(\frac{T\log \frac{d}{\epsilon}}{\eta})$ iterations.

     \textbf{Phase II.} In this phase, we first consider the easiest case: $\bar{s_+}(t)< \frac{1}{\sqrt{Tq}}$. Under this condition, $\alpha^*(t+1)>\alpha^*(t)$, and using induction hypothesis 
     $$\alpha^*(t)-\alpha(t+1)=\alpha^*(t)-\alpha(t)-\frac{\eta \bar{s_+}(t)}{\alpha^*(t)}(\alpha^*(t)-\alpha(t)) = (1-\frac{\eta \bar{s_+}(t)}{\alpha^*(t)})(\alpha^*(t)-\alpha(t)) > 0,$$
     $\alpha(t+1)<\alpha^*(t+1)$ always holds, and the induction hypothesis holds for $t+1$.

    Then we only need to consider the case with $\bar{s_+}(t)\geq \frac{1}{\sqrt{Tq}}$. First, we check that within the induction hypothesis range, $C(t+1)\geq C(t)$.
    We know the following term in the gradient update of $C(t)$ should be greater than 0:
    \begin{align*}
        \qty(1-\frac{q(T\bar{s_+}-1)}{T-q}) 
        \geq{}&1-\qty(1+\frac{(8Td-d_e)(1-q\bar{s_+})}{8Tdq\bar{s_+}(T\bar{s_+}-1)})\alpha^*(t)\frac{q}{T-q}(T\bar{s_+}-1)\tag{$\alpha\leq \qty(1+\frac{(d-1)(1-qs_+)}{dqs_+(Ts_+-1)})\alpha^*(t),(1-Ts_+)<0$}\\
        \geq{}&\frac{d_e}{8Td}\frac{1-q\bar{s_+}}{Tq\bar{s_+}^2-2qs_++1}\geq 0.\tag{$\bar{s_+}\leq \frac{1}{q}$.}
    \end{align*}
    Therefore by the update rule:
    \begin{align*}
        C(t+1)&\geq C(t) + \eta \frac{1-3\delta}{1-2\delta}\cdot\frac{d\alpha(t)}{d_e}\E_{y}\E_{\bE}^{(R)}(1-qs_+)\qty(1-\frac{q\alpha(t)(T\bar{s_+}-1)}{T-q})s_+\\
         &\geq C(t) + \eta \frac{7d\alpha(t)}{8d_e}\E_{y}\E_{\bE}^{(R)}(1-qs_+)s_+\cdot \frac{d_e(1-q\bar{s_+})}{8Td(Tq\bar{s_+}^2-2q\bar{s_+}+1)}\geq C(t)
     \end{align*}
     Next, we first divide $\alpha(t)$'s possible range into two parts: $\alpha(t)\leq \qty(1+\frac{(4Td-d_e)(1-q\bar{s_+}(t))}{4dTq\bar{s_+}(t)(T\bar{s_+}(t)-1)})\alpha^*(t)$ and $\alpha(t)\in \qty[\qty(1+\frac{(4dT-d_e)(1-q\bar{s_+}(t))}{4dTq\bar{s_+}(t)(T\bar{s_+}(t)-1)})\alpha^*(t), \qty(1+\frac{(8dT-d_e)(1-q\bar{s_+}(t))}{8dTq\bar{s_+}(t)(T\bar{s_+}(t)-1)})\alpha^*(t)]$. For the first part, we prove the following statement \textbf{(S3)}:
     \begin{center}
         If $\alpha(t)\leq \qty(1+\frac{(4dT-d_e)(1-q\bar{s_+}(t))}{4dTq\bar{s_+}(t)(T\bar{s_+}(t)-1)})\alpha^*(t)$, the next step $$\alpha(t+1)\leq \qty(1+\frac{(8dT-d_e)(1-q\bar{s_+}(t+1))}{8dTq\bar{s_+}(t+1)(T\bar{s_+}(t+1)-1)})\alpha^*(t+1).$$
     \end{center}
    If \textbf{(S3)} is true, then we know once $\alpha(t)\leq \qty(1+\frac{(4dT-d_e)(1-q\bar{s_+}(t))}{4dTq\bar{s_+}(t)(T\bar{s_+}(t)-1)})\alpha^*(t)$, $\alpha(t+1)$ satisfy the induction hypothesis. After proving \textbf{(S3)}, the only part left is when $$\alpha(t)\in \left( \qty(1+\frac{(4dT-d_e)(1-q\bar{s_+}(t))}{4dTq\bar{s_+}(t)(T\bar{s_+}(t)-1)})\alpha^*(t),  \qty(1+\frac{(8dT-d_e)(1-q\bar{s_+}(t))}{8dTq\bar{s_+}(t)(T\bar{s_+}(t)-1)})\alpha^*(t) \right].$$
     
     We prove statement \textbf{(S3)} by proving 
     \begin{equation}
         \qty(1+\frac{(8dT-d_e)(1-q\bar{s_+}(t+1))}{8dTq\bar{s_+}(t+1)(T\bar{s_+}(t+1)-1)})\alpha^*(t+1)\geq \qty(1+\frac{(4dT-d_e)(1-q\bar{s_+}(t))}{4dTq\bar{s_+}(t)(T\bar{s_+}(t)-1)})\alpha^*(t)\tag{\textbf{S4}}
     \end{equation}
     When the inequality \textbf{(S4)} above is proved, then \textbf{(S3)} is proved. This is because: if $\alpha(t)<\alpha^*(t)$, then by update rule we have $$\alpha^*(t)-\alpha(t+1)=\alpha^*(t)-\alpha(t)-\frac{\eta \bar{s_+}(t)}{\alpha^*(t)}(\alpha^*(t)-\alpha(t)) = (1-\frac{\eta \bar{s_+}(t)}{\alpha^*(t)})(\alpha^*(t)-\alpha(t)) > 0,$$
     \begin{align*}
         \alpha(t+1) < \alpha^*(t) &\leq \qty(1+\frac{(4dT-d_e)(1-q\bar{s_+}(t))}{4dqT\bar{s_+}(t)(T\bar{s_+}(t)-1)})\alpha^*(t) \\&\leq \qty(1+\frac{(8dT-d_e)(1-q\bar{s_+}(t+1))}{8dqT\bar{s_+}(t+1)(T\bar{s_+}(t+1)-1)})\alpha^*(t+1)
     \end{align*}
     If $\alpha(t)\geq \alpha^*(t),$ then $\alpha(t+1)\leq \alpha(t)\leq \qty(1+\frac{(4dT-d_e)(1-q\bar{s_+}(t))}{4dqT\bar{s_+}(t)(T\bar{s_+}(t)-1)})\alpha^*(t)$, and therefore smaller than $\qty(1+\frac{(8dT-d_e)(1-q\bar{s_+}(t+1))}{8dqT\bar{s_+}(t+1)(T\bar{s_+}(t+1)-1)})\alpha^*(t+1)$. 
     
     Now we prove \textbf{(S4)} by expanding the $\bar{s_+}(t+1)$ using the update rule of $C(t)$. Denote $\Delta C(t):=C(t+1)-C(t).$ Since $\eta\leq \frac{d_e^2}{40d^2T}$, $\Delta C(t)<\frac{1}{5}$. Then we have
     \begin{align*}
         \bar{s_+}(t+1)&=\E_{y}\E_{\bE}^{(R)}\frac{1}{q+\sum_{i\not\in y}e^{-(C(t)+\Delta C(t))(1-\e_i^\top \e_y)}}\\
         &\leq \E_{y}\E_{\bE}^{(R)}\frac{1}{q+\sum_{i\not\in y}e^{-C(t)(1-\e_i^\top \e_y)}e^{\Delta C(t)(\e_i^\top \e_y-1)}}\\
         &\leq \E_{y}\E_{\bE}^{(R)}\frac{1}{q+\sum_{i\not\in y}e^{-C(t)(1-\e_i^\top \e_y)}(1-\Delta C(t)(1-\e_i^\top \e_y))}\\
         &\leq \E_{y}\E_{\bE}^{(R)}\left(\frac{1}{q+\sum_{i\not\in y}e^{-C(t)(1-\e_i^\top \e_y)}}\right.\\ 
         & \quad \quad \quad\left.+\frac{5}{4}\qty(\frac{1}{q+\sum_{i\not\in y}e^{-C(t)(1-\e_i^\top \e_y)}})^2\sum_{i\not\in y}e^{C(t)(\e_i^\top \e_y-1)}(1-\e_i^\top \e_y)\Delta C(t)\right)\tag{$\sum_{i\not\in y}e^{C(t)(\e_i^\top \e_y-1)}(1-\e_i^\top \e_y)\Delta C(t)<\frac{1}{5}$, due to $\eta\leq \frac{d_e^2}{40d^2T}$}\\
         &\leq \bar{s_+} + \frac{5}{4}\frac{1-\delta}{1-2\delta}\E_{y}\E_{\bE}^{(R)}s_+^2\qty(\frac{1}{s_+}-q)\Delta C(t)\tag{$\sum_{i\not\in y}e^{C(t)(\e_i^\top \e_y-1)}=\frac{1}{s_+}-q$}\\
         &=\bar{s_+} + \frac{45}{32}\E_{y}\E_{\bE}^{(R)}s_+\qty({1}-qs_+)\Delta C(t)\tag{$\delta<0.1$} \\
         &\leq \bar{s_+} + \frac{45}{32}\bar{s_+}\qty({1}-q\bar{s_+})\Delta C(t)\tag{$\E[{s_+}^2]\geq \E[{s_+}]^2$}
    \end{align*}

     Then we lower bound the decrement of $\alpha^*(t+1)$ and $\frac{(1-q\bar{s_+}(t+1))}{q\bar{s_+}(t+1)(T\bar{s_+}(t+1)-1)}$, respectively. Denote $\Delta s = \bar{s_+}(t+1)-\bar{s_+}(t)$.
     \begin{align*}
         \alpha^*(t+1)&=\frac{(T-q)\bar{s_+}(t+1)}{Tq\bar{s_+}(t+1)^2-2q\bar{s_+}(t+1)+1}\\
         &=\frac{(T-q)}{Tq\bar{s_+}(t+1)-2q+1/\bar{s_+}(t+1)}\\
         &\geq\frac{(T-q)}{Tq\bar{s_+}(t+1)-2q+1/\bar{s_+}(t+1)}\\
         &\geq \frac{(T-q)}{Tq\bar{s_+}(t)-2q+1/\bar{s_+}(t) + Tq\Delta s}\\
         &\geq \frac{(T-q)}{Tq\bar{s_+}(t)-2q+1/\bar{s_+}(t)} - \frac{(T-q)Tq\Delta s}{(Tq\bar{s_+}(t)-2q+1/\bar{s_+}(t))^2}\\
         &=\alpha^*(t)-\frac{(T-q)Tq\Delta s}{(Tq\bar{s_+}(t)-2q+1/\bar{s_+}(t))^2}\\
         &=\alpha^*(t)-\frac{Tq\bar{s_+}\Delta s}{(Tq\bar{s_+}^2(t)-2q\bar{s_+}(t)+1)}\alpha^*(t)
     \end{align*}
     \begin{align*}
         \frac{(1-q\bar{s_+}(t+1))}{q\bar{s_+}(t+1)(T\bar{s_+}(t+1)-1)}={}&\frac{1}{q\bar{s_+}(t+1)(T\bar{s_+}(t+1)-1)} - \frac{1}{(T\bar{s_+}(t+1)-1)}\\
         \geq{}& \frac{1}{q\bar{s_+}(t+1)(T\bar{s_+}(t+1)-1)} - \frac{1}{(T\bar{s_+}(t)-1)}\tag{$\bar{s_+}(t+1)\geq \bar{s_+}(t)$}\\
         ={}&\frac{1}{qT}\cdot \frac{1}{\bar{s_+}(t+1)}\cdot\frac{1}{\bar{s_+}(t+1)-\frac{1}{T}}- \frac{1}{(T\bar{s_+}(t)-1)}\\
         \geq{}& \frac{1}{qT}\qty(\frac{1}{\bar{s_+}(t)}-\frac{\Delta s}{\bar{s_+}^2(t)})\cdot\\&{}\qty(\frac{1}{\bar{s_+}(t)-\frac{1}{T}}-\frac{\Delta s}{(\bar{s_+}(t)-\frac{1}{T})^2})- \frac{1}{(T\bar{s_+}(t)-1)}\\
         \geq{}&\frac{(1-q\bar{s_+}(t))}{q\bar{s_+}(t)(T\bar{s_+}(t)-1)}-\frac{\Delta s}{qs_+(t)^2(Ts_+-1)}-\frac{\Delta s}{qs_+(t)(Ts_+-1)^2}\\
         ={}&\frac{(1-qs_+(t))}{q\bar{s_+}(t)(T\bar{s_+}(t)-1)}-\frac{\Delta s(2Ts_+-1)}{qs_+(t)^2(Ts_+-1)^2}
     \end{align*}
    Then plug in the original term, we have the lower bound for 
     \begin{align*}
         &\qty(1+\frac{(8dT-d_e)(1-q\bar{s_+}(t+1))}{8dTq\bar{s_+}(t+1)(T\bar{s_+}(t+1)-1)})\alpha^*(t+1)\\
         \geq{}&\qty(1+\frac{(4dT-d_e)(1-q\bar{s_+}(t))}{4dq\bar{s_+}(t)(T\bar{s_+}(t)-1)})\alpha^*(t)+\frac{d_e(1-q\bar{s_+}(t))}{8dTq\bar{s_+}(t)(T\bar{s_+}(t)-1)}\alpha^*(t)\\
         &-\qty(1+\frac{(8dT-d_e)(1-q\bar{s_+}(t+1))}{8dTq\bar{s_+}(t+1)(T\bar{s_+}(t+1)-1)})\frac{Tq\bar{s_+}\Delta s}{(Tq\bar{s_+}^2(t)-2q\bar{s_+}(t)+1)}\alpha^*(t)\\
         &-\frac{\Delta s(2T\bar{s_+}-1)}{q\bar{s_+}(t)^2(T\bar{s_+}-1)^2}\alpha^*(t)
     \end{align*}
     Since $\frac{(8dT-d_e)(1-q\bar{s_+}(t+1))}{8dTq\bar{s_+}(t+1)(T\bar{s_+}(t+1)-1)}\leq \frac{8dT-d_e}{8dT}\leq 1$ when $s_+\geq \frac{1}{\sqrt{Tq}}$, we have
    \begin{align*}
         &\qty(1+\frac{(8dT-d_e)(1-q\bar{s_+}(t+1))}{8dTq\bar{s_+}(t+1)(T\bar{s_+}(t+1)-1)})\alpha^*(t+1)\\
         \geq{}&\qty(1+\frac{(4dT-d_e)(1-q\bar{s_+}(t))}{4dq\bar{s_+}(t)(T\bar{s_+}(t)-1)})\alpha^*(t)+\frac{d_e(1-q\bar{s_+}(t))}{8dTq\bar{s_+}(t)(T\bar{s_+}(t)-1)}\alpha^*(t)\\
         &-\frac{2Tq\bar{s_+}\Delta s}{(Tq\bar{s_+}^2(t)-2q\bar{s_+}(t)+1)}\alpha^*(t)-\frac{\Delta s(2T\bar{s_+}-1)}{q\bar{s_+}(t)^2(T\bar{s_+}-1)^2}\alpha^*(t)\tag{\textbf{***}}
         \label{ineq: upper bound dynamics 2}
     \end{align*}
    Then we need to prove that (here $\bar{s_+}:= \bar{s_+}(t)$) to show \textbf{(S4)}.
     \begin{align*}
         \frac{d_e(1-q\bar{s_+}(t))}{8dTq\bar{s_+}(t)(T\bar{s_+}(t)-1)}\geq{}& \frac{2Tq\bar{s_+}\Delta s}{(Tq\bar{s_+}^2(t)-2q\bar{s_+}(t)+1)}+\frac{\Delta s(2T\bar{s_+}-1)}{q\bar{s_+}(t)^2(T\bar{s_+}-1)^2}\\
         \Longleftrightarrow{}\frac{d_e(1-q\bar{s_+})}{8dTq(T\bar{s_+}-1)}\geq{}& \frac{2Tq\bar{s_+}^2\Delta s}{(Tq\bar{s_+}^2-2q\bar{s_+}+1)}+\frac{\Delta s(2T\bar{s_+}-1)}{q\bar{s_+}(T\bar{s_+}-1)^2}
     \end{align*}
      We have that the right hand side has the following upper bound ($T\gg q$):
     \begin{align*}
         &\frac{2Tq\bar{s_+}^2\Delta s}{(Tq\bar{s_+}^2-2q\bar{s_+}+1)}+\frac{\Delta s(2T\bar{s_+}-1)}{q\bar{s_+}(T\bar{s_+}-1)^2}\\
         \leq{}&\frac{45Tq\bar{s_+}^3\qty({1}-q\bar{s_+})\Delta C(t)}{16(Tq\bar{s_+}^2-2q\bar{s_+}+1)}+\frac{45(2T\bar{s_+}-1)\bar{s_+}\qty({1}-q\bar{s_+})\Delta C(t)}{32q\bar{s_+}(T\bar{s_+}-1)^2}\tag{$\Delta s\leq \frac{45}{32}\bar{s_+}\qty({1}-q\bar{s_+})\Delta C(t)$}\\
         \leq{}&\frac{45Tq\bar{s_+}^3\qty({1}-q\bar{s_+})\Delta C(t)}{16q\bar{s_+}(T\bar{s_+}-1)}+\frac{135\bar{s_+}\qty({1}-q\bar{s_+})\Delta C(t)}{32q\bar{s_+}(T\bar{s_+}-1)}\tag{$s_+\leq 1/q, T\geq 4q$}\\
         \leq{}&\frac{45Tq\bar{s_+}^2\qty({1}-q\bar{s_+})\Delta C(t)}{16q(T\bar{s_+}-1)}+\frac{135\qty({1}-q\bar{s_+})\Delta C(t)}{32q(T\bar{s_+}-1)}      
     \end{align*}
         Let $\alpha=(1+\gamma \frac{1-q\bar{s_+}}{q\bar{s_+}(T\bar{s_+}-1)})\alpha^*,$ then we have the upper bound for the update
     \begin{align*}
         \Delta C(t)&\leq \eta \frac{1-\delta}{1-2\delta}\cdot\frac{d\alpha(t)}{d_e}\E_{y}\E_{\bE}^{(R)}(1-qs_+)\qty(1-\frac{q\alpha(t)(T\bar{s_+}-1)}{T-q})s_+\\
         &\leq \frac{9}{8}\eta (1-\gamma)(1+\gamma \frac{1-q\bar{s_+}}{q\bar{s_+}(T\bar{s_+}-1)}) \frac{d\alpha^*}{d_e(Tq\bar{s_+}^2-2q\bar{s_+}+1)}\bar{s_+}(1-q\bar{s_+})^2\tag{$\E[{s_+}^2]\geq \E[{s_+}]^2,\delta<0.1$}\\
         &\leq \eta(1-\gamma^2)\frac{9d(T-q)\bar{s_+}^2(1-q\bar{s_+})^2}{8d_e(Tq\bar{s_+}^2-2q\bar{s_+}+1)^2}\tag{UP1}\\
         &\leq \eta(1-\gamma^2)\frac{9d(T-q)(1-q\bar{s_+})^2}{8d_e(Tq\bar{s_+}-2q+1/\bar{s_+})^2}\\
         &\leq \eta(1-\gamma^2)\frac{9d(T-q)(1-\frac{\sqrt q}{\sqrt T})^2}{8d_e(2\sqrt{Tq}-2q)^2}\tag{$s_+\geq 1/\sqrt{Tq}$.}
         \\&\leq \frac{9\eta d}{32qd_e}.\tag{UP2}
     \end{align*}
     Since $\eta\leq \frac{d_e}{40d^2T}$, plug the upper bounds (UP1), (UP2) for $\Delta C(t)$ back to the two terms in \textbf{(***)} respectively:
     \begin{align*}
         &\frac{2Tq\bar{s_+}^2\Delta s}{(Tq\bar{s_+}^2-2q\bar{s_+}+1)}+\frac{\Delta s(2T\bar{s_+}-1)}{q\bar{s_+}(T\bar{s_+}-1)^2}\\
         \leq{}&\frac{45Tq\bar{s_+}^2\qty({1}-q\bar{s_+})\Delta C(t)}{16q(T\bar{s_+}-1)}+\frac{135\qty({1}-q\bar{s_+})\Delta C(t)}{32q(T\bar{s_+}-1)}\\
         \leq{}&\frac{405dT(T-q)\bar{s_+}^4\qty({1}-q\bar{s_+})^3}{128(T\bar{s_+}-1)d_e(Tq\bar{s_+}^2-2q\bar{s_+}+1)^2}+\frac{135\qty({1}-q\bar{s_+})\Delta C(t)}{32q(T\bar{s_+}-1)}\cdot \frac{9\eta d}{32q}\\
         \leq{}&\frac{405\eta dT(1-q\bar{s_+})}{128q(T\bar{s_+}-1)(T-q)qd_e}+\frac{1215\qty({1}-q\bar{s_+})}{1024q(T\bar{s_+}-1)}\cdot \frac{\eta d}{qd_e}\\
         \leq{}&\frac{405\eta d(1-q\bar{s_+})}{128q(T\bar{s_+}-1)d_e}+\frac{1215\qty({1}-q\bar{s_+})}{1024q(T\bar{s_+}-1)}\cdot \frac{\eta d}{qd_e}\tag{$q\geq 2, T\geq 4q$}\\
         \leq{}&\frac{d_e(1-q\bar{s_+})}{8dTq(T\bar{s_+}-1)}.
     \end{align*}
     Therefore, we proved the inequality. And therefore, \textbf{(S2)} is proved, which also leads to \textbf{(S1)}. 

     Finally, we consider $\alpha(t)\in \left( \qty(1+\frac{(4dT-d_e)(1-q\bar{s_+}(t))}{4dTq\bar{s_+}(t)(T\bar{s_+}(t)-1)})\alpha^*(t),  \qty(1+\frac{(8dT-d_e)(1-q\bar{s_+}(t))}{8dTq\bar{s_+}(t)(T\bar{s_+}(t)-1)})\alpha^*(t) \right]$.
     Now, since by update rule,$$\alpha(t+1) = \alpha(t) + \eta \bar{s_+}\qty(1-\frac{\alpha(t)}{\alpha^*(t)})\leq \qty(1+\frac{(8dT-d_e)(1-q\bar{s_+}(t))}{8dTq\bar{s_+}(t)(T\bar{s_+}(t)-1)})\alpha^*(t) - \frac{\eta(4dT-d_e)(1-q\bar{s_+}(t))}{4dTq(T\bar{s_+}(t)-1)}$$
     Therefore, it's sufficient to prove that 
     \begin{align*}
         &\qty(1+\frac{(8dT-d_e)(1-q\bar{s_+}(t+1))}{8dTq\bar{s_+}(t+1)(T\bar{s_+}(t+1)-1)})\alpha^*(t+1)\\\geq{}& \qty(1+\frac{(8dT-d_e)(1-q\bar{s_+}(t))}{8dTq\bar{s_+}(t)(T\bar{s_+}(t)-1)})\alpha^*(t) - \frac{\eta(4dT-d_e)(1-q\bar{s_+}(t))}{4dTq(T\bar{s_+}(t)-1)}
     \end{align*}
     Note \Cref{ineq: upper bound dynamics 2} gives the lower bound for the left hand side:
         \begin{align*}
         &\qty(1+\frac{(8dT-d_e)(1-q\bar{s_+}(t+1))}{8dTq\bar{s_+}(t+1)(T\bar{s_+}(t+1)-1)})\alpha^*(t+1)\\
         \geq{}&\qty(1+\frac{(8dT-d_e)(1-q\bar{s_+}(t))}{8dTq\bar{s_+}(t)(T\bar{s_+}(t)-1)})\alpha^*(t)\\
         &-\frac{2Tq\bar{s_+}\Delta s}{(Tq\bar{s_+}^2(t)-2q\bar{s_+}(t)+1)}\alpha^*(t)-\frac{\Delta s(2T\bar{s_+}-1)}{q\bar{s_+}(t)^2(T\bar{s_+}-1)^2}\alpha^*(t)\tag{\textbf{\#\#}}
         \label{ineq: upper bound @1}
     \end{align*}
     Yet when $\alpha(t)\geq \qty(1+\frac{(4dT-d_e)(1-q\bar{s_+}(t))}{4dTq\bar{s_+}(t)(T\bar{s_+}(t)-1)})\alpha^*(t)$, we have a better upper bound for $\Delta C(t)$:
     \begin{align*}
         \Delta C(t)&\leq \eta \frac{1-\delta}{1-2\delta}\cdot\frac{d\alpha(t)}{d_e}\E_{y}\E_{\bE}^{(R)}(1-qs_+)\qty(1-\frac{q\alpha(t)(T\bar{s_+}-1)}{T-q})s_+\\
         &\leq \frac{9}{8}\eta (1-\gamma)(1+\gamma \frac{1-q\bar{s_+}}{q\bar{s_+}(T\bar{s_+}-1)}) \frac{d\alpha^*}{d_e(Tq\bar{s_+}^2-2q\bar{s_+}+1)}\bar{s_+}(1-q\bar{s_+})^2\tag{$\E[{s_+}^2]\geq \E[{s_+}]^2,\delta<0.1$}\\
         &\leq \eta(1-\gamma^2)\frac{9d(T-q)\bar{s_+}^2(1-q\bar{s_+})^2}{8d_e(Tq\bar{s_+}^2-2q\bar{s_+}+1)^2}\\
         &\leq \eta\qty(1-\qty(\frac{4dT-d_e}{4dT})^2)\frac{9d(T-q)\bar{s_+}^2(1-q\bar{s_+})^2}{8d_e(Tq\bar{s_+}^2-2q\bar{s_+}+1)^2}\\
         &\leq \frac{\eta}{2dT/d_e}\cdot\frac{9d(T-q)\bar{s_+}^2(1-q\bar{s_+})^2}{8d_e(Tq\bar{s_+}^2-2q\bar{s_+}+1)^2}\\
         &=\frac{9\eta(T-q)\bar{s_+}^2(1-q\bar{s_+})^2}{16T(Tq\bar{s_+}^2-2q\bar{s_+}+1)^2}
     \end{align*}
     Then we need to bound both terms in \Cref{ineq: upper bound @1} (for simplicity denote $s_+$ as $s_+(t)$):
     \begin{align*}
         &\frac{2Tq\bar{s_+}\Delta s}{(Tq\bar{s_+}^2(t)-2q\bar{s_+}(t)+1)}\alpha^*(t)
         =\frac{2Tq(T-q)\bar{s_+}^2\Delta s}{(Tq\bar{s_+}^2(t)-2q\bar{s_+}(t)+1)^2}\\
         \leq{}&\frac{45Tq(T-q)\bar{s_+}^3\qty({1}-q\bar{s_+})\Delta C(t)}{16(Tq\bar{s_+}^2(t)-2q\bar{s_+}(t)+1)^2}\\
         \leq{}&\frac{405\eta q(T-q)^2\bar{s_+}^5\qty({1}-q\bar{s_+})^3}{256(Tq\bar{s_+}^2(t)-2q\bar{s_+}(t)+1)^4}\tag{Plug in $\Delta C(t)$ bound.}\\
         \leq{}&\frac{405\eta }{256}\cdot (T-q)^2\cdot\frac{(1-q\bar{s_+})}{q\bar{s_+}(T\bar{s_+}-1)}\cdot\frac{q\bar{s_+}^5(1-q\bar{s_+})^2}{(Tq\bar{s_+}^2-2q\bar{s_+}+1)^3}\tag{$s_+\leq \frac{1}{q}$}\\
         ={}&\frac{405\eta }{256}\cdot (T-q)^2\cdot\frac{(1-q\bar{s_+})}{q(T\bar{s_+}-1)}\cdot\frac{q(1-q\bar{s_+})^2}{(Tq\bar{s_+}^2-2q\bar{s_+}+1)(Tq-2q/\bar{s_+}+1/\bar{s_+}^2)^2}\\
         \leq{}&\frac{405\eta }{256}\cdot (T-q)^2\cdot\frac{(1-q\bar{s_+})}{q(T\bar{s_+}-1)}\cdot\frac{q(1-q\bar{s_+})^2}{(Tq\bar{s_+}^2-2q\bar{s_+}+1)(Tq-q^2)^2} \tag{$s_+\leq \frac{1}{q}$}\\
         \leq{}&\frac{405\eta }{256}\cdot\frac{(1-q\bar{s_+})}{q(T\bar{s_+}-1)}\cdot\frac{(1-q\bar{s_+})^2}{(Tq\bar{s_+}^2-2q\bar{s_+}+1)q}\\
         \leq{}&\frac{405\eta }{256}\cdot\frac{(1-q\bar{s_+})}{q(T\bar{s_+}-1)}\cdot\frac{1}{2q}\tag{$s_+\geq \frac{1}{\sqrt{Tq}}$}\\
         \leq{}&\frac{405\eta }{1024}\cdot\frac{(1-q\bar{s_+})}{q(T\bar{s_+}-1)}\tag{$s_+\geq \frac{1}{\sqrt{Tq}}$}
     \end{align*}
     \begin{align*}
         &\frac{\Delta s(2T\bar{s_+}-1)}{q\bar{s_+}(t)^2(T\bar{s_+}-1)^2}\alpha^*(t)\\
         \leq{}&\frac{(2T\bar{s_+}-1)}{q\bar{s_+}(t)^2(T\bar{s_+}-1)^2}\alpha^*(t)\cdot \frac{45}{32}\bar{s_+}\qty({1}-q\bar{s_+})\Delta C(t)\tag{Plug in $\Delta s$}\\
         \leq {}&\frac{135(T-q)\qty({1}-q\bar{s_+})}{32q(T\bar{s_+}-1)(Tq\bar{s_+}^2-2q\bar{s_+}+1)}\cdot \frac{9\eta (T-q)\bar{s_+}^2(1-q\bar{s_+})^2}{16T(Tq\bar{s_+}^2-2q\bar{s_+}+1)^2}\tag{Plug in $\Delta C(t)$}\\
         ={}&\frac{1215\eta (T-q)^2\bar{s_+}^2\qty({1}-q\bar{s_+})^3}{512qT(T\bar{s_+}-1)(Tq\bar{s_+}^2-2q\bar{s_+}+1)^3}\\
         \leq{}&\frac{1215\eta }{512}\cdot \frac{(1-q\bar{s_+})}{q(T\bar{s_+}-1)}\frac{(1-q\bar{s_+})^2}{(Tq\bar{s_+}^2-2q\bar{s_+}+1)^2(Tq-2q/\bar{s_+}+1/\bar{s_+}^2)}\\
         \leq{}&\frac{1215\eta }{4096}\cdot\frac{(1-q\bar{s_+})}{q(T\bar{s_+}-1)}.
     \end{align*}     
     Therefore, by \Cref{ineq: upper bound @1} we have
     \begin{align*}
         &\qty(1+\frac{(8dT-d_e)(1-q\bar{s_+}(t+1))}{8dTq\bar{s_+}(t+1)(T\bar{s_+}(t+1)-1)})\alpha^*(t+1)\\
         \geq{}&\qty(1+\frac{(8dT-d_e)(1-q\bar{s_+}(t))}{8dTq\bar{s_+}(t)(T\bar{s_+}(t)-1)})\alpha^*(t)\\
         &-\frac{2Tq\bar{s_+}\Delta s}{(Tq\bar{s_+}^2(t)-2q\bar{s_+}(t)+1)}\alpha^*(t)-\frac{\Delta s(2T\bar{s_+}-1)}{q\bar{s_+}(t)^2(T\bar{s_+}-1)^2}\alpha^*(t)\\
         \geq{}&\qty(1+\frac{(8dT-d_e)(1-q\bar{s_+}(t))}{8dTq\bar{s_+}(t)(T\bar{s_+}(t)-1)})\alpha^*(t)-\qty(\frac{405}{1024}+\frac{1215}{4096})\frac{\eta(1-qs_+)}{q(Ts_+-1)}\\
         \geq{}&\qty(1+\frac{(8dT-d_e)(1-q\bar{s_+}(t))}{8dTq\bar{s_+}(t)(T\bar{s_+}(t)-1)})\alpha^*(t)-\frac{\eta(4dT-d_e)(1-q\bar{s_+}(t))}{4dTq(T\bar{s_+}(t)-1)}\tag{$dT\geq d_e$.}\\
         \geq{}&\alpha(t+1).
     \end{align*}
     Therefore, we finish the induction proof for (\ref{IH: alpha stays near alpha star with stochastic pe}). 

     With the induction hypothesis, we can analyze the upper bound of convergence time. For \textbf{Phase I}, we have the lower bound for $\Delta C(t)$ (note that $1/2<\alpha < 1$):
     \begin{align*}
         \Delta C(t+1)&\geq \eta \frac{1-3\delta}{1-2\delta}\cdot\frac{d\alpha(t)}{d_e}\E_{y}\E_{\bE}^{(R)}(1-qs_+)\qty(1-\frac{q\alpha(t)(T\bar{s_+}-1)}{T-q})s_+\\
         &= \frac{7\eta}{8}\cdot\frac{\alpha d}{d_e}\E_{y}\E_{\bE}^{(R)}(1-qs_+)\frac{T-q+q-Tq\bar{s_+}}{T-q}s_+\\
         &= \frac{7\eta dT}{16d_e(T-q)}\E_{y}\E_{\bE}^{(R)}(1-qs_+)(1-q\bar{s_+})s_+
     \end{align*}
     And for \textbf{Phase II}, we have the lower bound since $\alpha\leq \qty(1+\frac{(8Td-d_e)(1-q\bar{s_+})}{8Tdq\bar{s_+}(T\bar{s_+}-1)})\alpha^*(t)$:
     \begin{align*}
         \Delta C(t) &\geq \eta (1-\frac{8Td-d_e}{8Td})\qty(1+\frac{(8Td-d_e)(1-q\bar{s_+})}{8Tdq\bar{s_+}(T\bar{s_+}-1)})\alpha^*(t)\cdot \frac{7d\E_{y}\E_{\bE}^{(R)}(1-qs_+)(1-q\bar{s_+})s_+}{8d_e(Tq\bar{s_+}^2-2q\bar{s_+}+1)}\\
         &\geq \frac{\eta d_e}{8dT}\cdot \frac{7d(T-q)\bar{s_+}\E_{y}\E_{\bE}^{(R)}(1-qs_+)(1-q\bar{s_+})s_+}{8d_e(Tq\bar{s_+}^2-2q\bar{s_+}+1)^2}\\
         &\geq \frac{7\eta(T-q)\bar{s_+}\E_{y}\E_{\bE}^{(R)}(1-qs_+)(1-q\bar{s_+})s_+}{64T(Tq\bar{s_+}^2-2q\bar{s_+}+1)^2}
     \end{align*}

    Then we also divide the training trajectory into two stages as in \citet{huang2023context}: in the first stage from $(0,t_1]$, all possible $s_+$ grow to $1/2q$. In the second stage $(t_1,t_2]$, $C(t)$ grows large enough s.t. $\bar{s_+}\geq \frac{1}{q}-\frac{1}{q}\sqrt{\frac{q(T-q)\epsilon}{dT}}$.
    
    For the first stage, it's necessary that $C(t)\geq \frac{1-2\delta}{1-3\delta}\log\frac{T-q}{q}$. This is because by \Cref{Lemma: Estimation for softmax vector with W}, we have:
    \begin{align*}
          \frac{1}{q+(T-q)e^{-\frac{1-3\delta}{1-2\delta}C}} &\leq s_+\leq \frac{1}{q+(T-q)e^{-\frac{1-\delta}{1-2\delta}C}},   
    \end{align*}
    To make any $s_+(\bE)$ reaches this threshold $1/2q$, $C(t_1)\geq \frac{1-2\delta}{1-3\delta}\log\frac{T-q}{q}$ is necessary. At this time $t_1$, all $s_+\leq 1/2q$, which means the expectation $\bar{s_+}\leq 1/2q.$ 
    At this iteration, we can also upper bound all the attention score:
    $$ s_+\leq \frac{1}{q+(T-q)e^{-\frac{1-\delta}{1-2\delta}C(t)}}\leq \frac{1}{q+(T-q)^{-\frac{2\delta}{1-3\delta}}q^{\frac{1-\delta}{1-3\delta}}}$$
    That means
    \begin{align*}
        (1-qs_+) &\geq \qty(1-q\cdot\frac{1}{q+(T-q)^{-\frac{2\delta}{1-3\delta}}q^{\frac{1-\delta}{1-3\delta}}} )\\
        &= \qty(\frac{q^{\frac{2\delta}{1-3\delta}}}{(T-q)^{\frac{2\delta}{1-3\delta}}+q^{\frac{2\delta}{1-3\delta}}})\geq \qty(\frac{q}{T})^{\frac{2\delta}{1-3\delta}} 
    \end{align*}
    Since all $s_+$ satisfy the lower bound, the expectation $\bar{s_+}$ also has this lower bound. Therefore
    during this stage, by \textbf{Phase I} lower bound for $\Delta C(t)$ we have:
    \begin{align*}
        \Delta C(t)&\geq \frac{7\eta dT}{16d_e(T-q)}\E_{y}\E_{\bE}^{(R)}(1-qs_+)(1-q\bar{s_+})s_+\\
        &\geq \frac{7\eta d}{16d_e(T-q)}\cdot\qty(\frac{q}{T})^{\frac{4\delta}{1-3\delta}}  \tag{$s_+\geq \frac{1}{T},(1-qs_+) \geq \qty(\frac{q}{T})^{\frac{2\delta}{1-3\delta}} $}
    \end{align*}
    And for the \textbf{Phase II} lower bound, we have:
    \begin{align*}
         \Delta C(t) 
         &\geq \frac{7\eta(T-q)\bar{s_+}\E_{y}\E_{\bE}^{(R)}(1-qs_+)(1-q\bar{s_+})s_+}{64T(Tq\bar{s_+}^2-2q\bar{s_+}+1)^2}\\
         &\geq \frac{7\eta(T-q)\bar{s_+}^2 \qty(\frac{q}{T})^{\frac{4\delta}{1-3\delta}}}{64T(Tq\bar{s_+}^2-2q\bar{s_+}+1)^2}\\
         &\geq \frac{7\eta(T-q) \qty(\frac{q}{T})^{\frac{4\delta}{1-3\delta}}}{64T(Tq-2q/\bar{s_+}+1/\bar{s_+}^2)(T/q-1)}\\
         &\geq \frac{7\eta \qty(\frac{q}{T})^{\frac{4\delta}{1-3\delta}}}{64T(T-q)}
     \end{align*}
     Since $T\gg d_e =\Theta(q\log T)$, $\Delta C(t) \geq \frac{7\eta \qty(\frac{q}{T})^{\frac{4\delta}{1-3\delta}}}{64T(T-q)}$ through all the first stage. Therefore, for $C(t)$ to reach $\Theta(\log \frac{T-q}{q})$, it takes at most $O\qty({\log \frac{T-q}{q}} /{\frac{7\eta \qty(\frac{q}{T})^{\frac{4\delta}{1-3\delta}}}{64T(T-q)}})=\Tilde{O}(T^{\frac{2-2\delta}{1-3\delta}})$ iterations.

    For the second stage, we need $C(t_2)\geq \frac{1-2\delta}{1-3\delta}\log(\frac{T-q}{q}\sqrt{\frac{Td}{(T-q)q\epsilon}})$ so that $\bar{s_+}\geq \frac{1}{q}-\frac{1}{q}\sqrt{\frac{q(T-q)\epsilon}{dT}}$. Since all $s_+\geq \frac{1}{2q}$ during this period, we can lower bound the increment:
    \begin{align*}
         \Delta C(t) 
         &\geq \frac{7\eta(T-q)\bar{s_+}\E_{y}\E_{\bE}^{(R)}(1-qs_+)(1-q\bar{s_+})s_+}{64T(Tq\bar{s_+}^2-2q\bar{s_+}+1)^2}\\
         &\geq \frac{7\eta (T-q)\cdot \frac{1}{2q}\E_{y}\E_{\bE}^{(R)}(1-qs_+)(1-q\bar{s_+})\cdot\frac{1}{2q}}{64T(\frac{T}{q}-1)^2}\\
         &\geq \frac{7\eta }{128T(T-q)}\E_{y}(1-q\bar{s_+})^2\\
         &\geq \frac{7\eta q\epsilon}{128dT^2}\tag{Before $t_2$, $\bar{s_+}\leq \frac{1}{q}-\frac{1}{q}\sqrt{\frac{q(T-q)\epsilon}{dT}}$}
     \end{align*}
    Hence it takes at most $t_2\leq O(\frac{1-2\delta}{1-3\delta}\log(\frac{T-q}{q}\sqrt{\frac{Td}{2(T-q)q\epsilon}}) / \frac{7\eta q\epsilon}{128dT^2})=\tilde{O}(\frac{dT^2}{\eta \epsilon})$ iterations to converge.

    Finally, we check that if $\bar{s_+}\geq\frac{1}{q}-\frac{1}{q}\sqrt{\frac{q(T-q)\epsilon}{dT}}$, then the loss is smaller than $\epsilon$. This part of proof is the similar to \Cref{thm: joint training one hot}\footnote{By `similar', one can replace all the $s_+$ in the proof of \Cref{thm: joint training one hot} with $\bar{s_+}$ here.}. By \Cref{Lemma: stochastic pe loss expression}:
    \begin{align*}
        \mathcal{L}(\bm{\theta}(t)) &=\frac{1}{2}\E\qty[d\left\|\frac{1}{q}\Y-\alpha(t)\E_{\bE}^{(R)}[\S_y]\right\|^2]\\
        &=\frac{d}{2(T-q)}\qty((T-q)q\qty(\alpha(t)\bar{s_+} - \frac{1}{q})^2 + {\alpha(t)^2(1-q\bar{s_+})^2})
    \end{align*}
    While $\alpha(t)\in\qty[1-0.1\sqrt{\frac{q(T-q)\epsilon}{dT}},  \qty(1+\frac{(8Td-d_e)(1-q\bar{s_+})}{8Tdq\bar{s_+}(T\bar{s_+}-1)})\alpha^*(t)]:=[\alpha_1,\alpha_2]$, the loss value is upper bounded by $$\max_{j\in\{1,2\}}\frac{d}{2(T-q)}\qty((T-q)q\qty(\alpha_j \bar{s_+} - \frac{1}{q})^2 + {\alpha_j^2(1-q\bar{s_+})^2})$$

    For $\alpha_1 = 1-0.1\sqrt{\frac{q(T-q)\epsilon}{dT}}$, we have 
    \begin{align*}
        \mathcal{L}(\bm{\theta}(t))={}&\frac{d}{2(T-q)}\qty((T-q)q\qty(\alpha(t)\bar{s_+} - \frac{1}{q})^2 + {\alpha(t)^2(1-q\bar{s_+})^2})\\
        \leq{}&\frac{d}{2(T-q)}\qty(\frac{T-q}{q}\qty(q\alpha(t)\bar{s_+} - 1)^2 + {(1-q\bar{s_+})^2})\\
        \leq{}&\frac{d}{2(T-q)}\qty(\frac{T-q}{q}\qty(q\bar{s_+} - 1+0.1q\bar{s_+}\sqrt{\frac{q(T-q)\epsilon}{dT}})^2 + {(1-q\bar{s_+})^2})\\
        \leq{}&\frac{d}{2(T-q)}\qty(\frac{T-q}{q}\cdot \qty((1-q\bar{s_+})^2 +0.2\sqrt{\frac{q(T-q)\epsilon}{dT}}(1-q\bar{s_+})+0.01\cdot\frac{q(T-q)\epsilon}{dT})+ (1-q\bar{s_+})^2)\\
        \leq{}&\frac{d}{2(T-q)}\qty(\frac{T-q}{q}\cdot \qty((1-q\bar{s_+})^2 +0.2\sqrt{\frac{q(T-q)\epsilon}{dT}}\cdot\sqrt{\frac{q(T-q)\epsilon}{dT}}+0.01\cdot\frac{q(T-q)\epsilon}{dT}))\\
        {}& + \frac{d}{2(T-q)}(1-q\bar{s_+})^2\\
        \leq{}&\frac{d}{2(T-q)}\frac{T}{q} (1-q\bar{s_+})^2 + 0.21\epsilon\leq \epsilon.
    \end{align*}
    For $\alpha_2 = \qty(1+\frac{(8Td-d_e)(1-q\bar{s_+})}{8Tdq\bar{s_+}(T\bar{s_+}-1)})\alpha^*(t)$, denote $\Delta \alpha=\alpha_2-\alpha^*$. Then $\Delta \alpha \bar{s_+}\leq \frac{2\sqrt{\frac{q(T-q)\epsilon}{dT}}}{T}\alpha^*$. Also, for $\alpha^*$ we have a upper bound (using $\epsilon\leq \frac{dT}{100(T-q)q})$):
    \begin{align*}
        \alpha^*=\frac{T-q}{Tq\bar{s_+}-2q+\frac{1}{\bar{s_+}}}\leq \frac{1}{q\bar{s_+}}\leq \frac{1}{1-\sqrt{\frac{q(T-q)\epsilon}{dT}}}\leq 1+\frac{6}{5}\sqrt{\frac{q(T-q)\epsilon}{dT}}\leq \frac{28}{25}.
    \end{align*}
    Therefore, the loss can be bounded by
    \begin{align*}
        \mathcal{L}(\bm{\theta}(t))={}&\frac{d}{2(T-q)}\qty((T-q)q\qty(\alpha_2\bar{s_+} - \frac{1}{q})^2 + {\alpha_2^2(1-q\bar{s_+})^2})\\
        ={}&\frac{d}{2(T-q)}\qty((T-q)q\qty(\alpha^*\bar{s_+} - \frac{1}{q})^2 + {\alpha^*}^2(1-q\bar{s_+})^2)\\
        +&\frac{d}{2(T-q)}\qty(-(T-q)\frac{4\sqrt{\frac{q(T-q)\epsilon}{dT}}}{T}\alpha^*(1-q\bar{s_+}\alpha^*) + \frac{4q^2(T-q)^2\epsilon}{dT^3}{\alpha^*}^2)\\
        +&\frac{d}{2(T-q)} \qty(\frac{4\sqrt{\frac{q(T-q)\epsilon}{dT}}}{T}{\alpha^*}^2(1-q\bar{s_+})^2+\frac{4q(T-q)\epsilon}{dT^3}{\alpha^*}^2(1-q\bar{s_+})^2)\\
        ={}&\frac{d}{2(T-q)}\qty((T-q)q\qty(\alpha^* \bar{s_+} - \frac{1}{q})^2 + {\alpha^*}^2(1-q\bar{s_+})^2)\\
        +&\frac{d}{2(T-q)}\qty(-(T-q)\frac{4\sqrt{\frac{q(T-q)\epsilon}{dT}}}{T(Tq\bar{s_+}^2-2q\bar{s_+}+1)}\alpha^*(1-q\bar{s_+})^2 + \frac{4q^2(T-q)^2\epsilon}{dT^3}{\alpha^*}^2)\\
        +&\frac{d}{2(T-q)} \qty(\frac{4\sqrt{\frac{q(T-q)\epsilon}{dT}}}{T}{\alpha^*}^2(1-q\bar{s_+})^2+\frac{4q(T-q)\epsilon}{dT^3}{\alpha^*}^2(1-q\bar{s_+})^2)\\
        ={}&\frac{dT}{2q(T-q)}(q\bar{s_+}(t)-1)^2
        +\frac{d}{2(T-q)}\qty(-\frac{4\sqrt{\frac{q(T-q)\epsilon}{dT}}}{T\bar{s_+}}{\alpha^*}^2(1-q\bar{s_+})^2 + \frac{4q^2(T-q)^2\epsilon}{dT^3}{\alpha^*}^2)\\
        +&\frac{d}{2(T-q)} \qty(\frac{4\sqrt{\frac{q(T-q)\epsilon}{dT}}}{T}{\alpha^*}^2(1-q\bar{s_+})^2+\frac{4q(T-q)\epsilon}{dT^3}{\alpha^*}^2(1-q\bar{s_+})^2)\\
        \leq{} &\frac{\epsilon}{2} + \frac{d}{2(T-q)}\qty(\frac{4q^2(T-q)^2\epsilon}{dT^3}{\alpha^*}^2+\frac{4q(T-q)\epsilon}{dT^3}{\alpha^*}^2(1-q\bar{s_+})^2)\\
        \leq{}&\frac{\epsilon}{2} + \frac{d}{2(T-q)}\qty(\frac{4q^2(T-q)^2\epsilon}{dT^3}\qty(\frac{28}{25})^2+\frac{4q(T-q)\epsilon}{dT^3}(\frac{28}{25})^2)\leq \epsilon.
    \end{align*}
    when $T\geq 4q$. 

    In conclusion, after \textbf{Phase I} (at most $O(\frac{T\log \frac{d}{\epsilon}}{\eta})$ iterations) and \textbf{Phase II} (takes at most $\tilde{O}(\frac{T^{\frac{2-2\delta}{1-3\delta}}}{\eta}+\frac{dT^2}{\eta \epsilon})$ iterations), the population loss $\mathcal{L}(\bm{\theta}(t))\leq \epsilon$
    after time $\tilde{O}(\frac{T^{\frac{2-2\delta}{1-3\delta}}}{\eta}+\frac{T^2 d}{\eta\epsilon})$.
\end{proof}

After proving the convergence to the global minimizer, we can directly have the following corollary on the parameter $\V(t)$ and $\W(t)$. By the dynamics proved in the theorem, $\W(t)$ is always along the $\bI_{d_e}$ direction, and $\V(t)$ should converge to $\bI_d$. It coincides with the construction in \citet{sanford2023representational}, showing the constructed one-layer transformer can be learned with GD.
\begin{corollary}\label{corollary: joint train param}
    Under the condition of \Cref{thm: joint training stochastic pe}, after time $t\geq \tilde{O}(\frac{T^{\frac{2-2\delta}{1-3\delta}}}{\eta}+\frac{T^2 d}{\eta\epsilon})$, we have $\W(t)=C(t)\bI_{d_e},\V(t)=\alpha(t)\bI_d$, and $C(t)\geq \frac{1-2\delta}{1-3\delta}\log(\frac{T-q}{q}\sqrt{\frac{Td}{(T-q)q\epsilon}})$, $\alpha(t)\in \qty[1-0.1\sqrt{\frac{q(T-q)\epsilon}{dT}}, 1+\frac{8}{5}\sqrt{\frac{q(T-q)\epsilon}{dT}}]$.
\end{corollary}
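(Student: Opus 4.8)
The statement is a bookkeeping consequence of the proof of \Cref{thm: joint training stochastic pe}: the plan is to read off the three assertions from the intermediate quantities already controlled there, under the hypothesis that $t$ exceeds the convergence time $t_2=\tilde O\!\big(T^{(2-2\delta)/(1-3\delta)}/\eta+dT^2/(\eta\epsilon)\big)$. The first assertion, that $\W(t)=C(t)\randomWdirection$ and $\V(t)=\alpha(t)\randomVdirection$ for some scalars $C(t),\alpha(t)$ (i.e.\ $\W(t)$ is $C(t)\bI_{d_e}$ on the position--position block and $\V(t)=\alpha(t)\bI_d$ on the token block), is nothing new: it is exactly \Cref{Lemma: induction hypothesis for joint training stochastic PE}, which holds for all $t\ge 0$. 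So only the two quantitative bounds on $C(t)$ and $\alpha(t)$ require attention, and both are essentially already established inside the theorem's proof.

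For the lower bound on $C(t)$, I would invoke the monotonicity established in the Phase~I and Phase~II arguments of the proof of \Cref{thm: joint training stochastic pe}: $C(t)$ is non-decreasing for all $t$. The second stage of that proof runs $C(t)$ up to $C(t_2)\ge \frac{1-2\delta}{1-3\delta}\log\!\big(\tfrac{T-q}{q}\sqrt{\tfrac{Td}{(T-q)q\epsilon}}\big)$, this being precisely the threshold that, via \Cref{Lemma: Estimation for softmax vector with W}, forces $\bar{s_+}\ge \frac1q-\frac1q\sqrt{\frac{q(T-q)\epsilon}{dT}}$. Since $C$ only increases afterwards, the claimed lower bound holds for every $t\ge t_2$, which is exactly the regime in the corollary's hypothesis.

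For the range of $\alpha(t)$, I would use the induction hypothesis \eqref{IH: alpha stays near alpha star with stochastic pe}, which gives for all $t\ge t_1$
$$\alpha(t)\in\Big[\,1-0.1\sqrt{\tfrac{q(T-q)\epsilon}{dT}},\ \Big(1+\tfrac{(8Td-d_e)(1-q\bar{s_+})}{8Tdq\bar{s_+}(T\bar{s_+}-1)}\Big)\alpha^*(t)\,\Big].$$
The lower endpoint already matches the claim. For the upper endpoint, once $t\ge t_2$ we have $1-q\bar{s_+}\le \sqrt{\frac{q(T-q)\epsilon}{dT}}$, and the bound $\alpha^*=\frac{T-q}{Tq\bar{s_+}-2q+1/\bar{s_+}}\le\frac{1}{q\bar{s_+}}\le 1+\frac65\sqrt{\frac{q(T-q)\epsilon}{dT}}$ computed near the end of the theorem's proof applies (using $\epsilon\le \frac{dT}{100(T-q)q}$, so this square root is at most $1/10$). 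The correction factor $\frac{(8Td-d_e)(1-q\bar{s_+})}{8Tdq\bar{s_+}(T\bar{s_+}-1)}$ is $O(1-q\bar{s_+})$ since $\bar{s_+}$ is bounded away from $1/T$ and $1/q$ in this regime (here $T\ge 4q$), hence also $O(\sqrt{\frac{q(T-q)\epsilon}{dT}})$; multiplying the two factors and discarding the negligible quadratic-in-$\sqrt{\cdot}$ terms gives the upper endpoint $\le 1+\frac85\sqrt{\frac{q(T-q)\epsilon}{dT}}$.

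The only genuinely computational point is that last one — confirming the product of $\alpha^*$ with the correction factor stays below $1+\frac85\sqrt{\cdot}$ — and it is a short arithmetic estimate of the same flavour as the loss-bound computation concluding the proof of \Cref{thm: joint training stochastic pe}, so I do not anticipate any real obstacle; the corollary is essentially a restatement of facts proven en route to that theorem.
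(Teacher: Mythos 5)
Your proposal matches the paper's own proof of this corollary almost exactly: the direction property is pulled from \Cref{Lemma: induction hypothesis for joint training stochastic PE}, the lower bound on $C(t)$ is read off from the Phase~II monotone-increase argument in \Cref{thm: joint training stochastic pe}, and the interval for $\alpha(t)$ is obtained by combining the induction hypothesis~(IH2) with the bound $\alpha^* \le \tfrac{1}{q\bar{s_+}}\le 1+\tfrac{6}{5}\sqrt{\tfrac{q(T-q)\epsilon}{dT}}$ and a short estimate of the correction factor (the paper makes your ``$O(1-q\bar s_+)$'' comment explicit by bounding it by $\tfrac{q}{T-q}\sqrt{\tfrac{q(T-q)\epsilon}{dT}}$ before multiplying through). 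No gaps.
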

\begin{proof}
    The lower bound for $C(t)$ is proved in the \Cref{thm: joint training stochastic pe}.
    
    By \Cref{IH: alpha stays near alpha star with stochastic pe}, $\alpha^*(t) = \frac{(T-q)\bar{s_+}}{Tq\bar{s_+}^2-2q\bar{s_+}+1}$ and $C(t)\geq \frac{1-2\delta}{1-3\delta}\log(\frac{T-q}{q}\sqrt{\frac{Td}{(T-q)q\epsilon}})$, we have 
    $$\alpha(t)\in\qty[1-0.1\sqrt{\frac{q(T-q)\epsilon}{dT}}, \qty(1+\frac{(8Td-d_e)(1-q\bar{s_+})}{8Tdq\bar{s_+}(T\bar{s_+}-1)})\alpha^*(t)], \bar{s_+}\geq\frac{1}{q}-\frac{1}{q}\sqrt{\frac{q(T-q)\epsilon}{dT}}.$$
    and 
    \begin{align*}
        \alpha^*(t)=\frac{T-q}{Tq\bar{s_+}-2q+\frac{1}{\bar{s_+}}}\leq \frac{1}{q\bar{s_+}}\leq \frac{1}{1-\sqrt{\frac{q(T-q)\epsilon}{dT}}}\leq 1+\frac{6}{5}\sqrt{\frac{q(T-q)\epsilon}{dT}}.
    \end{align*}
    Therefore for $\qty(1+\frac{(8Td-d_e)(1-q\bar{s_+})}{8Tdq\bar{s_+}(T\bar{s_+}-1)})\alpha^*(t)$ we also have an upper bound.
    \begin{align*}
        \qty(1+\frac{(8Td-d_e)(1-q\bar{s_+})}{8Tdq\bar{s_+}(T\bar{s_+}-1)})\alpha^*(t)\leq{}&\qty(1+\frac{6}{5}\sqrt{\frac{q(T-q)\epsilon}{dT}})\qty(1+\frac{q}{T-q}\sqrt{\frac{q(T-q)\epsilon}{dT}}) \\\leq{}& 1+\frac{8}{5}\sqrt{\frac{q(T-q)\epsilon}{dT}}
    \end{align*}
    The last inequality uses $T\geq 4{q}, \epsilon\leq \frac{dT}{100q(T-q)}.$
\end{proof}

\subsection{Length generalization and out-of-distribution guarantee}\label{appen sec: ood guarantee}
In this subsection, we present the strength of stochastic positional encoding, which is the strong out-of-distribution guarantee, including the \textit{length generalization} performance mentioned in the main paper. Empirically, in \Cref{sec: experiments} we observe stochastic positional encoding has superiority over a fixed set of near-orthogonal positional encodings, especially in length generalization tasks. Here, based on the global minimizer found by gradient descent, we can also present a theoretical guarantee for an out-of-distribution guarantee.

Recall our data model is:
The input tokens $\x_i, i=1,2,..., T$ are sampled from standard Gaussian distribution, and the $q$-sparse subset $y$ containing all the averaging indices is uniformly sampled from all $q$-subsets of the set $[T]$.
$$\X=(\x_1,\x_2,...,\x_T),\x_i\sim \mathcal{N}(0,\bI_d),$$
$$ y\sim \unif\qty(\binom{[T]}{q}),i\in[T]$$

Suppose our training objective is based on the population distribution with sequence length $T_1$ and subset size $q_1$. By ``out-of-distribution'', we include two different tasks: (1) generalize on unseen data points with $q_2$-subsets where $q_2>q$. (2) generalize on longer sequences with $T_2>T_1$. We know the out-of-distribution loss with sequence length $T_2$ and $q_2$ as
\begin{equation}
    \mathcal{L}^{(s)}_{T_2,q_2}(\bm{\theta}) = \frac{1}{2}\E_{\X,y\sim\mathcal{D}_{T_2,q_2}}\qty[\|\qsat(\X,y)-f^{(s)}_{\bm{\theta}}(\X,y)\|_2^2]
    \label{eqn: ood objective for qsa}
\end{equation}
With the constructed transformer in \ref{appen sec: A.3 tf expressivity}, as long as $d_e=\Theta(q_2\log T_2)$ where the positional encoding matrix $\bE'\in \R^{d_e\times T_2}$ with sequence length $T_2$ can satisfy RIP, the one-layer transformer can express $\qsa$ with any input sequence length $T\leq T_2$ and $q\leq q_2$. 
In this paper, we present the following corollary that demonstrates the GD-trained transformer with stochastic PE can also achieve good OOD performance in both subset size $q_2\geq q_1$ and $T_2\geq T_1$ under the condition that $\bE$ can satisfy RIP with maximal sequence length $T_{\max}$ and maximal $q_{\max}$. 
\begin{corollary}\label{appen coro: ood guarantee}
Suppose $q_{\max}=\Theta(1),d_e=\Theta(q_{\max}\log T_{\max}/\delta^2), \delta < 1/10, \eta\leq \frac{d_e^2}{40d^2T}$. For any $\epsilon\in (0,\frac{dT}{100(T-q)q})$, and we apply gradient descent with zero initialization with $q_1<q_{\max}, T_1<T_{\max}$ to train the model. Then we have the following out-of-distribution loss generalization guarantee with $q_2\in(q_1,q_{\max}],T_2\in(T_1,T_{\max})$: if the training time $t\geq \tilde{O}(\frac{T^{\frac{2-2\delta}{1-3\delta}}}{\eta}+\frac{T^2 d}{\eta\epsilon})$, we have:
    $$\mathcal{L}^{(s)}_{T_2,q_2}(\bm{\theta})\leq O\qty(\frac{T_2^2\epsilon}{T_1^2})$$
\end{corollary}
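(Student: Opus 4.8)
The plan is to leverage the exact characterization of the global minimizer obtained in \Cref{corollary: joint train param}: after running GD for time $t\geq\tilde{O}(\frac{T^{\frac{2-2\delta}{1-3\delta}}}{\eta}+\frac{T^2 d}{\eta\epsilon})$ on the length-$T_1$, sparsity-$q_1$ population loss, the learned parameters satisfy $\W(t)=C(t)\bI_{d_e}$ and $\V(t)=\alpha(t)\bI_d$ with $C(t)\geq \frac{1-2\delta}{1-3\delta}\log\!\big(\frac{T_1-q_1}{q_1}\sqrt{\frac{T_1 d}{(T_1-q_1)q_1\epsilon}}\big)$ and $\alpha(t)\in[1-0.1\sqrt{\tfrac{q_1(T_1-q_1)\epsilon}{dT_1}},\,1+\tfrac85\sqrt{\tfrac{q_1(T_1-q_1)\epsilon}{dT_1}}]$. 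The crucial point is that \emph{these parameters are completely independent of the sequence length and subset size}: the matrices $C(t)\bI_{d_e}$ and $\alpha(t)\bI_d$ make sense for any $T_2$ and $q_2$, provided the stochastic PE $\bE\in\R^{d_e\times T_2}$ still satisfies $(q_2,\delta)$-RIP, which holds because $d_e=\Theta(q_{\max}\log T_{\max}/\delta^2)$. So the first step is simply to plug these fixed parameters into the OOD loss \Cref{eqn: ood objective for qsa} with distribution $\mathcal{D}_{T_2,q_2}$.

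Next I would reuse the loss-decomposition machinery already developed for the in-distribution case. By the analogue of \Cref{Lemma: stochastic pe loss expression} applied with sequence length $T_2$ and sparsity $q_2$ (the derivation is identical: $\E[\X^\top\X]=d\bI_{T_2}$ and $\S_y$ is independent of $\X$ because $\W=C\bI_{d_e}$), the OOD loss equals $\frac{d}{2}\E_y\|\tfrac1{q_2}\Y-\alpha\,\E_{\bE}^{(R)}[\S_y]\|^2$. Since $\W=C\bI_{d_e}$, for each realization of $\bE$ satisfying $(q_2,\delta)$-RIP the attention logits are $C\,\e_i^\top\e_y$, which equals $C$ for $i\in y$ and lies in $[-\tfrac{C\delta}{1-2\delta},\tfrac{C\delta}{1-2\delta}]$ for $i\notin y$, so by \Cref{Lemma: Estimation for softmax vector with W}-type bounds, $\bar s_+ := \E_{\bE}^{(R)}[\S_y(i)]$ for $i\in y$ satisfies $\bar s_+ \in \big[\tfrac{1}{q_2+(T_2-q_2)e^{-\frac{1-3\delta}{1-2\delta}C}},\,\tfrac{1}{q_2+(T_2-q_2)e^{-\frac{1-\delta}{1-2\delta}C}}\big]$. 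Then, exactly as in the closing computation of \Cref{thm: joint training stochastic pe}, the loss becomes $\frac{d}{2(T_2-q_2)}\big((T_2-q_2)q_2(\alpha\bar s_+-\tfrac1{q_2})^2+\alpha^2(1-q_2\bar s_+)^2\big)$, so I need to bound $(1-q_2\bar s_+)$ and $(\alpha\bar s_+-\tfrac1{q_2})$.

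The quantitative heart of the argument is controlling $1-q_2\bar s_+$. Using the lower bound on $C(t)$ and $C\geq \frac{1-2\delta}{1-3\delta}\log(\cdots)$, one gets $(T_2-q_2)e^{-\frac{1-3\delta}{1-2\delta}C}\leq (T_2-q_2)\big(\tfrac{q_1}{T_1-q_1}\big)\cdot\sqrt{\tfrac{(T_1-q_1)q_1\epsilon}{T_1 d}}$, hence $1-q_2\bar s_+ = \tfrac{(T_2-q_2)e^{-\frac{1-3\delta}{1-2\delta}C}}{q_2+(T_2-q_2)e^{-\frac{1-3\delta}{1-2\delta}C}}$ is at most (roughly) $\tfrac{T_2}{T_1}\sqrt{\tfrac{q_1(T_1-q_1)\epsilon}{d T_1}}$ after simplification (using $q_1,q_2=\Theta(1)$, $T_i\gg q_i$). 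Squaring this gives the $T_2^2/T_1^2$ blowup. Combining with $\alpha=1\pm O(\sqrt{\epsilon/(dT_1)})$ and substituting into the two-term loss expression, the $(1-q_2\bar s_+)^2$ term dominates and yields $\mathcal{L}^{(s)}_{T_2,q_2}(\bm\theta)\leq \frac{d}{2(T_2-q_2)}\cdot\frac{T_2}{q_2}(1-q_2\bar s_+)^2 + O(\epsilon) = O\big(\tfrac{T_2^2\epsilon}{T_1^2}\big)$; the $(\alpha\bar s_+-\tfrac1{q_2})^2$ term is handled by $\alpha\bar s_+-\tfrac1{q_2}=(\alpha-1)\bar s_+ - \tfrac{1-q_2\bar s_+}{q_2}\cdot\tfrac1{\,}$ and is of the same or smaller order, again via $T_2\geq 4q_2$. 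The main obstacle I anticipate is bookkeeping the constant factors and the $\frac{1-\delta}{1-2\delta}$ versus $\frac{1-3\delta}{1-2\delta}$ exponents carefully so that the bound is genuinely $O(T_2^2\epsilon/T_1^2)$ rather than picking up an extra $T$-power; the asymmetry between the two RIP bounds on $\bar s_+$ must be tracked, but since $\delta<1/10$ these exponents are both close to $1$ and absorb into constants. Everything else is a direct transcription of the in-distribution endgame with $(T_1,q_1)$ replaced by $(T_2,q_2)$ in the softmax estimates while keeping the parameter values pinned at their trained values.
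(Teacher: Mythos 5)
Your proposal is correct and follows essentially the same route as the paper's proof: invoke \Cref{corollary: joint train param} to pin $\W(t)=C(t)\bI_{d_e}$, $\V(t)=\alpha(t)\bI_d$ with the stated bounds on $C(t)$ and $\alpha(t)$, note these are $(T,q)$-agnostic, apply the \Cref{Lemma: stochastic pe loss expression}-style decomposition at $(T_2,q_2)$, bound $1-q_2\bar s_+$ via the $C(t)$ lower bound, and combine. One tiny slip: the trained $\alpha$ satisfies $|\alpha-1|=O(\sqrt{q_1(T_1-q_1)\epsilon/(dT_1)})=O(\sqrt{\epsilon/d})$, not $O(\sqrt{\epsilon/(dT_1)})$ as you wrote, but this is immaterial since those terms are dominated by the $(1-q_2\bar s_+)^2$ contribution.
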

\emph{Proof idea.} The intuition behind the corollary is that training the transformer with the stochastic architecture on arbitrary $T$ and $q$ where the RIP condition holds can lead to exact the same global minima in the construction equivalent to \cite{sanford2023representational}. That means it naturally satisfies all possible $T'$ and $q'$, as long as the RIP condition holds.
\begin{proof}

By \Cref{corollary: joint train param}, we have $\W(t)=C(t)\randomWdirection,\V(t)=\alpha(t)\randomVdirection$, and $C(t)\geq \frac{1-2\delta}{1-3\delta}\log(\frac{T-q}{q}\sqrt{\frac{Td}{(T-q)q\epsilon}})$, $\alpha(t)\in \qty[1-0.1\sqrt{\frac{q(T-q)\epsilon}{dT}}, 1+\frac{8}{5}\sqrt{\frac{q(T-q)\epsilon}{dT}}]$. 

Denote $\Delta\alpha = \alpha -1$.
Similarly, we can have the following bounds for the $s_+'$:
$$1-\E_{\bE'}[\S_{y'}(i)]q_2 = 1-q_2\bar{s_+}'\leq \frac{\frac{(T_2-q_2)q_1}{T_1-q_1}\sqrt{\frac{(T_1-q_1)q_1\epsilon}{T_1d}}}{q_2+\frac{(T_2-q_2)q_1}{T_1-q_1}\sqrt{\frac{(T_1-q_1)q_1\epsilon}{T_1d}}}\leq \frac{\frac{(T_2-q_2)q_1}{T_1-q_1}\sqrt{\frac{(T_1-q_1)q_1\epsilon}{T_1d}}}{q_2}$$
By \Cref{Lemma: stochastic pe loss expression}, we have 
\begin{align*}
    \mathcal{L}^{(s)}_{T_2,q_2}(\bm{\theta}) &= \frac{d}{2}\E_{y'}\qty[\left\|\frac{1}{q_2}\Y_{T_2}-\alpha(t)\E_{\bE'}[\S_{y'}]\right\|_2^2]\\
    &=\frac{d}{2}\qty(\frac{1}{q_2}(1-q_2\alpha(t)\bar{s_+}')^2 +\alpha(t)^2\frac{(1-q\bar{s_+}')^2}{T_2-q_2})\\
    &\leq \frac{d}{2} \cdot\frac{T_2}{(T_2-q_2)q_2}\cdot\qty(1-q\bar{s_+}')^2-\frac{d}{q_2}\Delta \alpha q_2\bar{s_+}'(1-q\bar{s_+}')\\&\quad+\frac{d}{q_2}\Delta \alpha^2 (q\bar{s_+}')^2 +\frac{d}{2}\cdot \frac{2\Delta \alpha +(\Delta \alpha)^2}{T_2-q_2}(1-q\bar{s_+}')^2\tag{Plug in $\bar{s_+}', \Delta\alpha$ bounds}\\
    &\leq \frac{T_2(T_2-q_2)q_1^3}{2q_2^3T_1(T_1-q_1)}+\frac{(T_2-q_2)q_1^2\epsilon}{10q_2^2T_1} +\frac{32q_1(T_1-q_1)\epsilon}{25q_2T_1} +\frac{3(T_2-q_2)q_1^3\epsilon}{2q_2^2T_1(T_1-q_1)}\leq O\qty(\frac{T_2^2\epsilon}{T_1^2}).
\end{align*}
Therefore, we complete the proof.
\end{proof}

\subsection{Supplementary lemmas on conditional expectations}
\begin{lemma}[Flipping rows doesn't change the softmax]
Given a $q$-subset $y$, denote the softmax output with input $\bE$ as $\S_y(\bE)=\sm(C\bE^\top \e_y)$, we have for any $i$:
$$\S_y(\bE)=\S_y(\bE'), \bE_y^\top \bE_y =\bE_y'^\top \bE_y'$$
where $\bE' = \bm{F}_i\bE$, $\bm{F}_i = \diag(1,...,1,-1,1,...,1)\in \R^{d_e\times d_e}$ with the $i$-th entry being $-1$.\footnote{$\bm{F}_i$ is to flip the $i$-th row of the positional encoding matrix $\bE$.}
\label{Lemma: Flipping rows doesn't change the softmax}
\end{lemma}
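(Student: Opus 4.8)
The plan is to exploit the single structural fact that $\bm{F}_i$ is an orthogonal involution: $\bm{F}_i^\top = \bm{F}_i$ and $\bm{F}_i^2 = \bI_{d_e}$, so in particular $\bm{F}_i^\top \bm{F}_i = \bI_{d_e}$. First I would observe that selecting the columns indexed by $y$ commutes with left multiplication, so that $\bE_y' = (\bm{F}_i\bE)_y = \bm{F}_i\bE_y$. The Gram identity is then immediate: $\bE_y'^\top \bE_y' = \bE_y^\top \bm{F}_i^\top \bm{F}_i \bE_y = \bE_y^\top \bE_y$, which in particular shows that the matrix $(\bE_y^\top\bE_y)^{-1}$ appearing in the dual-certificate query encoding is unchanged under the row flip.

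Next I would track how the query encoding transforms. Using $\e_y = \bE_y(\bE_y^\top\bE_y)^{-1}\mathbf{1}_q$ from \Cref{Lemma: RIP matrices has dual certificate} together with the Gram identity just established, the encoding built from $\bE'$ satisfies $\e_y' = \bE_y'(\bE_y'^\top\bE_y')^{-1}\mathbf{1}_q = \bm{F}_i\bE_y(\bE_y^\top\bE_y)^{-1}\mathbf{1}_q = \bm{F}_i\e_y$. Consequently the pre-softmax logits are preserved: $C\,\bE'^\top\e_y' = C(\bm{F}_i\bE)^\top(\bm{F}_i\e_y) = C\,\bE^\top\bm{F}_i^\top\bm{F}_i\e_y = C\,\bE^\top\e_y$, and applying $\sm(\cdot)$ to both sides yields $\S_y(\bE') = \S_y(\bE)$, which together with the Gram identity completes the claim.

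I expect essentially no obstacle here: the only points requiring care are (i) verifying explicitly that extracting the $y$-columns commutes with left-multiplication by $\bm{F}_i$, so the flip passes cleanly through to $\bE_y$, $\e_y$, and ultimately the logits; and (ii) being explicit that the softmax input depends on $\bE$ only through the product $\bE^\top\e_y$, which is the quantity shown to be invariant. For completeness I would also note in passing that the same orthogonality of $\bm{F}_i$ preserves all column norms and pairwise inner products, hence $\bE'$ satisfies the $(q,\delta)$-RIP whenever $\bE$ does; this invariance of the conditioning event is what makes the lemma usable inside the conditional expectation $\E_{\bE}^{(R)}[\cdot]$ for the subsequent symmetrization arguments.
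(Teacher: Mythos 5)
Your proof is correct and uses essentially the same argument as the paper: both rest on $\bm{F}_i^\top\bm{F}_i = \bI_{d_e}$ applied to the dual-certificate form $\e_y = \bE_y(\bE_y^\top\bE_y)^{-1}\mathbf{1}_q$ to show the pre-softmax logits $C\bE^\top\e_y$ and the Gram matrix $\bE_y^\top\bE_y$ are invariant under the row flip. Your version is slightly more explicit in factoring the computation through $\e_y' = \bm{F}_i\e_y$, and your closing remark about $\bm{F}_i$ preserving the $(q,\delta)$-RIP conditioning event is a useful observation that the paper leaves implicit in its use of this lemma within $\E_{\bE}^{(R)}[\cdot]$.
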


\begin{proof}
    Notice that 
    $$\bE^\top \e_y = \bE^\top\bE_y (\bE_y^\top\bE_y)^{-1}\mathbf{1}_q.$$
    while 
    $$\bE'^\top \e_y' = \bE^\top\bm{F}_i^\top\bm{F}_i\bE_y (\bE_y^\top\bm{F}_i^\top\bm{F}_i\bE_y)^{-1}\mathbf{1}_q=\bE^\top\bE_y (\bE_y^\top\bE_y)^{-1}\mathbf{1}_q.$$
    The second identity is due to $\bm{F}_i^\top\bm{F}_i=\bI_{d_e}$. So the vector inside $\sm$ is the same, so the outputs are the same. For $\bE_y^\top \bE_y =\bE_y'^\top \bE_y'$, it's similar:
    $$\bE_y^\top \bE_y =\bE_y^\top\bm{F}_i^\top\bm{F}_i\bE_y=\bE_y'^\top \bE_y'$$
\end{proof}

\begin{lemma}[Switching rows doesn't change conditional expectation]
Given a $q$-subset $y$, denote the softmax output with input $\bE$ as $\S_y(\bE)=\sm(C\bE^\top \e_y)$, we have for any $i$:
$$\S_y(\bE)=\S_y(\bE'), \bE_y^\top \bE_y =\bE_y'^\top \bE_y' $$
where $\bE' = \bm{R}_{ij}\bE$, $\bm{R}_{ij}\in \R^{d_e\times d_e}$ where $\bm{R}_{ij}$ is the elementary matrix to switch the $i$-th row and $j$-th row of the positional encoding matrix $\bE$.
\label{Lemma: Switching rows doesn't change the softmax}
\end{lemma}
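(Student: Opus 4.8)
The plan is to copy the proof of \Cref{Lemma: Flipping rows doesn't change the softmax} almost verbatim, replacing the flip matrix $\bm{F}_i$ with the row-swap matrix $\bm{R}_{ij}$ and invoking the single fact that $\bm{R}_{ij}$ is orthogonal: swapping two rows twice is the identity and $\bm{R}_{ij}$ is symmetric, so $\bm{R}_{ij}^\top\bm{R}_{ij}=\bm{R}_{ij}^2=\bI_{d_e}$. The one bookkeeping point to respect is that the subset encoding must be recomputed consistently from the permuted matrix, i.e.\ $\e_y'=\bE_y'(\bE_y'^\top\bE_y')^{-1}\mathbf{1}_q$ with $\bE_y'=\bm{R}_{ij}\bE_y$, since $\bE'=\bm{R}_{ij}\bE$ means the columns of $\bE'$ indexed by $y$ are exactly $\bm{R}_{ij}$ applied to the corresponding columns of $\bE$.

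First I would compute the Gram matrix $\bE_y'^\top\bE_y'=\bE_y^\top\bm{R}_{ij}^\top\bm{R}_{ij}\bE_y=\bE_y^\top\bE_y$, which is the second claimed identity and simultaneously shows that the inverse $(\bE_y'^\top\bE_y')^{-1}$ appearing in $\e_y'$ equals $(\bE_y^\top\bE_y)^{-1}$. Next I would compute the pre-softmax logits: $\bE'^\top\e_y'=\bE^\top\bm{R}_{ij}^\top\bm{R}_{ij}\bE_y(\bE_y^\top\bm{R}_{ij}^\top\bm{R}_{ij}\bE_y)^{-1}\mathbf{1}_q=\bE^\top\bE_y(\bE_y^\top\bE_y)^{-1}\mathbf{1}_q=\bE^\top\e_y$. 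Multiplying by the scalar $C$ and applying $\sm$ then gives $\S_y(\bE')=\sm(C\bE'^\top\e_y')=\sm(C\bE^\top\e_y)=\S_y(\bE)$, which is the first claimed identity.

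Since this lemma is meant to be used to control the conditional expectation $\E_{\bE}^{(R)}[\,\cdot\,]$, I would also record the remark that the conditioning event ``$\bE$ satisfies $(q,\delta)$-RIP'' is invariant under $\bE\mapsto\bm{R}_{ij}\bE$: the restricted isometry and orthogonality property depends on $\bE$ only through the quantities $\|\bE\v\|_2$ and $\langle\bE\v,\bE\v'\rangle$, and these are unchanged by applying any orthogonal transformation to the rows of $\bE$. Hence the pushforward of the conditioned positional-encoding distribution under $\bm{R}_{ij}$ coincides with itself, and the two pointwise identities above lift to equalities for $\E_{\bE}^{(R)}[\S_y]$ by a change of variables.

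I do not expect a genuine obstacle here: the entire content is the identity $\bm{R}_{ij}^\top\bm{R}_{ij}=\bI_{d_e}$, exactly as in the preceding lemma. The only place that requires a moment's care is making sure $\e_y$, $\bE_y$, and the logits are all evaluated at the permuted matrix $\bE'$ rather than silently mixing $\bE$ and $\bE'$, so that the cancellation can actually be invoked.
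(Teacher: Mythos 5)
Your proof is correct and is essentially the paper's argument: both hinge on $\bm{R}_{ij}^\top\bm{R}_{ij}=\bI_{d_e}$ to show the logits $\bE'^\top\e_y'$ and the Gram matrix $\bE_y'^\top\bE_y'$ are unchanged. Your closing remark that $(q,\delta)$-RIP is invariant under row permutation is a welcome addition — the paper leaves that implicit but uses it when applying this lemma inside $\E_{\bE}^{(R)}[\cdot]$ in the diagonal-entries lemma.
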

\begin{proof}
    Similar to \Cref{Lemma: Flipping rows doesn't change the softmax}, notice that 
    $$\bE^\top \e_y = \bE^\top\bE_y (\bE_y^\top\bE_y)^{-1}\mathbf{1}_q.$$
    while 
    $$\bE'^\top \e_y' = \bE^\top\bm{R}_{ij}^\top\bm{R}_{ij}\bE_y (\bE_y^\top\bm{R}_{ij}^\top\bm{R}_{ij}\bE_y)^{-1}\mathbf{1}_q=\bE^\top\bE_y (\bE_y^\top\bE_y)^{-1}\mathbf{1}_q.$$
    The second identity is due to $\bm{R}_{ij}^\top\bm{R}_{ij}=\bI_{d_e}$. So the vector inside $\sm$ is the same, so the outputs are the same.

    For $\bE_y^\top \bE_y =\bE_y'^\top \bE_y'$, it's similar:
    $$\bE_y^\top \bE_y =\bE_y^\top\bm{R}_{ij}^\top\bm{R}_{ij}\bE_y=\bE_y'^\top \bE_y'$$
\end{proof}

\begin{lemma}[Off-diagonal entries of the gradient have 0 expectation]
\label{Lemma: Off-diagonal entries of the gradient have 0 expectation}
Given a $q$-subset $y$, for any function $f(\S_y(\bE)):\R^{d_e\times T}\to\R$, we have for any $i\in[T]$:
$$u_k^\top \E_{\bE}^{(R)}\qty[f(\S_y(\bE))\e_i\e_y^\top]u_j^\top=0,j\not= k.$$
where $u_i$ is the one-hot vector $(\mathds{1}\{i=1\},\mathds{1}\{i=2\},...,\mathds{1}\{i=d_e\})$.
\end{lemma}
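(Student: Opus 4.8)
The plan is to exploit the symmetry of the conditional distribution of $\bE$ under row flips, using \Cref{Lemma: Flipping rows doesn't change the softmax}. The key observation is that the $(q,\delta)$-RIP conditioning event depends on $\bE$ only through inner products $\langle \bE\v,\bE\v'\rangle$, i.e. through the Gram matrix $\bE^\top\bE$; since $\bm{F}_k^\top\bm{F}_k=\bI_{d_e}$, the event ``$\bE$ satisfies $(q,\delta)$-RIP'' is invariant under $\bE\mapsto\bm{F}_k\bE$. Moreover, because $\bE$ has i.i.d.\ Rademacher entries, the distribution of $\bE$ is itself invariant under $\bE\mapsto\bm{F}_k\bE$ (flipping a row just flips a collection of independent symmetric random variables). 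Hence the conditional distribution $\E_{\bE}^{(R)}[\cdot]$ is invariant under the map $\bE\mapsto\bm{F}_k\bE$ for every $k\in[d_e]$.

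The main steps, in order, are as follows. First I would fix $j\neq k$ and write out $u_k^\top \E_{\bE}^{(R)}[f(\S_y(\bE))\,\e_i\e_y^\top]\,u_j = \E_{\bE}^{(R)}[f(\S_y(\bE))\,(\e_i)_k (\e_y)_j]$, where $(\e_i)_k$ and $(\e_y)_j$ denote the $k$-th and $j$-th coordinates of $\e_i$ and $\e_y$ respectively. Second, I would apply the change of variables $\bE\mapsto\bm{F}_k\bE$ inside the conditional expectation, which is valid by the invariance just noted. Under this map: $\S_y(\bm{F}_k\bE)=\S_y(\bE)$ by \Cref{Lemma: Flipping rows doesn't change the softmax}, so $f(\S_y(\cdot))$ is unchanged; the $k$-th coordinate of $\bE$'s $i$-th column flips sign, so $(\e_i)_k\mapsto -(\e_i)_k$; and the $j$-th coordinate of $\e_y=\bE_y(\bE_y^\top\bE_y)^{-1}\1_q$ is unchanged, since flipping row $k$ leaves $\bE_y^\top\bE_y$ invariant and flipping row $k$ of $\bE_y$ only affects coordinate $k$ of $\e_y$, not coordinate $j\neq k$. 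Third, I conclude that the expectation equals its own negative, hence is zero.

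I would need to be slightly careful about one routine point: that $(\e_y)_j$ really is unaffected by $\bm{F}_k$ when $j\neq k$. Writing $\e_y=\bm{F}_k\bE_y(\bE_y^\top\bm{F}_k^\top\bm{F}_k\bE_y)^{-1}\1_q=\bm{F}_k\cdot\bigl[\bE_y(\bE_y^\top\bE_y)^{-1}\1_q\bigr]$ shows the new $\e_y$ is $\bm{F}_k$ applied to the old one, so only its $k$-th coordinate changes sign; all other coordinates, in particular the $j$-th, are untouched. The rest is the sign-flip cancellation. The only mild obstacle is making sure the invariance of the conditioning event and of the base distribution is stated cleanly, but both follow immediately from $\bm{F}_k^\top\bm{F}_k=\bI_{d_e}$ and the symmetry of the Rademacher distribution, so there is no real difficulty here.
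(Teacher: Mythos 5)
Your proposal is correct and takes essentially the same approach as the paper's: both exploit the row-flip symmetry $\bE\mapsto\bm{F}_k\bE$, using \Cref{Lemma: Flipping rows doesn't change the softmax} to preserve $f(\S_y(\bE))$, noting that $(\e_y)_j$ is untouched for $j\neq k$ while $(\e_i)_k$ changes sign, and using the invariance of both the Rademacher law and the RIP conditioning event. The paper spells this out by conditioning on the $k$-th row of $\bE$ and matching the two sign cases term by term, whereas you phrase it more compactly as a measure-preserving change of variables; the underlying argument is the same.
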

% Proof idea. Use \Cref{Lemma: Flipping rows doesn't change the softmax} and \Cref{Lemma: Switching rows doesn't change the softmax}, cancel positive and negative terms.
\begin{proof}
    First, the exact expression the $(k,j)$-th entry in the matrix is:
    \begin{align*}
        u_k^\top \E_{\bE}^{(R)}\qty[f(\S_y(\bE))\e_i\e_y^\top]u_j & = \E_{\bE}^{(R)}\qty[f(\S_y(\bE)) \e_{ik}\mathbf{1}_q^\top (\bE_y^\top \bE_y)^{-1}\begin{bmatrix}
            \e_{y_1,j}\\
            \e_{y_2,j}\\
            \vdots\\
            \e_{y_q,j}
        \end{bmatrix}]
    \end{align*}
    where $\e_{i,j}$ denote the $j$-th entry of position encoding vector $\e_i$, $\begin{bmatrix}
            \e_{y_1,j},
            \e_{y_2,j},
            \cdots,
            \e_{y_q,j}
        \end{bmatrix}^\top$ as $\bE_y[j]$, and $y_i$ is the $i$-th index in the subset $y$. We expand the conditional expectation (Here $\Pr^{(R)}(\cdot):=\Pr(\cdot|\bE\text{ has $(q,\delta)$-RIP})$):
    \begin{align*}
         &u_k^\top \E_{\bE}^{(R)}\qty[f(\S_y(\bE))\e_i\e_y^\top]u_j\\ ={}& \E_{\bE}^{(R)}\qty[f(\S_y(\bE)) \e_{ik}\mathbf{1}_q^\top (\bE_y^\top \bE_y)^{-1}\bE_y[j]]\\
        ={}&\E_{\bE}^{(R)}\qty[f(\S_y(\bE)) \e_{ik}\mathbf{1}_q^\top (\bE_y^\top \bE_y)^{-1}\bE_y[j]\Big|\e_{ik}=\frac{1}{\sqrt{d_e}}]{\Pr}^{(R)}(\e_{ik}=\frac{1}{\sqrt{d_e}})\\
        +&\E_{\bE}^{(R)}\qty[f(\S_y(\bE)) \e_{ik}\mathbf{1}_q^\top (\bE_y^\top \bE_y)^{-1}\bE_y[j]\Big|\e_{ik}=-\frac{1}{\sqrt{d_e}}]{\Pr}^{(R)}(\e_{ik}=-\frac{1}{\sqrt{d_e}})
    \end{align*}
    Note that by expansion of condition expectation:
    \begin{align*}
        &\E_{\bE}^{(R)}\qty[f(\S_y(\bE)) \e_{ik}\mathbf{1}_q^\top (\bE_y^\top \bE_y)^{-1}\bE_y[j]\Big|\e_{ik}=\frac{1}{\sqrt{d_e}}]{\Pr}^{(R)}(\e_{ik}=\frac{1}{\sqrt{d_e}})\\
        ={}&\sum_{\xi_p,p\not=i}\E_{\bE}^{(R)}\qty[f(\S_y(\bE)) \e_{ik}\mathbf{1}_q^\top (\bE_y^\top \bE_y)^{-1}\bE_y[j]\Big|\e_{ik}=\frac{1}{\sqrt{d_e}},\e_{pk}=\xi_p,p\not=i]\\
        &\cdot {\Pr}^{(R)}\qty(\e_{pk}=\xi_p,p\not=i\Big|\e_{ik}=\frac{1}{\sqrt{d_e}}){\Pr}^{(R)}(\e_{ik}=\frac{1}{\sqrt{d_e}})\tag{$\xi_p$ is selected in $\pm \frac{1}{\sqrt{d_e}}$}\\
        ={}&-\sum_{\xi_p,p\not=i}\E_{\bE}^{(R)}\qty[f(\S_y(\bE)) \e_{ik}\mathbf{1}_q^\top (\bE_y^\top \bE_y)^{-1}\bE_y[j]\Big|\e_{ik}=-\frac{1}{\sqrt{d_e}},\e_{pk}=-\xi_p,p\not=i]\\
        &\cdot {\Pr}^{(R)}\qty(\e_{pk}=\xi_p,p\not=i,\e_{ik}=\frac{1}{\sqrt{d_e}})\tag{\Cref{Lemma: Flipping rows doesn't change the softmax}, $f(\S_y(\bE)),\bE_y^\top \bE_y$ are not changed, and $\bE_{y}[j]$ is unrelated to $\e_{pk}$}\\
        ={}&-\sum_{\xi_p,p\not=i}\E_{\bE}^{(R)}\qty[f(\S_y(\bE)) \e_{ik}\mathbf{1}_q^\top (\bE_y^\top \bE_y)^{-1}\bE_y[j]\Big|\e_{ik}=-\frac{1}{\sqrt{d_e}},\e_{pk}=-\xi_p,p\not=i]\\
        &\cdot {\Pr}^{(R)}\qty(\e_{pk}=-\xi_p,p\not=i,\e_{ik}=-\frac{1}{\sqrt{d_e}})\tag{Flipped row have same probability}\\
        ={}&-\E_{\bE}^{(R)}\qty[f(\S_y(\bE)) \e_{ik}\mathbf{1}_q^\top (\bE_y^\top \bE_y)^{-1}\bE_y[j]\Big|\e_{ik}=-\frac{1}{\sqrt{d_e}}]{\Pr}^{(R)}(\e_{ik}=-\frac{1}{\sqrt{d_e}})\\
    \end{align*}
    Therefore they cancel out and the expectation is 0.
\end{proof}

\begin{lemma}[Diagonal entries of the gradient are the same]
\label{Lemma: Diagonal entries of the gradient are the same}
Given a $q$-subset $y$, for any function $f(\S_y(\bE)):\R^{d_e\times T}\to\R$, we have for any $i\in[T],j,k\in [d_e]$:
$$u_k^\top \E_{\bE}^{(R)}\qty[f(\S_y(\bE))\e_i\e_y^\top]u_k^\top=u_j^\top \E_{\bE}^{(R)}\qty[f(\S_y(\bE))\e_i\e_y^\top]u_j^\top.$$
where $u_i$ is the one-hot vector $(\mathds{1}\{i=1\},\mathds{1}\{i=2\},...,\mathds{1}\{i=d_e\})$.
\end{lemma}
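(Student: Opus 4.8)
## Proof Proposal for Lemma (Diagonal entries of the gradient are the same)

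The plan is to exploit the symmetry of the Rademacher distribution under permutations of the coordinates $[d_e]$, exactly in the spirit of \Cref{Lemma: Switching rows doesn't change the softmax}. First I would fix indices $i \in [T]$ and $j \neq k \in [d_e]$, and consider the permutation matrix $\bm{R}_{jk} \in \R^{d_e \times d_e}$ that swaps rows $j$ and $k$. The key structural observation is that the distribution of $\bE$ (a scaled Rademacher matrix) is invariant under the map $\bE \mapsto \bm{R}_{jk}\bE$, and moreover, conditioning on the $(q,\delta)$-RIP event is preserved under this map, since $\bm{R}_{jk}$ is orthogonal and RIP depends only on inner products $\langle \bE\v, \bE\v'\rangle = \v^\top \bE^\top\bE\v'$, which are unchanged when $\bE$ is left-multiplied by an orthogonal matrix. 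Hence $\bE$ and $\bm{R}_{jk}\bE$ have the same conditional law $\E_{\bE}^{(R)}$.

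Next I would use the two invariances established in \Cref{Lemma: Switching rows doesn't change the softmax}: for $\bE' = \bm{R}_{jk}\bE$ we have $\S_y(\bE') = \S_y(\bE)$ and $\bE_y'^\top \bE_y' = \bE_y^\top \bE_y$, so $f(\S_y(\bE'))$ and $\e_y' = \bE_y'(\bE_y'^\top\bE_y')^{-1}\1_q = \bm{R}_{jk}\e_y$ behave predictably. The $(k,k)$ diagonal entry of the gradient matrix is $\e_k^\top \E_{\bE}^{(R)}[f(\S_y(\bE))\e_i\e_y^\top]\e_k = \E_{\bE}^{(R)}[f(\S_y(\bE))\,\e_{i,k}\,\e_{y,k}]$, where $\e_{i,k}$ and $\e_{y,k}$ denote the $k$-th components of $\e_i$ and $\e_y$ respectively. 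Performing the change of variables $\bE \mapsto \bm{R}_{jk}\bE$ inside the conditional expectation (valid by the invariance of $\E_{\bE}^{(R)}$), the $k$-th components get swapped with the $j$-th components while $f(\S_y(\bE))$ is unchanged, so $\E_{\bE}^{(R)}[f(\S_y(\bE))\,\e_{i,k}\,\e_{y,k}] = \E_{\bE}^{(R)}[f(\S_y(\bm{R}_{jk}\bE))\,(\bm{R}_{jk}\bE)_{i,k}\,(\bm{R}_{jk}\e_y)_{k}]$; but $(\bm{R}_{jk}\bE)_{i,k} = \e_{i,j}$ and $(\bm{R}_{jk}\e_y)_k = \e_{y,j}$, yielding exactly $\e_j^\top \E_{\bE}^{(R)}[f(\S_y(\bE))\e_i\e_y^\top]\e_j$. (Here I am writing $\e_k$ for the one-hot vector denoted $u_k$ in the statement.)

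I do not expect a real obstacle here; this is essentially a one-line symmetrization argument. The only point that requires a little care — and which I would state explicitly — is that the conditioning event ($\bE$ satisfies $(q,\delta)$-RIP) is itself invariant under the row permutation, so that the change of variables is legitimate and does not distort the conditional measure; this follows because $\bm{R}_{jk}$ is orthogonal and the RIP condition is a statement purely about the Gram structure $\bE^\top\bE$ restricted to sparse vectors, which is left-invariant under orthogonal transformations of the ambient $\R^{d_e}$. With that noted, combining the invariance of $\E_{\bE}^{(R)}$ with \Cref{Lemma: Switching rows doesn't change the softmax} closes the argument. The same template (with $\bm{R}_{jk}$ replaced by the sign-flip $\bm{F}_i$ and \Cref{Lemma: Flipping rows doesn't change the softmax}) was already used to prove \Cref{Lemma: Off-diagonal entries of the gradient have 0 expectation}, so this lemma is its natural companion.
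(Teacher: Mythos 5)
Your argument is essentially identical to the paper's proof: both fix the row-swap $\bm{R}_{jk}$, observe that the conditional law $\E_{\bE}^{(R)}$ is invariant under it (the paper states this tersely, you spell out why: RIP depends only on the Gram structure $\bE^\top\bE$, which is unchanged by left-multiplication by an orthogonal matrix), invoke \Cref{Lemma: Switching rows doesn't change the softmax} to conclude $f(\S_y(\bE'))=f(\S_y(\bE))$ and $\e_y'=\bm{R}_{jk}\e_y$, and change variables to move the $(k,k)$ entry to the $(j,j)$ entry. No gap; the extra sentence you add about the invariance of the conditioning event is a small but welcome clarification of a point the paper glosses over.
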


\begin{proof}
    First, the exact expression of the $(k,k)$-th entry in the matrix (which is diagonal) is:
    \begin{align*}
        u_k^\top \E_{\bE}^{(R)}\qty[f(\S_y(\bE))\e_i\e_y^\top]u_k & = \E_{\bE}^{(R)}\qty[f(\S_y(\bE)) \e_{ik}\mathbf{1}_q^\top (\bE_y^\top \bE_y)^{-1}\bE_y[k]]
    \end{align*}
    Consider another PE matrix $\bE' = \bm{R}_{kj}\bE$, which also satisfy the $(q,\delta)$-RIP and have same conditional probability with $\bE$. Then we have
    \begin{align*}
        u_k^\top \E_{\bE}^{(R)}\qty[f(\S_y(\bE))\e_i\e_y^\top]u_k & = \E_{\bE}^{(R)}\qty[f(\S_y(\bE)) \e_{ik}\mathbf{1}_q^\top (\bE_y^\top \bE_y)^{-1}\bE_y[k]]\\
        &=\E_{\bE'}^{(R)}\qty[f(\S_y(\bE')) \e_{ik}'\mathbf{1}_q^\top (\bE_y'^\top \bE_y')^{-1}\bE_y'[k]]\tag{Change of variable}\\
        &=\E_{\bE'}^{(R)}\qty[f(\S_y(\bE)) \e_{ik}'\mathbf{1}_q^\top (\bE_y^\top \bE_y)^{-1}\bE_y'[k]]\tag{\Cref{Lemma: Switching rows doesn't change the softmax}}\\
        &=\E_{\bE}^{(R)}\qty[f(\S_y(\bE)) \e_{ij}\mathbf{1}_q^\top (\bE_y^\top \bE_y)^{-1}\bE_y[j]]\tag{Change back the variable}\\
        &=u_j^\top \E_{\bE}^{(R)}\qty[f(\S_y(\bE))\e_i\e_y^\top]u_j^\top.
    \end{align*}
\end{proof}

\begin{lemma}[Conditional expectation of the softmax vector]
\label{Lemma: Conditional expectation of the softmax vector}
Given a $q$-subset $y$ and $\bE$ satisfies $(q,\delta)$-RIP, for the softmax probability vector $\S_y(\bE) = \sm(C\bE^\top \e_y)$, we have for any $i,j \in y$ or $i,j\not\in y$,
$$\E^{(R)}_{\bE}\qty[\S_y(i)] = \E^{(R)}_{\bE}\qty[\S_y(j)]$$
\end{lemma}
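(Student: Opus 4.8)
The plan is to exploit a column-permutation symmetry of the random positional encoding matrix $\bE$, in the same spirit as \Cref{Lemma: Off-diagonal entries of the gradient have 0 expectation} and \Cref{Lemma: Diagonal entries of the gradient are the same}, but now acting on the columns (positions) rather than the rows (coordinates). Fix indices $i,j$ that are either both contained in $y$ or both outside $y$, and let $\bP\in\R^{T\times T}$ be the (symmetric) permutation matrix that swaps coordinates $i$ and $j$. I will show that the change of variables $\bE\mapsto\bE\bP$ (i) leaves the $(q,\delta)$-RIP event invariant, (ii) preserves the law of $\bE$, since its entries are i.i.d.\ Rademacher$/\sqrt{d_e}$ and hence exchangeable across columns, and (iii) transforms $\S_y(\bE)_i$ into $\S_y(\bE)_j$. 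Taking conditional expectations then yields $\E^{(R)}_{\bE}[\S_y(i)]=\E^{(R)}_{\bE}[\S_y(j)]$ directly.

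First I would record the invariance of the conditioning event. Since $(q,\delta)$-RIP is a statement about $\|\bE\v\|_2$ and $\langle\bE\v,\bE\v'\rangle$ for vectors $\v,\v'$ of bounded support, and replacing $\bE$ by $\bE\bP$ replaces $\v$ by $\bP^\top\v$, which has the same support size and the same $\ell_2$ norm, the event $\{\bE\text{ has }(q,\delta)\text{-RIP}\}$ equals $\{\bE\bP\text{ has }(q,\delta)\text{-RIP}\}$. Combined with column-exchangeability, a change of variables gives $\E^{(R)}_{\bE}[g(\bE)]=\E^{(R)}_{\bE}[g(\bE\bP)]$ for every integrable $g$; the conditioning is well-defined because the RIP event has positive probability by \Cref{Lemma: whp random matrices are RIP}.

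Next I would verify the key algebraic identity $\S_y(\bE\bP)=\bP^\top\S_y(\bE)$, i.e.\ that swapping columns $i,j$ of $\bE$ merely swaps the $i$-th and $j$-th entries of the softmax output. Writing $\e_y(\bE)=\bE_y(\bE_y^\top\bE_y)^{-1}\1_q$, there are two cases. If $i,j\in y$, then $(\bE\bP)_y=\bE_y\bm{\Pi}$, where $\bm{\Pi}$ permutes the $q$ selected columns by transposing $i$ and $j$; since $\bm{\Pi}\1_q=\1_q$ and $(\bE_y\bm{\Pi})^\top(\bE_y\bm{\Pi})=\bm{\Pi}^\top(\bE_y^\top\bE_y)\bm{\Pi}$, conjugation of the inverse cancels and $\e_y(\bE\bP)=\e_y(\bE)$. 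If $i,j\notin y$, then $(\bE\bP)_y=\bE_y$ outright, so again $\e_y$ is unchanged. In both cases $(\bE\bP)^\top\e_y(\bE\bP)=\bP^\top\bE^\top\e_y(\bE)$, and by equivariance of $\sm$ under coordinate permutations, $\S_y(\bE\bP)=\sm(C\bP^\top\bE^\top\e_y(\bE))=\bP^\top\S_y(\bE)$, so in particular $\S_y(\bE\bP)_i=\S_y(\bE)_j$.

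Combining the pieces gives $\E^{(R)}_{\bE}[\S_y(i)]=\E^{(R)}_{\bE}[\S_y(\bE\bP)_i]=\E^{(R)}_{\bE}[\S_y(\bE)_j]=\E^{(R)}_{\bE}[\S_y(j)]$, which is the claim. The only genuinely delicate point is the case analysis in the previous paragraph showing that $\e_y$ is invariant under the column swap — this is exactly where the explicit dual-certificate form $\e_y=\bE_y(\bE_y^\top\bE_y)^{-1}\1_q$ and the behavior of the Gram-matrix inverse under conjugation are essential; the remaining ingredients (column-exchangeability, RIP-invariance, softmax equivariance) are each immediate.
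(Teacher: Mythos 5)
Your argument is correct and matches the paper's underlying idea: exchangeability of the columns of $\bE$ under the RIP-conditioned law, which preserves the RIP event, leaves $\e_y$ invariant, and swaps the $i$-th and $j$-th entries of the softmax. The only cosmetic difference is that for the case $i,j\in y$ the paper shortcuts even further — since $\e_i^\top\e_y=\e_j^\top\e_y=1$ by the dual-certificate property, $\S_y(i)=\S_y(j)$ holds pointwise for every RIP-satisfying $\bE$, with no symmetrization needed — whereas you derive it via the same column-permutation argument as the $i,j\notin y$ case; both are valid.
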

\begin{proof}
    For $i,j\in y$, we have $\e_i^\top \e_y = \e_j^\top \e_y=1$, so 
    $$\S_y(i) = \frac{e^{C}}{qe^C+\sum_{i\not \in y}e^{C\e_i^\top \e_y}}= \S_y(j)$$
    and thus their expectations are the same. For $i,j\not\in y$, considered a switched PE matrix $\bE'=\bm{R}_{ij}\bE$. By \Cref{Lemma: Switching rows doesn't change the softmax}, and the probability for $\E$ and $\E'$ are the same:
    \begin{align*}
        \E^{(R)}_{\bE}\qty[\S_y(i)] &=\E^{(R)}_{\bE}\frac{e^{C\e_i^\top \e_y}}{qe^C+\sum_{k\not \in y}e^{C\e_k^\top \e_y}}\\
        &=\E^{(R)}_{\bE}\frac{e^{C\e_j^\top \e_y}}{qe^C+\sum_{k\not \in y}e^{C\e_k^\top \e_y}} = \E^{(R)}_{\bE}\qty[\S_y(j)]
    \end{align*}
\end{proof}

\begin{lemma}[Estimation for softmax vector with $\W=C \randomWdirection$]
\label{Lemma: Estimation for softmax vector with W}
Given a $q$-subset $y$ and $\bE$ satisfies $(q,\delta)$-RIP, for the softmax probability vector $\S_y(\bE) = \sm(C\bE^\top \e_y)$, we have for any $i\in y$:
    \begin{align*}
          \frac{1}{q+(T-q)e^{-\frac{1-3\delta}{1-2\delta}C}} &\leq \S_y(i)\leq \frac{1}{q+(T-q)e^{-\frac{1-\delta}{1-2\delta}C}},   \\
          \frac{1}{q+(T-q)e^{-\frac{1-3\delta}{1-2\delta}C}}&\leq \E^{(R)}_{\bE}\qty[\S_y(i)]\leq\frac{1}{q+(T-q)e^{-\frac{1-\delta}{1-2\delta}C}}.
    \end{align*}
\end{lemma}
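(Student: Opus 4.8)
\textbf{Proof proposal for \Cref{Lemma: Estimation for softmax vector with W}.}

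The plan is to directly estimate the softmax entry $\S_y(i) = \frac{e^{C}}{qe^{C} + \sum_{k\notin y}e^{C\langle \e_k,\e_y\rangle}}$ for $i\in y$ (using $\langle\e_i,\e_y\rangle = 1$ for $i\in y$), and then obtain the bound on $\E_\bE^{(R)}[\S_y(i)]$ by monotonicity of expectation. The key input is \Cref{Lemma: RIP matrices has dual certificate}, which gives $|\langle\e_k,\e_y\rangle|\leq \frac{\delta}{1-2\delta}$ for $k\notin y$. First I would normalize: dividing numerator and denominator by $e^{C}$, $\S_y(i) = \bigl(q + \sum_{k\notin y} e^{-C(1-\langle\e_k,\e_y\rangle)}\bigr)^{-1}$. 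Since $\langle\e_k,\e_y\rangle\in[-\frac{\delta}{1-2\delta}, \frac{\delta}{1-2\delta}]$, the exponent $1-\langle\e_k,\e_y\rangle$ lies in $[1-\frac{\delta}{1-2\delta}, 1+\frac{\delta}{1-2\delta}] = [\frac{1-3\delta}{1-2\delta}, \frac{1-\delta}{1-2\delta}]$. Assuming $C\geq 0$ (which holds along the GD trajectory from zero initialization, as established in \Cref{Lemma: induction hypothesis for joint training stochastic PE}), each term $e^{-C(1-\langle\e_k,\e_y\rangle)}$ is then sandwiched between $e^{-C\frac{1-\delta}{1-2\delta}}$ and $e^{-C\frac{1-3\delta}{1-2\delta}}$. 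Summing over the $T-q$ indices $k\notin y$ gives
\[
q + (T-q)e^{-C\frac{1-\delta}{1-2\delta}} \;\leq\; q + \sum_{k\notin y}e^{-C(1-\langle\e_k,\e_y\rangle)} \;\leq\; q + (T-q)e^{-C\frac{1-3\delta}{1-2\delta}},
\]
and taking reciprocals (which flips the inequalities) yields exactly the claimed two-sided bound on $\S_y(i)$.

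For the bound on $\E_\bE^{(R)}[\S_y(i)]$, I would simply observe that the bounds $\frac{1}{q+(T-q)e^{-\frac{1-3\delta}{1-2\delta}C}}\leq \S_y(i)\leq \frac{1}{q+(T-q)e^{-\frac{1-\delta}{1-2\delta}C}}$ hold pointwise for every realization of $\bE$ satisfying $(q,\delta)$-RIP (this is exactly the conditioning event), and the right-hand sides are deterministic constants depending only on $C$, $q$, $T$, $\delta$. Hence taking conditional expectation $\E_\bE^{(R)}[\cdot]$ preserves both inequalities, giving the second display. There is essentially no obstacle here beyond bookkeeping; the one point requiring care is the sign of $C$, which I would note is guaranteed because $C(0)=0$ and $C(t)$ is non-decreasing along the trajectory (so in the regime where this lemma is invoked, $C\geq 0$), and also the direction of the inequality flip when taking reciprocals of positive quantities. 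I expect the main (minor) subtlety to be making sure the RIP-derived bound $|\langle\e_k,\e_y\rangle|\leq\frac{\delta}{1-2\delta}$ is cited correctly and that the interval endpoints $\frac{1-3\delta}{1-2\delta}$ and $\frac{1-\delta}{1-2\delta}$ come out with the right labels (lower exponent paired with the upper bound on $\S_y(i)$ after the reciprocal flip, and vice versa).
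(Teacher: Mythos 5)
Your argument matches the paper's proof: both normalize $\S_y(i)$ to $\bigl(q+\sum_{k\notin y}e^{-C(1-\langle\e_k,\e_y\rangle)}\bigr)^{-1}$, invoke \Cref{Lemma: RIP matrices has dual certificate} to bound $|\langle\e_k,\e_y\rangle|\leq\frac{\delta}{1-2\delta}$ for $k\notin y$, sandwich the exponent, sum the $T-q$ off-support terms, take reciprocals, and observe that the pointwise deterministic bounds pass directly to the conditional expectation. Your proof is correct and is the same argument; if anything, you are slightly more careful than the paper in flagging that the direction of the sandwich requires $C\geq 0$ (which the paper glosses over with ``for any $C$'' but which does hold along the GD trajectory).
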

\begin{proof}
By \Cref{Lemma: RIP matrices has dual certificate}, we know $\e_i^\top \e_y = 1$ for $i\in y$, and $\left|\e_i^\top \e_y\right|\leq \frac{\delta}{1-2\delta}$. Then we know for any $C$, we have
\begin{align*}
\frac{e^C}{qe^C+(T-q)e^{\frac{\delta}{1-2\delta}C}} &\leq \S_y(i)\leq \frac{e^C}{q+(T-q)e^{-\frac{\delta}{1-2\delta}C}},   \\
 \Leftrightarrow   \frac{1}{q+(T-q)e^{-\frac{1-3\delta}{1-2\delta}C}} &\leq \S_y(i)\leq \frac{1}{q+(T-q)e^{-\frac{1-\delta}{1-2\delta}C}}
\end{align*}
Since all individual value is bounded within $\qty[\frac{1}{q+(T-q)e^{-\frac{1-3\delta}{1-2\delta}C}}, \frac{1}{q+(T-q)e^{-\frac{1-\delta}{1-2\delta}C}}]$, the expectation has the same bound. Thus, we complete the proof.
\end{proof}
\subsection{From stochastic to fixed: after training}\label{appen sec: from stochastic to fix}
Here, we continue the discussion at the end of \Cref{sec: stochastic positional encoding}: our one-layer transformer with stochastic positional encoding can be `derandomized' after the two layers $\W,\V$ are already trained. That is, when in the training phase, we use the stochastic positional encoding in the model and run gradient descent till convergence. But after training, we can keep all the parameters unchanged, sample only one near-orthogonal $\bE$ satisfying $(q,\delta)$-RIP, fix it as the set of PE, and use that model as our trained model. The following corollary of \Cref{thm: joint training stochastic pe} can characterize the perfect interpolation ability of the trained transformer.
\begin{corollary}\label{corollary: spe to fixed}
    Under the condition of \Cref{thm: joint training stochastic pe}, after time $t\geq \tilde{O}(\frac{T^{\frac{2-2\delta}{1-3\delta}}}{\eta}+\frac{T^2 d}{\eta\epsilon})$, we keep $$\W(t)=C(t)\randomWdirection,\V(t)=\alpha(t)\randomVdirection$$ unchanged. Then, consider any arbitrary $\bE\in \R^{d_e\times T}$ satisfying $(q,\delta)$-RIP, consider the model $$f_{\bm{\theta}}(\X,y)=\V\Z\sm(\Z^\top \W \zq)$$
    we have the non-stochastic training objective \Cref{eqn: training objective for qsa}:
    $$\mathcal{L}_{T,q}(\bm{\theta}) = \frac{1}{2}\E_{\X,y\sim\mathcal{D}_{T,q}}\qty[\|\qsa(\X,y)-f_{\bm{\theta}}(\X,y)\|_2^2]\leq \epsilon.$$
\end{corollary}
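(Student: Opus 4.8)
The plan is to first invoke \Cref{corollary: joint train param}, which tells us that after the stated number of steps the trained parameters have the exact form $\W(t)=C(t)\randomWdirection$ and $\V(t)=\alpha(t)\randomVdirection$, with $C(t)\ge \frac{1-2\delta}{1-3\delta}\log\!\big(\tfrac{T-q}{q}\sqrt{\tfrac{Td}{(T-q)q\epsilon}}\big)$ and $\alpha(t)\in\big[1-0.1\sqrt{\tfrac{q(T-q)\epsilon}{dT}},\,1+\tfrac{8}{5}\sqrt{\tfrac{q(T-q)\epsilon}{dT}}\big]$. Now fix any $\bE\in\R^{d_e\times T}$ with the $(q,\delta)$-RIP and form the non-stochastic $f_{\bm\theta}(\X,y)=\V\Z\sm(\Z^\top\W\zq)$. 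Because $\W$ annihilates the token rows and columns, $\Z^\top\W\zq=C(t)\bE^\top\e_y$, so $\S_y:=\sm(\Z^\top\W\zq)$ depends only on $(\bE,y)$ and is independent of $\X$; similarly $\V\Z\S_y=\alpha(t)\X\S_y$ and $\qsa(\X,y)=\tfrac1q\X\Y$. Repeating the computation of \Cref{Lemma: stochastic pe loss expression} verbatim but without the conditional expectation over $\bE$ (using $\E_\X[\X^\top\X]=d\bI_T$), the objective collapses to
$\mathcal{L}_{T,q}(\bm\theta)=\tfrac d2\,\E_y\big[\|\tfrac1q\Y-\alpha(t)\S_y\|^2\big]$, a quantity depending on $y$ only through the softmax vector $\S_y$.

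\textbf{Controlling the softmax.} Next I would pin down $\S_y$ using the dual-certificate property. By \Cref{Lemma: RIP matrices has dual certificate}, $\e_i^\top\e_y=1$ for $i\in y$ (so $\S_y(i)$ equals a common value $s_+$ on $y$) and $|\e_i^\top\e_y|\le\tfrac{\delta}{1-2\delta}$ for $i\notin y$. \Cref{Lemma: Estimation for softmax vector with W} then gives $\frac{1}{q+(T-q)e^{-\frac{1-3\delta}{1-2\delta}C(t)}}\le s_+\le\frac1q$, and plugging in the lower bound on $C(t)$ yields $s_+\ge\frac1q-\frac1q\sqrt{\tfrac{q(T-q)\epsilon}{dT}}$, i.e.\ $1-qs_+\le\sqrt{\tfrac{q(T-q)\epsilon}{dT}}$. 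The same bound on $C(t)$ also forces each off-diagonal entry to satisfy $\S_y(i)\le\frac1q e^{-\frac{1-3\delta}{1-2\delta}C(t)}\le\frac{1}{T-q}\sqrt{\tfrac{q(T-q)\epsilon}{dT}}$ for $i\notin y$, whence $\sum_{i\notin y}\S_y(i)^2\le(\max_{i\notin y}\S_y(i))(1-qs_+)\le \tfrac{q\epsilon}{dT}$. Expanding, $\|\tfrac1q\Y-\alpha(t)\S_y\|^2=q(\tfrac1q-\alpha(t)s_+)^2+\alpha(t)^2\sum_{i\notin y}\S_y(i)^2$, which is the same expression analyzed at the end of the proof of \Cref{thm: joint training stochastic pe}, except that the off-diagonal block is no longer exactly $\tfrac{(1-q\bar s_+)^2}{T-q}$ but is still bounded by $\tfrac{q\epsilon}{dT}$.

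\textbf{Finishing the loss estimate.} Both remaining quantities $\alpha(t)$ and $s_+$ now lie in intervals around $1$ and $\tfrac1q$ of width $O(\sqrt{q(T-q)\epsilon/(dT)})$, and the off-diagonal contribution $\tfrac d{2(T-q)}\alpha(t)^2\sum_{i\notin y}\S_y(i)^2\le \tfrac{q\alpha(t)^2\epsilon}{2T(T-q)}$ is negligible. It then remains to show $\tfrac d2\,\E_y[q(\tfrac1q-\alpha(t)s_+)^2]+\text{(negligible)}\le\epsilon$; this is done exactly as in the final portion of the proof of \Cref{thm: joint training stochastic pe} (the $\alpha_1$/$\alpha_2$ case split), with $\bar s_+$ replaced by $s_+$. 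When $\alpha(t)\le1$ one bounds $|q\alpha(t)s_+-1|$ directly from $qs_+\in[1-\sqrt{\cdots},1]$; when $\alpha(t)>1$ one uses the tighter upper bound $\alpha(t)\le(1+\tfrac{(8Td-d_e)(1-q\bar s_+)}{8Tdq\bar s_+(T\bar s_+-1)})\alpha^\star$ coming from the theorem's induction hypothesis, together with the observation that $\alpha\mapsto\tfrac d{2(T-q)}\big[(T-q)q(\alpha s_+-\tfrac1q)^2+\alpha^2\sum_{i\notin y}\S_y(i)^2\big]$ is a convex parabola whose minimum value is $O(\tfrac{q\epsilon}{T})\le\tfrac\epsilon8$.

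\textbf{Main obstacle.} The genuinely delicate step is the $\alpha(t)>1$ branch. During training, $\alpha(t)$ is calibrated against the \emph{averaged} softmax value $\bar s_+=\E_{\bE}^{(R)}[s_+(\bE)]$ (whose symmetry, via \Cref{Lemma: Conditional expectation of the softmax vector}, makes every off-diagonal entry equal), whereas here we evaluate the loss at the softmax value $s_+$ of one \emph{particular} RIP encoding, and the two differ by $O(\sqrt{q(T-q)\epsilon/(dT)})$ with off-diagonal entries that are no longer uniform. Showing that this mismatch, propagated through the quadratic, still gives a loss of at most $\epsilon$ — rather than a constant multiple of $\epsilon$ — is where the constant-tracking has to be done carefully, exploiting that $\alpha(t)$ sits within an $O(\sqrt{\epsilon})$-neighborhood of the stationary point and that the parabola's minimum is of order $\epsilon/T$.
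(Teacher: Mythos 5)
Your proposal follows the same route the paper takes (invoke \Cref{corollary: joint train param} for the form of $\W(t),\V(t)$, then \Cref{Lemma: Estimation for softmax vector with W} to pin down the softmax for any fixed RIP encoding, and finish with the scalar loss estimate), but you are more careful than the paper's own one-line reference ``use the same argument in the proof of \Cref{thm: joint training stochastic pe}'' in one genuinely important place: for a fixed $\bE$ the off-diagonal entries $\S_y(i)$, $i\notin y$, are not uniform, so the identity $\sum_{i\notin y}\S_y(i)^2=(1-qs_+)^2/(T-q)$ used in the stochastic case does not hold, and one must bound $\sum_{i\notin y}\S_y(i)^2$ separately. Your bound $\sum_{i\notin y}\S_y(i)^2\le(\max_{i\notin y}\S_y(i))(1-qs_+)\le q\epsilon/(dT)$ is exactly the right supplement (note a small typo: the prefactor of the off-diagonal contribution to the loss is $d/2$, not $d/(2(T-q))$, so this piece contributes $\le\alpha^2 q\epsilon/(2T)$, which is still negligible).

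You also correctly flag the one genuinely delicate point, which the paper's terse proof glosses over: $\alpha(t)$ was constrained during training relative to $\bar s_+=\E^{(R)}_\bE[s_+]$, whereas you now evaluate the loss at the $s_+$ of one particular $\bE$. But your sketch of how to close this gap is not quite on target. The phrasing ``$\alpha(t)$ sits within an $O(\sqrt\epsilon)$-neighborhood of the stationary point'' is too weak: with only that and the crude $\alpha\le 1+\tfrac85\sqrt{q(T-q)\epsilon/(dT)}$ from \Cref{corollary: joint train param}, the main term $(d/2q)(1-q\alpha s_+)^2$ can reach roughly $1.3\epsilon$ when $s_+$ is near $1/q$, so the argument does not immediately give $\le\epsilon$. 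The observation that actually closes the gap is to use $\alpha^*(\bar s_+)\le 1/(q\bar s_+)$ (shown in the paper's proof of \Cref{corollary: joint train param}), whence $q\alpha s_+\le(1+\gamma)\,s_+/\bar s_+$, and then to note that both $s_+$ and $\bar s_+$ lie in the same $C(t)$-determined interval (\Cref{Lemma: Estimation for softmax vector with W}), so $s_+/\bar s_+\le 1/(1-\sqrt{q(T-q)\epsilon/(dT)})\le 1+\tfrac65\sqrt{q(T-q)\epsilon/(dT)}$. This gives $|1-q\alpha s_+|\lesssim\sqrt{q(T-q)\epsilon/(dT)}$ with a constant strictly below the problematic $8/5$, after which the $\alpha\le1$ and $\alpha>1$ cases both land under $\epsilon$ once the off-diagonal piece is added. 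So your decomposition and your identification of the obstacle are correct, but the constant-tracking you defer is not routine and needs the $s_+/\bar s_+$ ratio bound, not merely a statement that $\alpha$ is close to a stationary point.
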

\begin{proof}
    Use \Cref{corollary: joint train param}, we have for any $y$ and $\bE$ satisfying $(q,\delta)$-RIP be bounded by \Cref{Lemma: Estimation for softmax vector with W}:
    $$\frac{1}{q+(T-q)e^{-\frac{1-3\delta}{1-2\delta}C}} \leq \S_y(i)\leq \frac{1}{q+(T-q)e^{-\frac{1-\delta}{1-2\delta}C}},\forall i\in y.$$
    Since $\W=C(t)\randomWdirection$, all $\S_y(i)$ are the same. Then, we denote $s_+=\S_y(i)$ and use the same argument in the proof of \Cref{thm: joint training stochastic pe} when verifying $\mathcal{L}(t)\leq \epsilon$ to complete the proof.
\end{proof}

This corollary implies that in practice, once we use the fresh sampled randomized PE in \citet{shen2023positional} or \citet{ruoss2023randomized} on our tasks, it's only necessary to sample the $\bE$ during training. At the time for evaluation, we can fix one valid PE, preventing us from do more sampling. This avoids `memorizing' all the random number within the model, and therefore does not violate the memory lower bound condition.

\section{Experiment details} \label{sec: experiment details appn}
In this section, we describe in detail our experimental setup, as well as present additional plots that were not highlighted in the main text. For all of our experiments, we use PyTorch \citep{paszke2019pytorch}, run on NVIDIA RTX A6000s.

\textbf{Setup.} For all of our experiments, we choose $T=200$ for our context length, and $q=3$, $d=5$, and $d_e = 170.$ The trainable weight matrices $\W$ and  $\V$ have two initialization case: zero-initialization at $\0$, or randomly initialize with the standard deviation $\sigma^2 = \frac{1}{d+d_e}$. 

At each iteration, we sample a batch of $n=128$ training data points $(\X, y)$, and for each OOD task, we fix a set of $n_{\text{test}}=128$ data points $(\X, y)$. For our length generalization tasks, we look at $T_{\text{test}}=\{250, 300, 350, 400\}$, and for our $q$-generalization tasks, we look at $q_{\text{test}} = \{5, 6, 7, 8\}$. We use the reparameterized model \cref{main eqn: practical tf}:
\begin{align*}
    f (\X,y)=\V\Z\sm(\Z^\top \W \z_{\text{query}}),\ \Z = [\X^\top \ \bE^\top]^\top.
\end{align*}
% and the full model attending the whole input $\X_{\text{input}}$:
% \begin{equation}
%     f(\X_{\text{input}}, y) = \W^{\V} \X_{\text{input}} \sm(\X_{\text{input}}^\top {\W}^{\K\Q} \begin{bmatrix}
%         \x_T \\
%         \e_y \\
%         \e_T
%     \end{bmatrix})
% \end{equation}
where we use the whole input matrix and the query:
\begin{equation}
    [\Z,\z_{\text{query}}] := \begin{bmatrix}
        \x_1 &\x_2 &\cdots &\x_{T-1}&\x_T&\x_{\text{query}}\\
        \e_{1}& \e_{2}& \cdots&\e_{T-1}&\e_{T}&\e_{y}
    \end{bmatrix} \in \mathbb{R}^{(d+d_e) \times (T+1)}.
\end{equation}

\textbf{Positional encoding sampling.} We also enforce a weaker version of the RIP assumption by sampling positional encodings so that they are pairwise near-orthogonal, as defined by some dot product threshold hyperparameter; it turns out that such a choice of positional encodings will satisfy the restricted isometry and orthogonality property, as we would expect for a near-orthogonal matrix. In practice, this is implemented using rejection sampling.

For the experiments where we run with a fixed architecture, the train and test positional encodings are fixed beforehand. 
For the experiments where stochastic architecture is used, this is simulated at each iteration by sampling $T$ positional encodings for training at each iteration and to be used for the entire $(\X, y)$ batch at that iteration, and for validation, a single set of $T_{\text{test}}$ positional encodings is sampled for each validation set at each iteration. 

\textbf{Training details.}
We run simulations on three different settings: 

(1) Attending the entire input matrix $[\Z,\z_{\text{query}}]$ as described in \Cref{main eqn: practical tf} and training with SGD (with zero initialization/random initialization of $\W$ and $\V$).

(2) Attending the entire input matrix $[\Z,\z_{\text{query}}]$ as described in \Cref{main eqn: practical tf} and training with Adam (with zero initialization/random initialization of $\W$ and $\V$).

(3) An additional experiment run on smaller $d,d_e$ for illustration and training with small random initialization and SGD with annealing.

\textbf{Fixed vs. stochastic architecture.}
For the fixed architecture, we sample $T_{\max}=400$ positional encodings at the start of training. For each iteration of training, we use the first 200 for the fixed architecture, and for each validation task, we use the respective prefix of positional encodings ($T=250, 300, 350, 400$). For the stochastic architecture, we sample positional encodings at every step for training, as well as for validation. When sampled, we use rejection sampling to ensure the positional encoding matrix satisfy near-orthogonality.
% \wnote{Add analysis for why fixed PE fail in positional encoding and why random PE works. Citing more paper on length generalization on PE. Refer to additional related works}

Our experiments demonstrate that even though both fixed and stochastic PE architectures lead to in-distribution population loss of 0, the out-of-distribution validation performances are different, both for length generalization and generalization on unseen $q_{\text{test}}$-subsets. The following sections describe the experiments that were run.  

\subsection{SGD from zero initialization}
\label{appendix subsec: sgd from 0 init}
When we attend $[\Z,\z_{\text{query}}]$ and train with GD, we run with $\eta=1$, then annealing to $\eta=1/3$ at iteration 50000. We run until iteration 100000. The learning rate schedule is to prevent potential instability, for instance via the edge-of-stability phenomenon \citep{cohen2021gradient, li2022analyzing, damian2022self} or loss spikes in transformer training \citep{chowdhery2023palm, dehghani2023scaling, wortsman2024smallscale}.

\begin{figure}[H]
    \centering
    \includegraphics[width=\textwidth]{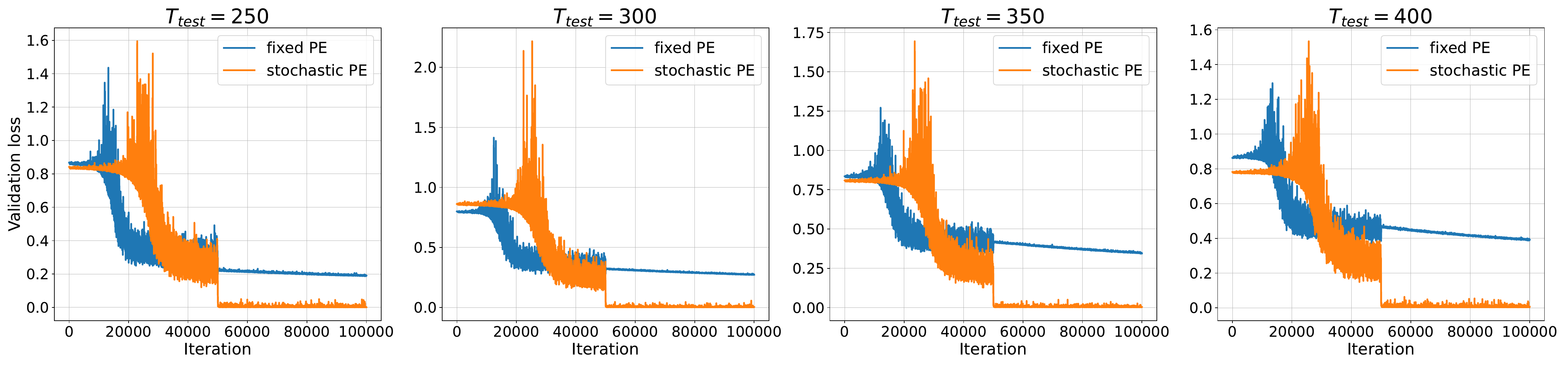}
    \includegraphics[width=\textwidth]{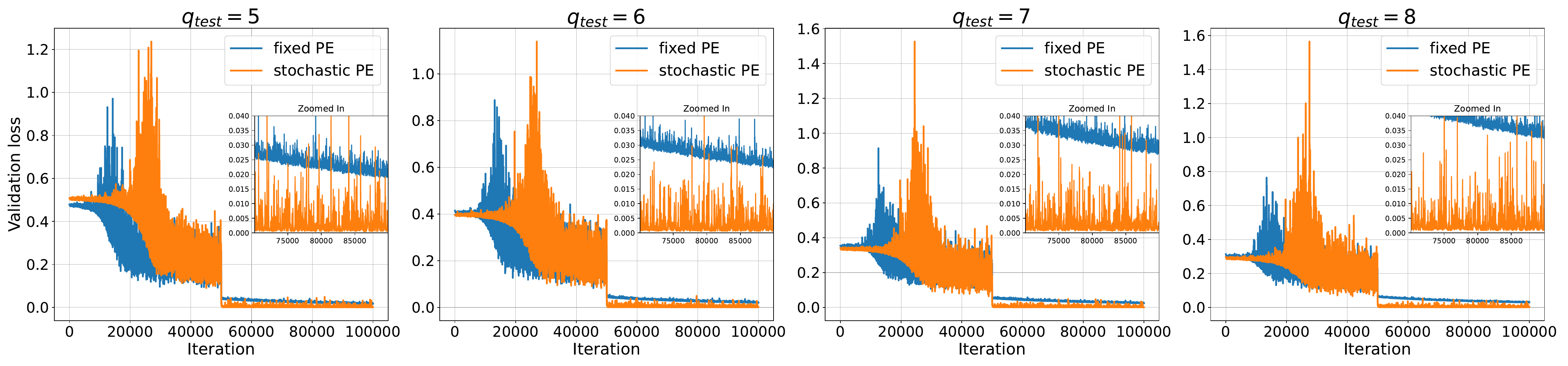}
    \caption{The \textbf{length generalization} performance and OOD performance on \textbf{unseen $q_{\text{test}}$-subsets}. 
    \textbf{Top: Length generalization.} Note that stochastic PE converges to 0 validation loss, whereas a fixed PE is unable to do so; all of the fixed PE end up with validation loss at least 0.15. \textbf{Bottom: Generalization to unseen $q_{\text{test}}$-subsets.} Note that while both stochastic and fixed PE can converge to 0 validation loss in the long run, stochastic PE converges slightly quicker, as seen by the zoomed in versions of the plots near the end of training. Additionally, the fixed PE's validation performance gets worse as $q_{\text{test}}$ increases. }
    \label{fig: first appendix exp}
\end{figure}

\subsection{Adam from zero initialization}

Here, we attend $[\Z,\z_{\text{query}}]$ and train with default Adam settings until iteration 100000.

\begin{figure}[H]
\vspace{-0.8cm}
    \centering
    \includegraphics[width=\textwidth]{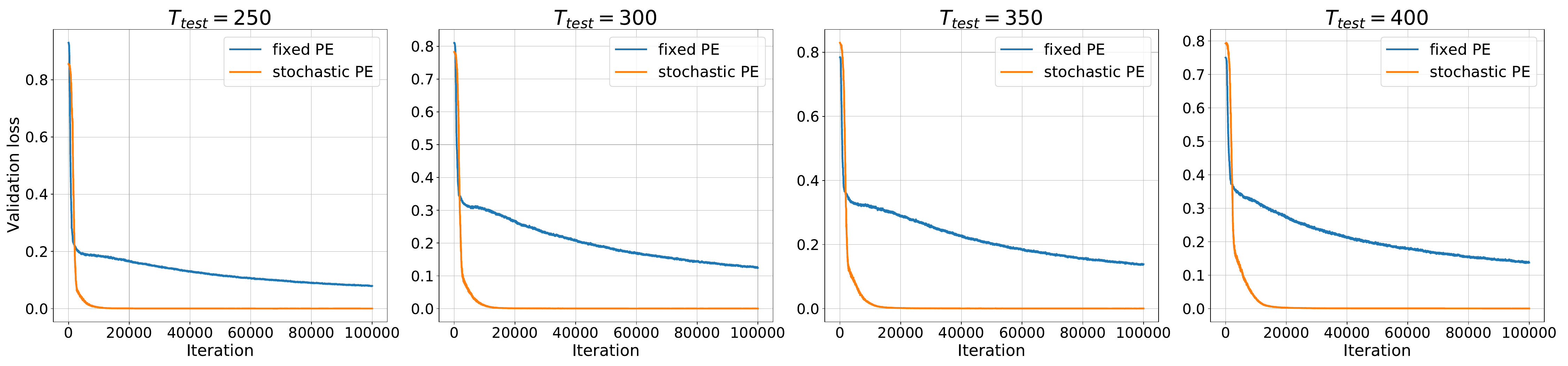}
    \includegraphics[width=\textwidth]{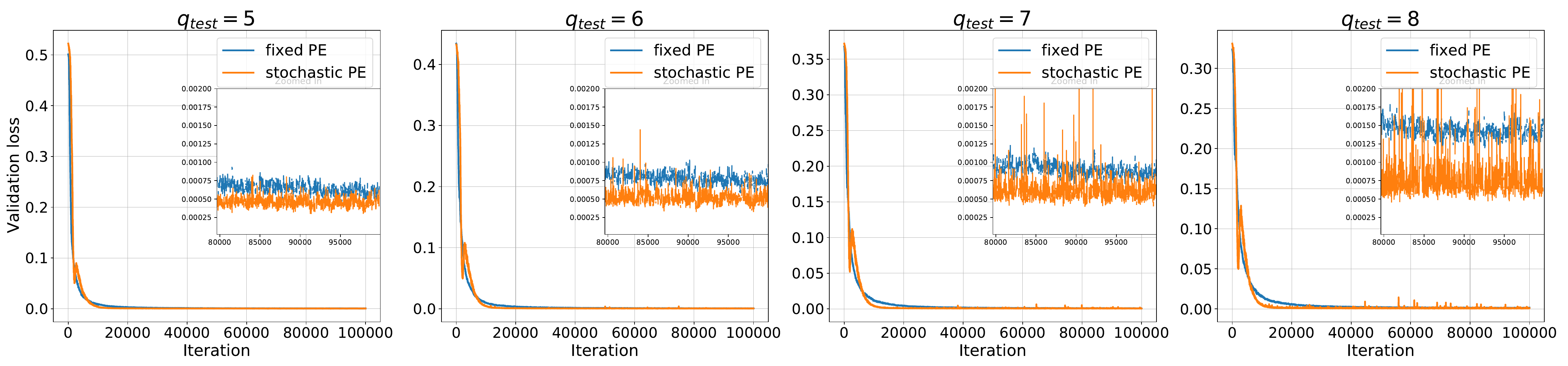}
    \vspace{-0.5cm}
    \caption{The training trajectory of Adam. The length generalization advantage for stochastic positional encoding is similar to the description of \Cref{fig: first appendix exp}. While Adam may allow the validation loss for the $q_{\text{test}}$ to converge to 0 in the long run, for all practical purposes related to early stopping, stochastic PE dominates in such OOD performance. }
\end{figure}
\vspace{-0.8cm}
\subsection{SGD from random initialization}
Similar to \Cref{appendix subsec: sgd from 0 init}, we attend $[\Z,\z_{\text{query}}]$ and train with GD, we run with $\eta=1$, then annealing to $\eta=1/3$ for iteration 50000 to 100000. We observe similar results as in \Cref{appendix subsec: sgd from 0 init}.
\begin{figure}[H]
\vspace{-0.2cm}
    \centering
    \includegraphics[width=\textwidth]{nonzero_init_GD/T.pdf}
    \includegraphics[width=\textwidth]{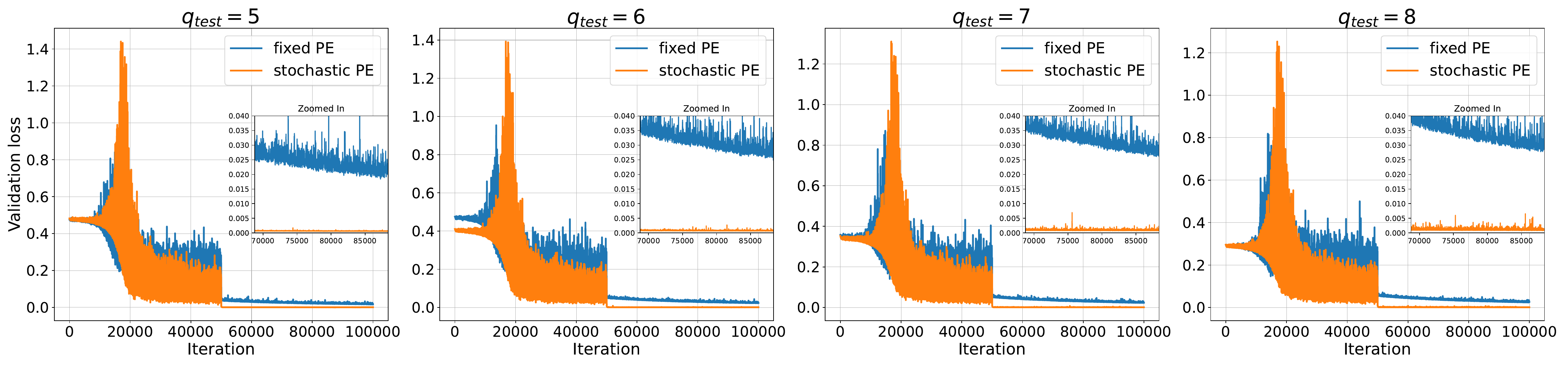}
    \vspace{-0.8cm}
    \caption{The training trajectory of SGD with random initialization. See description of \Cref{fig: first appendix exp}. Note that the overall dynamics are similar to the zero initialization case; as before, the length generalization advantage of stochastic positional encoding is evident. Moreover, stochastic positional encoding achieves better out-of-distribution loss compared to fixed positional encoding.}
\end{figure}

\vspace{-0.8cm}
\subsection{Adam from random initialization}
\begin{figure}[H]
\vspace{-0.2cm}
    \centering
    \includegraphics[width=\textwidth]{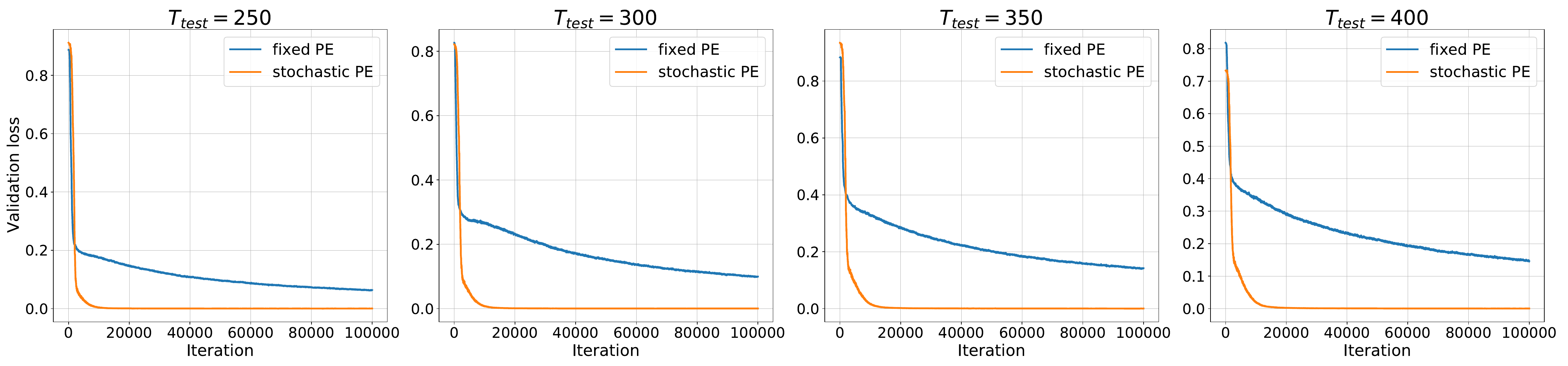}
    \includegraphics[width=\textwidth]{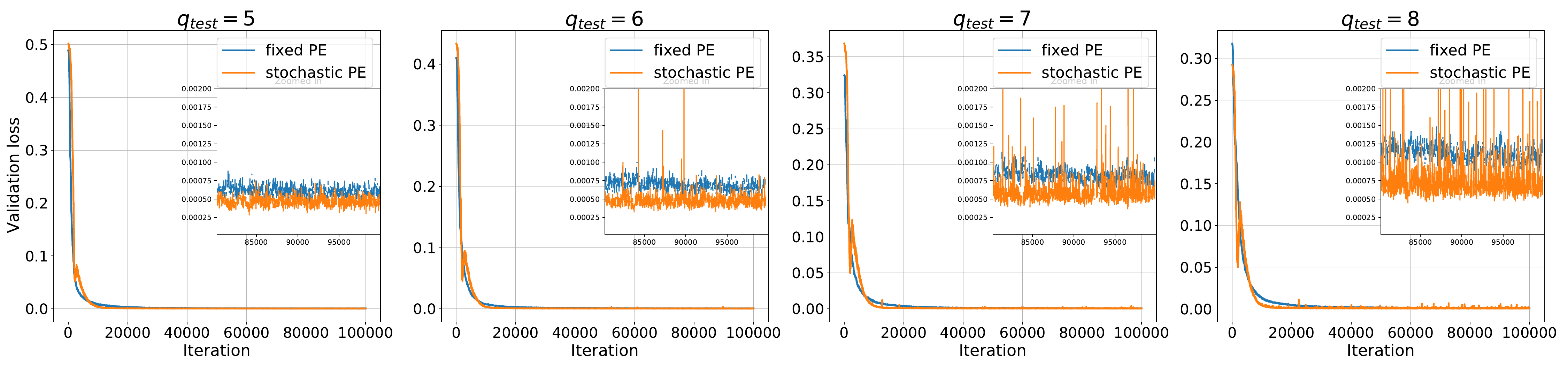}
    \vspace{-0.8cm}
    \caption{The training trajectory of Adam with random initialization is similar to the zero initialization case. See description of \Cref{fig: first appendix exp}. While Adam may allow the validation loss for the $q_{\text{test}}$ to converge to 0 in the long run, for all practical purposes related to early stopping, stochastic PE dominates in such OOD performance. }
\end{figure}

\subsection{Additional heat maps in the setting of \Cref{fig:heat map}}
Here, for the sake of illustrations, we train with $T=10$, $d=20$, and $d_e=20$. In addition, we use small Gaussian initialization with an entrywise standard deviation of $1/\sqrt{d_e}$. The heat maps can be seen as follows; GD eventually converges to the ground truth directions of $\W^\star$ and $\V^\star.$

\begin{figure}[H]
    \centering
    \includegraphics[width=0.23\textwidth]{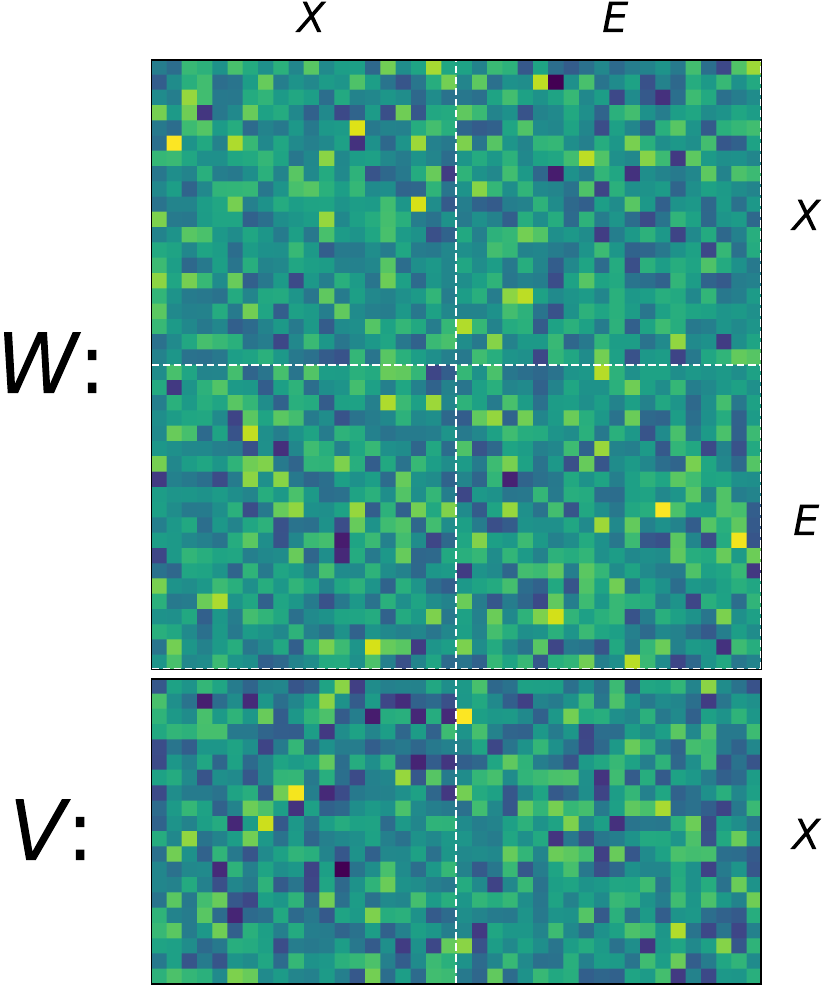}
    \includegraphics[width=0.23\textwidth]{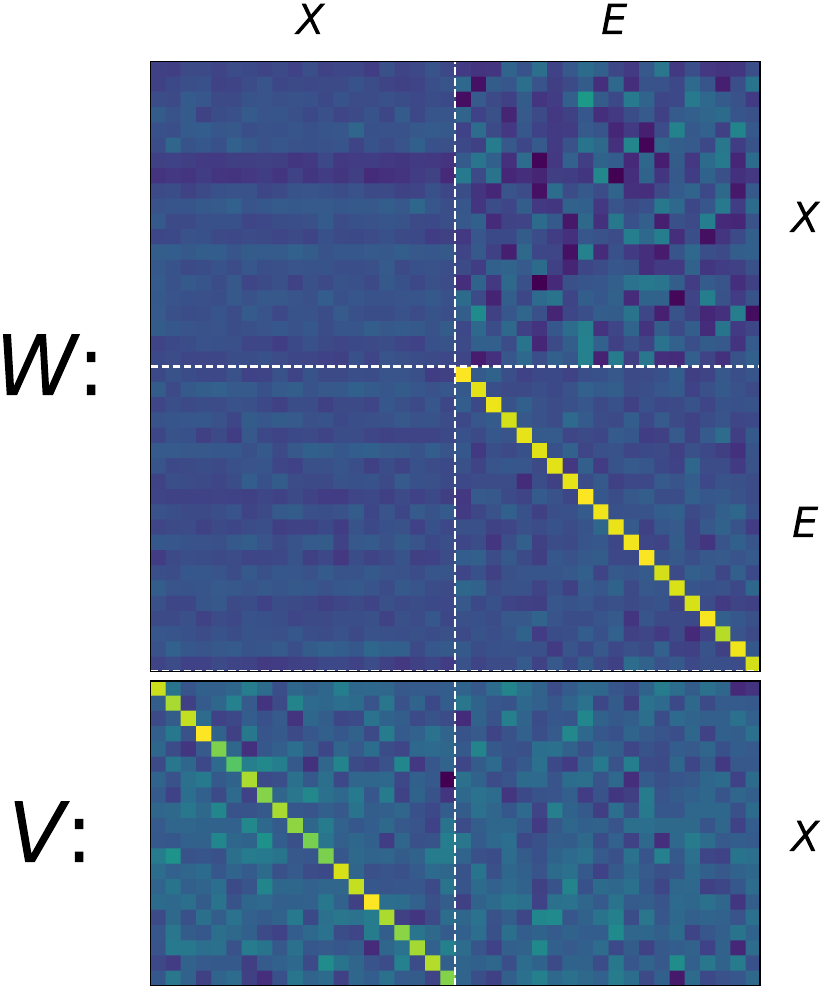}
    \includegraphics[width=0.23\textwidth]{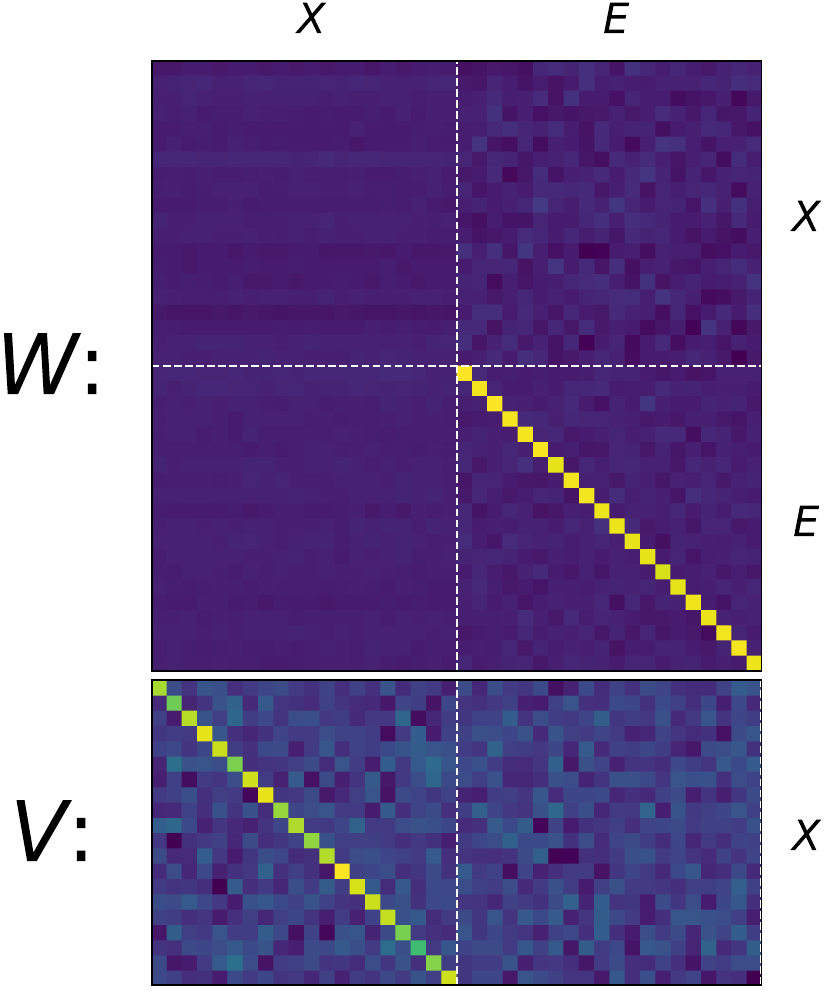}
    \includegraphics[width=0.23\textwidth]{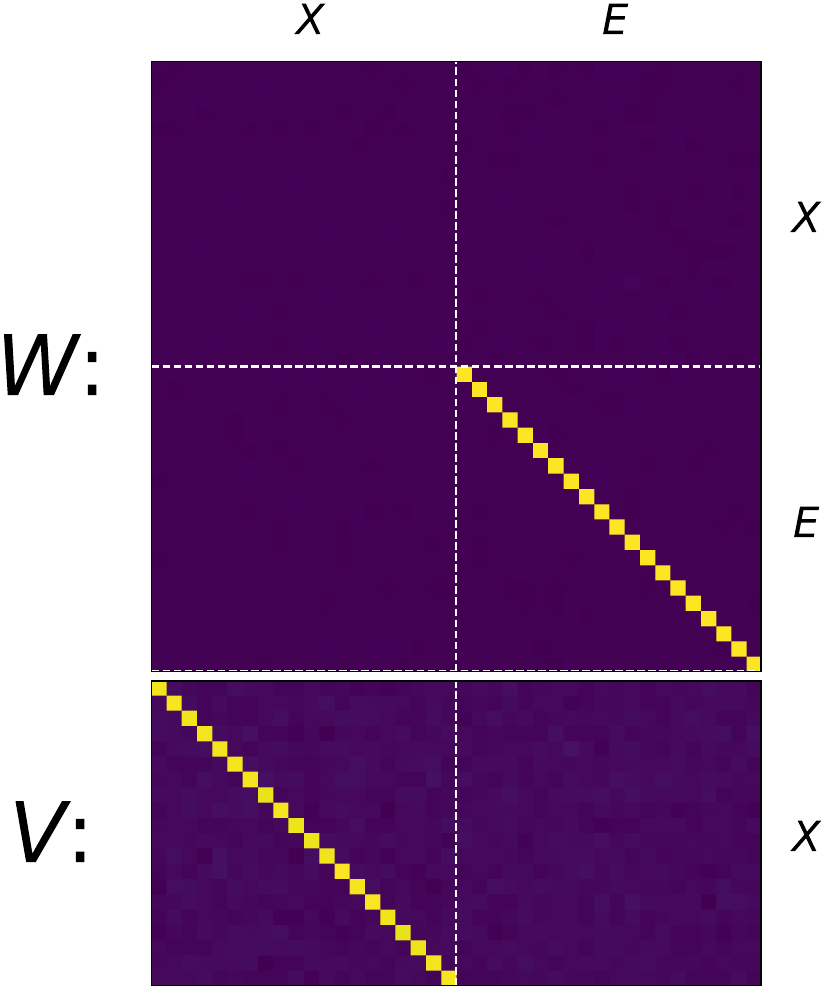}
    
    \caption{\textbf{More figures in interpretable training (iterations $t=0, 100, 500, 50000$):} For the practical model \Cref{main eqn: practical tf}, we present the heat map of the self-attention layer $\W$ and the value matrix $\V$ after convergence. We initialize $\W,\V$ randomly at $t=0$. We can observe that during training, in $\W$ only the \textbf{sub-block that attends to the positional encodings}, $\bE$ gradually converges to identity $\bI_{d_e}$ direction, while all other entries gradually converge to near 0. Similar phenomenon happens in $\V$, only the \textbf{sub-block that attends to the input tokens $\X$} gradually converges to $\bI_d$ direction with all other converging to near 0.}
    \label{fig:enter-label}
\end{figure}

\end{document}